\def\eqref#1{equation~\ref{#1}}
\def\1{\bm{1}}
\DeclareMathAlphabet{\mathsfit}{\encodingdefault}{\sfdefault}{m}{sl}
\SetMathAlphabet{\mathsfit}{bold}{\encodingdefault}{\sfdefault}{bx}{n}
\definecolor{iccvblue}{rgb}{0.21,0.49,0.74}
\algrenewcommand\algorithmicindent{0.8em} 
\newtheorem{theorem}{Theorem}[section]
\newtheorem{corollary}{Corollary}[theorem]
\newtheorem{lemma}{Lemma}[section]
\newtheorem{proposition}{Proposition}[section]
\newtheorem{assumption}{Assumption}[section]
\newtheorem{definition}{Definition}[section]
\newtheorem{remark}{Remark}
\crefname{assumption}{assumption}{assumptions}
\newcommand{\fullref}[2]{%
  \hyperref[#2]{#1\nobreakspace\ref*{#2}}%
}
\title{
\begin{center}
    Language Models are Injective \\ and Hence Invertible
\end{center}
}
\author{
\parbox{\linewidth}{\centering 
        Giorgos Nikolaou$^{1,5,}$%
        \thanks{%
        \begin{tabular}[t]{@{}l@{}}
            Equal contribution; author order settled via Mario Kart.\\
            Corresponding authors: \texttt{\{georgios.nikolaou,tommaso.mencattini\}@epfl.ch}
        \end{tabular}
        }%
        \quad
          Tommaso Mencattini$^{1,2,*}$ \\[1ex]
          Donato Crisostomi$^{2}$ \quad 
            Andrea Santilli$^{2}$ \quad 
           Yannis Panagakis$^{4,5}$ \quad
           Emanuele Rodolà$^{2,3}$ \quad \
} \\[3ex]
{\parbox{\linewidth}{%
        \centering
        $^1$EPFL \qquad
        $^2$Sapienza University of Rome \qquad 
        $^3$Paradigma \\
        $^4$University of Athens \qquad 
        $^5$Archimedes/Athena RC, Greece
      }
}
}
\newcommand{\model}[1]{\texttt{#1}}
\begin{document}

\maketitle
\etocdepthtag.toc{main}

\begin{abstract}
Transformer components such as non-linear activations and normalization are inherently non-injective, suggesting that different inputs could map to the same output and prevent exact recovery of the input from a model's representations. In this paper, we challenge this view. First, we prove mathematically that transformer language models mapping discrete input sequences to their corresponding sequence of continuous representations are injective and therefore lossless, a property established at initialization and preserved during training. Second, we confirm this result empirically through billions of collision tests on six state-of-the-art language models, and observe no collisions. Third, we operationalize injectivity: we introduce \textsc{SipIt}, the first algorithm that provably and efficiently reconstructs the \textbf{exact} input text from hidden activations, establishing linear-time guarantees and demonstrating exact invertibility in practice. Overall, our work establishes injectivity as a fundamental and exploitable property of language models, with direct implications for transparency, interpretability, and safe deployment.
\end{abstract}

\section{Introduction}
\label{sec:intro}

\begin{wrapfigure}[11]{r}{0.42\textwidth}
    \centering
    \vspace{-0.5cm}
    \begin{overpic}[width=\linewidth]{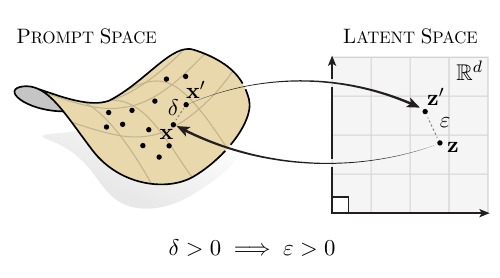}
        \put(52, 42){\small LLM}
        \put(51, 16.5){\small \textsc{SipIt}}
    \end{overpic}
    \caption{The map from prompts to latent space is injective. \textsc{SipIt} inverts it.}
    \label{fig:teaser}
\end{wrapfigure}
A core question in understanding large language models is whether their internal representations faithfully preserve the information in their inputs. Since Transformer architectures rely heavily on non-linearities, normalization, and many-to-one attention mechanisms, it is often assumed that they discard information: different inputs could collapse to the same hidden state, making exact recovery of the input impossible. This view motivates concerns around transparency, robustness, and safe deployment, as it suggests that the link between text and representation is inherently {\em lossy}.

In this paper, we show that this intuition is misleading. Despite their apparent complexity, standard decoder-only Transformer language models (seen as maps from prompts to hidden states) are in fact \textbf{almost-surely injective}; for essentially all parameter settings and during the course of training, different prompts yield different last-token representations (e.g., see \Cref{fig:teaser}). 

Building upon this property, we further provide a practical algorithm, \textsc{SipIt}, that reconstructs the \emph{exact} input from hidden activations. To our knowledge, it is the first to guarantee exact recovery in provable linear time (worst case bound), often faster in practice, turning injectivity from a theoretical property into an operational tool.

\paragraph{Our approach.}
To establish our result, we take a rigorous mathematical view of Transformers as functions. The key idea is that their components (embeddings, LayerNorm, causal attention, MLPs, and residual wiring) are smooth and structured enough that the model, as a whole, behaves predictably with respect to its parameters. Using tools from real analysis, we show that collisions (two different prompts producing the exact same representation) can only occur on a set of parameter values that has measure zero; that is, they are mathematical \emph{exceptions} rather than possibilities one should expect in practice. Moreover, we prove that common training procedures (gradient descent with standard step sizes) never move parameters into this exceptional set. In layman's terms, almost all models at initialization are injective, and training preserves this property.

Technically, our proofs rely on two ingredients. First, we establish that Transformers are real-analytic functions of their parameters, which allows us to reason precisely about when and where collisions could occur. Second, we construct parameter settings where no two prompts collide, and show that gradient descent (GD) does not collapse such separation, i.e., collisions remain a measure-zero event. The end result is a finite-horizon \emph{guarantee}: after any fixed number of training steps, and under mild assumptions, injectivity holds with probability one. We provide complete formal proofs of these statements.

\paragraph{Main result.} 
Our central finding is that causal decoder-only Transformer language models are injective almost surely. Formally, consider one such model with embedding width $d$, at least one attention head per block, real-analytic components, finite vocabulary $\mathcal{V}$, and finite context length $K$. Initialize its parameters $\boldsymbol\theta$ at random, using any distribution that has a density\footnote{Put simply, parameters are not drawn from a degenerate or hand-crafted set.} (such as Gaussian, uniform, or Xavier/Glorot), 
and train for any finite number $T$ of GD steps with step sizes in $(0,1)$. Then, with probability one over the random initialization,
\[
\mathrm{s}\neq \mathrm{s}'
\quad\Longrightarrow\quad
\mathbf r(\mathrm{s} \,;\, \boldsymbol\theta_T)\neq \mathbf r(\mathrm{s}' \,;\, \boldsymbol\theta_T)\,,
\]
i.e., the map from prompts $\mathrm{s}$ to \emph{last-token} representations $\mathbf r(\mathrm{s} \,;\, \boldsymbol\theta_T)$ is injective across all prompts in $\mathcal V^{\le K}$. 
In short, collisions in practical settings form a measure-zero set, and neither initialization nor training will ever place a model inside that set.

\paragraph{Significance.}
Our result shows that in standard decoder-only Transformers, different prompts almost surely yield different last-token representations across all practically relevant parameter settings and training procedures. The guarantee is both \emph{generic} (it fails only on a measure-zero set of pathological parameters) and \emph{practical} (it holds at finite width, depth, and training time under common initializations).

Conceptually, we replace a long-assumed property with a rigorous theorem, showing that injectivity is not an asymptotic idealization but a structural consequence of the architecture itself. Technically, our analytic framework pinpoints when collisions can arise (through deliberate non-analytic choices such as quantization or tying), and clarifies that otherwise the model is inherently lossless. Importantly, it establishes that last-token states almost everywhere \emph{identify} the input. 

Finally, we turn this theoretical guarantee into an operational tool: our algorithm \textsc{SipIt} uses gradient-based reconstruction to recover prompts \textit{exactly} from internal activations, efficiently and with provable \emph{linear-time} guarantees. This confirms empirically that collisions do not occur in practice. Beyond transparency and safety, this elevates \emph{invertibility} to a first-class property of Transformer language models, enabling stronger interpretability, probing, and causal analyses.

\section{Transformers are injective}\label{sec:inj}

\paragraph{Summary.}
In this section we show that decoder-only Transformers almost surely map different prompts to different hidden states. Collisions can only occur under measure-zero parameter choices, and gradient-based training never creates them. In simple terms, Transformer representations are structurally lossless.

\paragraph{Approach.}
We consider causal decoder-only Transformer language models with vocabulary $\mathcal V$, finite context window $K$, and embedding dimension $d$. For an input sequence $\mathrm{s} \in \mathcal V^{\leq K}$, let $\mathbf r(\mathrm{s} \,;\, \bm{\theta})$ denote the final hidden representation at the \emph{last} token position\footnote{We focus on the last-token state, since it alone drives next-token prediction; earlier rows matter only insofar as they shape this final state. Injectivity at the last token is the property of real operational interest.}, given parameters $\bm{\theta}$.

Our analysis relies on three facts:

\begin{enumerate}[label=(\roman*)]
\item \emph{Real-analyticity.} Each component of the architecture (embeddings, positional encodings, LayerNorm with ${\varepsilon>0}$, causal attention, MLPs with analytic activations, residuals) is real-analytic in its parameters (see Appendix~\ref{subsec:real-analyticity} for the mathematical background). This smoothness implies that the set of parameter values causing two distinct prompts to collide is extremely thin (measure zero).

\item \emph{Initialization.} Standard initialization schemes (Gaussian, uniform, Xavier/Glorot, etc.) draw parameters from continuous distributions with densities, so they avoid measure-zero sets with probability one.

\item \emph{Training.} Gradient-based updates (including SGD and mini-batch/full-batch GD) preserve absolute continuity of the parameter distribution after any finite number of steps; thus, training cannot generate collisions.
\end{enumerate}

These facts allow us to state and prove injectivity results without relying on asymptotics.
We begin by establishing the analytic structure of the architecture.

\begin{theorem}[Transformers are real-analytic]\label{th:main:realan}
Fix embedding dimension $d$ and context length $K$. Assume the MLP activation is real-analytic (e.g. tanh, GELU). Then for every input sequence $\mathrm{s} \in\mathcal{V}^{\le K}$, the map
\begin{align}
(\mathrm{s},\bm{\theta}) \mapsto \mathbf{r}(\mathrm{s} \,;\, \bm{\theta}) \in \mathbb{R}^d
\end{align}
is real-analytic jointly in the parameters $\bm{\theta}$ and the input embeddings.
\end{theorem}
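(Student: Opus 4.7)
The plan is to reduce the statement to the standard closure properties of real-analytic functions, namely that sums, products, compositions, and quotients (where the denominator does not vanish) of real-analytic maps remain real-analytic, together with the fact that convergent power series such as $\exp$, $\tanh$, $\text{GELU}$, and $x\mapsto 1/\sqrt{x+\varepsilon}$ for $\varepsilon>0$ are real-analytic on appropriate open subsets of $\mathbb{R}^n$. Since the input sequence $\mathrm{s}\in\mathcal V^{\le K}$ is discrete and \emph{fixed}, the embedding lookup selects particular rows of the embedding matrix, producing a vector that depends linearly (hence analytically) on the learnable embedding parameters. We then want to argue that every subsequent block in the forward pass is real-analytic in its inputs and parameters, and conclude by composition.

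Concretely, I would verify the real-analytic character of each building block in turn. First, positional encodings are either learned (linear in parameters) or sinusoidal (entrywise analytic functions of the position, which is constant once $\mathrm{s}$ is fixed). Second, linear maps $x\mapsto Wx+b$ are polynomial and hence analytic jointly in $(W,b,x)$. Third, LayerNorm with $\varepsilon>0$ computes $(x-\mu(x))/\sqrt{\mathrm{Var}(x)+\varepsilon}$; the numerator is linear in $x$ and the denominator is an analytic function bounded away from $0$ uniformly (since $\mathrm{Var}(x)+\varepsilon\ge\varepsilon>0$), so the quotient is analytic. Fourth, the (causal) scaled dot-product attention is built from linear projections, inner products, division by the constant $\sqrt{d_{\mathrm{head}}}$, and a row-wise softmax $(z_i)\mapsto e^{z_i}/\sum_j e^{z_j}$; the denominator is a finite sum of exponentials and is strictly positive everywhere, so softmax is analytic and so is the composition. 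Fifth, MLP blocks alternate affine maps with a real-analytic activation (e.g.\ $\tanh$ or $\text{GELU}$), and the residual connections are just addition, which preserves analyticity.

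Finally, I would assemble these ingredients by induction on the layer index. Let $h^{(\ell)}(\mathrm{s};\bm\theta)$ denote the hidden state tensor at layer $\ell$. The base case $h^{(0)}$ is the sum of the selected token embedding and the positional encoding, hence analytic in $\bm\theta$. The inductive step takes a layer of the form $h^{(\ell+1)} = F_{\ell}(h^{(\ell)};\bm\theta_\ell)$, where $F_\ell$ is a composition of the blocks above; by the closure properties, $h^{(\ell+1)}$ is analytic jointly in $(\bm\theta,\text{embedded input})$. Reading off the last-token row of $h^{(L)}$ is a coordinate projection and trivially analytic, giving $\mathbf r(\mathrm{s};\bm\theta)$ as claimed.

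The main technical obstacle is not any single block but the bookkeeping needed to justify that the final composition is analytic on an open subset of the joint parameter/embedding space with no pathological restrictions. The two places where non-analyticity could creep in are exactly the denominators in LayerNorm and softmax, so the crux is to confirm that (i) $\mathrm{Var}(x)+\varepsilon>0$ holds globally thanks to the assumption $\varepsilon>0$, and (ii) $\sum_j e^{z_j}>0$ holds globally because it is a sum of strictly positive terms. With those two facts in hand, everything in the forward pass is a composition of functions that are real-analytic on all of their relevant domain, and no case analysis or restriction to open subsets is needed. This is also where the hypothesis of real-analytic activations is essential: replacing $\text{GELU}$ by a piecewise choice such as $\text{ReLU}$ would break analyticity on a codimension-one set and invalidate the proof strategy, which is precisely the boundary the theorem is meant to delineate.
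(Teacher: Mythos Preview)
Your proposal is correct and follows essentially the same approach as the paper: verify real-analyticity block by block (embeddings, LayerNorm, attention, MLP, residuals), invoke the closure of real-analytic maps under sum/product/quotient/composition, and assemble by induction over layers, with the key checks being exactly the two denominators you flag (LayerNorm via $\varepsilon>0$ and softmax via positivity of the exponential sum). The only point the paper handles with slightly more care is the causal mask: adding $-\infty$ to logits is not itself real-analytic, so the paper rewrites causal attention in an equivalent ``projection form'' $\mathrm{RN}(\mathbf{L}\odot\exp(\mathbf{Z}))$ with row normalization over strictly positive row sums, but your argument goes through unchanged once this reformulation is used.
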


\begin{proof}[Sketch of proof (full proof in Appendix~\ref{sec:app:trans}, Proposition~\ref{prop:modules-ra})]
Each building block is real-analytic:  polynomials (embeddings, projections), exponential and softmax (attention), reciprocal square root (LayerNorm with $\varepsilon>0$), analytic activations in the MLP, and affine maps. Real-analytic functions are closed under addition, multiplication, quotient, and composition. Since the Transformer is a finite composition of such blocks, the entire map is real-analytic.
\end{proof}

\begin{wrapfigure}[18]{r}{0.4\linewidth}
  \vspace{-0.8cm}
  \centering
  \begin{overpic}[width=\linewidth]{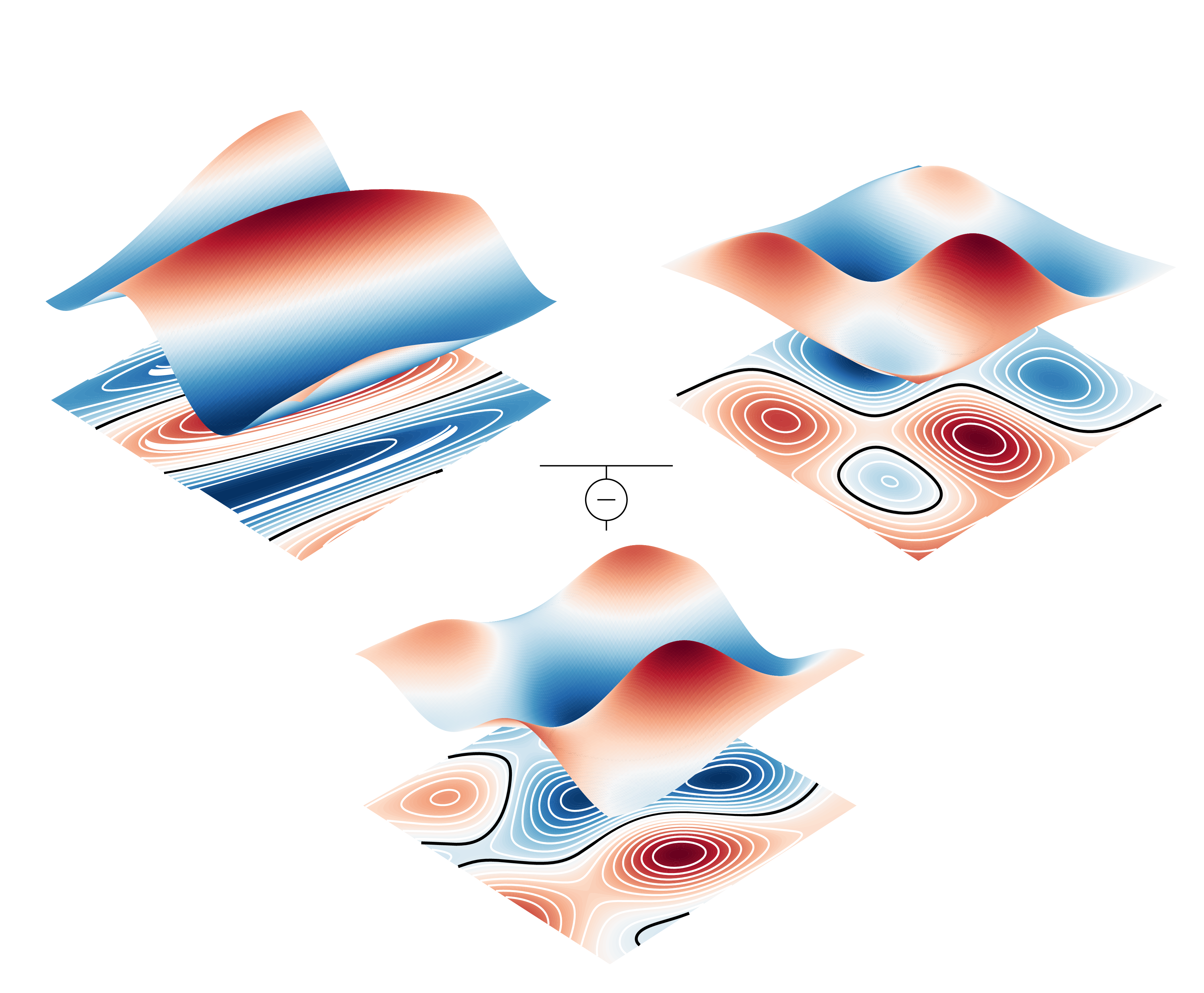}
    \put(7,35){\small $f_1$}
    \put(85,35){\small $f_2$}
    \put(70, 5){\small $f_2 - f_1$}
  \end{overpic}
  \vspace{-0.5cm}
  \caption{\label{fig:analzero}Two real-analytic functions $f_1$ and $f_2$ and their difference $f_1-f_2$. Black contours show the zero sets, which form thin curves (measure zero) rather than regions of positive measure.}
\end{wrapfigure}

This smoothness result drives everything that follows: it ensures that collisions, if they exist, are confined to measure-zero parameter sets. We now ask: what happens at initialization?

\begin{theorem}[Almost-sure injectivity at initialization]\label{th:main:asinj}
Let $\bm{\theta}$ be drawn from any distribution with a density (e.g. Gaussian or uniform). Then for any two distinct prompts $\mathrm{s}, \mathrm{s}' \in \mathcal V^{\le K}$,
\begin{align}
    \Pr[\mathbf{r}(\mathrm{s}\,;\,\bm{\theta}) = \mathbf{r}(\mathrm{s}'\,;\,\bm{\theta})] = 0 \,.
\end{align}
\end{theorem}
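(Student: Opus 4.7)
By Theorem 2.1, for each fixed prompt the map $\bm{\theta} \mapsto \mathbf{r}(\mathrm{s};\bm{\theta})$ is real-analytic on the open parameter space (say $\mathbb{R}^N$, which is connected). Hence each coordinate of the difference
\[
\Delta_{\mathrm{s},\mathrm{s}'}(\bm{\theta}) \;:=\; \mathbf{r}(\mathrm{s};\bm{\theta}) - \mathbf{r}(\mathrm{s}';\bm{\theta})
\]
is real-analytic in $\bm{\theta}$. The plan is to invoke the standard dichotomy for real-analytic functions on a connected open domain: either the function vanishes identically, or its zero set has Lebesgue measure zero (see, e.g., the fact reminded in the analyticity appendix, illustrated by \autoref{fig:analzero}). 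Applying this coordinate-wise, if some coordinate of $\Delta_{\mathrm{s},\mathrm{s}'}$ is not identically zero, then $\{\bm{\theta}: \Delta_{\mathrm{s},\mathrm{s}'}(\bm{\theta}) = \mathbf 0\}$ is contained in a measure-zero set, and the claim follows because any distribution with a density assigns probability zero to Lebesgue-null sets.

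\textbf{Reducing to exhibiting a witness.} It therefore suffices to produce, for every pair of distinct prompts $\mathrm{s}\neq \mathrm{s}'\in\mathcal V^{\le K}$, a single parameter setting $\bm{\theta}_0$ at which $\mathbf{r}(\mathrm{s};\bm{\theta}_0)\neq \mathbf{r}(\mathrm{s}';\bm{\theta}_0)$. I would construct $\bm{\theta}_0$ as follows. Choose the token embedding matrix $E$ so that its rows are in general position (e.g.\ linearly independent, possible since $d\ge 1$ and we may pick embeddings arbitrarily) and choose position embeddings that are likewise in general position and linearly independent from the token embeddings. Pick the attention and MLP weights to make the block act as a perturbation of the identity through residuals: for instance, set value and output projections to small multiples of identity-like maps, and choose the MLP weights so each block perturbs its input in a way that injects a position-weighted contribution of every visible token into the residual stream. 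Because attention is causal, the last-token state then becomes an explicit analytic function of the ordered tuple of (token, position) embeddings along $\mathrm{s}$, in which distinct prompts produce distinct aggregates by general position of the chosen embeddings.

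\textbf{Main obstacle.} The delicate step is handling two sub-cases uniformly: (a) prompts of the same length that differ only at early positions (so the last-token embedding is identical and the discrimination must come through the attention and residual pathway), and (b) prompts of \emph{different} lengths, whose last-token states live in the same $\mathbb{R}^d$ but arise from different positional contexts. Case (a) requires that the witness $\bm{\theta}_0$ actually propagates earlier-token information to the final residual stream, which is why I choose value/output matrices and MLP weights that do not zero out contributions from prior positions; this forces the last-token state to depend on the full prefix. Case (b) is handled by the linear independence of position embeddings together with residual propagation, which makes the last-token state a function of the position index as well. Concretely, with the above choice, one can write $\mathbf{r}(\mathrm{s};\bm{\theta}_0)$ as, schematically,
\[
\mathbf{r}(\mathrm{s};\bm{\theta}_0) \;=\; E_{s_{|\mathrm{s}|}} + P_{|\mathrm{s}|} + \sum_{j=1}^{|\mathrm{s}|} c_j\bigl(E_{s_j} + P_j\bigr) + (\text{higher-order analytic terms}),
\]
where the coefficients $c_j$ are nonzero by construction. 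The general-position choice of $\{E_v\}_{v\in\mathcal V}\cup\{P_i\}_{i\le K}$ then guarantees that distinct prompts give distinct linear parts, and hence distinct outputs at $\bm{\theta}_0$.

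\textbf{Conclusion.} With such a witness in hand, at least one coordinate of $\Delta_{\mathrm{s},\mathrm{s}'}$ is nonzero at $\bm{\theta}_0$ and therefore not identically zero on $\mathbb{R}^N$. By the real-analytic zero-set dichotomy, $\{\bm{\theta}:\Delta_{\mathrm{s},\mathrm{s}'}(\bm{\theta})=\mathbf 0\}$ has Lebesgue measure zero, and since the law of $\bm{\theta}$ is absolutely continuous, it assigns this set probability zero, proving the theorem.
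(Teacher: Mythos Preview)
Your overall strategy---real-analyticity of $\Delta_{\mathrm{s},\mathrm{s}'}$, the zero-set dichotomy, and reduction to a single witness---is exactly right and matches the paper. The gap is entirely in the witness construction. First, the claim that the embedding rows can be chosen linearly independent ``since $d\ge 1$'' is false: linear independence of $|\mathcal V|+K$ vectors in $\mathbb{R}^d$ requires $d\ge |\mathcal V|+K$, which is not assumed (and rarely true). Second, your schematic expansion with fixed coefficients $c_j$ is not justified: the attention weights from the last position to position $j$ are softmax outputs that depend on the \emph{content} at both positions, so they are not prompt-independent constants; if you force them to be uniform (e.g.\ by zeroing $Q$ or $K$), then $\sum_j c_j(E_{s_j}+P_j)$ with equal $c_j$ fails to distinguish prompts that are permutations of the same multiset of tokens. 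Third, ``general position guarantees distinct linear parts'' is asserted, not proved, and the uncontrolled ``higher-order analytic terms'' could in principle cancel the linear separation.

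The paper closes this gap with an explicit, case-split witness that does not require any dimension constraint beyond $d\ge 4$. In Case~A (different length or different last token) it zeros out all transformer weights so the network is the identity and the last-token state is just $E_{s_T}+P_T$, which is trivially separated by choosing two distinct rows. In Case~B (same length, same last token, first mismatch at $i^\star$) it builds \emph{one} attention head with rank-one $Q,K,V$ along carefully chosen orthogonal directions so that the last position attends almost entirely to position $i^\star$; the value read there encodes whether $s_{i^\star}$ or $t_{i^\star}$ is present, and this difference survives the residual. The key point you are missing is that one does not need to propagate \emph{all} earlier tokens---it suffices to make the last-token state sensitive to a \emph{single} distinguishing position, which can be done with a concrete rank-one construction rather than an appeal to general position.
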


\begin{proof}[Sketch of proof (full proof in Appendix~\ref{sec:app:asinj}, Theorem~\ref{thm:a.s.-distinct-h1})]
Fix $\mathrm{s} \neq \mathrm{s}'$ and consider
\begin{align}
h(\bm{\theta})=\|\mathbf r(\mathrm{s}\,;\,\bm{\theta})-\mathbf r(\mathrm{s}'\,;\,\bm{\theta})\|_2^2 \,.
\end{align}
By Theorem~\ref{th:main:realan}, $h$ is real-analytic. A fundamental dichotomy of real-analytic functions states that either $h$ is identically zero, or its zero set has Lebesgue measure zero (see Figure~\ref{fig:analzero} for an illustration). Therefore, to rule out the pathological case $h\equiv0$ it suffices to exhibit a single parameter setting where $\mathbf r(\mathrm{s} \,;\,\bm{\theta})\neq \mathbf r(\mathrm{s}'\,;\,\bm{\theta})$. 

This can always be done: if $\mathrm{s}$ and $\mathrm{s}'$ differ at the last position (symbol or length), freeze the network so that the last state reduces to embedding plus position, and choose distinct rows; this already separates $\mathbf r(\mathrm{s})$ and $\mathbf r(\mathrm{s}')$.
If instead they differ earlier, let $i^\star$ be the first mismatch and set one attention head so the last position attends almost entirely to $i^\star$, encoding its token in the value; this forces different outputs for $\mathrm{s}$ and $\mathrm{s}'$.

Hence $h$ is not identically zero, and so the collision set $\{\bm{\theta}: h(\bm{\theta})=0\}$ has Lebesgue measure zero. Since standard initializations have densities, the probability of sampling such $\bm{\theta}$ is zero, and $\mathbf r(\mathrm{s}\,;\,\bm{\theta})\neq \mathbf r(\mathrm{s}'\,;\,\bm{\theta})$ (injectivity) holds almost surely at initialization.
\end{proof}

According to Theorem~\ref{th:main:asinj}, at initialization, collisions are mathematically impossible except on a vanishingly small set of parameter values. Finally, with the following Theorem we ensure training does not break injectivity.

\begin{theorem}[Injectivity preserved under training]
\label{thm:gdinjec}
Let $\bm{\theta}_0$ be initialized from a distribution with a density, and let $\bm{\theta}_T$ be the parameters after $T$ steps of gradient descent with step sizes in $(0,1)$. Then with probability one,
\begin{align}
 \mathrm{s}\neq \mathrm{s}'
\quad\Longrightarrow\quad
\mathbf r(\mathrm{s} \,;\, \boldsymbol\theta_T)\neq \mathbf r(\mathrm{s}' \,;\, \boldsymbol\theta_T)\,,
\end{align}
\end{theorem}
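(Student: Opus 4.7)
The plan is to lift Theorem~\ref{th:main:asinj} to any finite horizon $T$ by showing the entire training map is real-analytic, and then reapplying the dichotomy that drove the initialization result. First, I would observe that the training loss $L(\bm\theta)$, being a finite cross-entropy sum built on top of softmaxed logits, is real-analytic in $\bm\theta$: Theorem~\ref{th:main:realan} makes the logits real-analytic, and $\log\mathrm{softmax}$ is real-analytic on its open positive-probability domain. Consequently each gradient step $F_t(\bm\theta) := \bm\theta - \eta_t \nabla L(\bm\theta)$ is real-analytic, and so is the $T$-fold composition $\Phi_T := F_T \circ \cdots \circ F_1$ as a map $\mathbb R^n \to \mathbb R^n$.

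Second, for each ordered pair of distinct prompts $(\mathrm{s},\mathrm{s}') \in \mathcal V^{\le K} \times \mathcal V^{\le K}$ I would define
\begin{align*}
g(\bm\theta_0) := \|\mathbf r(\mathrm{s}\,;\,\Phi_T(\bm\theta_0)) - \mathbf r(\mathrm{s}'\,;\,\Phi_T(\bm\theta_0))\|_2^2,
\end{align*}
which is real-analytic in $\bm\theta_0$ by closure of real-analytic maps under composition. By the same dichotomy as in Theorem~\ref{th:main:asinj}, either $g \equiv 0$ or its zero set has Lebesgue measure zero; in the latter case, the density assumption on $\bm\theta_0$ gives $\Pr[g(\bm\theta_0) = 0] = 0$, and a union bound over the finitely many ordered pairs in $\mathcal V^{\le K}$ closes the argument.

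Third, I would rule out the pathological case $g \equiv 0$ by enlarging the domain. Let $\bar\Phi_T(\bm\theta_0,\eta_1,\ldots,\eta_T)$ be the jointly real-analytic extension of $\Phi_T$ that treats step sizes as free variables, and set $\bar g(\bm\theta_0,\eta) := h(\bar\Phi_T(\bm\theta_0,\eta))$ where $h$ is the prompt-separation functional of Theorem~\ref{th:main:asinj}. At $\eta = 0$ one has $\bar\Phi_T = \mathrm{id}$, so $\bar g(\bm\theta_0,0) = h(\bm\theta_0)$, which is not identically zero in $\bm\theta_0$ by the explicit separator construction of Theorem~\ref{th:main:asinj}. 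Hence $\bar g$ is a real-analytic function on the connected open set $\mathbb R^n \times \mathbb R^T$ that fails to vanish at some $(\bm\theta_0^\star,0)$. To upgrade this to a fixed $\eta$, I would argue contrapositively: if $\bar g(\cdot,\eta) \equiv 0$, then $\bar\Phi_T(\cdot,\eta)$ would send $\mathbb R^n$ into the codimension-at-least-one real-analytic subvariety $\{h = 0\}$, which by the constant-rank theorem forces the Jacobian $J(\cdot,\eta) := \det D_{\bm\theta_0}\bar\Phi_T(\cdot,\eta)$ to vanish identically in $\bm\theta_0$; but $J(\bm\theta_0,0) = \det I = 1$, so by analyticity $J(\cdot,\eta) \not\equiv 0$ at least for $\eta$ near $0$.

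The main obstacle I expect is closing this quantifier gap uniformly over every $\eta \in (0,1)^T$, rather than merely almost-every $\eta$: the joint analyticity plus Fubini immediately yield non-collision for Lebesgue-a.e.\ schedule, but extending to every admissible schedule requires a pointwise certificate. The cleanest route I envision is to exhibit, for each $\eta$, an explicit $\bm\theta_0^\star$ that is a fixed point of every $F_t$, for instance by freezing a parameter block on which $\nabla L$ vanishes while preserving the Theorem~\ref{th:main:asinj} separator structure; then $\Phi_T(\bm\theta_0^\star) = \bm\theta_0^\star$ and hence $g(\bm\theta_0^\star) = h(\bm\theta_0^\star) > 0$ holds regardless of $T$ or $\eta$, certifying $g \not\equiv 0$ slice-by-slice and delivering the desired measure-zero conclusion for the collision set.
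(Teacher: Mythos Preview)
Your approach is genuinely different from the paper's and, once the fixed-point witness is made concrete, it works. The paper does not compose $\Phi_T$ with $h$ into a single function of $\bm\theta_0$; instead it proves that each one-step GD map $\phi(\bm\theta)=\bm\theta-\eta\nabla\mathcal L(\bm\theta)$ preserves absolute continuity of the parameter law. Concretely, they show $\det D\phi\not\equiv 0$ by an explicit Hessian-spectrum computation at the all-zeros point (obtaining $\det D\phi(\mathbf 0)=(1-\eta^2\|\mathbf w\|_2^2)^d>0$ for every $\eta\in(0,1)$), then invoke the inverse function theorem, a countable chart cover, and change of variables to conclude that $\phi$-preimages of Lebesgue-null sets are null; iterating, the law of $\bm\theta_T$ stays absolutely continuous and hence places zero mass on the measure-zero collision set of Theorem~\ref{th:main:asinj}. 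Your third paragraph in effect rediscovers the Jacobian-nondegeneracy ingredient but only near $\eta=0$; the paper's explicit Hessian witness is precisely what closes that gap for all $\eta\in(0,1)$, at the cost of several block-Hessian lemmas.

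Your final paragraph gives a cleaner route that sidesteps all of this, provided you name the right block. Set the unembedding head to zero: $\mathbf U=\mathbf 0$, $\boldsymbol\gamma_{\mathrm{unemb}}=\mathbf 0$, $\boldsymbol\beta_{\mathrm{unemb}}=\mathbf 0$. Then $\mathrm{LN}(\mathbf r)\equiv\mathbf 0$ and the logits are identically zero regardless of input or upstream parameters, so the model outputs the uniform distribution; a direct chain-rule check gives $\nabla_{\bm\theta}\mathcal L_{\mathrm s,\mathbf p}=\mathbf 0$ at this point for \emph{every} training sample $(\mathrm s,\mathbf p)$, hence $\Phi_T(\bm\theta_0^\star)=\bm\theta_0^\star$ for every schedule $(\eta_1,\ldots,\eta_T)$ and every choice of per-step losses. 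Since $\mathbf r(\cdot\,;\,\bm\theta)$ does not involve the unembedding parameters, you are free to set the upstream block to the separator of Theorem~\ref{th:main:asinj}, yielding $g(\bm\theta_0^\star)=h(\bm\theta_0^\star)>0$ and thus $g\not\equiv 0$ for every fixed schedule. The paper's route buys a standalone, reusable lemma (GD preserves absolute continuity); yours buys bypassing the Hessian-spectrum computation and the measure-theoretic chart machinery entirely.
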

\begin{proof}[Sketch of proof (full proof in Theorems~\ref{thm:main} and~\ref{thm:ac-gd})]
At initialization, $\bm{\theta}_0$ is drawn from a distribution with a density, hence absolutely continuous. To break injectivity during training, GD would need to map this continuous law onto the measure-zero collision set identified in Theorem~\ref{th:main:asinj}. We show this cannot happen.

A single GD step is the map $\phi(\bm{\theta}) = \bm{\theta} - \eta \nabla \mathcal L(\bm{\theta})$, where $\mathcal L$ is the training loss. Because the network and the softmax cross-entropy loss are real-analytic, $\phi$ is also real-analytic.
Its Jacobian determinant $\det D\phi(\bm{\theta})$ is itself real-analytic and not identically zero (one can check this by evaluating at a simple parameter setting). Hence the set where $\det D\phi = 0$ has measure zero.
Away from that set, the Inverse Function Theorem applies: $\phi$ is a smooth, locally invertible change of coordinates that can stretch or bend space but cannot collapse regions of positive volume onto lower-dimensional sets. Therefore, pushing forward an absolutely continuous distribution through $\phi$ yields another absolutely continuous distribution. 

Since this argument holds for each step, any finite sequence of GD updates preserves absolute continuity of the parameter law. Combining with Theorem~\ref{th:main:asinj}, which shows that collision sets are measure-zero, we conclude that $\mathbf r(\mathrm{s}\,;\,\bm{\theta}_T) \neq \mathbf r(\mathrm{s}'\,;\,\bm{\theta}_T)$ almost surely for all $\mathrm{s}\neq \mathrm{s}'$.
\end{proof}

Thus injectivity is not just an initialization property but remains true throughout training. A simple but important corollary follows.

\begin{corollary}[SGD and mini-batch GD]
Under the assumptions of Theorem~\ref{thm:gdinjec}, the same conclusion holds when the updates are
\(
\bm{\theta}_{t+1}=\bm{\theta}_t-\eta_t\,\nabla_\theta \mathcal{L}_{\mathcal{B}_t}(\bm{\theta}_t)
\)
with arbitrary (possibly random or adversarial) batch selections $\mathcal{B}_t$, thus including the singleton case of SGD and the full dataset. 
\end{corollary}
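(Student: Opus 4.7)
The plan is to reduce the corollary to the full-batch case of \autoref{thm:gdinjec} by treating each realized batch sequence separately and then aggregating via total probability. The key observation is that nothing in the proof of \autoref{thm:gdinjec} used the fact that the loss was summed over the entire dataset: it used only that the loss is real-analytic in $\bm{\theta}$ and that the step size lies in $(0,1)$. Both properties survive batching.

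First, I would fix a batch $\mathcal{B}\subseteq\train$ and a step size $\eta\in(0,1)$, and consider the one-step update map $\phi_{\eta,\mathcal{B}}(\bm{\theta}) = \bm{\theta} - \eta\,\nabla\mathcal{L}_{\mathcal{B}}(\bm{\theta})$. Since $\mathcal{L}_{\mathcal{B}}$ is a finite sum of cross-entropy terms on analytic network outputs, it is real-analytic in $\bm{\theta}$ by \autoref{th:main:realan}, and hence so is $\phi_{\eta,\mathcal{B}}$. The Jacobian determinant $\det D\phi_{\eta,\mathcal{B}}$ is real-analytic on parameter space; the same evaluation used in \autoref{thm:ac-gd} (e.g.\ at parameters where the batch gradient has vanishing Hessian coupling) shows it is not identically zero. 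Hence its zero set has Lebesgue measure zero, and on the complement the Inverse Function Theorem guarantees that $\phi_{\eta,\mathcal{B}}$ is locally a diffeomorphism. The change-of-variables formula then shows $\phi_{\eta,\mathcal{B}}$ pushes absolutely continuous measures to absolutely continuous measures, exactly as in the full-batch case.

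Next, I would handle the batch randomness. For non-adaptive (random) $\mathcal{B}_t$, the dataset is finite so there are finitely many possible trajectories $(\mathcal{B}_0,\dots,\mathcal{B}_{T-1})$. Conditioning on any fixed such sequence, the law of $\bm{\theta}_T$ is the pushforward of the absolutely continuous initialization through a finite composition $\phi_{\eta_{T-1},\mathcal{B}_{T-1}}\circ\cdots\circ\phi_{\eta_0,\mathcal{B}_0}$ of maps of the preceding type, hence absolutely continuous by induction. The unconditional law is a finite convex combination of such measures and is therefore also absolutely continuous. Intersecting with the measure-zero collision set of \autoref{th:main:asinj} and taking a union bound over the finite set of prompt pairs in $\mathcal{V}^{\le K}\times\mathcal{V}^{\le K}$ yields injectivity with probability one.

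The main obstacle is the adversarial case, where $\mathcal{B}_t$ may depend on $\bm{\theta}_t$, so the update is only piecewise-analytic. I would handle this by partitioning parameter space into the (finitely many) measurable regions $\{\bm{\theta}: \mathcal{B}_t(\bm{\theta})=B\}$ for each candidate batch $B$; on each region the dynamics coincide with a fixed-batch analytic update, so the pushforward restricted to that region is absolutely continuous, and summing over the finite partition preserves absolute continuity. Once this is established, the same union-bound argument as above closes the proof. The only subtlety is ensuring measurability of the batch-selection rule, which holds trivially since it takes values in a finite set.
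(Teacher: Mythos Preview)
Your proposal is correct and follows the same approach as the paper: each per-batch update map is real-analytic with Jacobian determinant not identically zero at the same witness point, so absolute continuity is preserved step by step and the composition argument goes through. You supply more detail on the random/adversarial conditioning than the paper (which simply asserts that the finite composition preserves absolute continuity), while the paper is slightly sharper on the witness, observing that by linearity of differentiation the batch Hessian at $\bm{\theta}_\star$ inherits the single-sample block structure and hence the same nonzero determinant.
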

\begin{proof}
The proof argument of Theorem~\ref{thm:gdinjec} is unchanged: for each fixed batch $\mathcal{B}$, the update map $\phi_{\mathcal{B}}(\bm{\theta})=\bm{\theta}-\eta\nabla \mathcal{L}_{\mathcal{B}}(\bm{\theta})$ is real-analytic with a Jacobian that is not identically zero. Indeed, the batch loss is the average $\mathcal{L}_{\mathcal B}=\tfrac1{|\mathcal B|}\sum_{i = 1}^{|\mathcal{B}|}\mathcal{L}_i$, so at the point $\bm{\theta}_\star$ from the single-sample proof (where the Jacobian determinant is sample-independent and nonzero) the batch Jacobian coincides with the single-sample one by linearity of differentiation, and its determinant is therefore also nonzero. Thus, the finite composition of such maps preserves absolute continuity of the parameter law.
\end{proof}
Together with this robustness to different training regimes, we can also strengthen the guarantee itself: injectivity holds not just pairwise, but globally across finite sets of prompts.

\begin{corollary}[Distinctness for finite sets]
For any finite set of prompts $\mathcal{S} \subseteq \mathcal V^{\le K}$, the representations 
$\{\mathbf r(\mathrm{s}\,;\,\bm{\theta}_T) : \mathrm{s} \in \mathcal{S}\}$ are almost surely all distinct.
\end{corollary}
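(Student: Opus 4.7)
The plan is to derive this as an immediate consequence of \autoref{thm:gdinjec} via a finite union bound, exploiting the fact that the ambient prompt space $\mathcal{V}^{\le K}$ is itself finite (since both $\mathcal{V}$ and $K$ are finite). No new analytic or differential-geometric machinery is needed; the entire content is measure-theoretic bookkeeping on top of the pairwise statement already proved.

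First, I would fix an arbitrary finite $\mathcal{S}\subseteq \mathcal{V}^{\le K}$ and enumerate the collection of unordered pairs $\{\mathrm{s},\mathrm{s}'\}$ with $\mathrm{s}\neq \mathrm{s}'$ inside $\mathcal{S}$. This collection has cardinality $\binom{|\mathcal{S}|}{2}$ and is in particular finite. For each such pair, \autoref{thm:gdinjec} directly gives
\begin{equation}
\Pr\bigl[\mathbf{r}(\mathrm{s};\bm{\theta}_T) = \mathbf{r}(\mathrm{s}';\bm{\theta}_T)\bigr] = 0 \,.
\end{equation}
Next, I would write the event ``\emph{some} pair in $\mathcal{S}$ collides'' as the finite union of these individual collision events. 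By finite subadditivity of the probability measure, this union is still a null event. Taking complements yields that with probability one the family $\{\mathbf{r}(\mathrm{s};\bm{\theta}_T): \mathrm{s}\in\mathcal{S}\}$ is pairwise distinct, which is exactly the stated claim.

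As a free strengthening, I would remark that since $|\mathcal{V}^{\le K}|<\infty$ one may simply take $\mathcal{S}=\mathcal{V}^{\le K}$: the same finite union bound then produces a \emph{single} almost-sure event on which the trained model is globally injective on the entire input space simultaneously, rather than merely on a prescribed subset. This subsumes the corollary as stated and aligns with the main result advertised in the introduction.

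The main (and essentially only) obstacle is conceptual rather than technical: one must be careful that the union bound is taken over a collection that is at most countable, which is why the finiteness of $\mathcal{V}$ and of $K$ (and hence of $\mathcal{V}^{\le K}$) is crucial. Were the context length unbounded, the argument would only give the pairwise statement and one would need to handle a genuinely countable union, which here is avoided automatically.
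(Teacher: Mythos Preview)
Your proposal is correct and follows essentially the same route as the paper: enumerate the $\binom{|\mathcal{S}|}{2}$ unordered pairs, invoke the pairwise null-probability for each, and apply a finite union bound (this is exactly Corollary~\ref{cor:global-distinct-h1} in the appendix). One minor remark: Theorem~\ref{thm:gdinjec} as stated already asserts the \emph{global} implication ``$\mathrm{s}\neq\mathrm{s}'\Rightarrow\mathbf{r}(\mathrm{s};\bm{\theta}_T)\neq\mathbf{r}(\mathrm{s}';\bm{\theta}_T)$'' with probability one, so the corollary is in fact an immediate restriction to $\mathcal{S}$ and your union bound simply re-derives what is already packaged inside that theorem's proof; also, your closing caveat slightly overstates the difficulty of the countable case, since countable subadditivity would still suffice if $\mathcal{V}^{\le K}$ were merely countable.
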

\begin{proof}See Appendix~\ref{sec:app:asinj}, Corollary~\ref{cor:global-distinct-h1}.
\end{proof}

These results show that decoder-only Transformer language models are structurally injective: different prompts almost surely yield different last-token states. Collisions can be manufactured, e.g., through deliberate non-analytic choices (quantization, non-smooth activations), but in practical training pipelines, injectivity is guaranteed; extensive experiments in \S\ref{sec:inj_res} confirm this empirically.

\paragraph{Failure cases.}
We showed that non-injective transformers are overwhelmingly unlikely, though it is still possible for an adversary to construct collisions by hand. For instance, if two vocabulary items $v_i \neq v_j$ are assigned \emph{exactly} the same embedding vector, then any prompts differing only by swapping $v_i$ and $v_j$ yield identical representations. Likewise, if two absolute positional embeddings are made exactly equal and the remaining weights are tuned to suppress other positional signals, one can force collisions between sequences that differ only at those positions. These scenarios, however, require deliberately engineered parameter choices: under continuous random initialization and standard training, the probability of such coincidences is zero.

\begin{figure}
    \centering
    \includegraphics[width=0.95\linewidth]{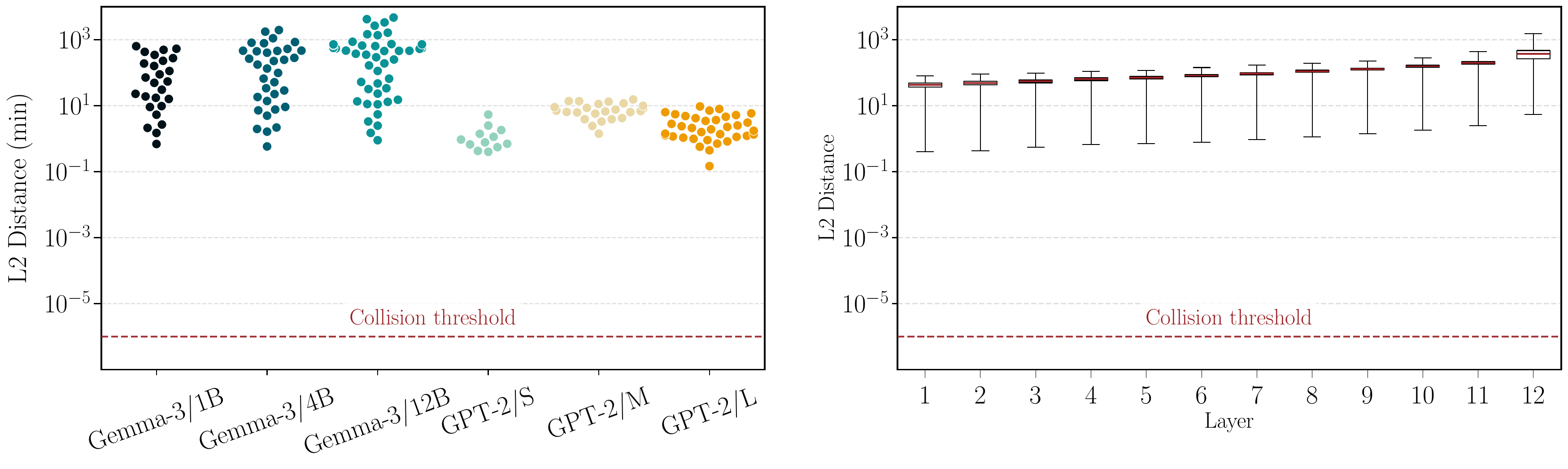}
    \caption{Seeking collisions in a large-scale prompt set: The minimum distances between last-token states are far above the collision threshold $10^{-6}$: (left) across layers for \model{GPT-2} and \model{Gemma-3} families (one dot per layer), (right) across depth for \model{GPT-2 Small}; distances grow with depth.}
    \label{fig:exp-searching-for-collisions}
\end{figure}

\section{Exact prompt recovery via \textsc{SipIt}}
\label{sec:inv}

In the previous section, we have proven that decoder-only Transformers are almost surely injective, i.e., different prompts map to different hidden states. 
We now show how this property can be used in practice to \textbf{reconstruct the exact input prompt} given hidden states at some layer. We call this algorithm \textsc{SipIt} (Sequential Inverse Prompt via ITerative updates).\footnote{Implementation available at \url{https://github.com/giorgosnikolaou/SIPIT}.}

\vspace{-0.8em}\paragraph{Threat model.} This paper focuses on the injectivity result and its algorithmic consequence; we do not define a full adversarial model. A natural setting where \textsc{SipIt} applies is one in which an adversary obtains the hidden-state sequence---for instance, through a leaked KV-cache, a shared-inference pipeline, or an API that exposes intermediate representations. Our injectivity result guarantees that exact recovery from only the final embedding is possible in principle, but designing an efficient algorithm for that setting is nontrivial and left to future work; here we assume access to all per-position states at a given layer~$\ell$.

Recall from \S\ref{sec:inj} that the mapping from a prompt $\mathrm{s}$ to its last-token state is almost surely injective. Since the last state is itself a deterministic function of the hidden matrix at any layer~$\ell$, injectivity extends to the full representation 
\begin{align}
    \mathrm{s}\mapsto \mathbf{H}^{(\ell)}(\mathrm{s})  \in\mathbb R^{T\times d} \,.
\end{align}
We denote by $\mathbf{h}_t(\mathrm{s})$ the row of $\mathbf{H}^{(\ell)}(\mathrm{s})$ at position $t$. In the following, the parameters $\boldsymbol{\theta}$ and target layer $\ell$ are considered fixed and omitted for simplicity.

The algorithm exploits the causal structure of Transformers: the hidden state at position $t$ depends only on the prefix $\langle \mathrm{s}_1,\dots, \mathrm{s}_{t-1} \rangle$ and the current token $\mathrm{s}_t$. This means that if we already know the prefix, then the hidden state at position $t$ uniquely identifies $\mathrm{s}_t$. 

\textbf{Example.} Suppose the vocabulary is ${a,b,c}$ and the true prompt is $\langle a,b \rangle$. At $t=1$, the hidden state depends only on $\mathrm{s}_1$. By comparing the observed state with the three candidate states produced by trying $a$, $b$, and $c$, we can tell exactly which one matches, thus recovering $\mathrm{s}_1=a$. Then at $t=2$, we know the prefix $\langle a \rangle$, so we try appending each candidate token and again match the resulting hidden state to recover $\mathrm{s}_2=b$. Iterating this procedure reconstructs the full sequence.

More generally, we can look at the ``one-step'' map
\begin{align}
v_j \mapsto \mathbf{h}_t (\pi \oplus v_j)\,, \quad v_j\in\mathcal{V}\,,
\end{align}
which gives the hidden state at step $t$ for each possible next token, given the fixed prefix $\pi=\langle \mathrm{s}_1, \dots, \mathrm{s}_{t-1} \rangle$ (here $\oplus$ denotes concatenation).

\textbf{Remark.} By the analytic arguments of \S\ref{sec:inj}, the one-step map is almost surely injective: with a fixed prefix, any two distinct tokens almost surely yield distinct hidden states.

This property makes sequence recovery straightforward. At each step $t$, given the hidden state $\widehat{\mathbf{h}}_t$ and the already recovered prefix, we simply check which candidate token produces a matching hidden state. That token must be the true $\mathrm{s}_t$. Repeating this process recovers the entire sequence.

This leads to the \textsc{SipIt} algorithm, shown in Algorithm~\ref{alg:sipit-main}. At every position, the algorithm cycles through vocabulary candidates (according to some policy such as random order or gradient-guided search) until it finds the unique match\footnote{In practice, we accept matches if the observed hidden state is within an $\varepsilon$-ball around the predicted one.}, then appends it to the reconstructed prefix and moves on.

\begin{algorithm}[H]
\caption{\textsc{SipIt}: Sequential Inverse Prompt via ITerative updates}
\label{alg:sipit-main}
\begin{algorithmic}[1]
\Require Observed layer-$\ell$ states ${\widehat{\mathbf{H}}^{(\ell)}\in \mathbb{R}^{T\times d}}$; vocabulary $\mathcal{V}$; tolerance $\varepsilon\ge 0$.
\Ensure Recovered sequence $\widehat{\mathrm{s}}=\langle \hat{\mathrm{s}}_1,\ldots,\hat{\mathrm{s}}_T\rangle$.
\State $\widehat{\mathrm{s}}\gets \langle\,\rangle$
\For{$t=1$ \textbf{to} $T$}
  
  \State $\mathcal{C}\gets \emptyset$ \Comment{tested candidates}
  
  \For{$j = 1$ \textbf{to} $|\mathcal{V}|$}
     
     \State $v_j \gets \textsc{Policy}\left(\mathcal{V}, \mathcal{C}, \widehat{\mathrm{s}}, \ell\right)$ \Comment{new candidate token $v_j$ (see Alg.~\ref{alg:policy-random} and~\ref{alg:policy-gradient})}
     
     \If{$\widehat{\mathbf{h}}_t \in \mathcal{A}_{\pi,t}( v_j \, ; \, \varepsilon)$} \Comment{verify $v_j$ (see Def.~\ref{def:local-verifier})}
     
        \State $\widehat{\mathrm{s}}\gets \widehat{\mathrm{s}}\oplus v_j$ \Comment{hit!}
        
        \State \textbf{break}
     
     \Else
        \State $\mathcal{C}\gets \mathcal{C}\cup \left\{ v_j \right\}$
     \EndIf
  \EndFor
\EndFor
\State \Return $\widehat{\mathrm{s}}$
\end{algorithmic}
\end{algorithm}

To rule out edge cases and analyze the computational cost of \textsc{SipIt}, we now state a formal guarantee.

\begin{theorem}[Correctness of \textsc{SipIt}]\label{th:main:sipit_correct}
Under the assumptions of \Cref{thm:gdinjec}, given observed hidden states $\widehat{\mathbf{H}}^{(\ell)}$, \textsc{SipIt} recovers the true input sequence $\mathrm{s}$ with probability one in at most $T|\mathcal{V}|$ steps.
\end{theorem}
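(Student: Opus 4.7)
The plan is to reduce correctness of \textsc{SipIt} to injectivity of finitely many one-step maps, run a short induction over token positions, and read off the step count from the algorithm's structure. The key observation is that Theorem~\ref{thm:gdinjec} gives almost-sure injectivity \emph{pairwise}, so a union bound over the \emph{finite} collection of prefixes $\pi \in \mathcal{V}^{<K}$, positions $t \le K$, and token pairs $v \neq v' \in \mathcal{V}$ shows that, with probability one, every one-step map $v_j \mapsto \mathbf{h}_t(\pi \oplus v_j)$ is injective on $\mathcal{V}$. Working on this probability-one event, I would introduce the (random but positive) uniform separation
\begin{align}
\delta \;=\; \min_{\pi,\, t}\; \min_{v \neq v' \in \mathcal{V}} \bigl\|\mathbf{h}_t(\pi \oplus v) - \mathbf{h}_t(\pi \oplus v')\bigr\| \;>\; 0 \,,
\end{align}
and assume the tolerance satisfies $\varepsilon < \delta/2$ (the exact-recovery regime, which also covers the $\varepsilon=0$ case), so that the verifier $\mathcal{A}_{\pi,t}(\cdot\,;\varepsilon)$ from Definition~\ref{def:local-verifier} admits the true continuation and rejects every other candidate.

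Next I would induct on $t$. For $t=1$, the empty-prefix one-step map is injective, and $\widehat{\mathbf{h}}_1 = \mathbf{h}_1(\mathrm{s}_1)$, so the verifier is satisfied by $v_j \in \mathcal{V}$ iff $v_j = \mathrm{s}_1$. For the inductive step, assume $\widehat{\mathrm{s}}_{1:t-1} = \mathrm{s}_{1:t-1}$. The causal structure of the decoder-only Transformer guarantees that row $t$ of $\mathbf{H}^{(\ell)}$ depends only on $\mathrm{s}_{1:t}$, hence $\widehat{\mathbf{h}}_t = \mathbf{h}_t(\widehat{\mathrm{s}}_{1:t-1} \oplus \mathrm{s}_t)$. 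By injectivity of the one-step map with prefix $\widehat{\mathrm{s}}_{1:t-1}$ and the choice of $\varepsilon$, the verifier accepts exactly $v_j = \mathrm{s}_t$, so the algorithm appends the correct token and preserves the invariant. After the outer loop terminates, $\widehat{\mathrm{s}} = \mathrm{s}$.

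For the step count, the inner loop at position $t$ performs at most $|\mathcal{V}|$ verifier calls before breaking on the true token: \textsc{Policy} is required to return a fresh candidate not in the tested set $\mathcal{C}$, so regardless of the ordering strategy (random, gradient-guided, etc.) the vocabulary is exhausted in at most $|\mathcal{V}|$ iterations. Summing over $t \in \{1,\dots,T\}$ yields the worst-case bound of $T|\mathcal{V}|$ verifier calls.

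I expect the main obstacle to be not the inductive skeleton itself, which is nearly tautological once injectivity is invoked, but rather the careful treatment of the tolerance $\varepsilon$ and the uniform separation $\delta$: one must justify that a single positive $\delta$ dominates the acceptance ball for \emph{all} prefix-position pairs simultaneously, not just for each in isolation. Finiteness of $\mathcal{V}^{\le K}$ makes this step immediate, but it is precisely the bridge that turns the pointwise, almost-sure guarantee of Theorem~\ref{thm:gdinjec} into the uniform soundness and completeness of the verifier throughout a single execution of \textsc{SipIt}.
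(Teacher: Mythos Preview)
Your proposal is correct and takes essentially the same approach as the paper: reduce to almost-sure injectivity of the one-step maps via a finite union bound, extract a strictly positive separation margin, show the verifier accepts exactly the true token, and induct over positions with the policy exhausting $\mathcal{V}$ in at most $|\mathcal{V}|$ trials per step. The only cosmetic difference is that the paper works with per-position margins $\Delta_{\pi_t,t}$ while you take a single uniform $\delta=\min_{\pi,t}\Delta_{\pi,t}$; both are equivalent given the finiteness of $\mathcal{V}^{\le K}$.
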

\begin{proof}[Sketch of proof (full proof in Appendix~\ref{sec:app:left-invertibility}, Thm.~\ref{thm:sipit-unified}, Prop.~\ref{prop:sipit-termination})]
At each step, local injectivity ensures a unique token matches the observed state. As the policy spans the vocabulary, this token will be found in at most $|\mathcal V|$ trials. Induction over ${t=1,\dots,T}$ completes the argument.
\end{proof}

\begin{theorem}[Robustness of \textsc{SipIt}]\label{th:main:sipit_robust}
Under the assumptions of Theorem~\ref{thm:gdinjec}, fix a layer $\ell$ and define, for any prefix $\pi$ and time $t$,
\[
\Delta_{\pi,t}\ :=\ \min_{v\neq v'\in\mathcal V}\big\|\mathbf h_t(\pi\oplus v)-\mathbf h_t(\pi\oplus v')\big\|_2.
\]
Let $\mathrm{s}=\langle \mathrm{s}_1,\ldots,\mathrm{s}_T\rangle$, define the prefixes $\pi_t = \langle \mathrm{s}_1, \ldots, \mathrm{s}_{t-1} \rangle$ and suppose access to the perturbed hidden states
\[
\widehat{\mathbf h}_t(\pi_t\oplus \mathrm{s}_t)
=\mathbf h_t(\pi_t\oplus \mathrm{s}_t)+\mathbf e_t,
\qquad \|\mathbf e_t\|_2<\tfrac{\Delta_{\pi_t,t}}{2},
\quad t \in [T].
\]
Then \textsc{SipIt} recovers the true sequence $\mathrm{s}$ with probability one, and terminates in at most $T|\mathcal V|$ steps.
\end{theorem}
\begin{proof}[Proof in Appendix~\ref{sec:app:left-invertibility}, Thm.~\ref{thm:sipit-unified}, Prop.~\ref{prop:verifier-right-primitive}]
\end{proof}

In short, \textsc{SipIt} turns the almost-sure injectivity of Transformer representations into a constructive procedure: not only are hidden states unique identifiers of prompts, but the exact input sequence can be efficiently \emph{recovered} in linear time, and often faster in practice.
It is a structural property of Transformer representations, not a quirk of initialization or training.

\section{Experiments}\label{sec:exps}
We previously proved that decoder-only Transformers are injective (\S\ref{sec:inj}) and introduced an algorithm, \textsc{SipIt}, that leverages this property to recover the exact input prompt from hidden states at a given layer (\S\ref{sec:inv}). We now provide extensive empirical evidence supporting our theory by showing that distinct prompts yield distinct embeddings, i.e., no collisions occur by a large margin (\S\ref{sec:inj_res}). We then demonstrate that \textsc{SipIt} successfully reconstructs the original input prompt (\S\ref{sec:inv_ref}).

\paragraph{Environment.}
All experiments were run on a single NVIDIA A100\texttt{-}SXM (64\,GB) GPU. Python~3.11, CUDA~12.2, PyTorch~2.8.0, and \texttt{transformers}~4.50.0 were used for all experiments. Reported runtimes refer to this setup.

\subsection{Searching for collisions}
\label{sec:inj_res}
\begin{figure}[t]
    \vspace{0.5cm}
    \begin{subfigure}{0.49\textwidth}
        \centering
        \begin{overpic}[width=\linewidth]{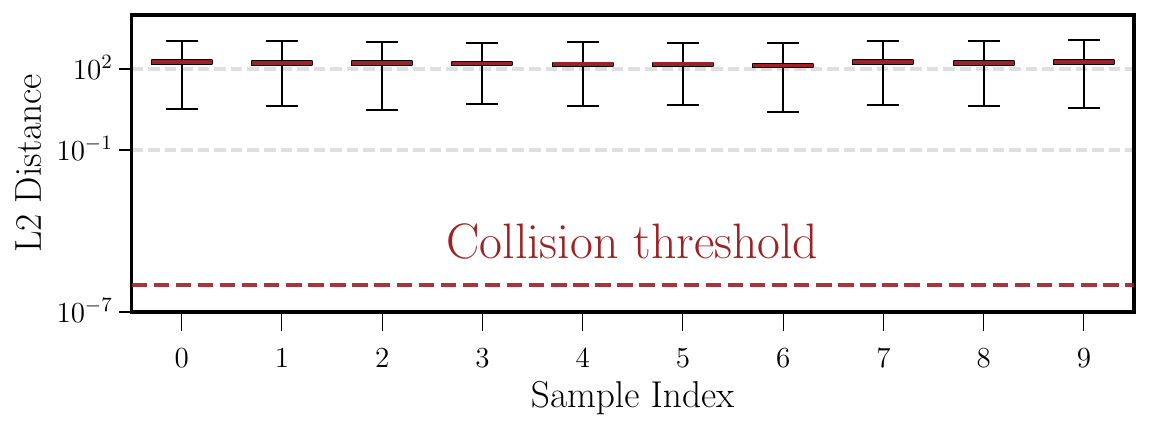}
    \put(39,40){\small \model{GPT-2 Small}}
  \end{overpic}
    \end{subfigure}
    \hfill 
    \begin{subfigure}{0.49\textwidth}
        \centering
        \begin{overpic}[width=\linewidth]{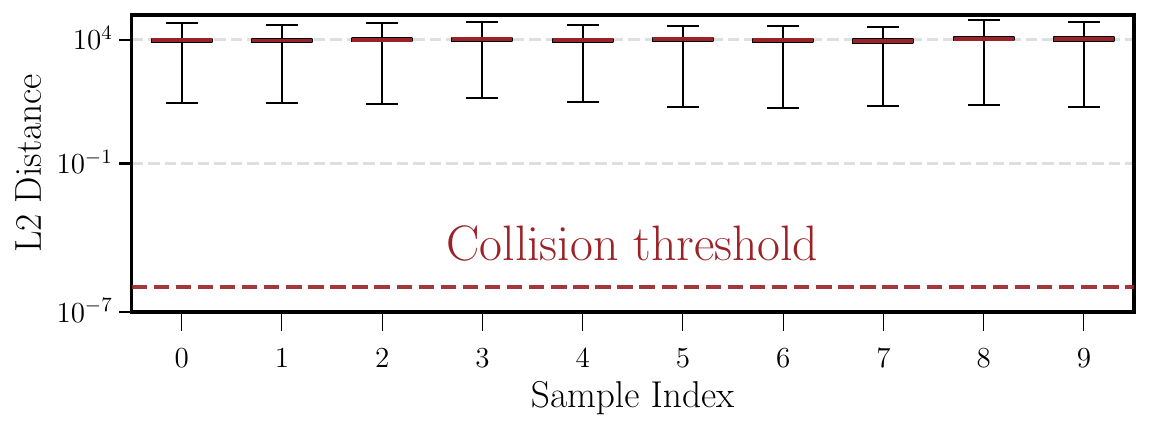}
        \put(43,40){\small \model{Gemma3-1B}}
  \end{overpic}
    \end{subfigure}
    \vspace{-0.2cm}
    \caption{\label{fig:exhaustive-boxplots}Exhaustive collision search on the $10$ closest prefix prompts. The boxplots look flat and uneventful, and that is the point: even under stress-test conditions with billions of candidate pairs, all minima stay well above the collision threshold, showing that nothing collapses.}
\end{figure}

\begin{wraptable}[10]{r}{0.45\linewidth}
    \centering
    \vspace{-0.4cm}
    \resizebox{0.4\textwidth}{!}{%
    \begin{tabular}{c ccc}
        \toprule
        \multirow{2}{*}{Model}  & \multicolumn{3}{c}{$\boldsymbol{\ell}_\mathbf{2}$ Distance (min)}                                   \\
        \cmidrule(r){2-4}
                                & layer 1 & layer $\frac{L}{2}$ & layer $L$ \\
        \midrule
        \model{Llama-3.1-8B}    & 0.001    & 0.129              & 0.620     \\
        \model{Mistral-7B-v0.1} & 0.002    & 0.187              & 1.274     \\
        \model{Phi-4-mini-ins}  & 0.014    & 1.336              & 9.020     \\
        \model{TinyStories-33M} & 0.029    & 1.434              & 2.793     \\
        \bottomrule
    \end{tabular}
    }
    \caption{Minimum pairwise distance between last-token states in the first, middle, and final layers of four models. All values are well above the collision threshold $10^{-6}$.}
    \label{tab:collision-search}
\end{wraptable}We collected 100k prompts by uniformly sampling from a mixture of four datasets: \texttt{wikipedia-en}\footnote{\url{https://huggingface.co/datasets/wikimedia/wikipedia}}, \texttt{C4} \citep{JMLR:v21:20-074}, \texttt{The Pile} \citep{gao2020pile800gbdatasetdiverse}, and \texttt{python-github-code}\footnote{\url{https://huggingface.co/datasets/angie-chen55/python-github-code}}. For each prompt, we extracted the last-token representation and systematically checked whether any two distinct prompts produced identical embeddings. This process required around \textbf{5 billion} pairwise comparisons.

We observed \textbf{no collisions} across all models and layers: distinct prompts always yielded distinct last-token states. Figure~\ref{fig:exp-searching-for-collisions} (left) shows the per-layer minimum distances for the \model{Gemma3} pretrained \citep{gemmateam2025gemma3technicalreport} and \model{GPT-2} \citep{Radford2019LanguageMA} families, with strictly positive values throughout. \Cref{tab:collision-search} complements this by reporting the same statistic for \model{Llama-3.1-8B} \citep{grattafiori2024llama3herdmodels}, \model{Mistral-7B-v0.1} \citep{jiang2023mistral7b}, \model{Phi-4-mini-instruct} \citep{microsoft2025phi4minitechnicalreportcompact} and \model{TinyStories-33M} \citep{eldan2023tinystoriessmalllanguagemodels}, again showing clear separation at the first, middle, and last layers.
Finally, Figure~\ref{fig:exp-searching-for-collisions} (right) zooms in on \model{GPT-2 Small}, revealing that these distances typically increase with depth. Additional results for \model{GPT-2 Medium}, \model{GPT-2 Large} and \model{Gemma3} (1B, 4B, 12B) appear in Appendix~\ref{sec:app:exp}, confirming the same trend.

\begin{table}[hbt!]
\centering
\begin{minipage}[t]{0.45\textwidth}
\centering
\scriptsize
\vspace{0pt} 
\begin{tabular}{cccc}
\toprule
\multirow{2}{*}{\textbf{Model}} &
\multicolumn{3}{c}{\textbf{$\boldsymbol{\ell}_\mathbf{2}$ Distance (min)}} \\
\cmidrule(lr){2-4}
 & \textbf{FP4} & \textbf{INT8} & \textbf{FP32} \\
\midrule
\model{Llama-3.1-8B}        & 2.281  & 6.597  & 1.274 \\
\model{Mistral-7B-v0.1}     & 1.748  & 2.692  & 1.136 \\
\model{Phi-4-mini-instruct} & 18.368 & 20.956 & 8.780 \\
\bottomrule
\end{tabular}
\caption{\textbf{Quantized Models:} Minimum pairwise distance between last-token states in the final layer of three quantized models.}
\label{tab:collisions-quant}
\end{minipage}
\hspace{0.2cm}
\begin{minipage}[t]{0.48\textwidth}
\centering
\scriptsize
\vspace{0pt}
\begin{tabular}{ccccc}
\toprule
\multirow{2}{*}{\textbf{Model}} &
\multirow{2}{*}{\textbf{Size}} &
\multicolumn{3}{c}{\textbf{$\boldsymbol{\ell}_\mathbf{2}$ Distance (min)}} \\
\cmidrule(lr){3-5}
 &  & \textbf{layer 1} & \textbf{layer $L/2$} & \textbf{layer $L$} \\
\midrule
\model{phi-4}          & 14B & 0.010 & 1.025 & 8.759 \\
\model{Llama-3.1-70B}  & 70B & 0.005 & 0.465 & 3.975 \\
\bottomrule
\end{tabular}
\vspace{0.26cm}
\caption{\textbf{Large Models}: Minimum pairwise distance between last-token states in the first, middle, and final layers of two large models.} 
\label{tab:collisions-large}
\end{minipage}
\hfill
\end{table}

\vfill\pagebreak
\begin{wrapfigure}[14]{lr}{0.42\textwidth}
    \centering
    \begin{overpic}[width=\linewidth,trim=0 10 0 0]{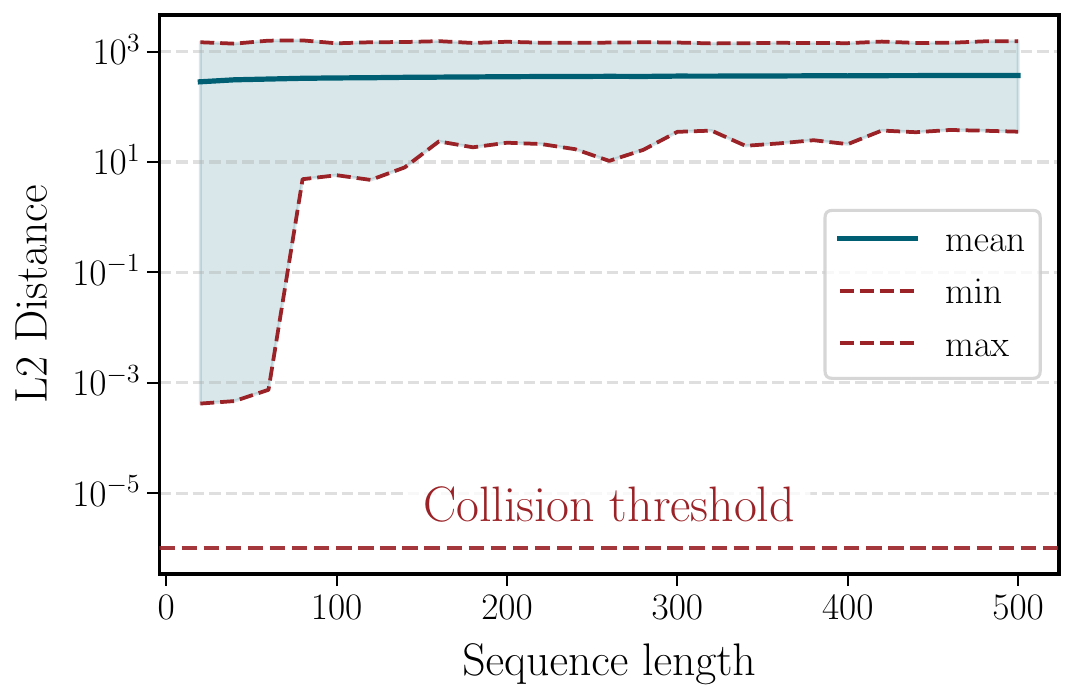}
    \end{overpic}
    \caption{Sequence length vs. pairwise distance for \model{GPT-2}. Min, mean, and max distances rise at short lengths and then stabilize, indicating consistent separability.}
    \label{fig:gpt2-length-vs-distance}
\end{wrapfigure}Figure~\ref{fig:gpt2-length-vs-distance} shows how pairwise distances between last-token states vary with prompt length in \model{GPT-2 Small}. Three patterns emerge: (i) the \emph{minimum} distance is never close to zero at all lengths, and (ii) it grows rapidly at short lengths but then levels off, suggesting that beyond a moderate context size, adding tokens does not affect separability; (iii) the overall spread (min-max) stays bounded, with no sign of pathological collapses. Similar behavior is seen in \model{Gemma3} (see Appendix~\ref{sec:app:exp}, Figure~\ref{fig:gemma-length-vs-distance}). Overall, clear margins emerge quickly and then stabilize, making collisions unlikely at any sequence length.

\vspace{1ex}\noindent\textbf{Exhaustive collision  test.}
Different from previous experiments, in this setting (Figure~\ref{fig:exhaustive-boxplots}), we restrict our analysis to the $10$ prompts from the dataset mixture whose embeddings have the smallest last-token distances. For each of these prompts, we appended \emph{every} vocabulary token and computed all pairwise distances between the resulting last-token states, effectively performing an exhaustive search over continuations and yielding more than \textbf{343 billion} prompt pairs per model.
\begin{wrapfigure}[14]{r}{0.30\textwidth}
    \centering
    \vspace{-0.5cm}
    \begin{overpic}[width=\linewidth,trim=0 15 0 0]{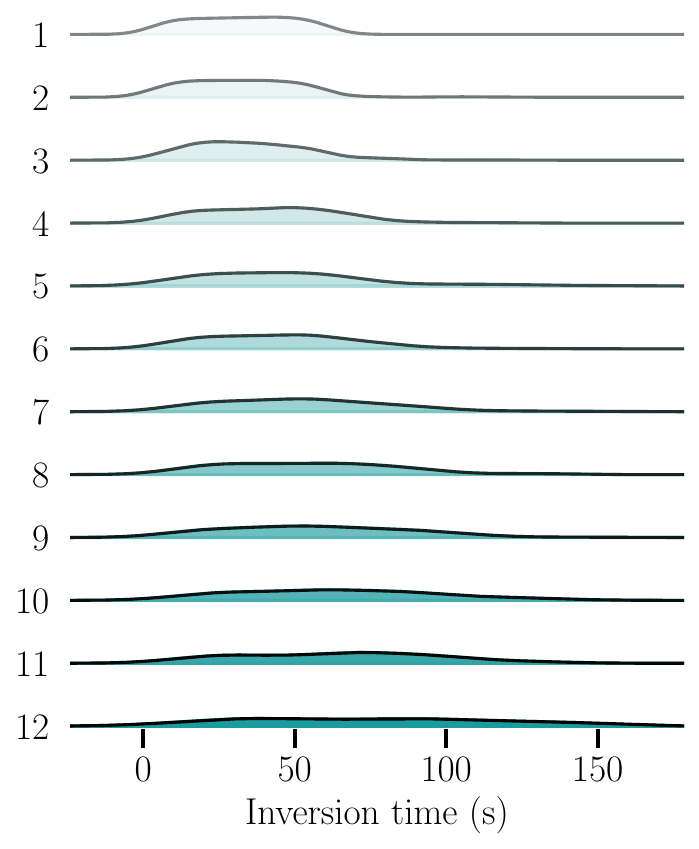}
    \end{overpic}
    \caption{Inversion time as a function of depth. Runtimes rise only mildly across layers.}
    \label{fig:per-layer-inversion-times}
\end{wrapfigure}
This exhaustive experiment helps rule out the possibility that earlier observations were simply due to chance in random sampling rather than a true absence of collisions. 
While a complete search over all possible prompts would be ideal, it is computationally infeasible. The number of unique prompts grows exponentially with sequence length, and the number of pairwise comparisons grows even faster. For context, even with single-token prompts and the vocabulary size of \model{Gemma3-1B}, there are already over 34 billion possible prompt pairs, making exhaustive evaluation entirely impractical.
Our compromise still revealed structure: we identified 5 prompt pairs with highly similar last-token embeddings, suggesting overlapping semantic content and motivating us to ask whether distinct next tokens could preserve meaning, i.e., yield essentially identical last-token hidden states.

Figure~\ref{fig:exhaustive-boxplots} reports the resulting distributions as boxplots for both \model{GPT-2 Small} and \model{Gemma3-1B}, with distances far from zero (no collision), \textbf{confirming local injectivity} as predicted by our theory.

\vspace{-1em}\paragraph{FP4 and INT8 weight quantization.} 
To assess how weight quantization affects pairwise representation distances, we conducted additional experiments with FP4 and INT8 quantization on several models (\model{Llama-3.1-8B}, \model{Phi-4-mini-instruct}, and \model{Mistral-7B-v0.1}). We further extended this analysis to FP4-quantized \textbf{14B} and \textbf{70B} parameter models, namely \model{Phi-4} (14B) and \model{Llama-3.1-70B}. As shown in \Cref{tab:collisions-quant,tab:collisions-large}, across all tested models quantization \textbf{(1)} does not introduce any collisions, \textbf{(2)} more than doubles the minimum distance between representations, thereby preserving the integrity of the representation space, and \textbf{(3)} maintains this separation even as model size increases substantially.

\subsection{Invertibility results}
\label{sec:inv_ref}

We now test whether the theoretical injectivity translates into exact recovery on pre-trained models. Using \textsc{SipIt} with only the hidden states at a fixed layer, we attempt to reconstruct the full prompt token-by-token for \model{GPT-2 Small}.
We sample 100 prompts, with a $90\%$-$10\%$ split between meaningful sentences and random token sequences (to test robustness in unstructured cases), and attempt to reconstruct them from hidden states. 
We compare against \textsc{HardPrompts}~\citep{HardPrompt}, which leverages gradient signals for approximate prompt discovery, and against a \textsc{SipIt} ablation that replaces the gradient-guided candidate policy with the \hyperref[alg:policy-random]{uniformly random policy} (\textsc{BruteForce}).

Other inversion approaches \citep{encoderInversion, LLMInverse, Morris25} tackle a different setting altogether: they operate in black box access, using sequences of next-token logprobs or encoder logits rather than hidden states, and {train} auxiliary inverters to reconstruct text, at high computational cost. Their outputs are typically approximate and not guaranteed exact. These differences make them complementary but not directly comparable to our setting of training-free, \emph{exact} inversion from \emph{hidden states} in decoder-only LMs.

\begin{table}[t]
\centering
\resizebox{0.8\linewidth}{!}{%
\begin{tabular}{c cccc}
\toprule
\multirow{2}{*}{\textbf{Model}} &
\multirow{2}{*}{\textbf{Vocab size}} &
\multicolumn{3}{c}{\textbf{Inversion Performance}} \\
\cmidrule(lr){3-5}
 & & \textbf{Accuracy} & \textbf{Time (s)} & \textbf{Vocab explored (\%)} \\
\midrule
\model{Mistral-7B-v0.1} &  32000 & 100\% & 111.78 $\pm \, \, \,$ 46.50   & 0.19 $\pm$ 0.08 \% \\
\model{Llama-3.1-8B}    & 128255 & 100\% & 549.48 $\pm$ 265.75           & 0.21 $\pm$ 0.10 \% \\
\bottomrule
\end{tabular}
}
\caption{Inversion performance on FP4-quantized models with different vocabulary sizes. \textsc{SipIt} recovers all tokens with 100\% accuracy while exploring less than $0.22\%$ of the vocabulary on average.}
\label{tab:sipit-quant}
\vspace{-1.5em}
\end{table}

\renewcommand{\arraystretch}{1.3}
\begin{wraptable}[7]{r}{0.43\linewidth}
   \centering
   \vspace{-0.4cm}
   \resizebox{0.4\textwidth}{!}{%
   \begin{tabular}{c c c}
        \toprule
      Method                & Mean Time (s) & Accuracy \\
      \midrule
      \textsc{HardPrompts}       & $6132.59 \pm 104.61$       & 0.00            \\
      \textsc{BruteForce} (ours) & $3889.61 \pm 691.17$       & 1.00            \\
      \textsc{SipIt} (ours)      & $\mathbf{28.01 \pm 35.87}$ & $\mathbf{1.00}$ \\
      \bottomrule
   \end{tabular}
   }
   \caption{\textsc{SipIt} ensures efficient exact recovery, unlike \textsc{HardPrompts} (no recovery) or \textsc{BruteForce} (infeasible runtime).}
   \label{tab:prompt-inversion}
\end{wraptable}
Results are reported in \Cref{tab:prompt-inversion}. Across all prompts (20 tokens each), \textsc{SipIt} recovers the \textbf{exact} sequence with $100\%$ token-level accuracy (no errors, no collisions), matching the theoretical guarantee of linear-time convergence. In contrast, \textsc{HardPrompts} completely fails to recover the input, while \textsc{BruteForce} eventually succeeds but at a prohibitive computational cost, requiring several orders of magnitude longer. 

\vspace{-0.7em}\paragraph{Robustness and vocabulary scaling.} The theoretical analysis in \Cref{th:main:sipit_robust} shows that our inversion algorithm is robust to a certain level of noise while maintaining linear scaling in vocabulary size. To empirically validate this, we use FP4-quantized versions of \model{Mistral-7B-v0.1} ($\approx 32\text{K}$ vocabulary size) and \model{Llama-3.1-8B} ($\approx 128\text{K}$). We sample 50 prompts (10 tokens each) and attempt to reconstruct them from hidden states corrupted by FP4 weight quantization. As shown in \Cref{tab:sipit-quant}, \textsc{SipIt} reconstructs all inputs with perfect accuracy while exploring, on average, less than $0.22\%$ of the vocabulary, demonstrating that the gradient-based heuristic is both robust to quantization noise and highly efficient. From a complexity perspective, the nearly constant percentage of tokens explored across the two vocabulary scales empirically confirms the predicted linear scaling.

\vspace{-1.1em}\paragraph{Effect of layer depth.} Finally, Figure~\ref{fig:per-layer-inversion-times} shows inversion times by layer for longer prompts (ranging from $20$ to $200$ tokens). Although deeper layers are costlier in principle (since verifying a candidate and computing gradients require traversing more blocks), the effect is minor: runtimes rise only slightly from first to last layer, and the scaling remains graceful overall. Likely, earlier layers need more iterations to converge, while deep layers store richer information that reduces the search effort. As a result, the net cost remains stable, confirming \textsc{SipIt} is efficient across depth.

\section{Related work}
\label{sec:related}

Our results connect to two active lines of research: theoretical analyses of Transformer architectures, and inverse problems in language modeling. We briefly review both to position our contributions.

\vspace{-1em}\paragraph{Analytical properties of Transformers.}
Viewed as functions on $\mathbb{R}^d$, individual Transformer components are clearly non-injective: 
LayerNorm collapses along per-example statistics \citep{BaLayerNorm}, residual connections can cancel, and 
in attention-only stacks, rank decays doubly-exponentially with depth \citep{DongRankCollapse}. 
Likewise, on the output side, the softmax bottleneck constrains the distributions reachable by language models 
\citep{YangSoftmaxBottleneck}. From this algebraic perspective, Transformers seem inherently many-to-one, 
an intuition echoed by classical completeness and universal-approximation theorems for Transformers, 
which show that highly many-to-one maps can be represented in principle; we briefly review these results in Appendix~\ref{par:uni}.

Our focus is different: we study the discrete-to-continuous map 
from \emph{prompts} $\mathrm{s} \in \mathcal V^{\le K}$ to \emph{hidden states} in $\mathbb{R}^d$. 
In this setting, analytic viewpoints on Transformer computation become powerful: treating each layer as 
a real-analytic map yields almost-sure guarantees that hold at finite width, depth, 
and training horizon (Appendix~\ref{app:analytic-activations} surveys which modern LLMs satisfy 
this assumption and proves the analyticity for all activation functions encountered in practice). 
Recent work has adopted this angle for related properties: \citet{JiangSurjectivity} show that 
building blocks of modern architectures are \emph{almost always surjective}, 
while \citet{sutter2025nonlinearrepresentationdilemmacausal} prove that Transformers at random initialization 
are \emph{almost surely injective} with respect to the entire hidden-state matrix (and only at initialization). 
Differently, we prove injectivity with respect to the {\em parameters} and at the task-relevant 
\emph{last-token state}; crucially, we show that injectivity is not an initialization artifact 
but \emph{persists under training}.

\vspace{-1em}\paragraph{Inverse problems in language modeling.} 
Model inversion asks whether one can reconstruct a model's input prompt from outputs or internal signals \citep{InverseProblem}.
In the context of language models, this question has motivated a growing body of work exploring practical inversion strategies.
Output-to-prompt methods infer prompts from generated continuations but yield only approximate 
reconstructions \citep{out2out}. Recent work shows that even black-box outputs are information-rich: 
\citet{LLMInverse} train a separate inverter to map next-token 
probability vectors to text, and \citet{Morris25} extend this by taking sequences of logprobs, 
applying a linear compression to embedding dimension, and training an encoder-decoder inverter; 
this achieves higher exact-match rates but still without guarantees. Complementarily, 
\citet{encoderInversion} reconstruct text from {encoder} logits via a trained iterative inverter. 
These contributions highlight privacy risks when probabilities or embeddings are exposed, but they 
differ from our setting: they rely on trained inverters, remain approximate, and do not invert 
\emph{hidden states} of decoder-only LMs.

A related line of work frames the task as {automated prompt optimization}, casting prompt design as discrete sequence optimization aligned with downstream performance \citep{OptPrompt1, OptPrompt2, OptPrompt3}; methods such as AutoPrompt \citep{AutoPrompt} and Hard Prompts Made Easy \citep{HardPrompt} use gradient signals to discover effective, but approximate, prompts. Most closely related to ours, \citet{ThomasHiddenNoMore} recover prompts from hidden states via a sequential algorithm that uses an LLM-based policy to rank candidates; lacking injectivity guarantees, however, it must score all vocabulary tokens before committing, with no formal exactness guarantees.

Unlike all prior work, our approach is training-free, efficient, and comes with \emph{provable} linear-time guarantees for \emph{exact} recovery from internal states.

\section{Discussion and conclusions}

This work establishes that decoder-only Transformers are almost surely injective: distinct prompts produce distinct hidden states under standard initialization and training. Building on this structural result, we introduced \textsc{SipIt}, the first algorithm that can recover the \emph{exact} input sequence from hidden activations, with provable linear-time guarantees. Together, these contributions move injectivity from an informal belief to a rigorously grounded and operational property of language models.

The scientific impact is clear. Our findings reconcile two competing views in the community: Transformers as “lossy” due to nonlinearities, normalization, and many-to-one attention, versus language models as injective in their hidden representations. We advocate viewing language models as maps on the \emph{sequence} space rather than the embedding space; under this perspective, we prove that all information about the input sequence is almost surely preserved end-to-end.  The constructive inversion offered by \textsc{SipIt} strengthens this point in practice, establishing a clean baseline for interpretability and auditing: if probes or inversion methods fail, it is not because the information is missing. For mechanistic interpretability in particular, injectivity guarantees that last-token states faithfully encode the full input, giving a sound foundation for causal and probing analyses.

Beyond theory, the findings carry practical and legal implications. Hidden states are not abstractions 
but the prompt in disguise: any system that stores or transmits them is effectively handling user text 
itself, with direct consequences for privacy, deletion, and compliance \citep{miranda2025preserving}. The evolving regulatory landscape 
has not yet fully reckoned with this fact. The Hamburg Data Protection Commissioner, for instance, argued 
that LLM \emph{parameters} do not constitute personal data because training data is transformed into 
abstract mathematical representations during learning, and that it ``remains doubtful whether any 
extractable data records constitute personal data''~\citep{hamburg}. That analysis, however, concerns 
training data encoded in model weights; it does not address the hidden representations computed at 
inference time. Our results reveal that these representations are lossless encodings of the user's 
exact input, recoverable in full via \textsc{SipIt}. Consequently, any system that stores, caches, 
or transmits hidden states is effectively handling the user's verbatim text, and the corresponding pipelines should be subject to the same data-protection obligations as the raw prompts they encode.

Finally, this work opens several directions. Extending the analysis to multimodal architectures such as music and vision Transformers is an open problem. Studying approximate inversion under noise or quantization will clarify how robust invertibility remains in practice. Bridging these technical insights with evolving regulatory frameworks will be crucial for safe and responsible deployment.

\section*{Reproducibility statement}
The experimental setup (hardware, software versions, and dataset construction) is described in \S\ref{sec:exps}; the 100k-prompt benchmark uses uniform sampling from four public datasets detailed in \S\ref{sec:inj_res}. On the theory side, every theorem stated in the main text is accompanied by a complete proof in the appendix: analytic preliminaries in Appendix~\ref{app:preliminaries}, the formal definition of the Transformer language model and the proof that it is real-analytic in Appendix~\ref{sec:app:trans}, almost-sure injectivity and its preservation under training in Appendix~\ref{sec:app:asinj}, and \textsc{SipIt} correctness and robustness in Appendix~\ref{sec:app:left-invertibility}. Appendix~\ref{sec:app:exp} provides Implementation Details and Additional Experiments, and Appendix~\ref{app:analytic-activations} verifies that all activation functions used in all modern LLMs satisfy the analyticity assumption.

\section*{Acknowledgments}
\Cref{fig:teaser} is adapted from \textit{Autoencoder Diagrams} by Keenan Crane (2025), used under 
\href{https://creativecommons.org/publicdomain/zero/1.0/}{CC0 1.0 Universal}. We further acknowledge Adam Barla for the initial discussions on LLMs invertibility.



This work has been supported by project MIS 5154714 of the National Recovery and Resilience Plan Greece 2.0 funded by the European Union under the NextGenerationEU Program, the MUR FIS2 grant n. FIS-2023-00942 ``NEXUS'' (cup B53C25001030001), and partly by Sapienza University of Rome via the Seed of ERC grant ``MINT.AI'' (cup B83C25001040001).

\bibliography{iclr2026_conference}
\bibliographystyle{iclr2026_conference}

\vfill\pagebreak
\appendix

\pdfbookmark[0]{Appendix}{appendix}
\section*{Appendix Overview}
\etocdepthtag.toc{appendix}
\etocsettagdepth{main}{none}
\etocsettagdepth{appendix}{2}
\etocsettocstyle{\vspace{-1em}}{}
\tableofcontents

\section{Preliminaries} \label{app:preliminaries}

This section fixes the notation used throughout the main paper and the appendix (\autoref{subsec:notation}), and it introduces \emph{real-analyticity} as the organizing theme (\autoref{subsec:real-analyticity}). We first review the vector-space notion and its basic closure/composition properties (\autoref{subsubsec:real-analytic-vector}), together with a zero-set principle used in measure-zero arguments. We then extend these ideas to maps between matrix spaces (\autoref{subsubsec:real-analytic-matrix}) via vectorization/matricization and note that analyticity is preserved under matrix compositions. To streamline later proofs, we summarize real-analytic building blocks commonly used in transformer layers--polynomials, exponential/logarithm, softmax, row normalization, matrix products, Hadamard scaling, and stacking (\autoref{subsubsec:real-analytic-components}). Finally, in \autoref{subsec:diff-mt-top}, we collect differential and topological tools--Fréchet derivatives and the Hessian, standard facts on $\mathbb{R}^p$, the inverse function theorem, and pushforwards/absolute continuity--which we use for local invertibility and absolute-continuity arguments. Readers already comfortable with these topics can skim now and return to specific subsections as needed.

\subsection{Notation}\label{subsec:notation}

For arbitrary $T \in \mathbb{N}$, we write $[T] = \{ 1, 2, \ldots, T \}$ to denote the set of positive integers up to $T$. Additionally, we denote the strictly positive real numbers as $\mathbb{R}^+ = (0, \infty)$ and the non-negative real numbers as $\mathbb{R}^+_0 = [0, \infty)$. Similarly, we let $\mathbb{N}_0 = \mathbb{N} \cup \{0\}$.

Discrete sets are denoted by uppercase calligraphic letters $\mathcal{V}$, and a sequence of length $K$ is denoted by lowercase letters: $\mathrm{s} = \langle \mathrm{s}_1, \ldots, \mathrm{s}_K \rangle \in \mathcal{V}^K$. We write $|\mathrm{s}| = K$ to denote the length of the sequence. The set of non-empty sequences of length at most $K$ is denoted as $\mathcal{V}^{\leq K} = \bigcup_{k=1}^K \mathcal{V}^k$. Non-discrete sets are denoted by uppercase calligraphic bold-face letters $\boldsymbol{\mathcal{B}}$.

\begin{remark}
    We will often refer to a discrete set $\mathcal{V}$ as the \textit{vocabulary} and to an element $\mathrm{s} \in \mathcal{V}^{\leq K}$ as an \textit{input}, \textit{context}, or \textit{prompt}. 
\end{remark}

Matrices (vectors) are denoted by uppercase (lowercase) bold-face letters: $\mathbf{X} \in \mathbb{R}^{d_1 \times d_2}$ ($\mathbf{x} \in \mathbb{R}^d$). For vectors and matrices, we frequently use standard norms and common matrix operations. The Hadamard and Kronecker products are defined following \cite{kolda}:

\vspace{-4pt}
\begin{itemize}[leftmargin=*, itemsep=0.5em]
    \item \textbf{$p$-norm:} For a vector $\mathbf{x} \in \mathbb{R}^d$, the $\ell_p$ norm is defined as
    \[
    \| \mathbf{x} \|_p = \left( \sum_{i=1}^d |\mathbf{x}_i|^p \right)^{\tfrac{1}{p}}.
    \]

    \item \textbf{Frobenius norm:} For a matrix $\mathbf{X} \in \mathbb{R}^{d_1 \times d_2}$, the Frobenius norm is defined as
    \[
    \| \mathbf{X} \|_{\mathrm{F}} = \sqrt{\operatorname{tr}(\mathbf{X} \mathbf{X}^\top)} = \sqrt{ \sum_{i=1}^{d_1} \sum_{j=1}^{d_2} \mathbf{X}_{ij}^2 }.
    \]

    \item \textbf{Hadamard product:} The Hadamard (element-wise) product is defined for vectors and matrices of the same shape:
    \begin{align*}
        (\mathbf{x} \odot \mathbf{y})_i &= \mathbf{x}_i \mathbf{y}_i, \quad &&\text{for all } i \in [d], \\
        (\mathbf{X} \odot \mathbf{Y})_{ij} &= \mathbf{X}_{ij} \mathbf{Y}_{ij}, \quad &&\text{for all } i \in [d_1], \, j \in [d_2],
    \end{align*}
    where $\mathbf{x}, \mathbf{y} \in \mathbb{R}^d$ and $\mathbf{X}, \mathbf{Y} \in \mathbb{R}^{d_1 \times d_2}$.

    \item \textbf{Kronecker product:} The Kronecker product of $\mathbf{X} \in \mathbb{R}^{d_1 \times d_2}$ and $\mathbf{Z} \in \mathbb{R}^{d_3 \times d_4}$ is denoted $\mathbf{X} \otimes \mathbf{Z}$ and defined blockwise as
    \[
    \mathbf{X} \otimes \mathbf{Z} =
    \begin{bmatrix}
        \mathbf{X}_{11} \mathbf{Z} & \cdots & \mathbf{X}_{1d_2} \mathbf{Z} \\
        \vdots & \ddots & \vdots \\
        \mathbf{X}_{d_1 1} \mathbf{Z} & \cdots & \mathbf{X}_{d_1 d_2} \mathbf{Z}
    \end{bmatrix}
    \in \mathbb{R}^{(d_1 d_3) \times (d_2 d_4)}.
    \]
\end{itemize}

We denote the all-zeros matrix of size $m \times n$ as $\mathbf{0}_{m \times n}$, and the all-zeros vector of length $m$ as $\mathbf{0}_m$. Similarly, we write $\mathbf{1}_m$ for the all-ones vector of length $m$, and $\mathbf{I}_m$ (or $\mathbf{I}_{m \times m}$ when dimensions must be explicit) for the $m \times m$ identity matrix.

Let $f : \mathcal{V}^{\leq K} \times \mathbb{R}^p \to \mathbb{R}^d$ be a function over a finite vocabulary $\mathcal{V}$ and $K \in \mathbb{N}$. We refer to $f$ as the \textit{model}, to its first argument as the \textit{input sequence}, and to its second argument as the \textit{parameters}.

\begin{remark}
    Throughout our analysis, we assume a finite set of possible input sequences, reflecting the practical limitations and design choices of modern LLMs, specifically the bounded context length. 
\end{remark}

\begin{remark}
    We take the codomain of the model to be $\mathbb{R}^d$, corresponding to the space of token embeddings. This allows us to study how the final embedding (typically used to compute next-token probabilities) depends on both the input sequence and the model parameters.
\end{remark}

\subsection{Real-Analyticity}\label{subsec:real-analyticity}

We now introduce the central notion for our analysis: real-analyticity.  
In its standard form, real-analyticity is defined for functions 
$f : \boldsymbol{\mathcal{U}} \to \mathbb{R}^n$, where $\boldsymbol{\mathcal{U}} \subseteq \mathbb{R}^m$ is an open set.  
Since the transformer architecture is naturally expressed in terms of matrices, it will be convenient to extend this notion to maps of the form $f : \mathbb{R}^{m \times n} \to \mathbb{R}^{a \times b}$.

\medskip
\textbf{Multi-index notation. }  
We use multi-index notation for both vectors and matrices.

\smallskip
\emph{Vector case.}  
Let $\boldsymbol\alpha=(\alpha_1,\ldots,\alpha_m)^\top\in\mathbb{N}_0^m$ and $\mathbf{x},\mathbf{y}\in\mathbb{R}^m$. Define:
$$
|\boldsymbol\alpha| = \sum_{j=1}^m \alpha_j, \qquad
\boldsymbol\alpha! = \prod_{j=1}^m \alpha_j!, \qquad
(\mathbf{x}-\mathbf{y})^{\boldsymbol\alpha} = \prod_{j=1}^m (\mathbf{x}_j - \mathbf{y}_j)^{\alpha_j}.
$$

\smallskip
\emph{Matrix case.}  
Let $\mathbf{A} = (\alpha_{uv}) \in \mathbb{N}_0^{m \times n}$ and $\mathbf{X},\mathbf{Y} \in \mathbb{R}^{m \times n}$. Define:
$$
|\mathbf{A}| = \sum_{u=1}^m \sum_{v=1}^n \alpha_{uv}, \qquad
\mathbf{A}! = \prod_{u=1}^m \prod_{v=1}^n \alpha_{uv}!, \qquad
(\mathbf{X} - \mathbf{Y})^{\mathbf{A}} = \prod_{u=1}^m \prod_{v=1}^n (\mathbf{X}_{uv} - \mathbf{Y}_{uv})^{\alpha_{uv}}.
$$

Given an open set $\boldsymbol{\mathcal{U}}\subseteq\mathbb{R}^m$ and a map $f:\boldsymbol{\mathcal{U}}\to\mathbb{R}$, we write
$$
\mathbf{d}^{\boldsymbol\alpha} f(\mathbf{x}) \;:=\; 
\frac{\partial^{|\boldsymbol\alpha|} f}{\partial \mathbf{x}_1^{\alpha_1}\cdots \partial \mathbf{x}_m^{\alpha_m}}(\mathbf{x})
$$
for the mixed partial derivative (when it exists). Unless stated otherwise, we assume $f\in C^\infty(\boldsymbol{\mathcal{U}})$, so $\mathbf{d}^{\boldsymbol\alpha} f$ exists and is continuous for all $\boldsymbol\alpha\in\mathbb{N}_0^m$; for vector-valued maps $f=(f_1,\ldots,f_n)$ the operator $\mathbf{d}^{\boldsymbol\alpha}$ acts componentwise. We also use the convention $\mathbf{d}^{\mathbf{0}}f=f$.

\subsubsection{Real-Analytic Functions with Vector Inputs}\label{subsubsec:real-analytic-vector}

We begin with the standard vector-space definition and its basic algebraic properties. These are the building blocks from which all later analyticity arguments are assembled.

\begin{definition}[Real-analytic functions, {\citealt[Definition 1.1.3]{Lewis2014HolomorphicRealAnalyticCalculus}}]\label{def:real-analytic}
Let $\boldsymbol{\mathcal{U}} \subseteq \mathbb{R}^m$ be open. A function
$f : \boldsymbol{\mathcal{U}} \to \mathbb{R}$ is
\textbf{real-analytic} on $\boldsymbol{\mathcal{U}}$ if, for every $\mathbf{y} \in \boldsymbol{\mathcal{U}}$, there exist
coefficients $\{c_{\boldsymbol\alpha} \in \mathbb{R} \}_{\boldsymbol\alpha \in \mathbb{N}_0^{m}}$
and $r>0$ such that
\begin{equation*}
f(\mathbf{x}) = \sum_{\boldsymbol\alpha \in \mathbb{N}_0^{m}} c_{\boldsymbol\alpha}\,(\mathbf{x} - \mathbf{y})^{\boldsymbol\alpha}
\end{equation*}
for all $\mathbf{x} \in \boldsymbol{\mathcal{U}}$ with $\| \mathbf{x} - \mathbf{y} \|_2 < r$. The set of real-analytic functions on $\boldsymbol{\mathcal{U}}$
is denoted by $C^\omega(\boldsymbol{\mathcal{U}})$.

\smallskip
A map $f : \boldsymbol{\mathcal{U}} \to \mathbb{R}^{n}$ is real-analytic on $\boldsymbol{\mathcal{U}}$ if each of its components
$f_1,\dots,f_n: \boldsymbol{\mathcal{U}} \to \mathbb{R}$ is real-analytic.
The set of such maps is denoted $C^\omega(\boldsymbol{\mathcal{U}} \, ; \, \mathbb{R}^n)$.
\end{definition}

\begin{remark}
To establish real-analyticity of a vector-valued mapping (e.g., an MLP, attention mechanism, or LayerNorm), it suffices to prove real-analyticity of each scalar component.
\end{remark}

\begin{proposition}[Closure properties, {\citealt[Proposition 1.2.1]{Lewis2014HolomorphicRealAnalyticCalculus}}]\label{prop:closure-real-analytic}
Let $f, g : \mathbb{R}^m \to \mathbb{R}$ be real-analytic maps. Then, the following hold:

\vspace{-7px}
\begin{enumerate}[itemsep=0em]
    \item \textbf{Addition:} $f + g \in C^\omega(\mathbb{R}^m)$.
    \item \textbf{Product:} $fg \in C^\omega(\mathbb{R}^m)$.
    \item \textbf{Quotient:} If $g(\mathbf{x}) \neq 0$ for all $\mathbf{x} \in \mathbb{R}^m$, then $f/g \in C^\omega(\mathbb{R}^m)$.
\end{enumerate}
\end{proposition}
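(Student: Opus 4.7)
The plan is to verify each of the three closure properties by working directly with the convergent power-series expansions guaranteed by \autoref{def:real-analytic}. Fix an arbitrary center $\mathbf{y}\in\mathbb{R}^m$ and write
\begin{equation*}
f(\mathbf{x})=\sum_{\boldsymbol\alpha\in\mathbb{N}_0^m} a_{\boldsymbol\alpha}\,(\mathbf{x}-\mathbf{y})^{\boldsymbol\alpha},\qquad g(\mathbf{x})=\sum_{\boldsymbol\alpha\in\mathbb{N}_0^m} b_{\boldsymbol\alpha}\,(\mathbf{x}-\mathbf{y})^{\boldsymbol\alpha}
\end{equation*}
with radii $r_f,r_g>0$. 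Set $r=\min(r_f,r_g)$; both series converge absolutely on the ball $\|\mathbf{x}-\mathbf{y}\|_2<r$. Since $\mathbf{y}$ is arbitrary, it will suffice to produce a convergent expansion around each such $\mathbf{y}$ for the combined map.

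For the \emph{sum}, termwise addition gives $(f+g)(\mathbf{x})=\sum_{\boldsymbol\alpha}(a_{\boldsymbol\alpha}+b_{\boldsymbol\alpha})(\mathbf{x}-\mathbf{y})^{\boldsymbol\alpha}$, which inherits absolute convergence on the common ball and is already in the required form. For the \emph{product}, I would invoke the multivariate Cauchy product, yielding coefficients $c_{\boldsymbol\alpha}=\sum_{\boldsymbol\beta+\boldsymbol\gamma=\boldsymbol\alpha} a_{\boldsymbol\beta} b_{\boldsymbol\gamma}$. The reordering of the resulting double sum is legal because absolute convergence of both factors on $\|\mathbf{x}-\mathbf{y}\|_2<r$ implies absolute summability of the doubly-indexed family $(a_{\boldsymbol\beta}b_{\boldsymbol\gamma})(\mathbf{x}-\mathbf{y})^{\boldsymbol\beta+\boldsymbol\gamma}$, so Fubini for series allows collecting terms by total multi-index.

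For the \emph{quotient}, I would reduce to showing that $1/g\in C^\omega(\mathbb{R}^m)$ whenever $g$ is nowhere zero, and then use the product case on $f\cdot(1/g)$. Since $g(\mathbf{y})\neq 0$, write $g(\mathbf{x})=g(\mathbf{y})\bigl(1+h(\mathbf{x})\bigr)$ with $h(\mathbf{x})=\bigl(g(\mathbf{x})-g(\mathbf{y})\bigr)/g(\mathbf{y})$, so $h$ is analytic at $\mathbf{y}$ with $h(\mathbf{y})=0$. By continuity, shrink the radius to some $\rho\in(0,r_g]$ on which $|h(\mathbf{x})|\le \tfrac{1}{2}$, and expand
\begin{equation*}
\frac{1}{g(\mathbf{x})}=\frac{1}{g(\mathbf{y})}\sum_{k=0}^{\infty}\bigl(-h(\mathbf{x})\bigr)^k.
\end{equation*}
Each $h^k$ is analytic on $B_\rho(\mathbf{y})$ by iterating the product case, and one would substitute the power series of $h$ into the geometric series and reindex.

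The main obstacle is precisely this last reindexing: rearranging a series of power series into a single power series in $(\mathbf{x}-\mathbf{y})$ requires checking absolute summability of a triply-indexed family $(k,\boldsymbol\alpha,\text{Cauchy index})$. The clean way to discharge this is to dominate $|h(\mathbf{x})|$ by its majorant series $\tilde h(t)=\sum_{\boldsymbol\alpha}|h_{\boldsymbol\alpha}|t^{|\boldsymbol\alpha|}$ on a small polydisk, note that $\tilde h$ is a real-analytic one-variable series with $\tilde h(0)=0$, and hence $1/(1-\tilde h(t))$ is analytic at $t=0$ by the classical scalar result; this produces a uniform absolute bound that legitimates all rearrangements and yields the desired multi-index expansion. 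The quotient case then follows by combining this with the product case already established.
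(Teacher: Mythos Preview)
The paper does not prove this proposition at all; it is stated as a known result and attributed to \citealt[Proposition~1.2.1]{Lewis2014HolomorphicRealAnalyticCalculus}, with no proof given in the text. Your sketch follows the standard power-series approach (termwise sum, Cauchy product, geometric series plus majorant for the reciprocal) and is essentially correct; the only place that would need more care in a full write-up is the quotient case, where you would want to invoke Cauchy-type estimates $|h_{\boldsymbol\alpha}|\le M\rho^{-|\boldsymbol\alpha|}$ on a polydisk to ensure the one-variable majorant $\tilde h(t)=\sum_{n\ge 1}\bigl(\sum_{|\boldsymbol\alpha|=n}|h_{\boldsymbol\alpha}|\bigr)t^n$ has positive radius of convergence and $\tilde h(0)=0$, after which the geometric-series domination and rearrangement go through exactly as you describe.
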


\begin{proposition}[Composition, {\citealt[Proposition 1.2.2]{Lewis2014HolomorphicRealAnalyticCalculus}}]\label{prop:comp-real-analytic}
Let $f : \mathbb{R}^m \to \mathbb{R}^n$ and $g : \mathbb{R}^n \to \mathbb{R}^k$ be real-analytic maps. Then, the composition $g \circ f : \mathbb{R}^m \to \mathbb{R}^k$ is real-analytic.
\end{proposition}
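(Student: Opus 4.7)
The plan is to reduce the claim to a local, scalar statement and then invoke the classical substitution theorem for absolutely convergent power series. By the remark following Definition \ref{def:real-analytic}, analyticity of a vector-valued map is equivalent to analyticity of each coordinate, so it suffices to treat the case $g : \mathbb{R}^n \to \mathbb{R}$ (and therefore $g \circ f : \mathbb{R}^m \to \mathbb{R}$). Fix an arbitrary point $\mathbf{x}_0 \in \mathbb{R}^m$, put $\mathbf{y}_0 = f(\mathbf{x}_0)$, and after translation assume $\mathbf{x}_0 = \mathbf{0}$ and $\mathbf{y}_0 = \mathbf{0}$; the problem then becomes exhibiting a convergent power-series expansion for $g \circ f$ in some neighborhood of the origin.

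First I would invoke Definition \ref{def:real-analytic} to write
\[
f_j(\mathbf{x}) = \sum_{\boldsymbol\alpha \in \mathbb{N}_0^m} a^{(j)}_{\boldsymbol\alpha}\,\mathbf{x}^{\boldsymbol\alpha}, \qquad g(\mathbf{y}) = \sum_{\boldsymbol\beta \in \mathbb{N}_0^n} b_{\boldsymbol\beta}\,\mathbf{y}^{\boldsymbol\beta},
\]
both absolutely convergent on open polydiscs around the origin. The natural candidate expansion for $g \circ f$ is obtained by formal substitution: replace $\mathbf{y}_j$ with the series for $f_j(\mathbf{x})$ inside $g$, expand the resulting products via the multinomial theorem, and collect like monomials in $\mathbf{x}$. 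Producing this formal series is purely algebraic; the real work lies in justifying that it actually defines a convergent power series representing the composition.

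The main obstacle, and the step I would devote the most care to, is establishing absolute convergence of the resulting multiply-indexed series, so that Fubini--Tonelli legitimizes the rearrangement. To that end I would build a majorant: let $\rho > 0$ lie inside the polydisc of convergence of $g$, and replace each $b_{\boldsymbol\beta}$ by $|b_{\boldsymbol\beta}|$ and each $\mathbf{y}_j$ by a positive constant $t_j < \rho$. Since $f_j(\mathbf{0}) = 0$ and $f_j$ is continuous (indeed, given by a convergent series with $a^{(j)}_{\mathbf{0}} = 0$), there exists $r > 0$ such that for $\|\mathbf{x}\|_\infty \le r$ the absolute majorant series $\sum_{\boldsymbol\alpha}|a^{(j)}_{\boldsymbol\alpha}| r^{|\boldsymbol\alpha|}$ is bounded by $t_j$ for every $j$. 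Substituting these upper bounds into $g$'s absolute majorant produces a finite dominating sum, which validates rearrangement of the formal substitution into a single absolutely convergent power series in $\mathbf{x}$ on the polydisc of radius $r$.

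Once such an expansion exists around an arbitrary $\mathbf{x}_0 \in \mathbb{R}^m$, Definition \ref{def:real-analytic} yields $g \circ f \in C^\omega(\mathbb{R}^m \, ; \, \mathbb{R}^k)$. The technical heart is thus the Cauchy-type majorant estimate that controls composition of power series; translations, componentwise reduction, and radius selection are routine. The only subtlety worth flagging is the assumption $\mathbf{y}_0 = f(\mathbf{x}_0)$: centering $g$'s expansion at $\mathbf{y}_0$ rather than at $\mathbf{0}$ is essential, because without it the substituted series would not have vanishing constant term and the majorant argument would not close.
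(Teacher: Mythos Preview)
The paper does not supply its own proof of this proposition; it simply cites it from \citet[Proposition~1.2.2]{Lewis2014HolomorphicRealAnalyticCalculus} as a standard fact and moves on. Your proposal is the classical argument (reduction to scalar output, centering, formal substitution, justification via a Cauchy-type majorant), and it is correct in outline. The only point I would tighten is the final step: after you have absolute convergence of the doubly-indexed sum, you still need a sentence saying that the rearranged series not only converges but actually equals $g(f(\mathbf{x}))$ on the small polydisc; this follows from the same absolute-convergence (Fubini) bookkeeping, but it is worth stating explicitly rather than leaving it implicit in ``validates rearrangement.''
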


\begin{remark}
For simplicity, we do not state the closure properties in their most general form, where $f$ and $g$ may be defined on different open subsets of $\mathbb{R}^m$.  
This avoids additional notation involving intersections of domains.  
Since every function of interest in our later analysis is defined on the whole space $\mathbb{R}^m$, this restriction entails no loss of generality.
\end{remark}

\begin{theorem}[Zero sets of nontrivial real-analytic maps {\citealt{mityagin2015zero}}]\label{thm:zero-measure-roots}
Let $\boldsymbol{\mathcal{U}}\subseteq\mathbb{R}^{m}$ be connected and open, and let $f\in C^\omega(\boldsymbol{\mathcal{U}} \, ; \, \mathbb{R}^{n})$. 
If $f\not\equiv \mathbf{0}_n$, then its zero set
$$
Z(f)\;:=\; f^{-1}(\{\mathbf{0}_n\}) \;=\; \{\mathbf{x} \in \boldsymbol{\mathcal{U}} :  f(\mathbf{x})=\mathbf{0}_n\}
$$
has Lebesgue measure zero in $\mathbb{R}^{m}$ (i.e. $\mathrm{Leb}_m\big(Z(f)\big) = 0$).
Equivalently, if there exists $\mathbf{x} \in \boldsymbol{\mathcal{U}}$ with $f(\mathbf{x}) \neq \mathbf{0}_n$, then $\mathrm{Leb}_m\big(f^{-1}(\{\mathbf{0}_n\})\big) = 0$.
\end{theorem}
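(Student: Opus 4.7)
The plan is to reduce to the scalar case $n=1$ and then to induct on the ambient dimension $m$. If $f=(f_1,\ldots,f_n)$ is not identically zero, then some component $f_i \in C^\omega(\boldsymbol{\mathcal{U}})$ is not identically zero, and since $Z(f) \subseteq Z(f_i)$ it suffices to prove the scalar statement. The base case $m=1$ follows from the classical identity principle for one-variable real-analytic functions on a connected open subset of $\mathbb{R}$: a nontrivial such $f$ has only isolated zeros, so $Z(f)$ is at most countable and hence has Lebesgue measure zero.

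\textbf{Inductive step.} Assume the claim in dimension $m-1$ and let $f \in C^\omega(\boldsymbol{\mathcal{U}})$ with $\boldsymbol{\mathcal{U}} \subseteq \mathbb{R}^m$ connected open and $f \not\equiv 0$. Fix any $\mathbf{y} \in \boldsymbol{\mathcal{U}}$. By the $m$-variable identity theorem (a consequence of the power-series definition together with connectedness of $\boldsymbol{\mathcal{U}}$), not all mixed partials $\mathbf{d}^{\boldsymbol{\alpha}} f(\mathbf{y})$ can vanish, for otherwise $f$ would be identically zero on a ball around $\mathbf{y}$ and hence on all of $\boldsymbol{\mathcal{U}}$. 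After a linear change of coordinates I may assume $\partial_{x_m}^k f(\mathbf{y}) \neq 0$ for some minimal $k \geq 0$. Writing $\mathbf{x}=(\mathbf{x}',x_m)$, Weierstrass preparation factors $f(\mathbf{x}',x_m) = u(\mathbf{x}',x_m)\, P(\mathbf{x}',x_m)$ on a small ball around $\mathbf{y}$, where $u$ is real-analytic and nonvanishing and $P$ is a Weierstrass polynomial of degree $k$ in $x_m$ with real-analytic coefficients in $\mathbf{x}'$. Locally $Z(f) = Z(P)$, and each slice $P(\mathbf{x}',\cdot)$ has at most $k$ roots, so Fubini yields $\mathrm{Leb}_m\!\bigl(Z(f) \cap \boldsymbol{\mathcal{B}}(\mathbf{y},r)\bigr) = 0$. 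Because $\boldsymbol{\mathcal{U}}$ is Lindel\"of, it is covered by countably many such balls, and a countable union of null sets is null.

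\textbf{Main obstacle.} The delicate step is making the local reduction watertight. When invoking Weierstrass preparation one must verify that both $u$ and the coefficients of $P$ are genuinely real-analytic (not merely smooth) on a joint neighborhood of $\mathbf{y}$, which requires the full strength of the real-analytic preparation theorem. A more elementary alternative avoids preparation and argues directly by slicing: define $E \subseteq \pi(\boldsymbol{\mathcal{U}})$ as the set of projections whose slice $f(\mathbf{x}',\cdot)$ vanishes identically on the corresponding slice domain, observe that $E$ is contained in the zero set of some nontrivial real-analytic function of $\mathbf{x}'$ alone, namely a well-chosen partial derivative $\partial_{x_m}^{j} f(\,\cdot\,,x_m^0)$ whose non-triviality is again secured by the identity theorem, apply the inductive hypothesis to deduce $\mathrm{Leb}_{m-1}(E)=0$, and combine with the one-dimensional base case via Fubini. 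Either route, the two key ingredients invoked at the right moments are the identity theorem and the $m=1$ base case.
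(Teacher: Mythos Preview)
The paper does not actually prove this theorem: it cites Mityagin (2015) for the scalar case and, in the subsequent Remark, supplies only the one-line reduction from vector-valued $f=(f_1,\ldots,f_n)$ to a single nontrivial component $f_j$ via $Z(f)\subseteq Z(f_j)$. Your opening paragraph performs exactly this reduction, so on that step you and the paper agree verbatim.

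Where you go further is in sketching a self-contained proof of the scalar case by induction on $m$, which the paper simply outsources. Your outline is the standard one and is correct in substance: the $m=1$ base via isolated zeros, the identity theorem to guarantee a nonvanishing derivative at every point of $\boldsymbol{\mathcal{U}}$, a local normal form (Weierstrass preparation or the bare slicing argument) to bound the number of zeros in each $x_m$-fiber, Fubini to get a null local patch, and Lindel\"of to globalize. Two small points worth tightening if you keep the slicing variant: first, the claim ``after a linear change of coordinates $\partial_{x_m}^k f(\mathbf{y})\neq 0$'' deserves one sentence (restrict the lowest nonzero homogeneous Taylor part to a generic line); second, your description of the exceptional set $E$ should be made local to a product neighborhood so that the candidate witness $g(\mathbf{x}')=\partial_{x_m}^{k} f(\mathbf{x}',y_m)$ is well defined and its nontriviality at $\mathbf{y}'$ is immediate. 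With those clarifications the argument is complete; relative to the paper you are supplying strictly more, not something different.
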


\begin{remark}
The result in \cite{mityagin2015zero} is stated for scalar-valued maps $f : \boldsymbol{\mathcal{U}} \to \mathbb{R}$. The extension to vector-valued maps $f = (f_1, \ldots, f_n) : \boldsymbol{\mathcal{U}} \to \mathbb{R}^n$ is immediate: the zero set of $f$ is the intersection of the zero sets of its scalar components,
$$
Z(f) = \bigcap_{i=1}^n Z(f_i),
$$
and if $f \not\equiv \mathbf{0}_n$, then at least one component $f_j \not\equiv 0$, so $Z(f) \subseteq Z(f_j)$, which has measure zero by the scalar case.
\end{remark}

\subsubsection{Real-Analytic Functions with Matrix Inputs}\label{subsubsec:real-analytic-matrix}

Since transformer layers operate on matrices (e.g., $\mathbf{X} \in \mathbb{R}^{T \times d}$), we need to extend real-analyticity from vector spaces to matrix spaces. The key tool is the vectorization operator, which lets us reduce matrix-analytic questions to the vector case treated above.

\begin{definition}[Real-analyticity on matrix spaces]\label{def:matrix-analytic}

Let $\boldsymbol{\mathcal{U}} \subseteq \mathbb{R}^{m\times n}$ be open. A function $f : \boldsymbol{\mathcal{U}} \to \mathbb{R}$ is \textbf{real-analytic} on $\boldsymbol{\mathcal{U}}$ if, for every $\mathbf{Y} \in \boldsymbol{\mathcal{U}}$, there exist coefficients $\{ c_\mathbf{A} \in \mathbb{R} \}_{\mathbf{A} \in \mathbb{N}_0^{m \times n}}$ and $r>0$ such that
$$
  f(\mathbf{X})=\sum_{\mathbf{A} \in \mathbb{N}_0^{m\times n}}
  c_{\mathbf{A}} (\mathbf{X} - \mathbf{Y})^\mathbf{A}
$$
for all $\mathbf{X} \in \boldsymbol{\mathcal{U}}$ with  $\| \mathbf{X} - \mathbf{Y}\|_\mathrm{F} < r$.

\smallskip
A map $f : \boldsymbol{\mathcal{U}} \to \mathbb{R}^{a \times b}$ is real-analytic on $\boldsymbol{\mathcal{U}}$ if each of its components $f_{ij} : \boldsymbol{\mathcal{U}} \to \mathbb{R}$ is real-analytic. The set of such maps is denoted $C^\omega(\boldsymbol{\mathcal{U}} \, ; \, \mathbb{R}^{a \times b})$.
\end{definition}

\begin{remark}
    In the special case where $n = b = 1$, the domain and codomain reduce to $\mathbb{R}^m$ and $\mathbb{R}^a$, respectively. Then \autoref{def:matrix-analytic} recovers \autoref{def:real-analytic}. Thus, \autoref{def:matrix-analytic} generalizes real-analyticity to functions between matrix spaces.
\end{remark}

\begin{definition}[Vectorization and matricization Operators]
Let $\mathrm{vec}_{m, n} : \mathbb{R}^{m \times n} \to \mathbb{R}^{mn}$ denote the standard \textbf{vectorization operator}, which stacks the columns of a matrix into a single column vector \citep{henderson1981vec}. 

We also define the corresponding \textbf{matricization operator} $\mathrm{mat}_{m,n} : \mathbb{R}^{mn} \to \mathbb{R}^{m \times n}$. As shown in \citealt{chacon2020higher}, the vectorization and matricization operators are mutual inverses:
\begin{align}
    \mathrm{mat}_{m,n}\big( \mathrm{vec} _{m,n}(\mathbf{X}) \big) 
    &= \mathbf{X} 
    \quad \forall\, \mathbf{X} \in \mathbb{R}^{m \times n} \\
    \mathrm{vec}_{m,n}\big( \mathrm{mat}_{m,n} (\mathbf{x}) \big) &= \mathbf{x} \quad \;\forall \, \mathbf{x} \in \mathbb{R}^{mn}
\end{align}

Furthermore, if $\mathbf{x} \in \mathbb{R}^{mn}$ and $\mathbf{X} \in \mathbb{R}^{m \times n}$ are related by vectorization and matricization, i.e., $\mathbf{x} = \mathrm{vec}_{m,n}(\mathbf{X})$ and $\mathbf{X} = \mathrm{mat}_{m,n}(\mathbf{x})$, then their norms coincide:
$$
\| \mathbf{x} \|_2 = \| \mathbf{X} \|_\mathrm{F}.
$$
\end{definition}

\begin{definition}[Vectorized Form of Function]
    Let $\boldsymbol{\mathcal{U}} \subseteq \mathbb{R}^{m\times n}$ be open and $\tilde{\boldsymbol{\mathcal{U}}} = \mathrm{vec}_{m,n}(\boldsymbol{\mathcal{U}})$ (also open since $\mathrm{vec}$ is a linear homeomorphism). We denote the \textbf{vectorized form} of a function $f : \boldsymbol{\mathcal{U}} \to \mathbb{R}^{a \times b}$ as 
    $$
    \tilde{f} := \mathrm{vec}_{a,b}\circ f\circ \mathrm{mat}_{m,n} : \tilde{\boldsymbol{\mathcal{U}}} \to \mathbb{R}^{ab}.
    $$

    Equivalently, for all $\mathbf{X} \in \boldsymbol{\mathcal{U}}$:
    \begin{equation}
        f(\mathbf{X}) =  \mathrm{mat}_{a,b} \bigg( \tilde{f} \big( \mathrm{vec}_{m,n}(\mathbf{X}) \big) \bigg)
    \end{equation}
\end{definition}

\begin{lemma}[Equivalence real-analyticity]\label{lem:equiv-analytic-column}
Let $\boldsymbol{\mathcal{U}} \subseteq \mathbb{R}^{m\times n}$ be open, $\tilde{\boldsymbol{\mathcal{U}}} = \mathrm{vec}_{m,n}(\boldsymbol{\mathcal{U}})$, and let $f : \boldsymbol{\mathcal{U}} \to \mathbb{R}^{a\times b}$ with its vectorized form $\tilde{f} : \tilde{\boldsymbol{\mathcal{U}}} \to \mathbb{R}^{ab}$.

Fix $\mathbf{Y} \in \boldsymbol{\mathcal{U}}$ and set $\mathbf{y} = \mathrm{vec}_{m,n}(\mathbf{Y}) \in \tilde{\boldsymbol{\mathcal{U}}}$.
Then the following are equivalent:
\vspace{-5px}
\begin{enumerate}[itemsep=0em]
  \item $f$ is real-analytic at $\mathbf{Y}$ (in the sense of \autoref{def:matrix-analytic}).
  \item $\tilde{f}$ is real-analytic at $\mathbf{y}$ (in the sense of \autoref{def:real-analytic}).
\end{enumerate}
\end{lemma}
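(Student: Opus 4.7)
The plan is to reduce the lemma to the scalar case via componentwise analyticity, then exploit the fact that vectorization is a linear isometry between $(\mathbb{R}^{m \times n}, \|\cdot\|_{\mathrm{F}})$ and $(\mathbb{R}^{mn}, \|\cdot\|_2)$ to transport the defining power series back and forth. First I would observe that, by the componentwise definitions of real-analyticity for matrix- and vector-valued maps, it suffices to prove the equivalence when $a = b = 1$: indeed, the scalar components of $f$ are in bijection with those of $\tilde{f}$ through $\mathrm{vec}_{a,b}$, so scalar equivalence propagates to the full statement in both directions.

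Next I would set up a canonical bijection $\Phi : \mathbb{N}_0^{m\times n} \to \mathbb{N}_0^{mn}$ that stacks columns exactly as $\mathrm{vec}_{m,n}$ does, and then verify the identifications that drive all later computations: for every $\mathbf{X} \in \mathbb{R}^{m \times n}$ with $\mathbf{x} = \mathrm{vec}_{m,n}(\mathbf{X})$, every matrix multi-index $\mathbf{A}$, and $\boldsymbol{\alpha} := \Phi(\mathbf{A})$,
$$
(\mathbf{X} - \mathbf{Y})^{\mathbf{A}} = (\mathbf{x} - \mathbf{y})^{\boldsymbol{\alpha}}, \qquad |\mathbf{A}| = |\boldsymbol{\alpha}|, \qquad \mathbf{A}! = \boldsymbol{\alpha}!,
$$
together with the norm identity $\|\mathbf{X} - \mathbf{Y}\|_{\mathrm{F}} = \|\mathbf{x} - \mathbf{y}\|_2$. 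These are immediate from the fact that $\mathrm{vec}_{m,n}$ is a linear bijection that pairs matrix entries with vector coordinates one-to-one.

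Given these identifications, the implication $(1) \Rightarrow (2)$ reindexes the matrix power series $f(\mathbf{X}) = \sum_{\mathbf{A}} c_{\mathbf{A}}(\mathbf{X} - \mathbf{Y})^{\mathbf{A}}$ through $\Phi$ to obtain the vector expansion $\tilde{f}(\mathbf{x}) = \sum_{\boldsymbol{\alpha}} c_{\Phi^{-1}(\boldsymbol{\alpha})}(\mathbf{x} - \mathbf{y})^{\boldsymbol{\alpha}}$ on the Euclidean ball $\{\mathbf{x} : \|\mathbf{x} - \mathbf{y}\|_2 < r\}$, which is exactly what \autoref{def:real-analytic} demands. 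The converse $(2) \Rightarrow (1)$ is identical after swapping the roles of $\Phi$ and $\Phi^{-1}$ and invoking the matrix-space definition from \autoref{def:matrix-analytic}.

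The hard part is essentially combinatorial rather than analytic: making the reindexing and the absolute-convergence argument airtight. Absolute convergence of the matrix series on the open Frobenius ball of radius $r$ must correspond, under $\Phi$, to absolute convergence of the vector series on the same-radius Euclidean ball; since the summands are equal term-by-term and the two balls coincide under $\mathrm{vec}_{m,n}$, summability is preserved automatically, so the usual Fubini-type reordering for absolutely convergent series is harmless. Once this bookkeeping is in place, the scalar equivalence is immediate, and the general matrix/vector-valued case follows by applying it to each component, finishing both directions at once.
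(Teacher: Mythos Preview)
Your proposal is correct and follows essentially the same route as the paper: both arguments set up the bijection between matrix multi-indices $\mathbf{A}\in\mathbb{N}_0^{m\times n}$ and vector multi-indices $\boldsymbol{\alpha}\in\mathbb{N}_0^{mn}$ via $\mathrm{vec}_{m,n}$, verify the monomial identity $(\mathbf{X}-\mathbf{Y})^{\mathbf{A}}=(\mathbf{x}-\mathbf{y})^{\boldsymbol{\alpha}}$ and the norm identity $\|\mathbf{X}-\mathbf{Y}\|_{\mathrm{F}}=\|\mathbf{x}-\mathbf{y}\|_2$, and then reindex the defining power series componentwise. The only cosmetic difference is that you reduce to the scalar case $a=b=1$ up front, whereas the paper carries the output-component bookkeeping through the proof; the content is the same.
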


\begin{proof}
We begin by establishing the correspondence between matrix and vector indices in $\mathbb{R}^{k \times \ell}$ and $\mathbb{R}^{k\ell}$. For $s \in [k\ell]$, define:
\begin{align*}
    u(s) &:= 1 + (s - 1) \bmod k &&\text{(row index)} \\
    v(s) &:= 1 + \left\lfloor \frac{s - 1}{k} \right\rfloor &&\text{(column index)}
\end{align*}
Then $(u(s), v(s)) \in [k] \times [\ell]$ gives the matrix coordinates corresponding to the $s$th entry of the vectorization. Conversely, for $(u, v) \in [k] \times [\ell]$, define:
\[
s(u,v) := u + (v - 1)k \in [k\ell]
\]
to recover the linear index.

When clear from context, we omit arguments and simply write $u$, $v$, or $s$ for readability.

\smallskip

Let $\mathbf{X}, \mathbf{Y} \in \mathbb{R}^{m \times n}$, with vectorizations $\mathbf{x} = \mathrm{vec}_{m,n}(\mathbf{X})$ and $\mathbf{y} = \mathrm{vec}_{m,n}(\mathbf{Y})$. For a vector multi-index $\boldsymbol{\alpha} \in \mathbb{N}_0^{mn}$, define the corresponding matrix multi-index $\mathbf{A}_{\boldsymbol{\alpha}} := \mathrm{mat}_{m,n}(\boldsymbol{\alpha})$, so that:
\begin{equation}\label{eq:ident2}
  (\mathbf{x} - \mathbf{y})^{\boldsymbol{\alpha}} 
  = \prod_{s=1}^{mn} (\mathbf{x}_s - \mathbf{y}_s)^{\boldsymbol{\alpha}_s} 
  = \prod_{u=1}^m \prod_{v=1}^n (\mathbf{X}_{uv} - \mathbf{Y}_{uv})^{(\mathbf{A}_{\boldsymbol{\alpha}})_{uv}} 
  = (\mathbf{X} - \mathbf{Y})^{\mathbf{A}_{\boldsymbol{\alpha}}}.
\end{equation}

Similarly, for a matrix multi-index $\mathbf{A} \in \mathbb{N}_0^{m \times n}$, define the corresponding vector multi-index $\boldsymbol{\alpha}_{\mathbf{A}} := \mathrm{vec}_{m,n}(\mathbf{A})$, giving:
\begin{equation}\label{eq:ident1}
  (\mathbf{X} - \mathbf{Y})^{\mathbf{A}} 
  = \prod_{u=1}^m \prod_{v=1}^n (\mathbf{X}_{uv} - \mathbf{Y}_{uv})^{\mathbf{A}_{uv}} 
  = \prod_{s=1}^{mn} (\mathbf{x}_s - \mathbf{y}_s)^{(\boldsymbol{\alpha}_{\mathbf{A}})_s} 
  = (\mathbf{x} - \mathbf{y})^{\boldsymbol{\alpha}_{\mathbf{A}}}.
\end{equation}

Now let $\mathbf{M} \in \boldsymbol{\mathcal{U}}$, and let $\mathbf{m} = \mathrm{vec}_{m,n}(\mathbf{M}) \in \tilde{\boldsymbol{\mathcal{U}}}$. By definition of the vectorization,
\[
f_{uv}(\mathbf{M}) = \tilde{f}_s(\mathbf{m}), \quad \text{where } s = s(u,v).
\]
This coordinate-wise correspondence underlies the equivalence stated in the lemma.

\medskip
\textbf{($\Rightarrow$)} Assume $f$ is real-analytic at $\mathbf{Y}$.  
Then by \autoref{def:matrix-analytic}, there exists $r > 0$ and, for each $(u,v)$, coefficients $\{ c^{(uv)}_{\mathbf{A}} \}_{\mathbf{A} \in \mathbb{N}_0^{m \times n}}$ such that:
\begin{equation}\label{eq:f-matrix-series}
  f_{uv}(\mathbf{X}) = \sum_{\mathbf{A} \in \mathbb{N}_0^{m \times n}} c^{(uv)}_{\mathbf{A}} (\mathbf{X} - \mathbf{Y})^{\mathbf{A}}, \qquad \forall\, \mathbf{X} \in \boldsymbol{\mathcal{U}} : \|\mathbf{X} - \mathbf{Y}\|_{\mathrm{F}} < r.
\end{equation}

Using \autoref{eq:ident1}, each component $\tilde{f}_s$ of $\tilde{f}$ can be expressed as:
\[
\tilde{f}_s(\mathbf{x}) 
= \sum_{\boldsymbol{\alpha} \in \mathbb{N}_0^{mn}} \tilde{c}^{(s)}_{\boldsymbol{\alpha}} (\mathbf{x} - \mathbf{y})^{\boldsymbol{\alpha}}, 
\quad \text{where } \tilde{c}^{(s)}_{\boldsymbol{\alpha}_{\mathbf{A}}} := c^{(u(s), v(s))}_{\mathbf{A}}.
\]
This series converges for all $\mathbf{x} \in \tilde{\boldsymbol{\mathcal{U}}}$ with $\|\mathbf{x} - \mathbf{y}\|_2 = \|\mathbf{X} - \mathbf{Y}\|_{\mathrm{F}} < r$. Hence, each scalar component of $\tilde{f}$ has a convergent power series at $\mathbf{y}$, proving that $\tilde{f}$ is real-analytic there.

\medskip
($\Leftarrow$) The reverse direction follows by symmetry: assume $\tilde{f}$ is real-analytic at $\mathbf{y}$, write the expansion at $\mathbf{y}$ using definition \autoref{def:real-analytic}, and repeat the argument using \autoref{eq:ident2} to construct component-wise expansions for $f_{uv}$ at $\mathbf{Y}$.
\end{proof}

\begin{remark}
    Consider the function $f = \mathrm{vec}_{m,n} : \mathbb{R}^{m \times n} \to \mathbb{R}^{mn \times 1}$, which vectorizes an $m \times n$ matrix by stacking its columns. Its corresponding vectorized form is
    \[
        \tilde{f}(\mathbf{x}) = (\mathrm{vec}_{mn,1} \circ \mathrm{vec}_{m,n} \circ \mathrm{mat}_{m,n})(\mathbf{x}) = \mathrm{vec}_{mn,1}(\mathbf{x}) = \mathbf{x},
    \]
    since $\mathbf{x} \in \mathbb{R}^{mn}$ is already a column vector . This composition yields the identity map on $\mathbb{R}^{mn}$, which is clearly real analytic. Therefore, by \autoref{lem:equiv-analytic-column}, both $\mathrm{vec}_{m,n}$ is real analytic, and similarly, so is $\mathrm{mat}_{m,n}$. It is now evident that the composition of two matrix-valued real-analytic function is real-analytic, and we will prove it.
\end{remark}

\begin{proposition}[Composition on matrix spaces is real-analytic]\label{prop:matrix-composition}
Suppose $f : \mathbb{R}^{m \times n} \to \mathbb{R}^{a\times b}$ and
$g : \mathbb{R}^{a\times b} \to \mathbb{R}^{p\times q}$ are real-analytic
(in the sense of \autoref{def:matrix-analytic}). Then
$g\circ f : \mathbb{R}^{m \times n} \to \mathbb{R}^{p\times q}$ is real-analytic.
\end{proposition}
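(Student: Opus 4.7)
The plan is to reduce the matrix-valued composition to the already-established vector case by passing through the vectorization equivalence of \autoref{lem:equiv-analytic-column}. Since closure under composition was proved for vector-valued real-analytic maps in \autoref{prop:comp-real-analytic}, the strategy is to bundle everything into $\tilde{f}$ and $\tilde{g}$, compose in the vector world, and then undo the vectorization.

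First, I would introduce the vectorized forms
\begin{equation*}
\tilde{f} = \mathrm{vec}_{a,b}\circ f\circ \mathrm{mat}_{m,n} : \mathbb{R}^{mn}\to\mathbb{R}^{ab},\qquad
\tilde{g} = \mathrm{vec}_{p,q}\circ g\circ \mathrm{mat}_{a,b} : \mathbb{R}^{ab}\to\mathbb{R}^{pq}.
\end{equation*}
By \autoref{lem:equiv-analytic-column}, the matrix-sense real-analyticity of $f$ and $g$ is equivalent to the vector-sense real-analyticity of $\tilde{f}$ and $\tilde{g}$ on $\mathbb{R}^{mn}$ and $\mathbb{R}^{ab}$ respectively.

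Second, I would verify the bookkeeping identity
\begin{equation*}
\widetilde{g\circ f} \;=\; \tilde{g}\circ \tilde{f}.
\end{equation*}
Indeed, by definition $\widetilde{g\circ f}=\mathrm{vec}_{p,q}\circ g\circ f\circ \mathrm{mat}_{m,n}$, while
\begin{equation*}
\tilde{g}\circ\tilde{f} = \mathrm{vec}_{p,q}\circ g\circ \mathrm{mat}_{a,b}\circ \mathrm{vec}_{a,b}\circ f\circ \mathrm{mat}_{m,n},
\end{equation*}
and the middle pair collapses because $\mathrm{mat}_{a,b}\circ \mathrm{vec}_{a,b}$ is the identity on $\mathbb{R}^{a\times b}$. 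Thus vectorization commutes with composition.

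Third, since $\tilde{f}$ and $\tilde{g}$ are real-analytic in the classical sense of \autoref{def:real-analytic}, \autoref{prop:comp-real-analytic} gives that $\tilde{g}\circ\tilde{f}$ is real-analytic; equivalently, $\widetilde{g\circ f}$ is real-analytic. Invoking the converse direction of \autoref{lem:equiv-analytic-column} yields that $g\circ f$ is real-analytic in the matrix sense of \autoref{def:matrix-analytic}, which concludes the proof. There is no substantive obstacle here: once the vectorization dictionary is in place, the content of the proposition is the commutation $\widetilde{g\circ f}=\tilde g\circ\tilde f$, and the only mild care required is to align the dimensions of the $\mathrm{vec}$ and $\mathrm{mat}$ operators at the intermediate stage, which is handled by the identity $\mathrm{mat}_{a,b}\circ\mathrm{vec}_{a,b}=\mathrm{id}_{\mathbb{R}^{a\times b}}$.
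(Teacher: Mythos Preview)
Your proof is correct and follows essentially the same approach as the paper: both introduce the vectorized forms $\tilde f,\tilde g$, invoke \autoref{lem:equiv-analytic-column} to pass to the vector setting, verify $\widetilde{g\circ f}=\tilde g\circ\tilde f$ via the identity $\mathrm{mat}_{a,b}\circ\mathrm{vec}_{a,b}=\mathrm{id}$, apply \autoref{prop:comp-real-analytic}, and then return via the converse of \autoref{lem:equiv-analytic-column}.
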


\begin{proof}
Consider the vectorized forms
\[
\tilde{f} := \mathrm{vec}_{a,b} \circ f \circ \mathrm{mat}_{m,n} : \mathbb{R}^{m n} \to \mathbb{R}^{ab},
\qquad
\tilde{g} := \mathrm{vec}_{p,q} \circ g \circ \mathrm{mat}_{a,b}:\mathbb{R}^{a b} \to \mathbb{R}^{pq}.
\]
By \autoref{lem:equiv-analytic-column}, $f$ is real-analytic iff $\tilde f$ is, and $g$ is real-analytic
iff $\tilde g$ is. Hence $\tilde f$ and $\tilde g$ are real-analytic maps between Euclidean spaces.

The vectorized form of the composition is
\[
\widetilde{g\circ f}
= \mathrm{vec}_{p,q}\circ (g\circ f)\circ \mathrm{mat}_{m,n}
= \underbrace{\big(\mathrm{vec}_{p,q}\circ g\circ \mathrm{mat}_{a,b}\big)}_{\tilde g}
  \circ
  \underbrace{\big(\mathrm{vec}_{a,b}\circ f\circ \mathrm{mat}_{m,n}\big)}_{\tilde f}
= \tilde g\circ \tilde f,
\]
where we inserted the identity $(\mathrm{mat}_{a,b}\circ \mathrm{vec}_{a,b})(\mathbf{X}) =\mathbf{X}$.
By the vector-space composition property (\autoref{prop:comp-real-analytic}), $\tilde g\circ \tilde f$ is real-analytic on $\mathbb{R}^{m n}$.
Applying \autoref{lem:equiv-analytic-column} once more, we get that $g \circ f$ is real-analytic.
\end{proof}

\subsubsection{Real Analyticity of Common Components}\label{subsubsec:real-analytic-components}

We now catalogue the specific functions that appear inside transformer layers, proving each one is real-analytic. These building blocks---polynomials, exponentials, softmax, row normalization, matrix products, Hadamard scaling, and stacking---will be composed in \autoref{sec:app:trans} to establish the real-analyticity of the full model. Throughout, all maps are defined on $\mathbb{R}^{m\times n}$ (or an open subset thereof), so \autoref{def:matrix-analytic} applies.

\begin{proposition}[Polynomials are real-analytic]\label{prop:poly-ra}
Let $p:\mathbb{R}^m\to\mathbb{R}$ be a polynomial in the coordinates of $\mathbf{x}\in\mathbb{R}^m$, i.e.,
$p(\mathbf{x})=\sum_{|\boldsymbol\alpha|\leq d} a_{\boldsymbol\alpha}\,\mathbf{x}^{\boldsymbol\alpha}$
for some $d\in\mathbb{N}_0$ and coefficients $a_{\boldsymbol\alpha}\in\mathbb{R}$. Then $p\in C^\omega(\mathbb{R}^m)$.
\end{proposition}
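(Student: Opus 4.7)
The plan is to reduce \autoref{prop:poly-ra} directly to the closure properties already established in \autoref{prop:closure-real-analytic}. First I would verify the two base cases: the constant maps $\mathbf{x}\mapsto c$ and the coordinate projections $\mathbf{x}\mapsto \mathbf{x}_j$ are real-analytic on all of $\mathbb{R}^m$. Both follow immediately from \autoref{def:real-analytic} by exhibiting the terminating power series at any basepoint $\mathbf{y}\in\mathbb{R}^m$: the constant $c$ equals its degree-$0$ expansion $c=c\cdot(\mathbf{x}-\mathbf{y})^{\mathbf{0}}$, and $\mathbf{x}_j=\mathbf{y}_j+(\mathbf{x}-\mathbf{y})_j$ is already a (finite) power series about $\mathbf{y}$, so each converges for every $\mathbf{x}$ (take $r=\infty$).

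Next, I would promote monomials to real-analytic functions. Given a multi-index $\boldsymbol\alpha\in\mathbb{N}_0^m$ with $|\boldsymbol\alpha|\le d$, the monomial $\mathbf{x}^{\boldsymbol\alpha}=\prod_{j=1}^{m}\mathbf{x}_j^{\alpha_j}$ is a finite product of coordinate projections, so iterating the product closure (\autoref{prop:closure-real-analytic}, item 2) yields $\mathbf{x}^{\boldsymbol\alpha}\in C^\omega(\mathbb{R}^m)$. Scaling by $a_{\boldsymbol\alpha}\in\mathbb{R}$ is another product with a constant, hence still real-analytic.

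Finally, $p(\mathbf{x})=\sum_{|\boldsymbol\alpha|\le d} a_{\boldsymbol\alpha}\mathbf{x}^{\boldsymbol\alpha}$ is a finite sum of such terms, and iterating the addition closure (\autoref{prop:closure-real-analytic}, item 1) gives $p\in C^\omega(\mathbb{R}^m)$. There is essentially no obstacle here; the only point of care is that the closure propositions as stated apply to maps defined on the whole of $\mathbb{R}^m$, which is exactly our situation, so no intersection-of-domains bookkeeping is needed.

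As an alternative, one could give a direct proof bypassing the closure lemmas: for any fixed basepoint $\mathbf{y}\in\mathbb{R}^m$, expand each monomial via the multinomial identity
\begin{equation*}
\mathbf{x}^{\boldsymbol\alpha}=\bigl((\mathbf{x}-\mathbf{y})+\mathbf{y}\bigr)^{\boldsymbol\alpha}=\sum_{\boldsymbol\beta\le \boldsymbol\alpha}\binom{\boldsymbol\alpha}{\boldsymbol\beta}\mathbf{y}^{\boldsymbol\alpha-\boldsymbol\beta}(\mathbf{x}-\mathbf{y})^{\boldsymbol\beta},
\end{equation*}
and then collect terms to rewrite $p(\mathbf{x})=\sum_{|\boldsymbol\beta|\le d} c_{\boldsymbol\beta}(\mathbf{x}-\mathbf{y})^{\boldsymbol\beta}$ for suitable coefficients $c_{\boldsymbol\beta}$. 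Since this is a finite sum, it is trivially absolutely convergent for every $\mathbf{x}\in\mathbb{R}^m$ (radius $r=\infty$), matching \autoref{def:real-analytic}. I would present the closure-based proof as the main argument, since it is shorter and directly reuses machinery already introduced.
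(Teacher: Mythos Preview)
Your proposal is correct. Your main argument takes a different route from the paper: you build polynomials up from constants and coordinate projections via the closure properties of \autoref{prop:closure-real-analytic} (sums and products), whereas the paper goes directly, observing that $p$ is $C^\infty$ with $\mathbf{d}^{\boldsymbol\alpha}p\equiv 0$ for $|\boldsymbol\alpha|>d$, so the Taylor expansion at any basepoint truncates to a finite sum equal to $p$ everywhere. Your \emph{alternative} (multinomial expansion about $\mathbf{y}$) is essentially the paper's proof in different clothing---the finite Taylor expansion of a polynomial is exactly what the multinomial identity produces. The closure-based route is slick and reuses existing lemmas, but it depends on \autoref{prop:closure-real-analytic}; the paper's direct argument is self-contained and also makes explicit that the radius of convergence is $+\infty$, which is occasionally useful downstream. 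Either is fine here; there is no circularity since the closure properties are cited externally and do not rely on \autoref{prop:poly-ra}.
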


\begin{proof}
Polynomials are $C^\infty$, and $\mathbf{d}^{\boldsymbol\alpha}p\equiv 0$ whenever $|\boldsymbol\alpha|>d$. Hence the Taylor expansion of $p$ at any $\mathbf{y}\in\mathbb{R}^m$ truncates:
$$
p(\mathbf{x}) \;=\; \sum_{|\boldsymbol\alpha|\leq d}\frac{\mathbf{d}^{\boldsymbol\alpha}p(\mathbf{y})}{\boldsymbol\alpha!}\,(\mathbf{x}-\mathbf{y})^{\boldsymbol\alpha},
$$
which holds for all $\mathbf{x}\in\mathbb{R}^m$ (radius $r=+\infty$). Therefore $p$ is real-analytic.
\end{proof}

\begin{proposition}[The exponential is real-analytic]\label{prop:exp-ra}
The map $\exp:\mathbb{R} \to (0, \infty)$ is real-analytic on $\mathbb{R}$.
\end{proposition}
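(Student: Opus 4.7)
The plan is to verify Definition \ref{def:real-analytic} directly for $\exp$ by exhibiting its Taylor expansion at an arbitrary center and establishing both convergence and pointwise representation. Since here $m=1$, the multi-index $\boldsymbol\alpha$ collapses to a single integer $n \in \mathbb{N}_0$, and the required power series condition reduces to the familiar scalar form with radius $r > 0$.

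First, I would fix an arbitrary $y \in \mathbb{R}$ and recall that $\exp$ is $C^\infty$ with $\mathbf{d}^n \exp(y) = \exp(y)$ for every $n \in \mathbb{N}_0$. This prescribes the candidate coefficients $c_n = \exp(y)/n!$. Next, I would apply the ratio test to the series $\sum_{n=0}^\infty \frac{\exp(y)}{n!}(x-y)^n$: the ratio of consecutive absolute terms equals $|x-y|/(n+1) \to 0$ as $n \to \infty$, so the series converges absolutely for every $x \in \mathbb{R}$. In particular, any $r > 0$ may be chosen in Definition \ref{def:real-analytic}, giving a valid ball of convergence around $y$.

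The one point that must be handled carefully is that convergence of the Taylor series does not by itself guarantee equality with $\exp$; this must be shown separately. I would close this gap via Taylor's theorem with Lagrange remainder: for each $x, y \in \mathbb{R}$ and $N \in \mathbb{N}$, there exists $\xi$ between $x$ and $y$ such that
\begin{equation*}
\exp(x) - \sum_{n=0}^{N-1} \frac{\exp(y)}{n!}(x-y)^n \;=\; \frac{\exp(\xi)}{N!}(x-y)^N.
\end{equation*}
Since $|\exp(\xi)| \le \exp(\max(|x|,|y|))$ is bounded (for fixed $x, y$) and $|x-y|^N/N! \to 0$ as $N \to \infty$, the remainder vanishes, yielding equality of $\exp$ with its Taylor series everywhere. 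Combining this with the convergence step from the previous paragraph shows that $\exp \in C^\omega(\mathbb{R})$, and positivity of the values places the image in $(0,\infty)$.

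The main obstacle is not mathematical depth but presentational discipline: the result is classical, yet slotting it cleanly into the paper's framework requires writing the expansion in the exact form demanded by Definition \ref{def:real-analytic} and explicitly separating the convergence of the series from its representation of $\exp$, rather than invoking the power series as a definition.
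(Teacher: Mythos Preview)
Your proof is correct and fits cleanly into the paper's framework. The approach, however, differs from the paper's in one technical choice. The paper defines $E(x):=\sum_{k\ge 0} x^k/k!$ as a power series from the outset, establishes $E^{(j)}=E$ by termwise differentiation, and then invokes the re-expansion property of convergent power series (``Taylor's theorem for power series'') to obtain the series at an arbitrary center $y$. You instead take $\exp$ as a given $C^\infty$ function, write down its Taylor coefficients at $y$, and close the gap between the series and the function using the Lagrange form of the remainder. Your route is arguably more self-contained for readers who have not seen the re-expansion theorem, and it makes the separation between \emph{convergence} and \emph{representation} explicit; the paper's route is shorter once one is willing to quote the power-series re-expansion result. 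Both arrive at the same place with comparable effort.
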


\begin{proof}
Define $E(x):=\sum_{k=0}^{\infty}\frac{x^{k}}{k!}$. By the ratio test this power series has infinite radius of convergence, hence converges absolutely for all $x\in\mathbb{R}$. Standard results on power series imply that $E$ is $C^\infty$ on $\mathbb{R}$ and can be differentiated termwise within its radius of convergence; in particular, for every $j\in\mathbb{N}_0$,
\[
E^{(j)}(x)
=\sum_{k=j}^{\infty}\frac{k(k-1)\cdots(k-j+1)}{k!}\,x^{k-j}
=\sum_{r=0}^{\infty}\frac{x^{r}}{r!}
=E(x).
\]
Fix $y\in\mathbb{R}$. Taylor's theorem for power series then yields
\[
E(x)
=\sum_{j=0}^{\infty}\frac{E^{(j)}(y)}{j!}(x-y)^j
=E(y)\sum_{j=0}^{\infty}\frac{(x-y)^j}{j!},
\]
which is a convergent power series in $x-y$ with infinite radius of convergence. Hence $E$ is real-analytic at every $y\in\mathbb{R}$. As $E$ is the usual exponential function defined by its power series, $\exp$ is real-analytic on $\mathbb{R}$.
\end{proof}

\begin{proposition}[The logarithm is real-analytic]\label{prop:log-real-analytic}
The map $\log : (0, \infty) \to \mathbb{R}$ is real-analytic on $(0, \infty)$.
\end{proposition}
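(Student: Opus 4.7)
The plan is to mirror the approach used for $\exp$ in \autoref{prop:exp-ra}: fix an arbitrary base point $y\in(0,\infty)$ and exhibit an explicit convergent power series for $\log$ centered at $y$, with a positive radius of convergence depending on $y$. The key observation is the multiplicative-to-additive reduction
\[
\log(x)=\log(y)+\log\!\left(1+\frac{x-y}{y}\right),
\]
which converts the problem of expanding $\log$ around $y$ into expanding $\log(1+u)$ around $u=0$, where $u=(x-y)/y$.

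First I would recall (or establish via the ratio test, exactly as done for $E$ in \autoref{prop:exp-ra}) that the series $L(u):=\sum_{k=1}^{\infty}\frac{(-1)^{k+1}}{k}u^{k}$ has radius of convergence $1$, defines a $C^\infty$ function on $(-1,1)$, and satisfies $L'(u)=\frac{1}{1+u}$ together with $L(0)=0$; hence $L(u)=\log(1+u)$ for $u\in(-1,1)$ by uniqueness of antiderivatives. Substituting $u=(x-y)/y$ then gives
\[
\log(x)=\log(y)+\sum_{k=1}^{\infty}\frac{(-1)^{k+1}}{k\,y^{k}}\,(x-y)^{k},
\]
valid whenever $|x-y|<y$, i.e. on the open interval $(0,2y)\subseteq(0,\infty)$. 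This is a convergent power series in $(x-y)$ with positive radius $r=y$, which is exactly the condition required by \autoref{def:real-analytic} at the point $y$.

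Since $y\in(0,\infty)$ was arbitrary, $\log$ is real-analytic on $(0,\infty)$ and the proposition follows. The only mildly delicate step is the identification of the power series $L(u)$ with $\log(1+u)$ on $(-1,1)$; this is routine via termwise differentiation inside the radius of convergence and the fundamental theorem of calculus, and presents no real obstacle. An alternative route would invoke a real-analytic inverse function theorem applied to $\exp$ (whose derivative is nowhere zero), but the direct power-series argument is self-contained and stylistically consistent with the proof of \autoref{prop:exp-ra}.
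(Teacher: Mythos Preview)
Your proof is correct and complete: the reduction $\log(x)=\log(y)+\log(1+(x-y)/y)$ together with the standard Mercator series for $\log(1+u)$ on $|u|<1$ yields a convergent power series at every $y\in(0,\infty)$ with radius $y>0$, which is exactly what \autoref{def:real-analytic} demands.

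However, your route differs from the paper's. The paper gives only a sketch and appeals to the real-analytic inverse function theorem (citing Krantz--Parks): since $\exp$ is real-analytic with nowhere-vanishing derivative, its inverse $\log$ is real-analytic. You explicitly mention this alternative at the end and opt instead for the direct power-series construction. Your approach is more self-contained (no external IFT citation needed) and mirrors the hands-on style of \autoref{prop:exp-ra}, at the cost of a short computation identifying the Mercator series with $\log(1+u)$. The paper's approach is shorter on the page but offloads the work to a cited theorem. Both are entirely standard; your choice is arguably more in keeping with the appendix's expository tone.
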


\begin{proof} For brevity, we present only a proof sketch;

The exponential map $\exp:\mathbb{R}\to(0,\infty)$ is real-analytic with $\exp'(y)\neq 0$ for all $y$.
By the real-analytic inverse function theorem (see \citealt[Thm.~2.3.1]{krantz2002primer}),
its local inverse $\log$ is real-analytic on $(0,\infty)$.
\end{proof}

The next three results handle the attention-specific operations: softmax, row normalization (used in the causal projection form), and entrywise matrix polynomials.

\begin{proposition}[Softmax is real-analytic]\label{prop:softmax-ra}
The map $\mathrm{softmax}:\mathbb{R}^m\to\mathbb{R}^m$ with components
$$
\mathrm{softmax}_i(\mathbf{x}) \;=\; \frac{e^{\mathbf{x}_i}}{\sum_{j=1}^m e^{\mathbf{x}_j}}, \qquad i=1,\dots,m,
$$
is real-analytic on $\mathbb{R}^m$.
\end{proposition}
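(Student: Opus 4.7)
The plan is to reduce the statement to the scalar closure and composition rules already established in \autoref{prop:closure-real-analytic} and \autoref{prop:comp-real-analytic}, applied componentwise. By the vector-valued part of \autoref{def:real-analytic}, it is enough to prove that each scalar component $\mathrm{softmax}_i : \mathbb{R}^m \to \mathbb{R}$ is real-analytic on $\mathbb{R}^m$.

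First, I would observe that each coordinate projection $\pi_j : \mathbf{x} \mapsto \mathbf{x}_j$ is a polynomial in the coordinates of $\mathbf{x}$, hence real-analytic by \autoref{prop:poly-ra}. Composing with the scalar exponential, which is real-analytic by \autoref{prop:exp-ra}, and invoking \autoref{prop:comp-real-analytic}, each map $\mathbf{x} \mapsto e^{\mathbf{x}_j}$ is real-analytic on $\mathbb{R}^m$. The denominator $D(\mathbf{x}) := \sum_{j=1}^m e^{\mathbf{x}_j}$ is then a finite sum of real-analytic functions, and is therefore real-analytic by iterated application of the addition part of \autoref{prop:closure-real-analytic}. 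The numerator $\mathbf{x} \mapsto e^{\mathbf{x}_i}$ has already been identified as real-analytic in the same way.

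Finally, since $e^{\mathbf{x}_j} > 0$ for every $j$ and every $\mathbf{x} \in \mathbb{R}^m$, we have $D(\mathbf{x}) > 0$ on all of $\mathbb{R}^m$, so the quotient $\mathrm{softmax}_i(\mathbf{x}) = e^{\mathbf{x}_i}/D(\mathbf{x})$ is well defined everywhere. Applying the quotient part of \autoref{prop:closure-real-analytic} then yields $\mathrm{softmax}_i \in C^\omega(\mathbb{R}^m)$, and therefore $\mathrm{softmax} \in C^\omega(\mathbb{R}^m \,;\, \mathbb{R}^m)$. No serious obstacle arises: the argument is a routine chain of closure and composition rules, and the only point worth making explicit is the strict positivity of $D$, which ensures that the quotient rule applies globally on $\mathbb{R}^m$ rather than only on a restricted open subset.
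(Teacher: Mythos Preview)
Your proof is correct and follows essentially the same route as the paper: verify real-analyticity componentwise by writing each $\mathrm{softmax}_i$ as a quotient of $e^{\mathbf{x}_i}$ (obtained by composing a coordinate projection with $\exp$) by the strictly positive sum $\sum_j e^{\mathbf{x}_j}$, then apply the closure rules (sum, quotient) and the composition rule. The only cosmetic difference is that you justify the projections via \autoref{prop:poly-ra} while the paper simply notes they are linear; both are valid.
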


\begin{proof}
Fix $i$. The numerator $\mathbf{x}\mapsto e^{\mathbf{x}_i}$ is the composition of the coordinate projection $\pi_i(\mathbf{x}) = \mathbf{x}_i$ (a linear, hence real-analytic, map) with $\exp$; by \autoref{prop:exp-ra} and the composition rule in \autoref{prop:closure-real-analytic}, it is real-analytic. The denominator
$$
H(\mathbf{x}) = \sum_{j=1}^m e^{\mathbf{x}_j}
$$
is a finite sum of real-analytic functions, hence real-analytic. Moreover, $H(\mathbf{x})>0$ for all $\mathbf{x}\in\mathbb{R}^m$ because $e^{x_j}>0$. Therefore, by the quotient rule in \autoref{prop:closure-real-analytic}, the map
$$
\mathbf{x}\mapsto \frac{e^{\mathbf{x}_i}}{H(\mathbf{x})}
$$
is real-analytic on $\mathbb{R}^m$. Since this holds for each $i=1,\dots,m$, the vector-valued map $\mathrm{softmax}$ is real-analytic.
\end{proof}

\begin{proposition}[Row normalization is real-analytic on positive row-sum domain]\label{prop:rn-analytic}
Let
$$
\boldsymbol{\mathcal{D}}_T := \big\{ \mathbf{Y} \in \mathbb{R}^{T\times T} : \mathbf{Y} \mathbf{1}_T \in (0, \infty)^T \big\}.
$$
Define $\mathrm{RN}(\mathbf{Y}) = \mathrm{diag}(\mathbf{Y}\mathbf{1}_T)^{-1}\mathbf{Y}$ on $\boldsymbol{\mathcal{D}}_T$.
Then $\mathrm{RN} : \boldsymbol{\mathcal{D}}_T \to \mathbb{R}^{T \times T}$ is real-analytic (in the sense of \autoref{def:matrix-analytic}).
\end{proposition}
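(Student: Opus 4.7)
The plan is to reduce the matrix-valued claim to scalar real-analyticity through the vectorization correspondence of \autoref{lem:equiv-analytic-column}, and then handle each scalar entry as a ratio of polynomials. First I would check that $\boldsymbol{\mathcal{D}}_T$ is indeed open: the map $\mathbf{Y}\mapsto \mathbf{Y}\mathbf{1}_T$ is continuous (in fact linear) from $\mathbb{R}^{T\times T}$ to $\mathbb{R}^T$, and $\boldsymbol{\mathcal{D}}_T$ is its preimage of the open positive orthant $(0,\infty)^T$. Hence \autoref{def:matrix-analytic} applies, and by \autoref{lem:equiv-analytic-column} it suffices to show that each scalar component $\mathbf{Y}\mapsto \mathrm{RN}(\mathbf{Y})_{ij}$ is real-analytic on $\boldsymbol{\mathcal{D}}_T$.

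Next I would write the entries explicitly. For every $i,j\in[T]$,
\[
\mathrm{RN}(\mathbf{Y})_{ij} \;=\; \frac{\mathbf{Y}_{ij}}{\sum_{k=1}^{T}\mathbf{Y}_{ik}},
\]
since $\mathrm{diag}(\mathbf{Y}\mathbf{1}_T)^{-1}$ divides row $i$ by the $i$-th row sum. The numerator $\mathbf{Y}\mapsto \mathbf{Y}_{ij}$ is a coordinate projection, i.e., a degree-one polynomial in the entries of $\mathbf{Y}$, and the denominator $\mathbf{Y}\mapsto \sum_k \mathbf{Y}_{ik}$ is likewise polynomial. By \autoref{prop:poly-ra} (applied to the vectorized form so that the entries of $\mathbf{Y}$ become the coordinates), both are real-analytic on all of $\mathbb{R}^{T\times T}$. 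On $\boldsymbol{\mathcal{D}}_T$ the denominator is strictly positive, hence nonzero, so the quotient rule in \autoref{prop:closure-real-analytic} makes each $\mathrm{RN}(\mathbf{Y})_{ij}$ real-analytic on $\boldsymbol{\mathcal{D}}_T$. Assembling the components and invoking \autoref{lem:equiv-analytic-column} in the reverse direction yields real-analyticity of $\mathrm{RN}$ as a map into $\mathbb{R}^{T\times T}$.

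The only mild subtlety I anticipate is that \autoref{prop:closure-real-analytic} is stated on the full space $\mathbb{R}^m$, whereas the domain here is the proper open subset $\boldsymbol{\mathcal{D}}_T$. This is benign: as the remark following that proposition indicates, the closure results hold locally on any open set where the denominator does not vanish, so one simply applies the standard real-analytic quotient rule at each point $\mathbf{Y}\in\boldsymbol{\mathcal{D}}_T$, where a whole $\|\cdot\|_{\mathrm{F}}$-ball is contained in $\boldsymbol{\mathcal{D}}_T$ by openness. No other step poses difficulty, since polynomials and their nonvanishing quotients have been established, and the vectorization bridge to scalar analyticity has already been set up.
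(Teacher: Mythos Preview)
Your proof is correct. It differs from the paper's in its route: the paper proceeds compositionally at the matrix level, writing $\mathrm{RN}(\mathbf{Y})=\mathrm{diag}(\mathbf{Y}\mathbf{1}_T)^{-1}\mathbf{Y}$ as a chain of real-analytic building blocks (the linear map $\mathbf{Y}\mapsto\mathbf{Y}\mathbf{1}_T$, the entrywise reciprocal $\mathbf{s}\mapsto\mathbf{s}^{\odot(-1)}$ on $(0,\infty)^T$, the linear embedding $\mathbf{s}\mapsto\mathrm{diag}(\mathbf{s})$, and matrix multiplication via \autoref{prop:matmul}), then invokes closure under composition. You instead drop immediately to scalar components via \autoref{lem:equiv-analytic-column} and handle each $\mathrm{RN}(\mathbf{Y})_{ij}=\mathbf{Y}_{ij}/\sum_k\mathbf{Y}_{ik}$ directly as a quotient of polynomials with nonvanishing denominator. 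Your route is arguably more elementary and self-contained, since it avoids the matrix-composition machinery and needs only \autoref{prop:poly-ra} plus the quotient rule; the paper's route is more structural and better matches the catalog-of-building-blocks style of the surrounding appendix. Both are short and complete, and your explicit remark about \autoref{prop:closure-real-analytic} being stated on $\mathbb{R}^m$ rather than an open subset is a nice point of care that the paper leaves implicit.
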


\begin{proof}
The map $\mathbf{Y}\mapsto \mathbf{s}:=\mathbf{Y}\mathbf{1}_T$ is linear, hence real-analytic.
On $(0,\infty)^T$, the entrywise reciprocal $\mathbf{s}\mapsto \mathbf{s}^{\odot(-1)}$ is real-analytic (componentwise $t\mapsto 1/t$).
The map $\mathbf{s}\mapsto \mathrm{diag}(\mathbf{s})$ is linear. Matrix multiplication $(\mathbf{A},\mathbf{Y})\mapsto \mathbf{A}\mathbf{Y}$ is real-analytic (\autoref{prop:matmul}). Composing these gives $\mathrm{RN}(\mathbf{Y})=\mathrm{diag}(\mathbf{Y}\mathbf{1}_T)^{-1}\mathbf{Y}$ real-analytic on the open set $\boldsymbol{\mathcal{D}}_T$.
\end{proof}

\begin{proposition}[Entrywise matrix polynomials are real-analytic]\label{prop:matrix-poly-ra}
Fix $m,n \in \mathbb{N}$. For coefficients $\{ c_{\mathbf{A}} \in \mathbb{R} \}_{\mathbf{A} \in \mathbb{N}_0^{m \times n}}$
and some $d \in \mathbb{N}_0$, define the function $p : \mathbb{R}^{m \times n} \to \mathbb{R}$ by:
\begin{equation} \label{eq:mat-to-r-poly}
    p(\mathbf{X}) = \sum_{|\mathbf{A}| \le d} c_{\mathbf{A}}\, \mathbf{X}^{\mathbf{A}},
\end{equation}
where $\mathbf{X}^{\mathbf{A}} = \prod_{u=1}^m \prod_{v=1}^n \mathbf{X}_{uv}^{\mathbf{A}_{uv}}$ as defined in the multi-index notation above. Then $p$ is real-analytic on $\mathbb{R}^{m \times n}$ (in the sense of \autoref{def:matrix-analytic}).

Moreover, if $f : \mathbb{R}^{m \times n} \to \mathbb{R}^{a \times b}$ has component functions $f_{ij}$ of the form \autoref{eq:mat-to-r-poly}, then $f$ is real-analytic.
\end{proposition}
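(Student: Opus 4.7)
The plan is to reduce the matrix-input statement to the already-established vector-input case (\autoref{prop:poly-ra}) via the vectorization equivalence (\autoref{lem:equiv-analytic-column}), and then handle the matrix-valued extension componentwise using \autoref{def:matrix-analytic}.

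First, I would observe that the matrix monomial $\mathbf{X}^{\mathbf{A}} = \prod_{u,v} \mathbf{X}_{uv}^{\mathbf{A}_{uv}}$ is precisely the vector monomial $\mathbf{x}^{\boldsymbol{\alpha}_{\mathbf{A}}}$ when we set $\mathbf{x} := \mathrm{vec}_{m,n}(\mathbf{X})$ and $\boldsymbol{\alpha}_{\mathbf{A}} := \mathrm{vec}_{m,n}(\mathbf{A})$, because both products range over exactly the same $mn$ scalar coordinates, reindexed; this is the same bijection between matrix and vector multi-indices already used in the proof of \autoref{lem:equiv-analytic-column}. Moreover, $|\mathbf{A}| = |\boldsymbol{\alpha}_{\mathbf{A}}|$, so the total-degree bound $|\mathbf{A}|\le d$ carries over unchanged.

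With this identification, the vectorized form of $p$ is
\begin{equation*}
\tilde{p}(\mathbf{x}) \;=\; (p \circ \mathrm{mat}_{m,n})(\mathbf{x}) \;=\; \sum_{|\boldsymbol{\alpha}|\le d} \tilde{c}_{\boldsymbol{\alpha}}\,\mathbf{x}^{\boldsymbol{\alpha}},
\end{equation*}
where $\tilde{c}_{\boldsymbol{\alpha}_{\mathbf{A}}} := c_{\mathbf{A}}$. This is a genuine polynomial on $\mathbb{R}^{mn}$, so by \autoref{prop:poly-ra} it lies in $C^\omega(\mathbb{R}^{mn})$. Applying \autoref{lem:equiv-analytic-column} (with codomain $\mathbb{R} = \mathbb{R}^{1\times 1}$), this immediately yields that $p$ is real-analytic on $\mathbb{R}^{m\times n}$ in the sense of \autoref{def:matrix-analytic}.

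For the matrix-valued extension, I would simply invoke \autoref{def:matrix-analytic}, which defines real-analyticity of $f:\mathbb{R}^{m\times n}\to\mathbb{R}^{a\times b}$ componentwise: since each $f_{ij}$ has the form of \autoref{eq:mat-to-r-poly}, the scalar case just proven shows $f_{ij}\in C^\omega(\mathbb{R}^{m\times n})$, hence $f$ is real-analytic. There is no real obstacle here; the only bookkeeping to be careful about is the matrix-to-vector multi-index correspondence and checking that the degree bound is preserved, which I would do in a single line rather than belaboring it.
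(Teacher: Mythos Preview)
Your proposal is correct and follows essentially the same approach as the paper: vectorize $p$ via $\tilde p = p\circ\mathrm{mat}_{m,n}$, use the monomial identity $(\mathrm{mat}_{m,n}(\mathbf{x}))^{\mathbf{A}} = \mathbf{x}^{\boldsymbol{\alpha}_{\mathbf{A}}}$ to recognize $\tilde p$ as an ordinary polynomial on $\mathbb{R}^{mn}$, invoke \autoref{prop:poly-ra}, and transfer back via \autoref{lem:equiv-analytic-column}; the matrix-valued extension is then componentwise by \autoref{def:matrix-analytic}. Your extra remark that $|\mathbf{A}|=|\boldsymbol{\alpha}_{\mathbf{A}}|$ preserves the degree bound is a nice piece of bookkeeping the paper leaves implicit.
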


\begin{proof}
Consider the vectorized form
$\tilde{p} := p \circ \mathrm{mat}_{m,n} : \mathbb{R}^{mn} \to \mathbb{R}$.  
Using the coordinate identification from \eqref{eq:ident1}-\eqref{eq:ident2}, each monomial satisfies
\[
\big(\mathrm{mat}_{m,n}(\mathbf{x})\big)^{\mathbf{A}} = \mathbf{x}^{\boldsymbol\alpha_{\mathbf{A}}},
\]
where $\boldsymbol\alpha_{\mathbf{A}} = \mathrm{vec}_{m,n}(\mathbf{A})$. Hence:
\[
\tilde{p}(\mathbf{x}) = \sum_{|\mathbf{A}| \le d} c_{\mathbf{A}}\, \mathbf{x}^{\boldsymbol\alpha_{\mathbf{A}}},
\]
which is a standard multivariate polynomial in $\mathbf{x} \in \mathbb{R}^{mn}$. By \autoref{prop:poly-ra}, such functions are real-analytic on all of $\mathbb{R}^{mn}$, so $\tilde{p} \in C^\omega(\mathbb{R}^{mn})$.  
By \autoref{lem:equiv-analytic-column}, this implies $p$ is real-analytic on $\mathbb{R}^{m \times n}$.

For the second claim, observe that if each $f_{ij}$ is a scalar polynomial of the form \autoref{eq:mat-to-r-poly}, then each $f_{ij}$ is real-analytic by the argument above. Hence, by \autoref{def:matrix-analytic}, $f$ is real analytic.
\end{proof}

Finally, we record the algebraic operations---matrix multiplication, Hadamard scaling, concatenation, and noncommutative matrix polynomials---that allow us to compose the above primitives into full transformer layers.

\begin{proposition}[Matrix product of real-analytic factors]\label{prop:matmul}
Let the functions $f : \mathbb{R}^{m\times n} \to \mathbb{R}^{p\times r}$ and $g : \mathbb{R}^{m\times n}\to\mathbb{R}^{r\times q}$ be real-analytic. Then, $h : \mathbb{R}^{m \times n} \to \mathbb{R}^{p\times q}$ defined as $h(\mathbf{X}) = f(\mathbf{X}) \, g(\mathbf{X})$, is real-analytic on $\mathbb{R}^{m\times n}$.
\end{proposition}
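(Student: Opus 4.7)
The plan is to reduce the matrix-product statement to the already-established scalar closure properties by looking entrywise. By \autoref{def:matrix-analytic}, it suffices to show that each scalar component $h_{ij}:\mathbb{R}^{m\times n}\to\mathbb{R}$ of $h$ is real-analytic. Since matrix multiplication is defined entrywise by
\begin{equation*}
h_{ij}(\mathbf{X}) \;=\; \sum_{k=1}^{r} f_{ik}(\mathbf{X})\, g_{kj}(\mathbf{X}), \qquad i\in[p],\ j\in[q],
\end{equation*}
each $h_{ij}$ is a finite sum of products of scalar components of $f$ and $g$. So the whole task boils down to showing that scalar products and finite sums of real-analytic maps on $\mathbb{R}^{m\times n}$ are real-analytic.

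Next I would transport these scalar components to the vector setting via \autoref{lem:equiv-analytic-column}. By hypothesis $f$ and $g$ are real-analytic in the matrix sense, which (combined with the componentwise clause of \autoref{def:matrix-analytic}) says each $f_{ik}$ and $g_{kj}$ is real-analytic on $\mathbb{R}^{m\times n}$. Applying \autoref{lem:equiv-analytic-column} to each such scalar component, the vectorized forms $\tilde f_{ik},\tilde g_{kj}:\mathbb{R}^{mn}\to\mathbb{R}$ are real-analytic in the standard vector sense of \autoref{def:real-analytic}. Now I can appeal directly to \autoref{prop:closure-real-analytic}: the pointwise product $\tilde f_{ik}\,\tilde g_{kj}$ is real-analytic, and since finite sums of real-analytic functions are real-analytic (additive closure from the same proposition, applied $r-1$ times), the map $\tilde h_{ij}=\sum_{k=1}^{r}\tilde f_{ik}\tilde g_{kj}$ is real-analytic on $\mathbb{R}^{mn}$.

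Finally, I would pass back to the matrix picture. Using the reverse implication of \autoref{lem:equiv-analytic-column} on each $\tilde h_{ij}$, every scalar component $h_{ij}:\mathbb{R}^{m\times n}\to\mathbb{R}$ is real-analytic in the sense of \autoref{def:matrix-analytic}. Invoking \autoref{def:matrix-analytic} once more, this componentwise conclusion upgrades to the matrix-valued statement that $h$ is real-analytic on $\mathbb{R}^{m\times n}$. There is no real obstacle here; the only care needed is bookkeeping between the matrix-level notion of real-analyticity and the vector-level one, which is exactly the service provided by \autoref{lem:equiv-analytic-column}. Everything else reduces to the closure under products and sums already available from \autoref{prop:closure-real-analytic}.
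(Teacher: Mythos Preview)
Your proof is correct and follows essentially the same route as the paper: write $h_{ij}(\mathbf{X})=\sum_{k=1}^{r} f_{ik}(\mathbf{X})\,g_{kj}(\mathbf{X})$ and invoke closure under products and finite sums (\autoref{prop:closure-real-analytic}). The only difference is that you make the passage through vectorized forms via \autoref{lem:equiv-analytic-column} explicit, whereas the paper applies \autoref{prop:closure-real-analytic} directly to the scalar components without spelling out that bookkeeping step.
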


\begin{proof}
For each $(i,j)\in[p]\times[q]$, it holds that $h_{ij}(\mathbf{X}) = \sum_{k=1}^r f_{ik}(\mathbf{X}) \, g_{kj}(\mathbf{X})$.

Each factor $f_{ik}$ and $g_{kj}$ is a real-analytic scalar map by assumption; their product is real-analytic by \autoref{prop:closure-real-analytic}, and a finite sum of real-analytic functions is real-analytic. Thus every $h_{ij}$ is real-analytic, hence $h$ is real-analytic.
\end{proof}

\begin{proposition}[Hadamard (element-wise) scaling]\label{prop:hadamard}
Let $\mathbf{A} \in \mathbb{R}^{m\times n}$ be a fixed matrix. Then, the map $f : \mathbb{R}^{m\times n} \to \mathbb{R}^{m\times n}$ defined as $f(X) = \mathbf{A} \odot \mathbf{X}$ is real-analytic on $\mathbb{R}^{m\times n}$.
\end{proposition}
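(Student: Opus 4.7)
The plan is to reduce the claim to the already-established fact that entrywise matrix polynomials are real-analytic (Proposition \ref{prop:matrix-poly-ra}). Concretely, I would observe that for each pair of indices $(i,j) \in [m] \times [n]$, the $(i,j)$ entry of $f(\mathbf{X}) = \mathbf{A} \odot \mathbf{X}$ is
\begin{equation*}
f_{ij}(\mathbf{X}) \;=\; \mathbf{A}_{ij}\, \mathbf{X}_{ij},
\end{equation*}
which depends linearly on the single coordinate $\mathbf{X}_{ij}$ of the input matrix, with coefficient $\mathbf{A}_{ij}$ fixed.

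The key step is then to recognize this as a matrix monomial in the sense of Proposition \ref{prop:matrix-poly-ra}. Let $\mathbf{E}^{(ij)} \in \mathbb{N}_0^{m \times n}$ denote the multi-index with a $1$ in position $(i,j)$ and zeros elsewhere. Then $\mathbf{X}^{\mathbf{E}^{(ij)}} = \mathbf{X}_{ij}$, so
\begin{equation*}
f_{ij}(\mathbf{X}) \;=\; \mathbf{A}_{ij}\,\mathbf{X}^{\mathbf{E}^{(ij)}},
\end{equation*}
which has exactly the form of the polynomial in \eqref{eq:mat-to-r-poly} with total degree $d = 1$ and a single nonzero coefficient $c_{\mathbf{E}^{(ij)}} = \mathbf{A}_{ij}$. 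Hence each scalar component $f_{ij}$ is a matrix polynomial, and the second part of Proposition \ref{prop:matrix-poly-ra} yields that the vector-valued (in fact matrix-valued) map $f$ is real-analytic on $\mathbb{R}^{m\times n}$.

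There is essentially no obstacle here: the whole argument is bookkeeping, reducing the Hadamard product with a fixed matrix to a collection of degree-one monomials. If one wanted an alternative self-contained proof instead of invoking Proposition \ref{prop:matrix-poly-ra}, one could simply note that $f$ is linear (hence $C^\infty$), and that at any base point $\mathbf{Y} \in \mathbb{R}^{m\times n}$ one has the trivially convergent finite expansion $f(\mathbf{X}) = \mathbf{A}\odot \mathbf{Y} + \mathbf{A}\odot(\mathbf{X} - \mathbf{Y})$, which matches Definition \ref{def:matrix-analytic} with radius $r = +\infty$ and only degree-zero and degree-one terms.
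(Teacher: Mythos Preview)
Your proof is correct and follows essentially the same approach as the paper: both observe componentwise that $(\mathbf{A}\odot\mathbf{X})_{ij}=\mathbf{A}_{ij}\mathbf{X}_{ij}$ is a degree-$\le 1$ polynomial in the entries of $\mathbf{X}$, hence real-analytic. You are simply more explicit in writing out the multi-index and invoking Proposition~\ref{prop:matrix-poly-ra}, whereas the paper states the polynomial observation in one line.
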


\begin{proof}
Componentwise, $(\mathbf{A} \odot \mathbf{X})_{ij} = \mathbf{A}_{ij} \, \mathbf{X}_{ij}$ is a product of a constant and a coordinate function, hence a polynomial (degree $\leq 1$) and thus real-analytic.
\end{proof}

\begin{proposition}[Concatenation/stacking of real-analytic blocks]\label{prop:stacking}
Let $f_\ell : \mathbb{R}^{m\times n} \to \mathbb{R}^{p\times q_\ell}$ be real-analytic for $\ell \in [L]$. The horizontal concatenation operation $g:\mathbb{R}^{m\times n}\to\mathbb{R}^{p\times (q_1+\cdots+q_L)}$, defined as:
\[
g(\mathbf{X})=\big[\,f_1(\mathbf{X}) \;\; f_2(\mathbf{X}) \;\; \cdots \;\; f_L(\mathbf{X})\,\big]
\]
is real-analytic. Likewise, if $f_\ell : \mathbb{R}^{m\times n}\to\mathbb{R}^{p_\ell\times q}$ are real-analytic, then the vertical stacking operation $h : \mathbb{R}^{m\times n} \to \mathbb{R}^{(p_1+\cdots+p_L)\times q}$, defined as:
$$
h(\mathbf{X}) =\big[\,f_1(\mathbf{X})^\top \;\; f_2(\mathbf{X})^\top \;\; \cdots \;\; f_L(\mathbf{X})^\top \,\big]^\top
$$
is real-analytic.
\end{proposition}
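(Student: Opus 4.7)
The plan is to reduce the proposition directly to the componentwise definition of real-analyticity given in \autoref{def:matrix-analytic}, so the argument becomes pure bookkeeping of indices. Since a matrix-valued map is declared real-analytic exactly when each of its scalar entries is real-analytic, it suffices to verify that every scalar entry of $g$ (resp.\ $h$) coincides with a scalar entry of one of the building blocks $f_\ell$, which is real-analytic by hypothesis.

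For the horizontal case, I would first set $Q_0 = 0$ and $Q_\ell = q_1 + \cdots + q_\ell$, and observe that the concatenation rule means
\[
g_{i j}(\mathbf{X}) \;=\; (f_\ell)_{i,\; j - Q_{\ell-1}}(\mathbf{X}), \qquad \text{where } \ell \text{ is the unique index with } Q_{\ell-1} < j \le Q_\ell.
\]
Since $f_\ell$ is real-analytic, \autoref{def:matrix-analytic} gives that each $(f_\ell)_{i,\,j-Q_{\ell-1}}$ is a real-analytic scalar function on $\mathbb{R}^{m\times n}$. Hence every component $g_{ij}$ is real-analytic, and applying \autoref{def:matrix-analytic} in the reverse direction yields that $g$ itself is real-analytic.

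The vertical case is completely symmetric: setting $P_0 = 0$ and $P_\ell = p_1 + \cdots + p_\ell$, the stacking rule gives
\[
h_{i j}(\mathbf{X}) \;=\; (f_\ell)_{i - P_{\ell-1},\; j}(\mathbf{X}), \qquad \text{where } \ell \text{ is the unique index with } P_{\ell-1} < i \le P_\ell,
\]
and the same appeal to \autoref{def:matrix-analytic} (together with real-analyticity of each $f_\ell$) establishes the claim.

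There is really no obstacle here; the only thing to be careful about is writing the index decomposition cleanly so that each $g_{ij}$ (resp.\ $h_{ij}$) is literally identified as a scalar entry of a single $f_\ell$, rather than, say, a sum that would force us to invoke \autoref{prop:closure-real-analytic}. With the block structure of concatenation/stacking, this identification is immediate, and no power-series manipulation or vectorization argument (as in \autoref{lem:equiv-analytic-column}) is needed.
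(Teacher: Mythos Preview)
Your proof is correct and follows exactly the same approach as the paper: both arguments observe that every scalar component of $g$ (resp.\ $h$) is literally one scalar component of some $f_\ell$, hence real-analytic by \autoref{def:matrix-analytic}. Your version simply makes the index bookkeeping explicit with the cumulative sums $Q_\ell$ and $P_\ell$, whereas the paper states the identification in one sentence.
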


\begin{proof}
Each scalar component of $g$ (respectively $h$) is exactly one scalar component of some $f_\ell$, hence real-analytic. Therefore $g$ and $h$ are real-analytic by definition \autoref{def:matrix-analytic}.
\end{proof}

\begin{proposition}[Noncommutative matrix polynomials are real-analytic]\label{prop:nc-matrix-poly}
Let $n,p,q\in\mathbb{N}$, let $\mathbf{X}\in\mathbb{R}^{n\times n}$, and fix coefficient matrices
$\mathbf{A}_k\in\mathbb{R}^{p\times n}$ and $\mathbf{B}_k\in\mathbb{R}^{n\times q}$ for $k=0,\ldots,d$.
Define
\[
f(\mathbf{X})
\;:=\;
\sum_{k=0}^{d}\mathbf{A}_k\,\mathbf{X}^{k}\,\mathbf{B}_k
\;\in\; \mathbb{R}^{p\times q},
\qquad
\mathbf{X}^0:=\mathbf{I}_n,\;\; \mathbf{X}^{k+1}:=\mathbf{X}^{k}\mathbf{X}.
\]
Then $f$ is real analytic in the sense of \autoref{def:matrix-analytic}.
\end{proposition}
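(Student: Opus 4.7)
The plan is to decompose $f$ into elementary building blocks and then invoke the closure lemmas already established in the appendix. Since each scalar component $f_{ij}$ lives in $\mathbb{R}$, by Definition~\ref{def:matrix-analytic} it suffices to show that $f$ is built from real-analytic matrix-valued maps via a finite composition of products and sums, and to conclude entrywise.

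First I would establish by induction on $k\ge 0$ that the power map $P_k:\mathbb{R}^{n\times n}\to\mathbb{R}^{n\times n}$, $\mathbf{X}\mapsto \mathbf{X}^k$, is real-analytic. The base case $k=0$ gives the constant map $P_0(\mathbf{X})=\mathbf{I}_n$, whose entries are degree-$0$ polynomials in the coordinates of $\mathbf{X}$ and are therefore real-analytic by \autoref{prop:matrix-poly-ra}. The case $k=1$ is the identity map, whose entries are coordinate functions (degree-$1$ polynomials), again real-analytic by \autoref{prop:matrix-poly-ra}. For the inductive step, write $P_{k+1}(\mathbf{X})=P_k(\mathbf{X})\,P_1(\mathbf{X})$; both factors are real-analytic and the inner dimensions match, so \autoref{prop:matmul} yields that $P_{k+1}$ is real-analytic.

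Next I would observe that the constant maps $\mathbf{X}\mapsto \mathbf{A}_k$ and $\mathbf{X}\mapsto \mathbf{B}_k$ are entrywise constant, hence real-analytic by \autoref{prop:matrix-poly-ra}. Applying \autoref{prop:matmul} twice — first to combine $\mathbf{A}_k\cdot P_k(\mathbf{X})$, then right-multiplying by $\mathbf{B}_k$ — shows that each summand $\mathbf{X}\mapsto \mathbf{A}_k\mathbf{X}^k\mathbf{B}_k$ is a real-analytic map $\mathbb{R}^{n\times n}\to\mathbb{R}^{p\times q}$. Finally, summing over $k=0,\ldots,d$ preserves real-analyticity entrywise: each scalar component of $f$ is a finite sum of real-analytic scalar functions, and is therefore real-analytic by \autoref{prop:closure-real-analytic}. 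Invoking \autoref{def:matrix-analytic} one last time concludes that $f$ itself is real-analytic on all of $\mathbb{R}^{n\times n}$.

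There is no substantive mathematical obstacle here; the argument is a routine composition of previously established closure properties. The only bookkeeping points worth care are matching inner dimensions when invoking \autoref{prop:matmul} twice per summand, and recognizing that constant matrices count as real-analytic maps through \autoref{prop:matrix-poly-ra}. One could alternatively bypass the induction altogether by expanding $(\mathbf{A}_k\mathbf{X}^k\mathbf{B}_k)_{ij}$ directly as a multivariate polynomial of total degree $k$ in the $n^2$ entries of $\mathbf{X}$ and quoting \autoref{prop:matrix-poly-ra} in a single shot; but the inductive product form is cleaner and mirrors the way these building blocks are later assembled into transformer modules.
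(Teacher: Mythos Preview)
Your proof is correct and follows essentially the same approach as the paper: induction on $k$ to show $\mathbf{X}\mapsto\mathbf{X}^k$ is real-analytic via \autoref{prop:matmul}, then sandwiching with the constant maps $\mathbf{A}_k,\mathbf{B}_k$ using \autoref{prop:matmul} twice, and finally summing componentwise via \autoref{prop:closure-real-analytic}. The only cosmetic difference is that you explicitly treat the $k=0$ base case, whereas the paper starts its induction at $k=1$; your alternative direct-polynomial remark is also sound but not pursued in the paper.
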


\begin{proof}
The identity map $\mathbf{X}\mapsto\mathbf{X}$ is linear, hence a degree-$1$ entrywise polynomial; by \autoref{prop:matrix-poly-ra} it is real-analytic.  
Assume $\mathbf{X}\mapsto\mathbf{X}^k$ is real-analytic. With $f(\mathbf{X})=\mathbf{X}^k$ and $g(\mathbf{X})=\mathbf{X}$, \autoref{prop:matmul} yields $\mathbf{X}^{k+1}=f(\mathbf{X})g(\mathbf{X})$ real-analytic; by induction, all powers $\mathbf{X}\mapsto\mathbf{X}^k$ are real-analytic.

For each $k$, left/right multiplication by fixed matrices preserves real-analyticity via \autoref{prop:matmul}: since the constant maps $\mathbf{X}\mapsto\mathbf{A}_k$ and $\mathbf{X}\mapsto\mathbf{B}_k$ are real-analytic (components are constant polynomials), the composition
$\mathbf{X}\mapsto \mathbf{A}_k\,\mathbf{X}^k\,\mathbf{B}_k$ is real-analytic.  
Finally, $f$ is a finite sum of real-analytic maps, hence real-analytic by closure under addition (apply \autoref{prop:closure-real-analytic} componentwise).
\end{proof}

\begin{remark}
We highlight several standard constructions that yield real-analytic maps, omitting proofs for brevity:
\begin{itemize}[leftmargin=*, itemsep=0em]
    \item \textbf{Affine and bilinear maps.} Functions of the form $\mathbf{X} \mapsto \mathbf{A}\mathbf{X}\mathbf{B} + \mathbf{C}$ are real-analytic, as they are obtained via matrix multiplication and addition of constant matrices (\autoref{prop:matmul}, \autoref{prop:closure-real-analytic}).

    \item \textbf{Algebraic expressions in $\mathbf{X}$.} Any expression constructed from $\mathbf{X}$ using finitely many additions and matrix multiplications with fixed coefficient matrices, e.g.
    $\mathbf{A}_0 + \mathbf{A}_1\mathbf{X}\mathbf{B}_1 + \mathbf{A}_2\mathbf{X}\mathbf{B}_2\mathbf{X}\mathbf{C}_2$-
    defines a real-analytic map. This follows from repeated application of \autoref{prop:matmul} and closure under addition.

    \item \textbf{Scalar polynomial invariants.} Coordinate functions $\mathbf{X}_{ij}$, the trace $\mathrm{tr}(\mathbf{X})$, all principal and non-principal minors, and the determinant $\det(\mathbf{X})$ are scalar polynomials in the entries of $\mathbf{X}$, and hence real-analytic by \autoref{prop:matrix-poly-ra}.
\end{itemize}
\end{remark}

\subsection{Differential, Measure-Theoretic, and Topological Tools}\label{subsec:diff-mt-top}

This subsection collects the minimal calculus, measure, and topology we will use later. In finite dimensions, Fr\'echet derivatives let us speak uniformly about Jacobians and Hessians; basic Euclidean topology lets us control neighborhoods and compactness; the inverse function theorem gives local invertibility; and pushforwards/absolute continuity formalize how distributions transform under measurable maps.

\begin{definition}[Fr\'echet derivative {\cite[\S7.2-\S7.3]{Luenberger_1997}}]
    Let $\boldsymbol{\mathcal{U}} \subseteq \mathbb{R}^{m}$ open, and consider a function
    $f : \boldsymbol{\mathcal{U}} \to \mathbb{R}^n$. We say that $f$ is \emph{Fr\'echet differentiable} at $\mathbf{x} \in \boldsymbol{\mathcal{U}}$ if there exists a bounded linear map
    $\mathbf{A} : \mathbb{R}^m \to \mathbb{R}^n$ such that
    $$
    \lim_{\| \mathbf{h} \|_2 \to 0} \frac{\| f(\mathbf{x} + \mathbf{h}) - f(\mathbf{x}) - \mathbf{A} \mathbf{h} \|_2}{\| \mathbf{h} \|_2} = 0.
    $$
    The unique operator $\mathbf{A}$ is denoted by $Df(\mathbf{x})$ and called the (Fr\'echet) derivative of $f$ at $\mathbf{x}$.
\end{definition}

\begin{definition}[Second Fr\'echet derivative {\cite[Ch.~18]{Magnus_Neudecker_2019}}]
    Let $\boldsymbol{\mathcal{U}} \subseteq \mathbb{R}^m$ open, and consider a function $f : \boldsymbol{\mathcal{U}} \to \mathbb{R}^n$. Suppose $f$ is Fr\'echet differentiable at $\mathbf{x}$. The \emph{second Fr\'echet derivative} of $f$ at $\mathbf{x}$ is the bounded bilinear map
    $D^2 f(\mathbf{x}) : \mathbb{R}^m \times \mathbb{R}^m \to \mathbb{R}^n$ defined as:
    \begin{equation*}
    \label{eq:second-frechet-derivative-as-derivative-of-Df}
    D^2 f(\mathbf{x})[\mathbf{h},\mathbf{k}]
    := \lim_{t\to 0}\frac{Df(\mathbf{x}+t\mathbf{h})[\mathbf{k}]-Df(\mathbf{x})[\mathbf{k}]}{t}.
    \end{equation*}
\end{definition}
    
\begin{proposition}[Connection to the Hessian]\label{prop:frechet-hessian}
    If $f : \boldsymbol{\mathcal{U}} \to \mathbb{R}$ is $C^2$, then $D^2 f(\mathbf{x})$ is symmetric \cite[Thm.~5.1]{arora2021alternative} and can represented by the Hessian matrix $\nabla^2 f(\mathbf{x})$:
    $$
    D^2 f(\mathbf{x})[\mathbf{h}, \mathbf{k}] \;=\; \mathbf{h}^\top \big( \nabla^2 f(\mathbf{x}) \big) \, \mathbf{k},
    $$
    as noted in \citealt[Ch.~18]{Magnus_Neudecker_2019}.
\end{proposition}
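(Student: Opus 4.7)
The plan is to derive the matrix representation directly from the definition of the second Fréchet derivative, reducing everything to the familiar fact that for a scalar-valued $C^2$ function the first derivative is represented by the gradient and the Hessian captures the Jacobian of that gradient. First, I would note that since $f:\boldsymbol{\mathcal{U}}\to\mathbb{R}$ is $C^1$, its Fréchet derivative at each point $\mathbf{x}$ is represented by the gradient, i.e.\ $Df(\mathbf{x})[\mathbf{k}]=\nabla f(\mathbf{x})^\top\mathbf{k}$ for every $\mathbf{k}\in\mathbb{R}^m$. This turns the definition of $D^2 f(\mathbf{x})[\mathbf{h},\mathbf{k}]$ into a one-variable directional derivative of the map $\mathbf{x}\mapsto \nabla f(\mathbf{x})^\top\mathbf{k}$ in direction $\mathbf{h}$.

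Next, I would substitute into the definition and use linearity of the scalar limit to factor $\mathbf{k}$ out:
\begin{equation*}
D^2 f(\mathbf{x})[\mathbf{h},\mathbf{k}]
=\lim_{t\to 0}\frac{\nabla f(\mathbf{x}+t\mathbf{h})^\top\mathbf{k}-\nabla f(\mathbf{x})^\top\mathbf{k}}{t}
=\Bigl(\lim_{t\to 0}\frac{\nabla f(\mathbf{x}+t\mathbf{h})-\nabla f(\mathbf{x})}{t}\Bigr)^{\!\top}\mathbf{k}.
\end{equation*}
Because $f\in C^2$, the gradient map $\mathbf{x}\mapsto\nabla f(\mathbf{x})$ is itself $C^1$, so the inner limit exists and equals the Fréchet derivative of $\nabla f$ at $\mathbf{x}$ applied to $\mathbf{h}$, which in Euclidean coordinates is the Jacobian of $\nabla f$ times $\mathbf{h}$, i.e.\ $\nabla^2 f(\mathbf{x})\,\mathbf{h}$. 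Substituting yields $D^2 f(\mathbf{x})[\mathbf{h},\mathbf{k}]=(\nabla^2 f(\mathbf{x})\mathbf{h})^\top\mathbf{k}=\mathbf{h}^\top\nabla^2 f(\mathbf{x})^\top\mathbf{k}$.

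Finally, I would invoke the Schwarz/Clairaut theorem to conclude that for $f\in C^2$ the mixed partial derivatives commute, so $\nabla^2 f(\mathbf{x})^\top=\nabla^2 f(\mathbf{x})$. This gives the announced identity $D^2 f(\mathbf{x})[\mathbf{h},\mathbf{k}]=\mathbf{h}^\top\nabla^2 f(\mathbf{x})\,\mathbf{k}$, which is symmetric in $(\mathbf{h},\mathbf{k})$ by symmetry of $\nabla^2 f(\mathbf{x})$. The only nontrivial ingredient is Schwarz's theorem, which supplies the symmetry; the rest is bookkeeping with the definition and the standard identification of the Fréchet derivative of a $C^1$ vector field with its Jacobian.
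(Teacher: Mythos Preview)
Your argument is correct and self-contained. The paper does not actually supply a proof of this proposition: it states the result with embedded citations (symmetry to \cite[Thm.~5.1]{arora2021alternative}, the Hessian representation to \citealt[Ch.~18]{Magnus_Neudecker_2019}) and moves on. Your derivation---writing $Df(\mathbf{x})[\mathbf{k}]=\nabla f(\mathbf{x})^\top\mathbf{k}$, differentiating the gradient in direction $\mathbf{h}$ to pick up $\nabla^2 f(\mathbf{x})\mathbf{h}$, and invoking Schwarz/Clairaut for symmetry---is the standard route and fills in exactly what the paper defers to the literature.
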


\paragraph{Euclidean topology.} The following standard definitions and facts about $\mathbb{R}^p$ are used in the local-to-global measure argument in \autoref{sec:app:asinj}.

\begin{definition}[Closure of a set in $\mathbb{R}^p$]
Let $\boldsymbol{\mathcal{U}} \subseteq \mathbb{R}^p$. The closure of $\boldsymbol{\mathcal{U}}$, denoted $\overline{\boldsymbol{\mathcal{U}}}$, is the smallest closed subset of $\mathbb{R}^p$ containing $\boldsymbol{\mathcal{U}}$.
\end{definition}

\begin{definition}[Euclidean balls in $\mathbb{R}^p$]\label{def:balls}
Fix $p \in \mathbb{N}$ and equip $\mathbb{R}^p$ with the Euclidean norm $\| \cdot \|_2$.
For $\mathbf{x} \in \mathbb{R}^p$ and $r > 0$ we define:
\begin{align*}
B(\mathbf{x},r) &:=
\{ \, \mathbf{y} \in \mathbb{R}^p : \| \mathbf{y} - \mathbf{x} \|_2 < r\, \} \\
\overline{B}(\mathbf{x}, r) &:=
\{ \, \mathbf{y} \in \mathbb{R}^p : \| \mathbf{y} - \mathbf{x} \|_2 \leq r\, \}
\end{align*}
In $\mathbb{R}^p$ with the Euclidean topology one has $\overline{B}(\mathbf{x}, r) = \overline{B(\mathbf{x}, r)}$, i.e. the closed ball equals the topological closure of the open ball.
\end{definition}

\begin{definition}[Second-countable subspace of $\mathbb{R}^p$ {{\cite[§30]{MunkresTopology}}}]
Let $\boldsymbol{\mathcal{X}} \subseteq \mathbb{R}^p$ be equipped with the subspace topology
$\tau_{\boldsymbol{\mathcal{X}}} := \{ \boldsymbol{\mathcal{U}} \cap \boldsymbol{\mathcal{X}} : \boldsymbol{\mathcal{U}}\text{ open in } \mathbb{R}^p \}$.
We say $\boldsymbol{\mathcal{X}}$ is second-countable if there exists a countable family
$\mathcal{F} \subseteq \tau_X$ such that every $\boldsymbol{\mathcal{O}} \in \tau_{\boldsymbol{\mathcal{X}}}$ is a union of members of $\mathcal{F}$.
Equivalently, the countable family
$$
\mathcal{F}_{\mathbb{Q}} \; := \; \big\{\, B(\mathbf{x},r) \cap \boldsymbol{\mathcal{X}} : \mathbf{x} \in \mathbb{Q}^p, r\in\mathbb{Q}_{>0} \, \big\},
$$
is a basis for $\tau_{\boldsymbol{\mathcal{X}}}$.
\end{definition}

\begin{proposition}[Standard facts for $\mathbb{R}^p$]\label{prop:std-Rp}
Fix $p\in\mathbb{N}$. The following hold:

\vspace{-5px}
\begin{enumerate}[itemsep=0em]
    \item \textbf{Hausdorff} \cite[Prop.~18]{AitkenMetricSpacesNotes}\textbf{:} $\mathbb{R}^p$ with its Euclidean metric is Hausdorff.
    \item \textbf{Heine-Borel} \cite[Thm.~27.3]{MunkresTopology}\textbf{:} A subset of $\mathbb{R}^p$ is compact iff it is closed and bounded; in particular, each closed Euclidean ball $\overline{B}(x,r)$ is compact.
    \item \textbf{Second countability} \cite[\S13 and Thm.~30.2]{MunkresTopology} \textbf{:} $\mathbb{R}$ has a countable base (intervals with rational endpoints); hence $\mathbb{R}^p$, being a finite product of second-countable spaces, is second-countable. Moreover, subspaces of second-countable spaces are second-countable.
    \item \textbf{Lindelöf consequence}\cite[Thm.~30.3(a)]{MunkresTopology}\textbf{:} Every second-countable space is Lindelöf; consequently, every open cover of any subspace of $\mathbb{R}^p$ admits a countable subcover.
    \item \textbf{Local compactness of $\mathbb{R}^p$}\cite[Thm.~29.2]{MunkresTopology}\textbf{:} For any $\mathbf{x} \in \mathbb{R}^p$ and open neighborhood $\boldsymbol{\mathcal{W}} \ni \mathbf{x}$, there exists $\varepsilon > 0$ with $\overline{B}(\mathbf{x}, \varepsilon) \subseteq \boldsymbol{\mathcal{W}}$, and $\overline{B}(\mathbf{x}, \varepsilon)$ is compact by Heine-Borel; hence $\mathbb{R}^p$ is locally compact. Furthermore, in a Hausdorff space, local compactness is equivalent to shrinking neighborhoods with compact closures: for every neighborhood $\boldsymbol{\mathcal{W}} \ni \mathbf{x}$ there exists an open $\boldsymbol{\mathcal{V}}$ with $\mathbf{x} \in \boldsymbol{\mathcal{V}} \subseteq \overline{\boldsymbol{\mathcal{V}}} \subseteq \boldsymbol{\mathcal{W}}$ and $\overline{\boldsymbol{\mathcal{V}}}$ compact.
\end{enumerate}
\end{proposition}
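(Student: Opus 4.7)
The plan is to verify each item in turn, leaning on the cited textbook references and filling in only the short steps that make the implications explicit. Every claim is a classical fact of Euclidean topology, so I do not expect any genuine obstacle; the main work is to package well-known results into the precise form the later arguments need.

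For item (1), I would start from the observation that for any distinct $\mathbf{x}, \mathbf{y} \in \mathbb{R}^p$ the number $r := \|\mathbf{x}-\mathbf{y}\|_2/2$ is strictly positive, so the open balls $B(\mathbf{x}, r)$ and $B(\mathbf{y}, r)$ are disjoint by the triangle inequality, giving Hausdorff separation directly. For item (2), I would invoke Heine–Borel to characterize compact subsets of $\mathbb{R}^p$ as exactly the closed and bounded ones, and then note that each closed ball $\overline{B}(\mathbf{x}, r)$ is visibly closed and bounded, hence compact.

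For items (3) and (4), the plan is first to build a countable base of $\mathbb{R}$ from open intervals with rational endpoints, then to lift it to $\mathbb{R}^p$ via the standard fact that a finite product of second-countable spaces is second-countable (the collection of products of base elements remains countable). Second-countability transfers to any subspace $\boldsymbol{\mathcal{X}} \subseteq \mathbb{R}^p$ because tracing the base against $\boldsymbol{\mathcal{X}}$ yields a countable base of the subspace topology, which also proves the equivalence with $\mathcal{F}_{\mathbb{Q}}$ stated in the definition. The Lindelöf property then follows by the usual selection argument: for any open cover of a subspace, pick for each point one base element lying in some cover member; since only countably many base elements can appear, one extracts a countable subcover.

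For item (5), I would establish local compactness of $\mathbb{R}^p$ by observing that for every $\mathbf{x} \in \mathbb{R}^p$ and every open $\boldsymbol{\mathcal{W}} \ni \mathbf{x}$ there exists $\varepsilon > 0$ with $\overline{B}(\mathbf{x}, \varepsilon) \subseteq \boldsymbol{\mathcal{W}}$, and this closed ball is compact by item (2). The shrinking-neighborhood upgrade is the only step with any content: in a locally compact Hausdorff space one picks a compact neighborhood $\boldsymbol{\mathcal{K}} \subseteq \boldsymbol{\mathcal{W}}$ of $\mathbf{x}$, uses Hausdorff separation to find an open $\boldsymbol{\mathcal{V}}$ with $\mathbf{x} \in \boldsymbol{\mathcal{V}} \subseteq \overline{\boldsymbol{\mathcal{V}}} \subseteq \boldsymbol{\mathcal{K}}$, and concludes that $\overline{\boldsymbol{\mathcal{V}}}$ is compact as a closed subset of $\boldsymbol{\mathcal{K}}$. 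In $\mathbb{R}^p$ this collapses to the trivial nested-ball construction $\mathbf{x} \in B(\mathbf{x}, \varepsilon/2) \subseteq \overline{B}(\mathbf{x}, \varepsilon/2) \subseteq \boldsymbol{\mathcal{W}}$, which is exactly the form used downstream. The whole argument is therefore a short assembly of standard lemmas, with the Hausdorff-plus-compactness step being the only place where any care is required.
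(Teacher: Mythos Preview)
Your proposal is correct and in fact more detailed than what the paper provides: the paper does not give a proof of this proposition at all, treating it as a list of standard facts with the justification carried entirely by the inline textbook citations (Aitken, Munkres). Your item-by-item verification simply unpacks those references, and the arguments you sketch---disjoint half-distance balls for Hausdorff, closed-and-bounded for Heine--Borel, rational-endpoint intervals and their products for second countability, the base-selection argument for Lindel\"of, and nested balls for local compactness with the shrinking-neighborhood upgrade---are exactly the standard ones those sources contain, so there is no divergence in approach to speak of.
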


\paragraph{Invertibility and measure transport.} The inverse function theorem and the pushforward formalism are the two tools that connect local diffeomorphism charts to global measure-preservation statements.

\begin{definition}[$C^k$ diffeomorphism {\citealt[Ch.~5]{Spivak1971-nl}}]\label{def:diffeomorphism}
Let $U,V\subseteq\mathbb{R}^p$ be open sets and let $k\in\mathbb{N}\cup\{\infty\}$.
A map $f:U\to V$ is a \textbf{$C^k$ diffeomorphism} if:

\vspace{-5px}
\begin{enumerate}[itemsep=0em]
    \item $f$ is bijective;
    \item $f$ is $C^k$ (all partial derivatives up to order $k$ exist and are continuous);
    \item the inverse map $f^{-1}:V\to U$ is $C^k$.
\end{enumerate}
When $k=1$ we simply say \emph{diffeomorphism}.
Equivalently, a $C^k$ diffeomorphism is a bijective $C^k$ map whose inverse is also $C^k$.
\end{definition}

\begin{theorem}[Inverse Function Theorem {\citealt[Thm.~9.24]{RudinPM}}]\label{thm:inverse-function}
Let $\boldsymbol{\mathcal{U}} \subset \mathbb{R}^p$ be open and $f : \boldsymbol{\mathcal{U}} \to \mathbb{R}^p$ be $C^1$. 
Suppose $\mathbf{a} \in \boldsymbol{\mathcal{U}}$ satisfies $\det Df(\mathbf{a}) \neq 0$. Then there exist open sets
$\boldsymbol{\mathcal{U}}_0 \subset \boldsymbol{\mathcal{U}}$ with $\mathbf{a} \in \boldsymbol{\mathcal{U}}_0$ and $\boldsymbol{\mathcal{V}}_0 \subset \mathbb{R}^p$ with $f(\mathbf{a}) \in \boldsymbol{\mathcal{V}}_0$ such that
$$
f \big|_{\boldsymbol{\mathcal{U}}_0} : \boldsymbol{\mathcal{U}}_0 \to \boldsymbol{\mathcal{V}}_0
$$
is a $C^1$-diffeomorphism. Moreover, the inverse $f^{-1} : \boldsymbol{\mathcal{V}}_0 \to \boldsymbol{\mathcal{U}}_0$ is $C^1$ and
$$
D\big( f^{-1} \big)(f(\mathbf{x})) \; = \; \big(D f(\mathbf{x})\big)^{-1} \qquad \forall \, \mathbf{x} \in \boldsymbol{\mathcal{U}}_0.
$$
\end{theorem}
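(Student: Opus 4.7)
The plan is to prove this via the classical Banach fixed-point argument (as in Rudin's exposition). First, I would normalize by affine substitutions so that we may assume $\mathbf{a}=\mathbf{0}$, $f(\mathbf{a})=\mathbf{0}$, and $Df(\mathbf{a})=\mathbf{I}_p$. Concretely, replace $f$ by $\tilde{f}(\mathbf{x}) := (Df(\mathbf{a}))^{-1}\bigl(f(\mathbf{a}+\mathbf{x})-f(\mathbf{a})\bigr)$; the hypothesis $\det Df(\mathbf{a})\neq 0$ makes $Df(\mathbf{a})$ invertible, and pre/post-composition with affine diffeomorphisms preserves openness, the $C^1$ class, and the desired conclusion, so it suffices to prove the theorem in this normalized form.

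Next, introduce the family of maps $\varphi_{\mathbf{y}}(\mathbf{x}) := \mathbf{x} - f(\mathbf{x}) + \mathbf{y}$, whose fixed points are precisely the solutions of $f(\mathbf{x})=\mathbf{y}$. Since $Df$ is continuous and $Df(\mathbf{0})=\mathbf{I}$, I choose $r>0$ small enough that $\|\mathbf{I}-Df(\mathbf{x})\|_{\mathrm{op}}\le \tfrac{1}{2}$ and $\det Df(\mathbf{x})\neq 0$ on the convex ball $\overline{B}(\mathbf{0},r)\subseteq \boldsymbol{\mathcal{U}}$. A mean-value inequality applied to $\mathbf{x}\mapsto \mathbf{x}-f(\mathbf{x})$ then gives the contraction estimate $\|\varphi_{\mathbf{y}}(\mathbf{x})-\varphi_{\mathbf{y}}(\mathbf{x}')\|_2 \le \tfrac{1}{2}\|\mathbf{x}-\mathbf{x}'\|_2$, and for $\mathbf{y}\in \overline{B}(\mathbf{0},r/2)$ one verifies that $\varphi_{\mathbf{y}}$ maps $\overline{B}(\mathbf{0},r)$ into itself. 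Banach's theorem on this complete metric space produces, for every such $\mathbf{y}$, a unique fixed point $\mathbf{x}=g(\mathbf{y})\in \overline{B}(\mathbf{0},r)$, i.e.\ $f(g(\mathbf{y}))=\mathbf{y}$. Taking $\boldsymbol{\mathcal{V}}_0 := B(\mathbf{0},r/2)$ and $\boldsymbol{\mathcal{U}}_0 := g(\boldsymbol{\mathcal{V}}_0)\cap B(\mathbf{0},r)$ yields the required neighborhoods, with $g$ bijective and $2$-Lipschitz from the estimate $\|g(\mathbf{y})-g(\mathbf{y}')\|_2 \le 2\|\mathbf{y}-\mathbf{y}'\|_2$, which in particular makes $\boldsymbol{\mathcal{U}}_0$ open.

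It remains to establish that $g\in C^1$ with $Dg(\mathbf{y})=(Df(g(\mathbf{y})))^{-1}$. For $\mathbf{y},\mathbf{y}+\mathbf{k}\in \boldsymbol{\mathcal{V}}_0$, set $\mathbf{x}:=g(\mathbf{y})$ and $\mathbf{h}:=g(\mathbf{y}+\mathbf{k})-\mathbf{x}$; then $\mathbf{k}=f(\mathbf{x}+\mathbf{h})-f(\mathbf{x})=Df(\mathbf{x})\mathbf{h}+o(\|\mathbf{h}\|_2)$, and inverting $Df(\mathbf{x})$ together with the Lipschitz bound $\|\mathbf{h}\|_2\le 2\|\mathbf{k}\|_2$ identifies $(Df(\mathbf{x}))^{-1}$ as the Fréchet derivative of $g$ at $\mathbf{y}$. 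Continuity of $Dg$ then follows by composing the continuous maps $g$, $Df$, and matrix inversion on the open set $\mathrm{GL}_p(\mathbb{R})$. The main technical obstacle is the contraction/invariance step: one must choose $r$ uniformly to secure simultaneously the $\tfrac{1}{2}$-Lipschitz bound on $\varphi_{\mathbf{y}}$, the invariance $\varphi_{\mathbf{y}}(\overline{B}(\mathbf{0},r))\subseteq \overline{B}(\mathbf{0},r)$, and the nonvanishing of $\det Df$ throughout the ball; once this bookkeeping is in place, Banach's theorem and the standard differentiability computation finish the proof.
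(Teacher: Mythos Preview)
Your proposal is correct and follows the classical contraction-mapping argument that Rudin gives in Theorem~9.24. Note, however, that the paper does not provide its own proof of this statement: it is stated as a cited tool (from \citealt{RudinPM}) and invoked later in \autoref{lem:countable-chart-cover}. Since the cited source uses exactly the Banach fixed-point route you outline---normalize so that $Df(\mathbf{a})=\mathbf{I}_p$, set up $\varphi_{\mathbf{y}}(\mathbf{x})=\mathbf{x}-f(\mathbf{x})+\mathbf{y}$, obtain a $\tfrac12$-contraction on a small ball, extract the local inverse via the fixed point, and then verify $C^1$ regularity and the derivative formula---your plan is essentially the same as the proof the paper defers to.
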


\begin{remark}
In \autoref{thm:inverse-function} we assume $f :\boldsymbol{\mathcal{U}} \subseteq \mathbb{R}^p \to \mathbb{R}^p$, so the Jacobian $Df(\mathbf{a})$ is a $p \times p$ (square) matrix. In this setting,
$$
\det Df(\mathbf{a}) \neq 0 \quad \Longleftrightarrow \quad Df(\mathbf{a}) \; \text{is invertible},
$$
and this is exactly the hypothesis that yields a local $C^1$ inverse.
\end{remark}

\begin{definition}[Pushforward and absolute continuity {\cite[\S3.2]{Folland1999-ad}}]
Consider a Borel-measurable map $T : \mathbb{R}^p \to \mathbb{R}^p$ and let $\mu$ be a Borel measure on $\mathbb{R}^p$.
The pushforward measure $T_\#\mu$ is the Borel measure on $\mathbb{R}^p$ defined by
$$
T_\#\mu(\boldsymbol{\mathcal{U}}) \; := \; \mu \left( T^{-1}(\boldsymbol{\mathcal{U}}) \right),\qquad \boldsymbol{\mathcal{U}} \in \mathcal{B}(\mathbb{R}^p).
$$
If $\nu$ is another Borel measure on $\mathbb{R}^p$, we say $T_\#\mu$ is absolutely continuous with respect to $\nu$,
and write $T_\#\mu \ll \nu$, if for every Borel set $\boldsymbol{\mathcal{U}} \in \mathcal{B}(\mathbb{R}^p)$:
$$
\nu(\boldsymbol{\mathcal{U}}) = 0 \Longrightarrow T_\#\mu(\boldsymbol{\mathcal{U}}) = 0.
$$
In particular, for Lebesgue measure $\mathrm{Leb}_p$, to prove $T_\#\mu \ll \mathrm{Leb}_p$ for every $\mu \ll \mathrm{Leb}_p$, 
it suffices to verify that
$$
\mathrm{Leb}_p(\boldsymbol{\mathcal{U}}) = 0 \ \Longrightarrow\ \mathrm{Leb}_p \big( T^{-1}(\boldsymbol{\mathcal{U}}) \big)=0
\quad\text{for all Borel } \boldsymbol{\mathcal{U}}\subseteq\mathbb{R}^p.
$$
\end{definition}

\clearpage
\section{Transformer Language Model}\label{sec:app:trans}

This appendix section gives a concise, shape-accurate specification of the decoder-only Transformer we analyze. We include it both to keep the paper self-contained and because the measure-zero arguments later hinge on architecture-dependent witnesses and exact dimension bookkeeping. We begin with token and positional embeddings (\autoref{def:emb-plus-pos}), define self-attention and its causal variants (\autoref{def:self-attn}, \autoref{def:self-attn-causal-masked}, \autoref{def:self-attn-causal-proj}), assemble multi-head attention, layer normalization, and an MLP into a pre-LN residual block (\autoref{def:mhsa}, \autoref{def:layer-norm}, \autoref{def:mlp}, \autoref{def:transformer-block}), stack $L$ such blocks to obtain the model (\autoref{def:transformer}), and conclude with the unembedding+softmax head (\autoref{def:unemb-layer}), isolating the last-token representation used in downstream proofs (\autoref{eq:last-token-repr}).

\paragraph{Input processing.} The first stage of the model maps a discrete token sequence into a continuous matrix representation via learned embeddings.

\begin{definition}[Token Embedding Layer]\label{def:emb-layer}
Let $\mathcal{V}$ be a vocabulary, and let $d \in \mathbb{N}$ be the embedding dimension. For any input sequence $\mathrm{s} = \langle\mathrm{s}_1, \ldots, \mathrm{s}_T\rangle \in \mathcal{V}^{\leq K}$, the Token Embedding Layer is the function defined as:
\begin{equation}
\mathrm{E}(\mathrm{s}) = \left( \mathbf{E}_{\mathrm{s}_1}, \ldots, \mathbf{E}_{\mathrm{s}_T} \right)^\top \in \mathbb{R}^{T \times d},
\end{equation}
where $\mathbf{E} \in \mathbb{R}^{|\mathcal{V}| \times d}$ is a trainable embedding matrix indexed by elements of $\mathcal{V}$, and $\mathbf{E}_{\mathrm{s}_i} \in \mathbb{R}^d$ denotes the embedding vector for token $\mathrm{s}_i$.

This mapping is applied element-wise and is independent of the sequence length $T$.
\end{definition}

\begin{definition}[Positional Embedding Layer]\label{def:pos-emb-layer}
Let $\mathcal{V}$ be a vocabulary, and let $d\in\mathbb{N}$ be the embedding dimension. 
For any input sequence $\mathrm{s} = \langle \mathrm{s}_1, \ldots, \mathrm{s}_T\rangle \in \mathcal{V}^{\le K}$ with $T = |\mathrm{s}|$, the (learned absolute) Positional Embedding Layer is the function defined as:
\begin{equation}
\mathrm{PE}(\mathrm{s}) \;=\; \left( \mathbf{P}_{1}, \ldots, \mathbf{P}_{T} \right)^\top \in \mathbb{R}^{T \times d},
\end{equation}
where $\mathbf{P}\in\mathbb{R}^{K\times d}$ is a trainable matrix indexed by positions $i\in[K]$, and $\mathbf{P}_i\in\mathbb{R}^d$ denotes the embedding vector for position $i$. This mapping depends only on positions (not on token identities) and returns the first $T$ rows of $\mathbf{P}$.
\end{definition}

\begin{definition}[Embedding Layer]\label{def:emb-plus-pos}
Let $\mathcal{V}$ be a vocabulary, $K\in\mathbb{N}$ a context bound, and $d\in\mathbb{N}$ the embedding width.
For any input sequence $\mathrm{s}=\langle \mathrm{s}_1,\ldots,\mathrm{s}_T\rangle \in \mathcal{V}^{\le K}$ with $T=|\mathrm{s}|$, define the embedding layer as the sum of the token and positional embeddings:
\begin{equation}
\mathrm{Emb}(\mathrm{s}) := \mathrm{E}(\mathrm{s}) + \mathrm{PE}(\mathrm{s}) = \big( \,\mathbf{E}_{\mathrm{s}_1} + \mathbf{P}_{1}, \;  \ldots, \; \mathbf{E}_{\mathrm{s}_T} + \mathbf{P}_{T} \, \big)^\top \in \mathbb{R}^{T\times d},
\end{equation}
where $\mathbf{E}\in\mathbb{R}^{|\mathcal{V}|\times d}$ is the trainable token-embedding matrix and
$\mathbf{P}\in\mathbb{R}^{K\times d}$ is the trainable positional-embedding matrix.
\end{definition}

\paragraph{Sub-layer modules.} The Transformer block is built from four reusable sub-layers---an MLP, (causal) self-attention, multi-head attention, and layer normalization---each defined next.

\smallskip
\begin{definition}[Multi-Layer Perceptron]\label{def:mlp}
    A Multi-Layer Perceptron (MLP) with $M$ layers is a function $\mathrm{mlp}_M : \mathbb{R}^{d_0} \to \mathbb{R}^{d_M}$, defined recursively as:
    \begin{align}
        \mathbf{h}^{(1)} &= \mathbf{W}^{(1)} \mathbf{x} + \mathbf{b}^{(1)}\\
        \mathbf{h}^{(m)} &= \mathbf{W}^{(m)} \, \sigma\big(\mathbf{h}^{(m-1)}\big) + \mathbf{b}^{(m)}, \; m \geq 2 \\
        \mathrm{mlp}_M(\mathbf{x}) &= \mathbf{h}^{(M)}
    \end{align}
    where $\mathbf{x} \in \mathbb{R}^{d_0}$ is the input, $\{ \mathbf{W}^{(m)} \in \mathbb{R}^{d_m \times d_{m-1}}  \}_{m=1}^M$ and $\{ \mathbf{b}^{(m)} \in \mathbb{R}^{d_m} \}_{m=1}^M$ are trainable parameters and $\sigma$ is an activation function.
\end{definition}

\begin{definition}[Self-Attention]\label{def:self-attn} 
A Self-Attention module is a function $\boldsymbol\eta : \mathbb{R}^{T \times d_\mathrm{in}} \to \mathbb{R}^{T \times d_\eta}$, defined as:
\begin{align}
    \boldsymbol\eta(\mathbf{X} \, ;  \mathbf{Q}, \mathbf{K}, \mathbf{V})  = \mathrm{softmax}\left( \frac{\left(\mathbf{X} \mathbf{Q}\right) \left(\mathbf{X} \mathbf{K}\right)^\top}{\sqrt{d_\eta}} \right)\mathbf{X} \mathbf{V},
\end{align}
where $\mathbf{X} \in \mathbb{R}^{T \times d_\mathrm{in}}$ is the input, $\mathbf{Q}, \mathbf{K}, \mathbf{V} \in \mathbb{R}^{d_\mathrm{in} \times d_\eta}$ are trainable parameters (query, key, and value matrices), $\mathrm{softmax}$ is applied row-wise, $d_\eta$ is the attention dimension (typically $d_\eta < d_\mathrm{in}$), and $T$ is the sequence length.
\end{definition}

\begin{definition}[Causal Self-Attention, masked form]\label{def:self-attn-causal-masked}
Define the ``causal mask" $\mathbf{M} \in \overline{\mathbb{R}}^{T \times T}$ as:
$$
\mathbf{M}_{ij} =
\begin{cases}
0, &  j\le i, \\
-\infty,& j> i
\end{cases}
$$
Then, a Causal Self-Attention module is a function $\tilde{\boldsymbol{\eta}} : \mathbb{R}^{T \times d_\mathrm{in}} \to \mathbb{R}^{T \times d_\eta}$, defined as:
\begin{align}
    \tilde{\boldsymbol{\eta}}(\mathbf{X} \, ;  \mathbf{Q}, \mathbf{K}, \mathbf{V})  = \mathrm{softmax}\left( \frac{\left(\mathbf{X} \mathbf{Q}\right) \left(\mathbf{X} \mathbf{K}\right)^\top}{\sqrt{d_\eta}} + \mathbf{M} \right)\mathbf{X} \mathbf{V},
\end{align}
where $\mathbf{X} \in \mathbb{R}^{T \times d_\mathrm{in}}$ is the input, $\mathbf{Q}, \mathbf{K}, \mathbf{V} \in \mathbb{R}^{d_\mathrm{in} \times d_\eta}$ are trainable parameters (query, key, and value matrices), $\mathrm{softmax}$ is applied row-wise, $d_\eta$ is the attention dimension (typically $d_\eta < d_\mathrm{in}$), and $T$ is the sequence length.
\end{definition}

\begin{definition}[Causal Self-Attention, projection form]\label{def:self-attn-causal-proj}
Define the unit lower-triangular matrix $\mathbf{L} \in \mathbb{R}^{T \times T}$ as $\mathbf{L}_{ij} = \mathbb{I}_{\{ j \leq i \}}$ and consider the row normalization operation $\mathrm{RN} : \boldsymbol{\mathcal{D}}_T \to \mathbb{R}^{T \times T}$ of \autoref{prop:rn-analytic}. Then, a Causal Self-Attention module is a function $\tilde{\boldsymbol{\eta}} : \mathbb{R}^{T \times d_\mathrm{in}} \to \mathbb{R}^{T \times d_\eta}$, defined as:
\begin{align}
    \tilde{\boldsymbol{\eta}}(\mathbf{X} \, ;  \mathbf{Q}, \mathbf{K}, \mathbf{V})  = \mathrm{RN}\left( \mathbf{L} \odot \exp{\left( \frac{\left(\mathbf{X} \mathbf{Q}\right) \left(\mathbf{X} \mathbf{K}\right)^\top}{\sqrt{d_\eta}} \right)} \right)\mathbf{X} \mathbf{V},
\end{align}
where $\mathbf{X} \in \mathbb{R}^{T \times d_\mathrm{in}}$ is the input, $\mathbf{Q}, \mathbf{K}, \mathbf{V} \in \mathbb{R}^{d_\mathrm{in} \times d_\eta}$ are trainable parameters (query, key, and value matrices), $\mathrm{RN}$ is applied row-wise, $d_\eta$ is the attention dimension (typically $d_\eta < d_\mathrm{in}$), and $T$ is the sequence length.
\end{definition}

\begin{remark}\label{rem:row-sum-positive}

Consider $\mathbf{Z} = \frac{1}{\sqrt{d_\eta}} \left(\mathbf{X} \mathbf{Q}\right) \left(\mathbf{X} \mathbf{K}\right)^\top$. Since $\mathbf{L}_{ii} = 1$ for all $i \in [T]$, we have that $\big[ \mathbf{L} \odot \exp{\mathbf{Z}} \big]_{ii} = e^{\mathbf{Z}_{ii}}> 0$, hence the row sum
$\sum_{j \leq i} e^{\mathbf{Z}_{ij}} \geq e^{\mathbf{Z}_{ii}} > 0$ and $\mathrm{RN}$ is well-defined.
\end{remark}

\begin{definition}[Multi-Head Self-Attention]\label{def:mhsa} 
A Multi-Head Self-Attention module with $H$ heads is a function $\mathrm{attn}_H : \mathbb{R}^{T \times d_\mathrm{in}} \to \mathbb{R}^{T \times d_\mathrm{out}}$, defined using the Self-Attention map from \autoref{def:self-attn} or \autoref{def:self-attn-causal-proj} with different parameter sets per head:
\begin{align}
    \boldsymbol\eta_h(\mathbf{X}) &= \boldsymbol\eta(\mathbf{X} \, ; \mathbf{Q}^{(h)}, \mathbf{K}^{(h)}, \mathbf{V}^{(h)}), 
    \quad h \in [H], \\
    \mathrm{attn}_H(\mathbf{X}) &= \big[\boldsymbol\eta_1(\mathbf{X}), \ldots, \boldsymbol\eta_H(\mathbf{X})\big]\mathbf{W}^O,
\end{align}
where $\{\mathbf{Q}^{(h)}, \mathbf{K}^{(h)}, \mathbf{V}^{(h)} \in \mathbb{R}^{d_\mathrm{in} \times d_\eta}\}_{h=1}^H$ are the head-specific parameters and $\mathbf{W}^O \in \mathbb{R}^{H d_\eta \times d_\mathrm{out}}$ is the output projection matrix.
\end{definition}

\begin{definition}[Layer Normalization]\label{def:layer-norm}

Layer Normalization is a function $\mathrm{LN} : \mathbb{R}^{d} \to \mathbb{R}^{d}$, defined as:
\begin{equation}
    \mathrm{LN}(\mathbf{x}) = \boldsymbol\gamma \odot \frac{\mathbf{x} - \mu_\mathbf{x} \mathbf{1}_d}{\sqrt{\sigma_\mathbf{x}^2 + \varepsilon}} + \boldsymbol\beta,
\end{equation}
where $\mathbf{x} \in \mathbb{R}^{d}$ is the input, $\mu_\mathbf{x} = \frac{1}{d}\sum_{i=1}^d \mathbf{x}_i$ and $\sigma_\mathbf{x}^2 = \frac{1}{d}\sum_{i=1}^d (\mathbf{x}_i - \mu_\mathbf{x})^2$ are the mean and variance of $\mathbf{x}$, vectors $\boldsymbol\beta, \boldsymbol\gamma \in \mathbb{R}^d$ are learnable parameters, and $\varepsilon \in \mathbb{R}^+$ is a small constant that ensures we don't divide by zero.

\end{definition}

\begin{definition}[Unembedding Layer]\label{def:unemb-layer}
Let $\mathcal{V}$ be a vocabulary and $d\in\mathbb{N}$ and $\mathbf{U}\in\mathbb{R}^{|\mathcal{V}|\times d}$ be a trainable projection matrix. Define the unembedding map $\mathrm{UnEmb}:\mathbb{R}^{d}\to\mathbb{R}^{|\mathcal{V}|}$ by
\[
\mathrm{UnEmb}(\mathbf{h})
\;:=\;
\mathrm{softmax}\big(\,\mathbf{U}\,\mathrm{LN}(\mathbf{h})\,\big),
\qquad \mathbf{h}\in\mathbb{R}^{d}.
\]

\end{definition}

\paragraph{Full architecture assembly.} With all sub-layers in place, we assemble them into a single pre-LN residual block, stack $L$ such blocks into the Transformer backbone, and append the unembedding head to form the complete language model.

\begin{definition}[Transformer Block]\label{def:transformer-block}
    A Transformer Block consists of a composition of a Multi-Head Self-Attention layer with $H$ heads (\autoref{def:mhsa}) and an MLP with $M$ layers (\autoref{def:mlp}), each preceded by layer normalization (\autoref{def:layer-norm}) and wrapped with residual connections. Given an input $\mathbf{X} \in \mathbb{R}^{T \times d}$, the output $\mathrm{TB}(\mathbf{X}) \in \mathbb{R}^{T \times d}$ is computed as:
    \begin{align}
        \mathbf{H} &= \mathbf{X} + \mathrm{attn}_H(\overline{\mathbf{X}}) \label{eq:trans-res-att}\\
        \mathrm{TB}(\mathbf{X}) &= \mathbf{H} + \mathrm{mlp}_M(\overline{\mathbf{H}}), \label{eq:trans-res-mlp}
    \end{align}
    where $\overline{\mathbf{X}}, \overline{\mathbf{H}} \in \mathbb{R}^{T \times d}$ are the results of applying layer normalization row-wise to $\mathbf{X}$ and $\mathbf{H}$, respectively, each with its own set of learnable parameters and $\mathrm{mlp}_M$ is applied row-wise. All sub-layer parameters are dimensioned appropriately.
\end{definition}

\begin{definition}[Transformer]\label{def:transformer}
Fix $L\in\mathbb{N}$. For each $\ell\in[L]$, let
$\mathrm{TB}^{(\ell)}:\mathbb{R}^{T\times d}\to\mathbb{R}^{T\times d}$ denote a Transformer Block (\autoref{def:transformer-block}) with its own parameters.
Define the module
$$
\mathrm{Tr}_T \;:=\; \mathrm{TB}^{(L)}\circ \cdots \circ \mathrm{TB}^{(1)}.
$$
Each $\mathrm{TB}^{(\ell)}$ maps $\mathbb{R}^{T\times d}\to\mathbb{R}^{T\times d}$, so the residual additions in \autoref{def:transformer-block} are dimensionally valid at every depth.
\end{definition}

\begin{definition}[Transformer Language Model]\label{def:tlm}  
Let $\mathcal{V}$ denote a finite vocabulary and $K \in \mathbb{N}$ a fixed context length.  
A \emph{Transformer Language Model} with $L$ layers is the composition of an embedding layer (\autoref{def:emb-plus-pos}), a Transformer with $L$ blocks (\autoref{def:transformer}), and an Unembedding Layer (\autoref{def:unemb-layer}).  

Formally, it is a parameterized function  
$$
f : \mathcal{V}^{\leq K} \times \mathbb{R}^p \;\to\; \Delta^{|\mathcal{V}| - 1}
$$
defined as follows. Without loss of generality, consider $\boldsymbol{\theta} = (\boldsymbol{\theta}_1 \in \mathbb{R}^{p_1}, \boldsymbol{\theta}_2 \in \mathbb{R}^{p_2}, \boldsymbol{\theta}_3 \in \mathbb{R}^{p_3}) \in \mathbb{R}^p$, which collects all the model parameters.

For an input sequence $\mathrm{s} = \langle \mathrm{s}_1, \ldots, \mathrm{s}_T \rangle$ with $T \leq K$:  
\begin{align}
    \mathbf{H}(\mathrm{s} \, ; \, \boldsymbol{\theta}) &= \mathrm{Emb}(\mathrm{s} \, ; \,\boldsymbol{\theta}_1) \quad &&\text{(embedding)} \\  
    \mathbf{r}(\mathrm{s} \, ; \, \boldsymbol{\theta}) &= \bigg( \mathrm{Tr}_{|\mathrm{s}|} \Big(\mathbf{H}(\mathrm{s} \, ; \, \boldsymbol{\theta}) \, ; \, \boldsymbol{\theta}_2 \Big) \bigg)_{|\mathrm{s}|}&&\text{(last-token representation)} \label{eq:last-token-repr}\\
    f(\mathbf{s} \, ; \, \boldsymbol{\theta} ) &= \mathrm{UnEmb}\Big(\mathbf{r}(\mathrm{s} \, ; \, \boldsymbol{\theta}) \, ; \, \boldsymbol{\theta}_3\Big) &&\text{(next-token prediction)}
\end{align}   

Then, the probability of the next-token being $\mathcal{V}_i$ is given by:
\begin{equation} \label{eq:next-token-prediction}
    \Pr \, [ \;s_{T+1} = \mathcal{V}_i \mid \mathrm{s} \;]  
    \;=\; \big(f(\mathrm{s} \, ; \, \boldsymbol{\theta} ) \big)_i,  
    \quad \forall i \in [|\mathcal{V}|].  
\end{equation}  

\end{definition}

\paragraph{Verification of real-analyticity.} We close this section by showing that every module defined above is jointly real-analytic in its inputs and parameters. This is the technical property that lets the measure-zero arguments in \autoref{sec:app:asinj} go through. We first record the equivalence between the two causal-softmax formulations, then verify analyticity of the embedding layer and of each sub-layer and their compositions.

\begin{proposition}[Equivalence of masked and projection causal softmax]\label{prop:masked-vs-projection}
For any logits $\mathbf{Z} \in \mathbb{R}^{T \times T}$, let $\mathbf{M}$ and $\mathbf{L}$ be as in
Definitions~\ref{def:self-attn-causal-masked}–\ref{def:self-attn-causal-proj}. Then, row-wise,
$$
\mathrm{softmax}(\mathbf{Z} + \mathbf{M}) \; = \;\mathrm{RN} \big(\mathbf{L} \odot \exp{\mathbf{Z}}\big).
$$
Consequently, the two definitions of the Causal Self-Attention are identical.
\end{proposition}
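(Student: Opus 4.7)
The plan is a direct entrywise comparison of the two softmax/row-normalization expressions, since both forms of causal attention share the right factor $\mathbf{X}\mathbf{V}$. It therefore suffices to show that the $T\times T$ row-stochastic matrices $\mathbf{S}:=\mathrm{softmax}(\mathbf{Z}+\mathbf{M})$ and $\mathbf{R}:=\mathrm{RN}(\mathbf{L}\odot\exp\mathbf{Z})$ coincide for every $\mathbf{Z}\in\mathbb{R}^{T\times T}$.

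First I would fix a row index $i\in[T]$ and compute $\mathbf{S}_{ij}$ from the definition of softmax applied to $(\mathbf{Z}+\mathbf{M})_{i,\cdot}$. Using the convention that $\exp(-\infty)=0$ and $\exp(0)=1$, the mask $\mathbf{M}$ acts as an indicator: for $j\le i$, $\exp(\mathbf{Z}_{ij}+\mathbf{M}_{ij})=\exp(\mathbf{Z}_{ij})$, and for $j>i$, $\exp(\mathbf{Z}_{ij}+\mathbf{M}_{ij})=0$. Recognizing that $\mathbf{L}_{ij}=\mathbb{I}_{\{j\le i\}}$, one sees that the numerator equals $\mathbf{L}_{ij}\exp(\mathbf{Z}_{ij})=[\mathbf{L}\odot\exp\mathbf{Z}]_{ij}$, and the denominator equals $\sum_{k=1}^T \mathbf{L}_{ik}\exp(\mathbf{Z}_{ik})=\sum_{k=1}^T [\mathbf{L}\odot\exp\mathbf{Z}]_{ik}$. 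This is precisely the definition of $\mathbf{R}_{ij}$.

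The one subtle point to address is well-definedness of $\mathrm{RN}$: the row-sum in the denominator must be strictly positive for the row normalization to make sense. I would invoke Remark~\ref{rem:row-sum-positive}: since $\mathbf{L}_{ii}=1$, the $i$-th row sum is at least $\exp(\mathbf{Z}_{ii})>0$, so $\mathbf{L}\odot\exp\mathbf{Z}$ lies in the domain $\boldsymbol{\mathcal{D}}_T$ of $\mathrm{RN}$ (Proposition~\ref{prop:rn-analytic}). This also matches the denominator obtained from the masked form, since the $-\infty$ entries contribute zero exponentials and the $j=i$ term contributes $\exp(\mathbf{Z}_{ii})>0$, so the softmax denominator is positive as well.

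Finally, since both attention weight matrices agree entrywise, multiplying on the right by the common factor $\mathbf{X}\mathbf{V}$ yields identical outputs, so $\tilde{\boldsymbol{\eta}}$ as defined in Definition~\ref{def:self-attn-causal-masked} coincides with $\tilde{\boldsymbol{\eta}}$ as defined in Definition~\ref{def:self-attn-causal-proj}. I do not anticipate any real obstacle here; the only bookkeeping is to state the extended-real convention $\exp(-\infty)=0$ explicitly, so that the argument is not merely formal but a genuine equality in $\mathbb{R}$. The main value of this proposition for the rest of the paper is that it licenses the projection form, which, unlike the masked form, manifestly displays the attention block as a composition of real-analytic maps (matrix products, exponential, Hadamard scaling by a constant, and $\mathrm{RN}$ on $\boldsymbol{\mathcal{D}}_T$), and so feeds directly into Proposition~\ref{prop:modules-ra} and Theorem~\ref{th:main:realan}.
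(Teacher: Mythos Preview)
Your proposal is correct and follows essentially the same route as the paper: fix a row, use the convention $\exp(-\infty)=0$ (the paper phrases this as ``interpreting $-\infty$ via a limit''), and verify that both expressions reduce to $\mathbb{I}_{\{j\le i\}}\,e^{\mathbf{Z}_{ij}}\big/\sum_{k\le i}e^{\mathbf{Z}_{ik}}$ entrywise. Your additional remarks on well-definedness of $\mathrm{RN}$ and on the proposition's role in feeding Proposition~\ref{prop:modules-ra} are accurate and slightly more explicit than the paper's version, but the argument itself is the same.
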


\begin{proof}
Fix a row $i$. By the mask:
$$
\big[ \mathrm{softmax}(\mathbf{Z} + \mathbf{M}) \big]_{ij} =
\begin{cases}
\dfrac{e^{\mathbf{Z}_{ij}}}{\sum_{k \leq i} e^{\mathbf{Z}_{ik}}}, & j \leq i,\\[8pt]
0, & j > i,
\end{cases}
$$
interpreting $-\infty$ via a limit. On the other hand, it holds that:
$$
[\mathbf{L} \odot \exp{\mathbf{Z}}]_{ij} = \mathbb{I}_{j \leq i} \, e^{\mathbf{Z}_{ij}}.
$$ 
Therefore, $\mathbf{L} \odot \exp{\mathbf{Z}}$ keeps exactly the entries with $j \leq i$.
Then, for each row, row normalization divides the kept entries by the same positive sum $\sum_{k \leq i} e^{\mathbf{Z}_{ik}}$ and leaves the others at $0$, yielding the same row as above. This holds for every row $i$, proving the identity.
\end{proof}

\begin{proposition}[Embedding layer is real-analytic in the parameters]\label{prop:emb-ra-params}
Fix a sequence $\mathrm{s}=\langle \mathrm{s}_1,\ldots,\mathrm{s}_T\rangle\in\mathcal{V}^{\le K}$ with $T=|\mathrm{s}|$.
Consider the map
\[
(\mathbf{E},\mathbf{P})\;\longmapsto\;\mathrm{Emb}(\mathrm{s})
\;=\;\mathrm{E}(\mathrm{s})+\mathrm{PE}(\mathrm{s})
\;\in\;\mathbb{R}^{T\times d},
\qquad
\mathbf{E}\in\mathbb{R}^{|\mathcal{V}|\times d},\;\mathbf{P}\in\mathbb{R}^{K\times d}.
\]
Then this map is real-analytic on $\mathbb{R}^{|\mathcal{V}|\times d}\times\mathbb{R}^{K\times d}$ (in the sense of \autoref{def:matrix-analytic}).
\end{proposition}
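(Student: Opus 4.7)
The plan is to reduce the statement to the elementary fact that coordinate selection followed by addition is a (degree-one) matrix polynomial in the parameters, and then invoke the real-analyticity of such polynomials already established in \autoref{prop:matrix-poly-ra}. Since $\mathrm{s}=\langle \mathrm{s}_1,\ldots,\mathrm{s}_T\rangle$ is fixed, the row indices $\mathrm{s}_1,\ldots,\mathrm{s}_T \in \mathcal{V}$ (mapped to $[|\mathcal{V}|]$) and the position indices $1,\ldots,T \in [K]$ are constants, so for every $i \in [T]$ and $j \in [d]$ the entry of interest reads
\[
\bigl[\mathrm{Emb}(\mathrm{s})\bigr]_{ij} \;=\; \mathbf{E}_{\mathrm{s}_i,j} + \mathbf{P}_{i,j}.
\]
This is manifestly a degree-one polynomial in the combined entries of $(\mathbf{E},\mathbf{P})$, selecting exactly two coordinates and summing them.

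To treat the joint domain $\mathbb{R}^{|\mathcal{V}|\times d}\times\mathbb{R}^{K\times d}$ cleanly, I would identify it with $\mathbb{R}^{(|\mathcal{V}|+K)\times d}$ by vertical stacking, i.e. view the parameter as $\mathbf{\Theta} = \begin{bmatrix} \mathbf{E} \\ \mathbf{P} \end{bmatrix}\in\mathbb{R}^{(|\mathcal{V}|+K)\times d}$. Under this identification, every output entry $\bigl[\mathrm{Emb}(\mathrm{s})\bigr]_{ij}$ is a sum of two coordinate functions of $\mathbf{\Theta}$, hence a degree-one entrywise polynomial in the sense of \autoref{prop:matrix-poly-ra}. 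Applying that proposition componentwise (or equivalently, the second clause of \autoref{prop:matrix-poly-ra} for matrix-valued outputs) yields real-analyticity of $\mathbf{\Theta}\mapsto\mathrm{Emb}(\mathrm{s})$ on $\mathbb{R}^{(|\mathcal{V}|+K)\times d}$, which transfers back to the product domain via the obvious linear (hence real-analytic) identification.

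Alternatively, one can decompose $\mathrm{Emb}(\mathrm{s}) = \mathrm{E}(\mathrm{s}) + \mathrm{PE}(\mathrm{s})$ and argue that each summand is real-analytic on the full product space by extending trivially in the unused block (a projection that discards one factor is linear and hence real-analytic), then appeal to closure under addition via \autoref{prop:closure-real-analytic} applied componentwise. Either route avoids nontrivial analysis.

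There is essentially no obstacle: the embedding layer is the simplest possible building block, depending linearly on its parameters with constant selector indices determined by the fixed prompt $\mathrm{s}$. The only thing to be careful about is the bookkeeping on the product domain, which is why I favor the stacking identification; this keeps the application of \autoref{prop:matrix-poly-ra} entirely routine and avoids having to restate real-analyticity for product spaces from scratch.
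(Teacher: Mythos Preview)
Your proposal is correct. Both your argument and the paper's recognize that, once $\mathrm{s}$ is fixed, the embedding map is linear in $(\mathbf{E},\mathbf{P})$ with constant selector indices, so real-analyticity is immediate. The paper packages this slightly differently: it introduces explicit row-selection matrices $S_{\mathrm{s}}\in\{0,1\}^{T\times|\mathcal{V}|}$ and $R_T\in\{0,1\}^{T\times K}$, writes $\mathrm{Emb}(\mathrm{s})=S_{\mathrm{s}}\mathbf{E}+R_T\mathbf{P}$, and then invokes \autoref{prop:matmul} (product of a constant matrix with the variable) together with closure under addition. You instead work entrywise, observing that each output coordinate is a sum of two coordinate functions of the stacked parameter $\mathbf{\Theta}$, and apply \autoref{prop:matrix-poly-ra} directly. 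The difference is purely cosmetic: the selection-matrix formulation matches the style used for the other Transformer blocks in \autoref{prop:modules-ra}, while your entrywise route is a hair more self-contained here. Either argument is complete.
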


\begin{proof}
Let $S_\mathrm{s}\in\{0,1\}^{T\times|\mathcal{V}|}$ select rows $\{\mathrm{s}_i\}_{i=1}^T$, and $R_T\in\{0,1\}^{T\times K}$ select the first $T$ rows. Then
\[
\mathrm{E}(\mathrm{s})=S_\mathrm{s}\mathbf{E},\qquad
\mathrm{PE}(\mathrm{s})=R_T\mathbf{P},\qquad
\mathrm{Emb}(\mathrm{s})=S_\mathrm{s}\mathbf{E}+R_T\mathbf{P}.
\]
Each map $(\mathbf{E},\mathbf{P})\mapsto S_\mathrm{s}\mathbf{E}$ and $(\mathbf{E},\mathbf{P})\mapsto R_T\mathbf{P}$ is a matrix product of a \emph{constant} matrix with the variable (\emph{constant maps are real-analytic} as degree-$0$ polynomials by \autoref{prop:matrix-poly-ra}; the product is real-analytic by \autoref{prop:matmul}). Their sum is real-analytic by closure under addition (\autoref{prop:closure-real-analytic}). Hence $(\mathbf{E},\mathbf{P})\mapsto \mathrm{Emb}(\mathrm{s})$ is real-analytic.
\end{proof}

\begin{proposition}[Joint real-analyticity of core modules and stacks]\label{prop:modules-ra}
Assume the pointwise activation $\sigma:\mathbb{R}\to\mathbb{R}$ used in the MLP is real-analytic (e.g., $\tanh$, $\mathrm{GELU}$).
Fix $T\in[K]$. For notational convenience define the parameter tuples
\[
\Theta_{\mathrm{attn}}
:=\Big(\{\mathbf{Q}^{(h)},\mathbf{K}^{(h)},\mathbf{V}^{(h)}\}_{h=1}^H,\;\mathbf{W}^O\Big),\quad
\Theta_{\mathrm{LN}}^{(1)}:=(\boldsymbol\gamma^{(1)},\boldsymbol\beta^{(1)}),\quad
\Theta_{\mathrm{LN}}^{(2)}:=(\boldsymbol\gamma^{(2)},\boldsymbol\beta^{(2)}),
\]
\[
\Theta_{\mathrm{mlp}}
:=\big(\{\mathbf{W}^{(m)},\mathbf{b}^{(m)}\}_{m=1}^M\big),\qquad
\Theta_{\mathrm{TB}}
:=\big(\Theta_{\mathrm{attn}},\Theta_{\mathrm{LN}}^{(1)},\Theta_{\mathrm{LN}}^{(2)},\Theta_{\mathrm{mlp}}\big),
\quad
\Theta_{\mathrm{Tr},T}
:=\big(\Theta_{\mathrm{TB}}^{(1)},\ldots,\Theta_{\mathrm{TB}}^{(L)}\big).
\]
Then the following maps are jointly real-analytic in their inputs and parameters:

\vspace{-2pt}
\begin{enumerate}[leftmargin=*,itemsep=0.35em]
\item \textbf{MLP.}
$(\mathbf{x},\Theta_{\mathrm{mlp}})\mapsto \mathrm{mlp}_M(\mathbf{x})$ is real-analytic: each affine layer
$(\mathbf{W},\mathbf{b},\mathbf{x})\mapsto \mathbf{W}\mathbf{x}+\mathbf{b}$ is a matrix product plus addition (\autoref{prop:matmul} and \autoref{prop:closure-real-analytic}); the activation $\sigma$ is real-analytic by assumption, and composition preserves real-analyticity (\autoref{prop:comp-real-analytic}). Iteration over $M$ layers is repeated composition (\autoref{prop:comp-real-analytic}).

\item \textbf{Layer Normalization.}
$(\mathbf{x},\boldsymbol\gamma,\boldsymbol\beta)\mapsto \mathrm{LN}(\mathbf{x})
=\boldsymbol\gamma\odot\frac{\mathbf{x}-\mu_{\mathbf{x}}}{\sqrt{\sigma^2_{\mathbf{x}}+\varepsilon}}+\boldsymbol\beta$ is real-analytic:
$\mu_{\mathbf{x}}$ and $\sigma^2_{\mathbf{x}}$ are (entrywise) polynomials in $\mathbf{x}$ (\autoref{prop:matrix-poly-ra});
$g(\mathbf{x})=\sigma^2_{\mathbf{x}}+\varepsilon$ satisfies $g(\mathbf{x})>0$ (definition of $\varepsilon>0$), and the scalar map $h(t)=t^{-1/2}$ is real-analytic on $(0,\infty)$ (classical binomial series). Thus $h\circ g$ is real-analytic (\autoref{prop:comp-real-analytic}); division by $g^{1/2}$ is a quotient by a nonvanishing real-analytic function (\autoref{prop:closure-real-analytic}); Hadamard scaling by $\boldsymbol\gamma$ and addition of $\boldsymbol\beta$ preserve real-analyticity (\autoref{prop:hadamard} and \autoref{prop:closure-real-analytic}). Row-wise application is handled by stacking (\autoref{prop:stacking}) and the vectorization equivalence (\autoref{lem:equiv-analytic-column}).

\item \textbf{Unembedding.}
$(\mathbf{h},\mathbf{U},\boldsymbol\gamma,\boldsymbol\beta)\mapsto
\mathrm{softmax}\big(\mathbf{U}\,\mathrm{LN}(\mathbf{h})\big)$ is real-analytic:
$\mathrm{LN}$ is real-analytic by (2); multiplication by $\mathbf{U}$ is real-analytic (\autoref{prop:matmul});
$\mathrm{softmax}$ is real-analytic (\autoref{prop:softmax-ra}); the overall map is a composition (\autoref{prop:comp-real-analytic}) and stacking across coordinates (\autoref{prop:stacking}).

\item \textbf{Self-Attention (vanilla or causal) and Multi-Head.}
Let $\mathbf{Z} = \frac{1}{\sqrt{d_\eta}} \left(\mathbf{X} \mathbf{Q}\right) \left(\mathbf{X} \mathbf{K}\right)^\top$.

\emph{(a) Vanilla SA:} 
$(\mathbf{X},\mathbf{Q},\mathbf{K},\mathbf{V}) \mapsto \mathrm{softmax}( \mathbf{Z} ) \mathbf{X} \mathbf{V}$ is real-analytic by:
matrix products (\autoref{prop:matmul}), scaling, row-wise softmax (\autoref{prop:softmax-ra} with stacking, \autoref{prop:stacking}, and \autoref{lem:equiv-analytic-column}), and a final matrix product.

\emph{(b) Causal SA (projection form):}
With $\mathbf{L}$ unit lower-triangular and using \autoref{def:self-attn-causal-proj},
$$
(\mathbf{X}, \mathbf{Q}, \mathbf{K}, \mathbf{V}) \longmapsto \mathrm{RN} \big( \mathbf{L} \odot \exp{\mathbf{Z}} \big) \mathbf{X} \mathbf{V}
$$
is real-analytic:
$\exp$ is real-analytic (\autoref{prop:exp-ra});
Hadamard scaling by fixed $\mathbf{L}$ is real-analytic (\autoref{prop:hadamard});
by \autoref{rem:row-sum-positive}, every row of $\mathbf{L}\odot \exp(\mathbf{Z})$ sums to a strictly positive value (the diagonal term), so the argument lies in the domain $\boldsymbol{\mathcal{D}}_T$ of \autoref{prop:rn-analytic}; hence $\mathrm{RN}$ is real-analytic there; the final multiplication by $\mathbf{X}\mathbf{V}$ is real-analytic (\autoref{prop:matmul}).

\smallskip
Therefore, each \emph{single} attention head is real-analytic whether it is vanilla or causal (projection).
For Multi-Head Self-Attention (\autoref{def:mhsa}), horizontal concatenation across heads is real-analytic (\autoref{prop:stacking}), and the output projection by $\mathbf{W}^O$ is a matrix product (\autoref{prop:matmul}). Hence $(\mathbf{X},\Theta_{\mathrm{attn}})\mapsto \mathrm{attn}_H(\mathbf{X})$ is real-analytic regardless of which attention variant each head uses.

\item \textbf{Transformer Block (fixed $T$).}
$(\mathbf{X},\Theta_{\mathrm{TB}})\mapsto \mathrm{TB}(\mathbf{X})\in\mathbb{R}^{T\times d}$ is real-analytic:
apply LN row-wise to get $\overline{\mathbf{X}}$ (item~2 with stacking, \autoref{prop:stacking}, and \autoref{lem:equiv-analytic-column});
apply attention (item~4) to $\overline{\mathbf{X}}$; add the residual (closure under addition, \autoref{prop:closure-real-analytic});
apply LN row-wise to get $\overline{\mathbf{H}}$ (item~2 with stacking and \autoref{lem:equiv-analytic-column});
apply the row-wise MLP (item~1 with stacking, \autoref{prop:stacking});
add the residual again (\autoref{prop:closure-real-analytic}).
All intermediate matrix multiplications use \autoref{prop:matmul}, and the overall structure is a composition (\autoref{prop:matrix-composition} via \autoref{lem:equiv-analytic-column}).

\item \textbf{Transformer (fixed $T$).}
$(\mathbf{X},\Theta_{\mathrm{Tr},T})\mapsto \mathrm{Tr}_T(\mathbf{X})
=\mathrm{TB}^{(L)}\circ\cdots\circ \mathrm{TB}^{(1)}(\mathbf{X})$ is a composition of real-analytic maps from (5), hence real-analytic by \autoref{prop:matrix-composition}.
\end{enumerate}

All statements extend from vector-valued to matrix-valued, row-wise applications via \autoref{prop:stacking} and \autoref{lem:equiv-analytic-column}, and every sum/product/quotient/composition step above invokes \autoref{prop:closure-real-analytic}, \autoref{prop:matmul}, and \autoref{prop:matrix-composition} as indicated.
\end{proposition}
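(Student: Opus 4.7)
The plan is to prove each of the six items in turn by decomposing the module into atomic pieces whose real-analyticity is already established in Appendix~\ref{subsubsec:real-analytic-components}, and then invoking the closure results (\autoref{prop:closure-real-analytic}, \autoref{prop:matmul}, \autoref{prop:hadamard}, \autoref{prop:stacking}) together with the composition rule for matrix-valued maps (\autoref{prop:matrix-composition}, via the vectorization equivalence \autoref{lem:equiv-analytic-column}). Since all maps are parameterized, I would first view each sub-module as a function of the \emph{joint} tuple $(\text{input},\text{parameters})$ so that all multiplications involving parameters (e.g.\ $\mathbf{W}\mathbf{x}$, $\mathbf{X}\mathbf{Q}$, $\boldsymbol\gamma\odot\mathbf{x}$) become real-analytic maps of two variables rather than constant-coefficient linear maps.

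The walk-through proceeds bottom-up. For the \textbf{MLP} (item~1), I would iterate: an affine layer $(\mathbf{W},\mathbf{b},\mathbf{x})\mapsto \mathbf{W}\mathbf{x}+\mathbf{b}$ is real-analytic by \autoref{prop:matmul} and \autoref{prop:closure-real-analytic}; composition with the analytic activation $\sigma$ (\autoref{prop:comp-real-analytic}) and repeated $M$-fold composition yields the claim; row-wise application is handled by stacking (\autoref{prop:stacking}) and \autoref{lem:equiv-analytic-column}. For \textbf{LayerNorm} (item~2), the critical step is to check that the argument of the scalar map $t\mapsto t^{-1/2}$ lies in its domain: since $\sigma_\mathbf{x}^2+\varepsilon\geq\varepsilon>0$ on all of $\mathbb{R}^d$, $t^{-1/2}$ is real-analytic there (inverse of a real-analytic map with nonvanishing derivative, cf.\ \autoref{prop:log-real-analytic}), and \autoref{prop:closure-real-analytic} combined with \autoref{prop:hadamard} completes the argument. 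For \textbf{Unembedding} (item~3), compose LN with a parameter-dependent matrix product and \autoref{prop:softmax-ra}. For \textbf{Self-Attention} (item~4), the vanilla case is an immediate composition of matrix products, row-wise softmax, and a final product; for the causal projection form I would verify the domain condition of \autoref{prop:rn-analytic} using \autoref{rem:row-sum-positive}, which guarantees that the argument of $\mathrm{RN}$ always has strictly positive row sums. Concatenation across heads and the output projection give Multi-Head Attention via \autoref{prop:stacking} and \autoref{prop:matmul}.

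The \textbf{Transformer Block} (item~5) is then the composition of the pieces with residual additions, and the \textbf{Transformer} (item~6) is an $L$-fold composition of blocks; both follow from \autoref{prop:matrix-composition} together with closure under addition. The main obstacle I anticipate is the domain bookkeeping: real-analyticity of $t\mapsto t^{-1/2}$ and of $\mathrm{RN}$ holds only on open sets where the denominator/row-sums are positive, so at each step I must certify that the image of the preceding stage stays inside these open sets for \emph{all} parameter and input values. The $\varepsilon>0$ regularizer handles this uniformly for LN, and the unit diagonal of $\mathbf{L}$ handles it uniformly for RN; with these two observations the domain conditions are satisfied globally and every invocation of \autoref{prop:closure-real-analytic}, \autoref{prop:matmul}, \autoref{prop:comp-real-analytic}, and \autoref{prop:matrix-composition} is justified on all of the relevant parameter space. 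A secondary bookkeeping point is that matrix-valued maps formally require passing through \autoref{lem:equiv-analytic-column} when invoking the vector-variable composition rule \autoref{prop:comp-real-analytic}; I would cite this equivalence once at the outset and use it implicitly thereafter.
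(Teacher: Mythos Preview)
Your proposal is correct and follows essentially the same approach as the paper: the proposition's proof is already inlined within the enumerated items, and you reproduce exactly that structure---decomposing each module into atomic real-analytic pieces, invoking the same closure and composition lemmas, and explicitly checking the two domain conditions ($\varepsilon>0$ for LayerNorm and the unit diagonal of $\mathbf{L}$ for row normalization). The only cosmetic difference is that the paper justifies the real-analyticity of $t\mapsto t^{-1/2}$ via the classical binomial series rather than by analogy with \autoref{prop:log-real-analytic}, but either route works.
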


\clearpage
\section{Almost Sure Injectivity}\label{sec:app:asinj}

This section establishes a foundational structural result: for causal Transformer Language Models with standard architectural widths and at least one attention head per block, the final hidden state at the last token is almost surely injective with respect to the input sequence, assuming the model parameters are drawn from any absolutely continuous distribution at initialization. Crucially, we show this injectivity is preserved after any finite number of gradient descent (GD) updates.

We organize the section in two parts; \textbf{(i)} Measure-zero collisions via real-analyticity and a witness construction and \textbf{(ii)} Preservation of absolute continuity under gradient descent. Each piece builds toward the main theorem, which asserts that under mild width and head assumptions, the Transformer map from input sequences to last-token representations is injective almost surely, even after multiple rounds of training. The main theorem follows.

\begin{assumption}[Minimum Embedding Dimension]\label{assumption:min-emb-dim}
We assume the embedding dimension satisfies $ d \geq 4 $ and $d_\eta \geq 1$. Furthermore, we assume that each transformer block has at least one attention head.
These conditions are trivially satisfied in practice: for modern large language models, embedding dimensions are typically in the hundreds or thousands, and each layer has multiple attention heads, so the assumptions impose no practical restrictions on the models under consideration.
\end{assumption}

\begin{theorem}[Finite-horizon a.s.\ injectivity under GD]\label{thm:main}
Fix a finite vocabulary $\mathcal{V}$, a context bound $K \in \mathbb{N}$, a time horizon $T \in \mathbb{N}$, and consider the causal Transformer Language Model (TLM) of \Cref{def:tlm} under \Cref{assumption:min-emb-dim}. 
Let $\left\{ \left(\mathrm{s}_t \in \mathcal{V}^{\leq K}, \mathbf{p}_t \in \Delta^{|\mathcal{V}| - 1}\right) \right\}_{t = 1}^T$ be any sequence of samples and let $\{ \eta_t \in (0,1) \}_{t = 1}^T$ be any sequence of step-sizes. 
Assume the parameters are randomly initialized and updated by gradient descent:
\begin{align*}
\boldsymbol{\theta}_0 &\sim \mu, \qquad \mu \ll \mathrm{Leb}_p,\\
\boldsymbol{\theta}_{t+1} &= \boldsymbol{\theta}_t - \eta_t \nabla \mathcal{L}_{\mathrm{s}_t, \mathbf{p}_t}(\boldsymbol{\theta}_t),
\end{align*}
where $\mathrm{Leb}_p$ denotes Lebesgue measure on $\mathbb{R}^p$ and $\mathcal{L}_{\mathrm{s}, \mathbf{p}} : \mathbb{R}^p \to \mathbb{R}$ is the standard cross-entropy loss
\[
\mathcal{L}_{\mathrm{s},\mathbf{p}}(\boldsymbol\theta)
=\mathrm{CrossEntropy}\big(f(\mathrm{s}\,;\,\boldsymbol\theta),\,\mathbf{p}\big).
\]
Then, with probability one over the draw of $\boldsymbol{\theta}_0$, the last-token, last-layer representation map
\[
\mathcal{V}^{\le K}\ni \mathrm{s}\ \longmapsto\ \mathbf{r}(\mathrm{s} \, ; \, \boldsymbol{\theta}_T)\in\mathbb{R}^d
\]
is injective. Equivalently,
\[
\Pr \left[ \exists \, \mathrm{s} \neq \mathrm{t} \in \mathcal{V}^{\leq K} : \mathbf{r}(\mathrm{s} \, ; \, \boldsymbol{\theta}_T) = \mathbf{r}(\mathrm{t} \, ; \, \boldsymbol{\theta}_T) \right] = 0,
\]
where $\mathbf{r}( \cdot \, ; \, \boldsymbol{\theta}_T)$ denotes the last-token representation defined in \Cref{eq:last-token-repr}.
\end{theorem}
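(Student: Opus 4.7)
The plan is to reduce the theorem to two independent measure-theoretic facts and then combine them by a union bound. Define the global collision set
\[
Z \;:=\; \bigcup_{\mathrm{s} \neq \mathrm{t} \in \mathcal{V}^{\le K}} \bigl\{ \boldsymbol{\theta} \in \mathbb{R}^p : \mathbf{r}(\mathrm{s}\,;\,\boldsymbol{\theta}) = \mathbf{r}(\mathrm{t}\,;\,\boldsymbol{\theta}) \bigr\}.
\]
The conclusion is equivalent to $\Pr[\boldsymbol{\theta}_T \in Z] = 0$, which I will obtain from (i) $\mathrm{Leb}_p(Z) = 0$, and (ii) the law of $\boldsymbol{\theta}_T$ is absolutely continuous with respect to $\mathrm{Leb}_p$.

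Claim (i) is almost free: since $\mathcal{V}^{\le K}$ is finite, $Z$ is a finite union, and for each pair $\mathrm{s}\neq \mathrm{t}$ the squared-distance witness $h_{\mathrm{s},\mathrm{t}}(\boldsymbol{\theta}) = \lVert \mathbf{r}(\mathrm{s}\,;\,\boldsymbol{\theta}) - \mathbf{r}(\mathrm{t}\,;\,\boldsymbol{\theta}) \rVert_2^2$ is real-analytic by Proposition~\ref{prop:modules-ra} and not identically zero by the frozen-network construction used in the sketch of Theorem~\ref{th:main:asinj}. Theorem~\ref{thm:zero-measure-roots} then gives that each pairwise collision set is null, and a finite union of null sets is null.

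Claim (ii) is the substantive part. I would prove it by induction on $t \in \{0,1,\ldots,T\}$: the base case is the hypothesis $\mu \ll \mathrm{Leb}_p$, and the inductive step reduces to showing that the update $\phi_t(\boldsymbol{\theta}) := \boldsymbol{\theta} - \eta_t \nabla \mathcal{L}_{\mathrm{s}_t, \mathbf{p}_t}(\boldsymbol{\theta})$ pushes absolutely continuous measures to absolutely continuous measures. The map $\phi_t$ is real-analytic: by Proposition~\ref{prop:modules-ra} the network $f(\mathrm{s}_t\,;\,\cdot)$ is real-analytic with strictly positive softmax coordinates, so the cross-entropy (and hence its gradient) is real-analytic by Propositions~\ref{prop:log-real-analytic} and~\ref{prop:comp-real-analytic}. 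Consequently $\det D\phi_t = \det \bigl( I_p - \eta_t \nabla^2 \mathcal{L}_{\mathrm{s}_t, \mathbf{p}_t} \bigr)$ is real-analytic. Once I produce a single witness $\boldsymbol{\theta}_\star$ where this determinant is nonzero, Theorem~\ref{thm:zero-measure-roots} makes the singular locus $N_t := \{\det D\phi_t = 0\}$ Lebesgue-null; on the complement the Inverse Function Theorem (\autoref{thm:inverse-function}) makes $\phi_t$ a local $C^1$-diffeomorphism, and a Lindelöf cover of $\mathbb{R}^p \setminus N_t$ (\autoref{prop:std-Rp}) combined with the fact that diffeomorphisms preserve null sets transports absolute continuity through one GD step. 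Iterating $T$ times closes the induction, and intersecting with the null set $Z$ finishes the argument.

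The hard part is the witness construction: I need $\boldsymbol{\theta}_\star$ with $\det D\phi_t(\boldsymbol{\theta}_\star) \neq 0$ simultaneously for every admissible $(\mathrm{s}_t, \mathbf{p}_t)$ and every $\eta_t \in (0,1)$. The restriction $\eta_t < 1$ is the decisive hint: I would scale a simple parameter configuration (for instance the frozen-network configuration from Theorem~\ref{th:main:asinj}, multiplied by a small factor $\lambda$) so that $\lVert \nabla^2 \mathcal{L}_{\mathrm{s}_t,\mathbf{p}_t}(\boldsymbol{\theta}_\star)\rVert_{\mathrm{op}} < 1$, which forces every eigenvalue of $I_p - \eta_t \nabla^2 \mathcal{L}$ to lie strictly inside $(0, 2)$ and in particular to be nonzero. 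Making the spectral bound explicit and uniform across data pairs is the one place where the routine analytic machinery is not automatic; everything else is a direct application of the real-analytic toolkit and the pushforward calculus already developed.
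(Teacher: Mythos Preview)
Your overall framework is correct and matches the paper's proof: reduce to (i) the collision set $Z$ is Lebesgue-null and (ii) each GD step preserves absolute continuity. Your outline for (ii)---real-analyticity of $\phi_t$, a nonzero-Jacobian witness, then the Inverse Function Theorem plus a Lindel\"of cover to transport null sets---is precisely the route the paper takes in \autoref{subsec:gd-preserves-ac}.

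The gap is in your witness construction. Scaling the parameters by $\lambda$ does not by itself force $\lVert\nabla^2\mathcal{L}(\lambda\boldsymbol{\theta}_0)\rVert_{\mathrm{op}}<1$: the cross-entropy loss is not homogeneous in $\boldsymbol{\theta}$, so the Hessian does not scale with $\lambda$ and in particular does not tend to $0$ as $\lambda\to 0$. By continuity your plan actually reduces to bounding $\lVert\nabla^2\mathcal{L}(\mathbf{0}_p)\rVert_{\mathrm{op}}$, and that is a genuine computation, not a consequence of the scaling heuristic. The paper dispenses with scaling and evaluates directly at $\boldsymbol{\theta}_\star=\mathbf{0}_p$: since $\mathbf{U}=\mathbf{0}$ annihilates the logits, a zero-gate argument (\autoref{lem:zero-gate-loss-r}) collapses the Hessian to the $(\mathbf{U},\boldsymbol\beta)$ block, whose spectrum is computed explicitly (\autoref{lem:subhessian-spectrum}, \autoref{lem:full-hessian-spectrum-compact}) as $\{\pm\lVert\mathbf{w}\rVert_2,\,0\}$ with $\mathbf{w}=\tfrac{1}{n}\mathbf{1}_n-\mathbf{p}$. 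One then checks $\lVert\mathbf{w}\rVert_2^2\le (n-1)/n<1$ for every $\mathbf{p}\in\Delta^{n-1}$, so $\det D\phi_t(\mathbf{0}_p)=(1-\eta_t^2\lVert\mathbf{w}\rVert_2^2)^d>0$ whenever $\eta_t\in(0,1)$. Two side remarks: you do not need a single witness valid across all samples---each $\phi_t$ is a fixed real-analytic map once $(\mathrm{s}_t,\mathbf{p}_t,\eta_t)$ are given, so a separate witness per $t$ already suffices; and the witness for Claim~(ii) need not coincide with the one for Claim~(i), so there is no reason to start from the frozen-network configuration rather than from $\mathbf{0}_p$ directly.
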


\begin{proof} $\newline$

\vspace{-7px}
Let $\boldsymbol\theta_0 \sim \mu$ with $\mu \ll \mathrm{Leb}_p$.  
For a fixed training horizon $T$, define the \emph{GD update map}
\[
\Phi:\mathbb{R}^p \to \mathbb{R}^p, \qquad 
\Phi(\boldsymbol\theta_0) \;=\; \boldsymbol\theta_T,
\]
i.e.\ $\Phi$ is the composition of $T$ gradient-descent steps with step sizes 
$\{\eta_t\}_{t=1}^T \subset (0,1)$ on the loss~$\mathcal{L}$.

\textbf{1) Absolute continuity after $T$ steps.}
By \Cref{cor:finite-steps-ac}, since $\mu \ll \mathrm{Leb}_p$, 
the pushforward law $\Phi_\# \mu$ of $\boldsymbol\theta_T$ remains absolutely continuous:
\[
\boldsymbol\theta_T \;\sim\; \Phi_\# \mu \;\ll\; \mathrm{Leb}_p.
\]

\textbf{2) Global almost-sure distinctness.}
Let $\mathcal{S} := \mathcal{V}^{\le K}$, which is finite.  
By \Cref{cor:global-distinct-h1}, under any absolutely continuous parameter law,
\[
\Pr\Big[\, \mathbf{r}(\mathrm{s} \, ; \, \boldsymbol\theta_T) \neq 
\mathbf{r}(\mathrm{t} \, ; \, \boldsymbol\theta_T)
\;\;\;\forall\,\mathrm{s}\neq\mathrm{t}\in\mathcal{V}^{\le K}\,\Big] \;=\; 1.
\]

\noindent
Thus the map $\mathrm{s} \mapsto \mathbf{r}(\mathrm{s} \, ; \, \boldsymbol\theta_T)$ 
is injective almost surely, as claimed.
\end{proof}

\subsection{Absolute continuity ensures almost sure injectivity} \label{subsec:abs-cont}

We begin by fixing two distinct sequences and asking when their last-token representations can coincide. As before, in this subsection we will consider a finite vocabulary $\mathcal{V}$ and a finite context window $K \in \mathbb{N}$. Additionally, recall that for $\boldsymbol{\theta} = (\boldsymbol{\theta}_1, \boldsymbol{\theta}_2, \boldsymbol{\theta}_3) \in \mathbb{R}^p$:
$$
\mathbf{r}(\mathrm{u} \, ; \, \boldsymbol{\theta}) := \Big( \mathrm{Tr}_{|\mathrm{u}|} \big( \mathrm{Emb}(\mathrm{u} \, ; \, \boldsymbol{\theta}_1) \, ; \,\boldsymbol{\theta}_2 \big) \Big)_{|\mathrm{u}|} \in \mathbb{R}^d,
$$
and for $\mathrm{s} \neq \mathrm{t}$, we define the discrepancy:
$$
h(\boldsymbol{\theta}) := \big\| \mathbf{r}(\mathrm{s} \, ; \, \boldsymbol{\theta}) - \mathbf{r}(\mathrm{t} \, ; \, \boldsymbol{\theta}) \big\|_2^2.
$$

By \Cref{prop:modules-ra}, this map is real-analytic. To invoke the zero-set theorem, it suffices to show that $h \not\equiv 0$. We construct a parameter configuration $\boldsymbol{\theta}_\star$ such that $\mathbf{r}(\mathrm{s} \, ; \, \boldsymbol{\theta}_\star) \ne \mathbf{r}(\mathrm{t} \, ; \, \boldsymbol{\theta}_\star)$, treating two exhaustive cases:

\vspace{-5px}
\begin{itemize}[leftmargin=*,itemsep=0em]
\item \textbf{Case A:} If the sequences differ at their final token or in length, we isolate this distinction via selective initialization of embeddings and positional encodings.
\item \textbf{Case B:} If they differ earlier, we construct orthogonal embeddings and exploit attention heads to differentiate the contributions to the final representation.
\end{itemize}

In both cases, we demonstrate explicit parameter settings under which the discrepancy is nonzero. This confirms $h \not\equiv 0$, and the zero set $\big\{ \boldsymbol{\theta} : \mathbf{r}(\mathrm{s} \, ; \, \boldsymbol{\theta}) = \mathbf{r}(\mathrm{t} \, ; \, \boldsymbol{\theta}) \big\}$ has measure zero by \Cref{thm:zero-measure-roots}. Hence, if the parameter distribution is absolutely continuous, the probability of a collision is zero. A union bound extends this to any finite set of inputs.

\begin{theorem}[Almost-sure pairwise distinctness of last-token representations]\label{thm:a.s.-distinct-h1}
Let the parameter vector $\boldsymbol{\theta} \in \mathbb{R}^p$ be drawn from any distribution absolutely continuous with respect to Lebesgue measure. Then, for any fixed $\mathrm{s} \ne \mathrm{t}$,
$$
\mathrm{Pr}\left[ \, \mathbf{r}(\mathrm{s} \, ; \, \boldsymbol{\theta}) = \mathbf{r}(\mathrm{t}  \, ; \, \boldsymbol{\theta})\, \right] = 0.
$$
\end{theorem}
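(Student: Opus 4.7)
The plan is to reduce the probabilistic statement to a deterministic zero-set question. I define the scalar discrepancy
\begin{equation*}
h(\boldsymbol{\theta}) := \|\mathbf{r}(\mathrm{s};\boldsymbol{\theta}) - \mathbf{r}(\mathrm{t};\boldsymbol{\theta})\|_2^2.
\end{equation*}
By Proposition \ref{prop:modules-ra}, each map $\boldsymbol{\theta} \mapsto \mathbf{r}(\mathrm{u};\boldsymbol{\theta})$ is real-analytic on $\mathbb{R}^p$, and so is $h$ by closure under subtraction, inner products, and composition (Propositions \ref{prop:closure-real-analytic}, \ref{prop:comp-real-analytic}, \ref{prop:matmul}). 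Theorem \ref{thm:zero-measure-roots} then yields the dichotomy: either $h \equiv 0$ on $\mathbb{R}^p$, or the collision set $Z(h) = \{\boldsymbol{\theta} : h(\boldsymbol{\theta}) = 0\}$ has Lebesgue measure zero. Since the law of $\boldsymbol{\theta}$ satisfies $\mu \ll \mathrm{Leb}_p$, once $h \not\equiv 0$ is established we immediately get $\Pr[\boldsymbol{\theta} \in Z(h)] = \mu(Z(h)) = 0$, which is the claim. The whole argument thus reduces to exhibiting a single witness parameter $\boldsymbol{\theta}_\star$ with $\mathbf{r}(\mathrm{s};\boldsymbol{\theta}_\star) \ne \mathbf{r}(\mathrm{t};\boldsymbol{\theta}_\star)$.

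I split into two exhaustive cases according to where $\mathrm{s}$ and $\mathrm{t}$ first disagree. \textbf{Case A (last-position or length mismatch):} If $|\mathrm{s}| \ne |\mathrm{t}|$, or $|\mathrm{s}|=|\mathrm{t}|$ but $\mathrm{s}_{|\mathrm{s}|} \ne \mathrm{t}_{|\mathrm{t}|}$, I set $\boldsymbol{\theta}_\star$ by zeroing out every attention value projection $\mathbf{V}^{(h)}$ and every MLP output weight in every block, so each block collapses to the identity via its two residual paths. The last-token state is then $\mathrm{LN}(\mathbf{E}_{\mathrm{u}_{|\mathrm{u}|}} + \mathbf{P}_{|\mathrm{u}|})$ for $\mathrm{u} \in \{\mathrm{s},\mathrm{t}\}$ (modulo the final LayerNorm inside $\mathrm{Tr}$, if present). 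Because $\mathbf{E}$ and $\mathbf{P}$ are free, I choose their relevant rows to be distinct vectors that are not LayerNorm-equivalent (e.g., not shifted-and-scaled copies of one another), and tune $\boldsymbol{\gamma},\boldsymbol{\beta}$ so the two images differ. \textbf{Case B (earlier mismatch):} Let $i^\star$ be the first index with $\mathrm{s}_{i^\star} \ne \mathrm{t}_{i^\star}$. I zero out every MLP, silence every attention head, and keep alive exactly one head in the first block. For that head I set $\mathbf{Q}$ and $\mathbf{K}$ so that, on the causal prefix of the last position, the softmax logits are dominated by the positional contribution to position $i^\star$ scaled by a large factor, placing almost all mass on $i^\star$. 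I then choose $\mathbf{V}$ and $\mathbf{W}^O$ so that the value written back for a token $v$ is a distinguished direction $\mathbf{E}_v^{\perp}$ that is nonzero and token-dependent. The residual pipeline transports this contribution unchanged through the remaining $L-1$ blocks, and a final choice of $\boldsymbol{\gamma}$ and $\boldsymbol{\beta}$ keeps the two images distinct after the terminal LayerNorm.

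The main obstacle is the construction in Case B: one must argue simultaneously that (i) the softmax can concentrate essentially all weight on position $i^\star$ using only position-based scoring, so the contributions from other tokens are negligible regardless of their identity, (ii) the identity route through the remaining blocks and through the pre-LN LayerNorms does not erase the injected direction (which is guaranteed when the residual dominates and the LayerNorm denominator stays bounded, using $d \ge 4$ and $\varepsilon > 0$), and (iii) the two distinct per-token value directions do not happen to be LayerNorm-equivalent at the end. All three can be engineered because the relevant projections, $\boldsymbol{\gamma}$, $\boldsymbol{\beta}$, and the softmax temperature are all free knobs. Once $h(\boldsymbol{\theta}_\star) > 0$ is secured in both cases, real-analyticity forces $\mathrm{Leb}_p(Z(h)) = 0$, and absolute continuity of $\mu$ closes the argument.
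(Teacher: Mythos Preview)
Your proposal follows essentially the same approach as the paper: reduce to the real-analytic zero-set dichotomy via Theorem~\ref{thm:zero-measure-roots}, then exhibit a witness $\boldsymbol{\theta}_\star$ in two cases (last-position/length mismatch versus earlier mismatch, handled by a single live attention head in the first block).

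Two small notes on where your sketch diverges from the paper's details. First, in the pre-LN architecture of Definition~\ref{def:transformer-block}, the last-token representation $\mathbf{r}$ is the raw residual-stream output; there is no terminal LayerNorm before it. So your concern about ``LayerNorm-equivalence'' at the end is misplaced, and Case~A is simpler than you state: with all block parameters zeroed, $\mathbf{r}(\mathrm{u};\boldsymbol{\theta}_\star)=\mathbf{E}_{\mathrm{u}_{|\mathrm{u}|}}+\mathbf{P}_{|\mathrm{u}|}$ directly. Second, your Case~B sketch waves at the pre-LN interaction (``can be engineered'') without resolving it, and that interaction is precisely the technical content. The paper handles it by choosing three unit vectors $\mathbf{e},\mathbf{p},\mathbf{q}$ that are pairwise orthogonal \emph{and} orthogonal to $\mathbf{1}_d$ (this is exactly where $d\ge4$ from Assumption~\ref{assumption:min-emb-dim} enters), so that every relevant LayerNorm input has zero mean and LN acts as a pure scalar; the query, key, and value projections are then rank-one maps onto $\mathbf{e}$ or $\mathbf{p}$, making the attention weights and the value difference at position $i^\star$ explicitly computable. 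Your high-level plan is correct, but a complete proof would need an equally explicit construction to control LayerNorm rather than an appeal to free knobs.
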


\begin{proof}
Let $T_\mathrm{s} = |\mathrm{s}|$ and $T_\mathrm{t} = |\mathrm{t}|$, and $h(\boldsymbol{\theta}) := \big\| \mathbf{r}(\mathrm{s} \, ; \, \boldsymbol{\theta}) - \mathbf{r}(\mathrm{t} \, ; \, \boldsymbol{\theta}) \big\|_2^2$.
Since $h$ is real-analytic (\Cref{prop:modules-ra}), it suffices to show that it is not the zero function on $\mathbb{R}^p$; then $h^{-1}(\{0\})$ has Lebesgue measure zero by \Cref{thm:zero-measure-roots}, and absolute continuity transfers this to probability zero.

We construct a parameter setting $\boldsymbol{\theta}_\star$ for which $h(\boldsymbol{\theta}_\star) > 0$, treating two exhaustive cases:

\textbf{Case A: $T_\mathrm{s} \ne T_\mathrm{t}$ or $\mathrm{s}_{T_\mathrm{s}} \ne \mathrm{t}_{T_\mathrm{t}}$.}
Set all Transformer parameters to zero so that the network acts as the identity: $\mathrm{Tr}_T(\mathbf{X}) = \mathbf{X}$.

\begin{itemize}[leftmargin=1.5em]
\item If $\mathrm{s}_{T_\mathrm{s}} \ne \mathrm{t}_{T_\mathrm{t}}$, set $\mathbf{E}_{\mathrm{s}_{T_\mathrm{s}}} = \mathbf{e}_1$, $\mathbf{E}_{\mathrm{t}_{T_\mathrm{t}}} = \mathbf{e}_2 \neq \mathbf{e}_1$, and all other rows of $\mathbf{E}$ to zero. Set $\mathbf{P} = \mathbf{0}_{K \times d}$. Then $\mathbf{r}(\mathrm{s} \, ; \, \boldsymbol{\theta}_\star) = \mathbf{e}_1$, $\mathbf{r}(\mathrm{t} \, ; \, \boldsymbol{\theta}_\star) = \mathbf{e}_2$, so $h(\boldsymbol{\theta}_\star) = \| \mathbf{e}_1 - \mathbf{e}_2 \|_2^2 > 0$.

\item If $T_\mathrm{s} \ne T_\mathrm{t}$, set $\mathbf{E} = \mathbf{0}_{|\mathcal{V}| \times d}$ and $\mathbf{P}_{T_\mathrm{s}} = \mathbf{e}_1$, $\mathbf{P}_{T_\mathrm{t}} = \mathbf{e}_2 \neq \mathbf{e}_1$ (all others zero). Then, again, $\mathbf{r}(\mathrm{s} \, ; \, \boldsymbol{\theta}_\star) = \mathbf{e}_1$, $\mathbf{r}(\mathrm{t} \, ; \, \boldsymbol{\theta}_\star) = \mathbf{e}_2$, so $h(\boldsymbol{\theta}_\star) > 0$.
\end{itemize}

\textbf{Case B: $T := T_\mathrm{s} = T_\mathrm{t}$ and $\mathrm{s}_T = \mathrm{t}_T$, but $\mathrm{s}_i \ne \mathrm{t}_i$ for some $i \in [T-1]$.}
Let $i^\star$ be the smallest such index. Note $T \ge 2$.

We construct a model with (i) all blocks after the first set to identity (zero parameters), (ii) in the first block, all heads set to zero except head 1 and the MLP is zero.

We explicitly construct embeddings and head-1 parameters $(\mathbf{Q}, \mathbf{K}, \mathbf{V})$, as well as the output projection $\mathbf{W}^O$, so that $\mathbf{r}(\mathrm{s} \, ; \, \boldsymbol{\theta}_\star) \ne \mathbf{r}(\mathrm{t} \, ; \, \boldsymbol{\theta}_\star)$.

\textbf{1) Embedding Construction.}
Choose orthogonal vectors $\mathbf{e}, \mathbf{p}, \mathbf{q} \in \mathbb{R}^d$ satisfying:
\[
\langle \mathbf{e}, \mathbf{p} \rangle = \langle \mathbf{e}, \mathbf{q} \rangle = \langle \mathbf{p}, \mathbf{q} \rangle = 0,
\quad
\langle \mathbf{1}_d, \mathbf{e} \rangle = \langle \mathbf{1}_d, \mathbf{p} \rangle = \langle \mathbf{1}_d, \mathbf{q} \rangle = 0,
\quad
\|\mathbf{e}\|_2 = \|\mathbf{p}\|_2 = \|\mathbf{q}\|_2 = 1.
\]
Such vectors exist due to \Cref{assumption:min-emb-dim} (requires $d \ge 4$). Set embeddings:
\[
\mathbf{E}_v =
\begin{cases}
\mathbf{e}, & v \in \{ \mathrm{s}_{i^\star}, \mathrm{s}_T \} \\
\mathbf{0}_d, & \text{otherwise}
\end{cases},
\qquad
\mathbf{P}_j =
\begin{cases}
\mathbf{p}, & j = i^\star \\
\mathbf{q}, & j = T \\
\mathbf{0}_d, & \text{otherwise}
\end{cases}.
\]

Thus, the input rows before LayerNorm are:
\[
\Big[ \mathbf{H}(\mathrm{s} \, ; \, \boldsymbol{\theta}_\star) \Big]_j =
\begin{cases}
\mathbf{e} + \mathbf{p}, & j = i^\star \\
\mathbf{e} + \mathbf{q}, & j = T \\
\in \{ \mathbf{e}, \mathbf{0}_d \}, & \text{otherwise}
\end{cases},
\qquad
\Big[ \mathbf{H}(\mathrm{t} \, ; \, \boldsymbol{\theta}_\star) \Big]_j =
\begin{cases}
\mathbf{p}, & j = i^\star \\
\mathbf{e} + \mathbf{q}, & j = T \\
\in \{ \mathbf{e}, \mathbf{0}_d \}, & \text{otherwise}
\end{cases}.
\]

\textbf{2) LayerNorm Output.}
Use LayerNorm with $(\boldsymbol{\gamma}, \boldsymbol{\beta}) = (\mathbf{1}, \mathbf{0})$. Since all components have zero mean, the normalization is:
\[
\mathrm{LN}(\mathbf{x}) = \frac{\mathbf{x}}{\sqrt{\frac{1}{d}\|\mathbf{x}\|^2 + \varepsilon}} =: c(\mathbf{x}) \mathbf{x}.
\]
Define:
\[
c_{ep} := \left( \tfrac{2}{d} + \varepsilon \right)^{-1/2},
\qquad
c_e := \left( \tfrac{1}{d} + \varepsilon \right)^{-1/2}.
\]
Then:
\[
\Big[ \overline{\mathbf{H}}(\mathrm{s} \, ; \, \boldsymbol{\theta}_\star) \Big]_j =
\begin{cases}
c_{ep}(\mathbf{e} + \mathbf{p}), & j = i^\star \\
c_{ep}(\mathbf{e} + \mathbf{q}), & j = T \\
\in \{ \mathbf{0}_d, c_e \mathbf{e} \}, & \text{otherwise}
\end{cases},
\quad
\Big[ \overline{\mathbf{H}}(\mathrm{t} \, ; \, \boldsymbol{\theta}_\star) \Big]_j =
\begin{cases}
c_e \mathbf{p}, & j = i^\star \\
c_{ep}(\mathbf{e} + \mathbf{q}), & j = T \\
\in \{ \mathbf{0}_d, c_e \mathbf{e} \}, & \text{otherwise}
\end{cases}.
\]

\textbf{3) Head Parameters.}
Let $\mathbf{e}_1 \in \mathbb{R}^{d_\eta}$ be the first standard basis vector. Set:
\[
\mathbf{Q} = \alpha \mathbf{e} \mathbf{e}_1^\top, \qquad
\mathbf{K} = \beta \mathbf{p} \mathbf{e}_1^\top, \qquad
\mathbf{V} = \mathbf{e} \mathbf{e}_1^\top,
\]
where $\alpha, \beta > 0$ are scalars to be chosen.

Then for any $j$, attention vectors are:
\[
\mathbf{q}_j = \alpha \left\langle \Big[ \overline{\mathbf{H}}(\cdot \, ; \, \boldsymbol{\theta}_\star) \Big]_j, \; \mathbf{e} \right\rangle \mathbf{e}_1,
\quad
\mathbf{k}_j = \beta \left\langle \Big[ \overline{\mathbf{H}}(\cdot \, ; \, \boldsymbol{\theta}_\star) \Big]_j, \; \mathbf{p} \right\rangle \mathbf{e}_1,
\quad
\mathbf{v}_j = \left\langle \Big[ \overline{\mathbf{H}}(\cdot \, ; \, \boldsymbol{\theta}_\star) \Big]_j, \; \mathbf{e} \right\rangle \mathbf{e}_1.
\]

At row $T$, $\mathbf{q}_T^{(\mathrm{s})} = \mathbf{q}_T^{(\mathrm{t})} = \alpha c_{ep} \mathbf{e}_1$.
Only the key at $i^\star$ is nonzero:
\[
\mathbf{k}_{i^\star}^{(\mathrm{s})} = \beta c_{ep} \mathbf{e}_1,
\quad
\mathbf{k}_{i^\star}^{(\mathrm{t})} = \beta c_e \mathbf{e}_1.
\]
Value vectors at $i^\star$ differ:
\[
\mathbf{v}_{i^\star}^{(\mathrm{s})} = c_{ep} \mathbf{e}_1,
\quad
\mathbf{v}_{i^\star}^{(\mathrm{t})} = \mathbf{0}_d.
\]
And $\mathbf{v}_T^{(\mathrm{s})} = \mathbf{v}_T^{(\mathrm{t})} = c_{ep} \mathbf{e}_1$.

\textbf{4) Attention Weights.}
The only nonzero score is at $i^\star$:
\[
\mathbf{S}_{T, i^\star}^{(\mathrm{s})} = \frac{\alpha \beta}{\sqrt{d_\eta}} c_{ep}^2,
\quad
\mathbf{S}_{T, i^\star}^{(\mathrm{t})} = \frac{\alpha \beta}{\sqrt{d_\eta}} c_{ep} c_e,
\quad
\mathbf{S}_{T, j}^{(\cdot)} = 0 \text{ for } j \ne i^\star.
\]
Fix $\delta \in (0, \tfrac{1}{2})$ and define $L := \log\left( \frac{1-\delta}{\delta}(T-1) \right)$.
Set $\alpha \beta = \sqrt{d_\eta} L / c_{ep}^2$, so $\mathbf{S}_{T, i^\star}^{(\mathrm{s})} = L$ and $\mathbf{S}_{T, i^\star}^{(\mathrm{t})} > L$. Then:
\[
\mathbf{A}_{T, i^\star}^{(\mathrm{s})} \ge 1 - \delta,
\quad
\mathbf{A}_{T, i^\star}^{(\mathrm{t})} > 1 - \delta,
\quad
\mathbf{A}_{T, j}^{(\cdot)} \le \frac{\delta}{T-1}\ \text{for } j \ne i^\star.
\]

\textbf{5) Self-Attention Output.}
\[
\mathbf{y}_T^{(\mathrm{s})} = (1 - \delta) c_{ep} \mathbf{e}_1 + \sum_{j \ne i^\star} \mathbf{A}_{T, j}^{(\mathrm{s})} \mathbf{v}_j^{(\mathrm{s})},
\quad
\mathbf{y}_T^{(\mathrm{t})} = \sum_{j \ne i^\star} \mathbf{A}_{T, j}^{(\mathrm{t})} \mathbf{v}_j^{(\mathrm{t})}.
\]
Tails are bounded by:
\[
\left\| \sum_{j \ne i^\star} \mathbf{A}_{T, j}^{(\cdot)} \mathbf{v}_j^{(\cdot)} \right\|_2 \leq \delta c_e.
\]
Since both outputs lie in $\mathrm{span}\{\mathbf{e}_1\}$, we compare:
\[
\langle \mathbf{y}_T^{(\mathrm{s})} - \mathbf{y}_T^{(\mathrm{t})}, \mathbf{e}_1 \rangle \ge (1 - \delta) c_{ep} - 2 \delta c_e.
\]
Choosing $\delta < \frac{c_{ep}}{c_{ep} + 2c_e}$ makes this strictly positive.

\textbf{6) Output Projection and Propagation.}
Let $\mathbf{W}^O$ be the matrix with $(\mathbf{W}^O)_{1,1} = 1$ and all other entries zero. Then the head output is projected into coordinate 1, making the last row of the first transformer block differ between $\mathrm{s}$ and $\mathrm{t}$ in the first coordinate. Since the original rows at $T$ were identical and the rest of the network is identity, this difference propagates to the final output, and we get $\mathbf{r}(\mathrm{s} \, ; \, \boldsymbol{\theta}_\star) \ne \mathbf{r}(\mathrm{t} \, ; \, \boldsymbol{\theta}_\star)$.

\end{proof}

\begin{remark}[Causal Self-Attention]
The same construction works for causal self-attention. In our setup, attention at position $T$ only needs to consider tokens at positions $j \le T$, and we only rely on attention from $T$ to $i^\star < T$. All nonzero scores occur at these allowable indices, so causal masking does not affect the computation or the argument.
\end{remark}

\begin{corollary}[Almost-sure global distinctness over a finite input family]\label{cor:global-distinct-h1}
Let $\mathcal{S} \subseteq \mathcal{V}^{\leq K}$ be any finite collection of inputs. If $\boldsymbol{\theta}$ is drawn from a law absolutely continuous w.r.t. $\mathrm{Leb}_p$, then
$$
\mathrm{Pr} \big[\ \mathbf{r}(\mathrm{s} \, ; \, \boldsymbol{\theta}) \neq \mathbf{r}(\mathrm{t} \, ; \, \boldsymbol{\theta})\ \text{ for all distinct } \mathrm{s},\mathrm{t}\in\mathcal{S}\ \big]\;=\;1.
$$
In particular, the last-token representations are pairwise distinct almost surely across all inputs.
\end{corollary}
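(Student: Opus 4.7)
The plan is to reduce the global statement to the pairwise version of Theorem~\ref{thm:a.s.-distinct-h1} via a finite union bound. For each unordered pair $\{\mathrm{s},\mathrm{t}\}\subseteq\mathcal{S}$ with $\mathrm{s}\neq\mathrm{t}$, I would introduce the collision event
$$
E_{\mathrm{s},\mathrm{t}} \;:=\; \{\,\boldsymbol{\theta}\in\mathbb{R}^p \;:\; \mathbf{r}(\mathrm{s}\,;\,\boldsymbol{\theta}) = \mathbf{r}(\mathrm{t}\,;\,\boldsymbol{\theta})\,\}.
$$
Theorem~\ref{thm:a.s.-distinct-h1} directly gives $\Pr[E_{\mathrm{s},\mathrm{t}}]=0$ under the absolutely continuous law; in fact, in the proof of that theorem the stronger conclusion $\mathrm{Leb}_p(E_{\mathrm{s},\mathrm{t}})=0$ is obtained by identifying $E_{\mathrm{s},\mathrm{t}}$ with the zero set of the nontrivial real-analytic function $h_{\mathrm{s},\mathrm{t}}(\boldsymbol{\theta}) = \|\mathbf{r}(\mathrm{s}\,;\,\boldsymbol{\theta}) - \mathbf{r}(\mathrm{t}\,;\,\boldsymbol{\theta})\|_2^2$, and absolute continuity then transfers the $\mathrm{Leb}_p$-null statement to a probability-zero statement.

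Next, I would exploit the finiteness of $\mathcal{S}$: the collection of unordered pairs has cardinality at most $\binom{|\mathcal{S}|}{2}$, which is finite. Setting
$$
E \;:=\; \bigcup_{\{\mathrm{s},\mathrm{t}\}\subseteq\mathcal{S},\;\mathrm{s}\neq\mathrm{t}} E_{\mathrm{s},\mathrm{t}},
$$
finite subadditivity of probability gives $\Pr[E] \leq \sum \Pr[E_{\mathrm{s},\mathrm{t}}] = 0$. The complement of $E$ is precisely the event that $\mathbf{r}(\mathrm{s}\,;\,\boldsymbol{\theta}) \neq \mathbf{r}(\mathrm{t}\,;\,\boldsymbol{\theta})$ for every distinct pair $\mathrm{s},\mathrm{t}\in\mathcal{S}$, so this event has probability one, which is exactly the desired conclusion.

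The main obstacle here is essentially none: this is a textbook union bound over finitely many null events. The only routine verifications are measurability of each $E_{\mathrm{s},\mathrm{t}}$ (immediate, since $\boldsymbol{\theta}\mapsto h_{\mathrm{s},\mathrm{t}}(\boldsymbol{\theta})$ is continuous, indeed real-analytic by Proposition~\ref{prop:modules-ra}, and $E_{\mathrm{s},\mathrm{t}}=h_{\mathrm{s},\mathrm{t}}^{-1}(\{0\})$ is therefore Borel) and the observation that the index set is finite, which is what allows the union to inherit the null property. I note in passing that countable $\mathcal{S}$ would work equally well by countable subadditivity, but the finite case is all that is required here.
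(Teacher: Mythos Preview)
Your proposal is correct and follows essentially the same approach as the paper: invoke Theorem~\ref{thm:a.s.-distinct-h1} for each unordered pair, then apply a finite union bound over the $\binom{|\mathcal{S}|}{2}$ pairs to conclude that the collision event has probability zero. Your added remarks on measurability and the countable extension are fine but not needed for the argument.
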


\begin{proof}
For each unordered pair $\{ \mathrm{s},\mathrm{t} \} \subset \mathcal{S}$ with $\mathrm{s} \neq \mathrm{t}$, \Cref{thm:a.s.-distinct-h1} gives $\mathrm{Pr}[ \, \mathbf{r}(\mathrm{s} \, ; \, \boldsymbol{\theta}) = \mathbf{r}(\mathrm{t} \, ; \, \boldsymbol{\theta}) \, ] = 0$. By the union bound over the finitely many pairs ($\binom{|\mathcal{S}|}{2}$ in total),
$$
\mathrm{Pr} \Big[\, \exists \, \mathrm{s} \neq \mathrm{t} \in \mathcal{S} : \mathbf{r}(\mathrm{s} \, ; \, \boldsymbol{\theta}) = \mathbf{r}(\mathrm{t} \, ; \, \boldsymbol{\theta}) \, \Big] \leq \sum_{\mathrm{s},\mathrm{t}} \mathrm{Pr} \big[ \, \mathbf{r}(\mathrm{s} \, ; \, \boldsymbol{\theta}) = \mathbf{r}(\mathrm{t} \, ; \, \boldsymbol{\theta}) \, \big] = 0.
$$
Hence the complement event has probability $1$.
\end{proof}

\begin{remark}[Pointwise vs.\ last-token injectivity]
\citet{sutter2025nonlinearrepresentationdilemmacausal} establish a related but distinct guarantee. They analyze the mapping from a prompt to the \emph{entire} sequence (matrix) of hidden states, which already rules out collisions for inputs of different lengths. Their result is \emph{pointwise injectivity}: if two prompts differ at position $t$, then the $t$-th hidden state (row) differs. This does not, by itself, imply injectivity of the map to the final hidden state / last-token embedding that we study, so two different prompts could still coincide at the last token--our quantity of operational interest.
\end{remark}

\subsection{Absolute continuity of the parameter distribution is preserved under GD}

Our goal in this subsection is to explain why absolute continuity of the parameter law at initialization survives any finite number of gradient--descent (GD) steps, thereby allowing the almost-sure injectivity argument from the previous subsection to persist throughout training. The argument proceeds in four steps.

\paragraph{Step 1: Regularity of the GD map.}
By \Cref{prop:modules-ra,prop:log-real-analytic}, the loss $\mathcal{L}_{\mathrm{s},\mathbf{p}}$ is real-analytic, and real-analyticity is closed under differentiation and composition. Consequently the GD map $\phi(\boldsymbol\theta)=\boldsymbol\theta-\eta\nabla\mathcal{L}_{\mathrm{s},\mathbf{p}}(\boldsymbol\theta)$ is real-analytic, its Jacobian $D\phi(\boldsymbol\theta)=\mathbf{I}_p-\eta\nabla^2\mathcal{L}_{\mathrm{s},\mathbf{p}}(\boldsymbol\theta)$ is real-analytic, and so is $\boldsymbol\theta\mapsto \det D\phi(\boldsymbol\theta)$ (the determinant is a polynomial in the matrix entries).

\paragraph{Step 2: Witness and measure-zero critical set.}
We rule out the degenerate case by a witness: at $\boldsymbol{\theta}_\star=\mathbf{0}_p$, our Hessian calculation (\Cref{lem:full-hessian-spectrum-compact}) shows $\det D\phi(\boldsymbol{\theta}_\star)>0$, hence $\det D\phi$ is not identically zero and its zero set $\mathcal{C}:=\{\det D\phi=0\}$ has Lebesgue measure zero by the real-analytic zero--set theorem (\Cref{thm:zero-measure-roots}; summarized in \Cref{thm:gd-jacobian-ae-ref}).

\paragraph{Step 3: Local-to-global via countable chart covers.}
On the complement $\mathbb R^p\setminus\mathcal{C}$, the Inverse Function Theorem (\Cref{thm:inverse-function}) provides, for every $\boldsymbol\theta$, a neighborhood on which $\phi$ is a $C^1$ diffeomorphism. Although these neighborhoods form an a priori uncountable cover, the second countability of $\mathbb R^p$ (and of its subspaces) ensures a \emph{countable} subcover of such charts (\Cref{prop:std-Rp}, \Cref{lem:countable-chart-cover}). This countability is crucial because it lets us pass from local statements to a global measure statement via countable unions. With this cover in hand, the change-of-variables formula on each chart (\Cref{thm:COV-cite}) implies that the image under the local inverse of any null set remains null; piecing the charts together and adding the null set $\mathcal{C}$ shows that preimages of Lebesgue-null sets under $\phi$ are null (\Cref{lem:preimage-null-null}).

\paragraph{Step 4: Preservation of absolute continuity and conclusion.}
Equivalently, $\phi$ pushes absolutely continuous laws to absolutely continuous laws (\Cref{thm:ac-gd}); iterating across finitely many GD steps preserves absolute continuity (\Cref{cor:finite-steps-ac}). Finally, combining this preservation with the almost-sure pairwise distinctness of last-token representations over any finite input family (\Cref{cor:global-distinct-h1}) yields the main consequence we need for training: the last-token representation map remains injective almost surely after any finite GD horizon.

\subsubsection{Witness Construction}

The goal of this subsubsection is to show that the GD Jacobian determinant is not identically zero by evaluating it at the all-zeros witness $\boldsymbol{\theta}_\star=\mathbf{0}_p$. The argument proceeds bottom-up: we first establish a ``zero-gate'' lemma that zeroes out most Hessian blocks when the gate matrix vanishes (\Cref{lem:zero-gate-loss-r}), derive the resulting spectrum (\Cref{lem:spectrum-union-block}, \Cref{lem:subhessian-spectrum}), and then assemble these into the full Hessian at the witness (\Cref{lem:full-hessian-spectrum-compact}).

\begin{lemma}[Zero-gate through scalar loss]\label{lem:zero-gate-loss-r}
Let $\boldsymbol{\mathcal{U}} \subseteq \mathbb{R}^{m+q}$ be open and write points as
$\mathbf{v} = (\boldsymbol{\xi},\boldsymbol{\psi})$ with $\boldsymbol{\xi} \in \mathbb{R}^m$ and $\boldsymbol{\psi} \in \mathbb{R}^q$.
Let $\pi : \mathbb{R}^{m+q} \to \mathbb{R}^m$ be the projection $\pi(\boldsymbol{\xi}, \boldsymbol{\psi}) = \boldsymbol{\xi}$.
Consider
$$
g \in C^2(\mathbb{R}^m \, ; \, \mathbb{R}^{n\times r}), \qquad
h \in  C^2(\boldsymbol{\mathcal{U}} \, ; \, \mathbb{R}^{r}),
$$
and define $f : \boldsymbol{\mathcal{U}} \to \mathbb{R}^{n}$ by
$$
f(\boldsymbol{\xi}, \boldsymbol{\psi}) := g(\boldsymbol{\xi}) \, h(\boldsymbol{\xi}, \boldsymbol{\psi})
= g\big( \pi(\boldsymbol{\xi}, \boldsymbol{\psi}) \big) \, h(\boldsymbol{\xi}, \boldsymbol{\psi}).
$$
Let $\mathcal{L} \in C^2(\mathbb{R}^n;\mathbb{R})$ and set
$$
R := \mathcal{L} \circ f : \boldsymbol{\mathcal{U}} \to \mathbb{R}, \qquad
R(\boldsymbol{\xi}, \boldsymbol{\psi}) = \mathcal{L} \big( g(\boldsymbol{\xi}) \, h(\boldsymbol{\xi}, \boldsymbol{\psi}) \big).
$$
Fix $\mathbf{v}_0 = (\boldsymbol{\xi}_0, \boldsymbol{\psi}_0) \in \boldsymbol{\mathcal{U}}$ and assume $g(\boldsymbol{\xi}_0) = \mathbf{0}_{n\times r}$.
Then the Hessian of $R$ at $\mathbf{v}_0$ has block form
$$
\nabla^2 R(\mathbf{v}_0) =
\begin{pmatrix}
\nabla_{\boldsymbol{\xi}\boldsymbol{\xi}}^2 \,R(\mathbf{v}_0) & \nabla_{\boldsymbol{\xi}\boldsymbol{\psi}}^2 \, R(\mathbf{v}_0)\\[2pt]
\nabla_{\boldsymbol{\psi}\boldsymbol{\xi}}^2 \, R(\mathbf{v}_0) & \nabla_{\boldsymbol{\psi}\boldsymbol{\psi}}^2 \, R(\mathbf{v}_0)
\end{pmatrix} =
\begin{pmatrix}
\nabla_{\boldsymbol{\xi}\boldsymbol{\xi}}^2 R(\mathbf{v}_0) & \mathbf{0}_{m\times q}\\[2pt]
\mathbf{0}_{q\times m} & \mathbf{0}_{q\times q}
\end{pmatrix}.
$$
i.e.\ all mixed and $\boldsymbol{\psi}$–only second partials vanish.
\end{lemma}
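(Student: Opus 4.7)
The plan is a direct calculation via the second-order chain rule for $R=\mathcal L\circ f$:
\[
\nabla^2 R(\mathbf v_0) \;=\; Df(\mathbf v_0)^\top\,\nabla^2\mathcal L(f(\mathbf v_0))\,Df(\mathbf v_0) \;+\; \sum_{\alpha=1}^{n}\partial_\alpha\mathcal L(f(\mathbf v_0))\,\nabla^2 f^\alpha(\mathbf v_0).
\]
The task is to verify that the gate hypothesis $g(\boldsymbol\xi_0)=\mathbf 0_{n\times r}$ annihilates every term living in the $\boldsymbol\psi\boldsymbol\psi$ or mixed $\boldsymbol\xi\boldsymbol\psi$ blocks. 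The guiding algebraic observation is that because $g$ does not depend on $\boldsymbol\psi$, every $\boldsymbol\psi$-partial of $f=g(\boldsymbol\xi)h(\boldsymbol\xi,\boldsymbol\psi)$ leaves an undifferentiated factor $g(\boldsymbol\xi)$ in front, which vanishes at $\boldsymbol\xi_0$.

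First I would evaluate $Df(\mathbf v_0)$. The product rule gives $\partial_{\psi_i}f=g(\boldsymbol\xi)\partial_{\psi_i}h$, hence $\partial_{\psi_i}f(\mathbf v_0)=\mathbf 0_n$, and $Df(\mathbf v_0)=[\,A\mid \mathbf 0_{n\times q}\,]$ for some $A\in\mathbb R^{n\times m}$. Consequently the Gauss-Newton term $Df^\top\nabla^2\mathcal L\,Df$ is supported entirely on the $\boldsymbol\xi\boldsymbol\xi$-block at $\mathbf v_0$. Next I would dispatch the $\boldsymbol\psi\boldsymbol\psi$-block of each $\nabla^2 f^\alpha(\mathbf v_0)$ appearing in the residual sum: since $\partial_{\psi_j}\partial_{\psi_i}f=g(\boldsymbol\xi)\partial_{\psi_j}\partial_{\psi_i}h$ still carries the $g$-factor, both indices lie behind the closed gate and this block vanishes at $\mathbf v_0$. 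Combining these observations already yields $\nabla^2_{\boldsymbol\psi\boldsymbol\psi}R(\mathbf v_0)=\mathbf 0_{q\times q}$.

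The mixed $\boldsymbol\xi\boldsymbol\psi$-block is the delicate step and the main obstacle. A direct attack through
\[
\partial_{\xi_k}\partial_{\psi_i}f \;=\; \partial_{\xi_k}g(\boldsymbol\xi)\,\partial_{\psi_i}h \;+\; g(\boldsymbol\xi)\,\partial_{\xi_k}\partial_{\psi_i}h
\]
only kills the second summand. The cleaner route is to work at the level of $R$ rather than of $f$: since $g(\boldsymbol\xi_0)=\mathbf 0$ forces $f(\boldsymbol\xi_0,\boldsymbol\psi)\equiv \mathbf 0_n$ on the whole slice $\{\boldsymbol\xi_0\}\times\{\boldsymbol\psi:(\boldsymbol\xi_0,\boldsymbol\psi)\in\boldsymbol{\mathcal U}\}$, $R$ is constant in $\boldsymbol\psi$ on that slice and $\partial_{\psi_i}R(\boldsymbol\xi_0,\boldsymbol\psi)\equiv 0$. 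Combining this identity with a first-order Taylor expansion $g(\boldsymbol\xi_0+t\mathbf e_k)=t\,\partial_{\xi_k}g(\boldsymbol\xi_0)+O(t^2)$ and the chain-rule identity $\partial_{\psi_i}R(\boldsymbol\xi,\boldsymbol\psi)=\nabla\mathcal L(f(\boldsymbol\xi,\boldsymbol\psi))^\top g(\boldsymbol\xi)\partial_{\psi_i}h(\boldsymbol\xi,\boldsymbol\psi)$ reduces the $(k,i)$-entry to a single scalar pairing that must be shown to vanish under the remaining regularity and structural hypotheses of the lemma. Once that vanishing is established, assembling the three block computations delivers the claimed block form, and the conclusion follows by arranging the results into the stated matrix layout.
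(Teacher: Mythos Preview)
Your decomposition via the second-order chain rule is correct, and your treatment of the Gauss--Newton term and the $\boldsymbol\psi\boldsymbol\psi$ block is fine; it matches the paper's approach (the paper factors $f=\mu\circ H$ through the bilinear multiplication $\mu(\mathbf M,\mathbf y)=\mathbf M\mathbf y$ and works with Fr\'echet derivatives, but the content is the same). You are also right that the mixed $\boldsymbol\xi\boldsymbol\psi$ block is where the difficulty lies---but the gap you leave there (``must be shown to vanish under the remaining \ldots\ hypotheses'') is not closable: the lemma is \emph{false} as stated.

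Your own reduction already exhibits the obstruction. From $\partial_{\psi_i}R=\nabla\mathcal L(f)^\top g(\boldsymbol\xi)\,\partial_{\psi_i}h$ one obtains, at $\mathbf v_0$,
\[
\partial_{\xi_k}\partial_{\psi_i}R(\mathbf v_0)
\;=\;
\nabla\mathcal L(\mathbf 0_n)^\top\,\partial_{\xi_k}g(\boldsymbol\xi_0)\,\partial_{\psi_i}h(\mathbf v_0),
\]
and no hypothesis of the lemma forces any of these three factors to vanish. Concretely, take $m=q=n=r=1$, $g(\xi)=\xi$, $h(\xi,\psi)=\psi$, $\mathcal L(z)=z$, and $\boldsymbol\xi_0=0$: then $g(\boldsymbol\xi_0)=0$ but $R(\xi,\psi)=\xi\psi$ has $\partial_\xi\partial_\psi R\equiv 1\neq 0$. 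The paper's own proof stumbles at precisely this point: it asserts that because $\mu$ is bilinear, $D^2\mu\equiv\mathbf 0$; in fact a bilinear map has the constant \emph{nonzero} second derivative $D^2\mu[(\mathbf A_1,\mathbf b_1),(\mathbf A_2,\mathbf b_2)]=\mathbf A_1\mathbf b_2+\mathbf A_2\mathbf b_1$, and discarding this term is exactly what loses the mixed block.

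The paper's downstream application happens to be unaffected because there $h(\boldsymbol\xi,\boldsymbol\psi)=\boldsymbol\beta$ is a component of $\boldsymbol\xi$, so $\partial_{\psi_i}h\equiv 0$ and the pairing above vanishes for that reason. A corrected statement of the lemma should add an assumption such as $D_{\boldsymbol\psi}h(\mathbf v_0)=\mathbf 0_r$ (or simply that $h$ is independent of $\boldsymbol\psi$); under either, both your argument and the paper's go through unchanged.
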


\begin{proof} $\newline$

\vspace{-7px}
\textbf{1)} Introduce the bilinear multiplication map $\mu : \mathbb{R}^{n\times r} \times \mathbb{R}^r \to \mathbb{R}^n$, $\mu(\mathbf{M}, \mathbf{y}) = \mathbf{M} \mathbf{y}$, and the $C^2$ map
$H : \boldsymbol{\mathcal{U}} \to \mathbb{R}^{n \times r} \times \mathbb{R}^r$, $H(\boldsymbol{\xi}, \boldsymbol{\psi}) = (g(\boldsymbol{\xi}), h(\boldsymbol{\xi}, \boldsymbol{\psi}))$. Then $f = \mu \circ H$ and we write:
$$
g_0 := g(\boldsymbol{\xi}_0) = \mathbf{0}_{n \times r} \qquad h_0 := h(\boldsymbol{\xi}_0, \boldsymbol{\psi}_0) \qquad H(\mathbf{v_0}) = (g_0, h_0).
$$

Because $\mu$ is bilinear, $D \mu (\mathbf{M}, \mathbf{y})[(\Delta\mathbf{M}, \Delta\mathbf{y})] = \Delta\mathbf{M} \, \mathbf{y} + \mathbf{M} \, \Delta \mathbf{y}$. By the chain rule:
\begin{align*}
Df(\mathbf{v}_0) \big[ (\mathbf{h}_{\boldsymbol{\xi}}, \mathbf{h}_{\boldsymbol{\psi}}) \big] &= D\mu(g_0,h_0)\Big[ \, Dg(\boldsymbol{\xi}_0)[ \mathbf{h}_{\boldsymbol{\xi}} ], \; Dh(\mathbf{v}_0)[ (\mathbf{h}_{\boldsymbol{\xi}}, \mathbf{h}_{\boldsymbol{\psi}}) ] \, \Big] \\
&= Dg(\boldsymbol{\xi}_0) [ \mathbf{h}_{\boldsymbol{\xi}} ] \, h_0 + \underbrace{g_0}_{\mathbf{0}_{n \times r}} \, Dh(\mathbf{v}_0)[ (\mathbf{h}_{\boldsymbol{\xi}}, \mathbf{h}_{\boldsymbol{\psi}}) ] \\
&= Dg(\boldsymbol{\xi}_0) [ \mathbf{h}_{\boldsymbol{\xi}} ] \, h_0.
\end{align*}
In particular, $Df(\mathbf{v}_0) \big[ (\mathbf{0}_m, \; \cdot \; ) \big] = \mathbf{0}_n$. The second-order chain rule for Fr\'echet derivatives (e.g. \citealt[Thm.~18.4]{Magnus_Neudecker_2019}) yields:
\begin{equation*}
D^2 f(\mathbf{v}_0)[\mathbf{h},\mathbf{k}]
= D^2\mu \big(H(\mathbf{v}_0) \big) \big[ \, DH(\mathbf{v}_0)[\mathbf{h}], \, DH(\mathbf{v}_0)[\mathbf{k}] \, \big] + D\mu \big(H(\mathbf{v}_0)\big) \big[ \, D^2H(\mathbf{v}_0)[\mathbf{h},\mathbf{k}] \, \big].
\end{equation*}
Because $\mu$ is bilinear, $D^2 \mu \equiv \mathbf{0}$ and the first term is $0$. Furthermore,
$$
D^2 H(\mathbf{v}_0) [\mathbf{h}, \mathbf{k}] = \Big( \, D^2 g(\boldsymbol{\xi}_0)[\mathbf{h}_{\boldsymbol{\xi}}, \mathbf{k}_{\boldsymbol{\xi}}], \; D^2 h(\mathbf{v}_0) \big[ (\mathbf{h}_{\boldsymbol{\xi}}, \mathbf{h}_{\boldsymbol{\psi}}), (\mathbf{k}_{\boldsymbol{\xi}}, \mathbf{k}_{\boldsymbol{\psi}}) \big] \, \Big),
$$
and it holds that:
\begin{align*}
D^2 f(\mathbf{v}_0)[\mathbf{h}, \mathbf{k}] &= D\mu(g_0, h_0) \Big[ \, D^2 g(\boldsymbol{\xi}_0)[\mathbf{h}_{\boldsymbol{\xi}}, \mathbf{k}_{\boldsymbol{\xi}}], \; D^2 h(\mathbf{v}_0)\big[ (\mathbf{h}_{\boldsymbol{\xi}}, \mathbf{h}_{\boldsymbol{\psi}}), (\mathbf{k}_{\boldsymbol{\xi}}, \mathbf{k}_{\boldsymbol{\psi}}) \big] \, \Big] \\
&= \Big( D^2 g(\boldsymbol{\xi}_0)[\mathbf{h}_{\boldsymbol{\xi}}, \mathbf{k}_{\boldsymbol{\xi}}] \Big) \, h_0 + \underbrace{g_0}_{\mathbf{0}_{n \times r}} \Big( D^2 h(\mathbf{v}_0)\big[ (\mathbf{h}_{\boldsymbol{\xi}}, \mathbf{h}_{\boldsymbol{\psi}}), (\mathbf{k}_{\boldsymbol{\xi}}, \mathbf{k}_{\boldsymbol{\psi}}) \big] \Big) \\
&= \Big( D^2 g(\boldsymbol{\xi}_0)[\mathbf{h}_{\boldsymbol{\xi}}, \mathbf{k}_{\boldsymbol{\xi}}] \Big) \, h_0.
\end{align*}
If at least one of the two directions has $\boldsymbol{\xi}$–component zero, then
$D^2 g(\boldsymbol{\xi}_0)[\mathbf{h}_{\boldsymbol{\xi}},\mathbf{k}_{\boldsymbol{\xi}}]=\mathbf{0}$, so the bilinear form vanishes.

\textbf{2)} Apply the second-order chain rule to $R = \mathcal{L} \circ f$ at $\mathbf{v}_0$:
\begin{equation*}
D^2 R(\mathbf{v}_0)[\mathbf{h},\mathbf{k}] = D^2 \mathcal{L} \big( f(\mathbf{v}_0) \big) \big[ \, Df(\mathbf{v}_0)[\mathbf{h}], \, Df(\mathbf{v}_0)[\mathbf{k}] \, \big] + D\mathcal{L}\big( f(\mathbf{v}_0) \big) \big[ \, D^2 f(\mathbf{v}_0)[\mathbf{h},\mathbf{k}] \, \big]. \tag{$\star$}
\end{equation*}
By (\textbf{1}), if at least one of the two directions is pure $\boldsymbol{\psi}$, both terms on the right-hand side of vanish.
Therefore
$$
D^2 R(\mathbf{v}_0)[\mathbf{h},\mathbf{k}]=0
\qquad\text{whenever at least one of }\mathbf{h},\mathbf{k}\text{ is of the form }(\mathbf{0}_m, \; \cdot \;).
$$
Invoking \Cref{prop:frechet-hessian},
this is exactly the statement that the $\boldsymbol{\xi}\boldsymbol{\psi}$, $\boldsymbol{\psi}\boldsymbol{\xi}$ and
$\boldsymbol{\psi}\boldsymbol{\psi}$ Hessian blocks are $\mathbf{0}$. The remaining block
$\nabla_{\boldsymbol{\xi} \boldsymbol{\xi}}^2 R(\mathbf{v}_0)$ is whatever is induced by $(\star)$ for pairs 
$$
(\mathbf{h},\mathbf{k}) = \big( (\mathbf{h}_{\boldsymbol{\xi}}, \mathbf{0}_q), (\mathbf{k}_{\boldsymbol{\xi}},\mathbf{0}_q) \big).
$$
\end{proof}

\begin{lemma}[Spectrum under block-diagonal extension]\label{lem:spectrum-union-block}
Let $f\in C^2(\mathbb{R}^{m+q}\, ; \, \mathbb{R})$, and fix
$\mathbf{v} = (\boldsymbol{\xi}_0, \boldsymbol{\psi}_0) \in \mathbb{R}^{m+q}$.
Assume the Hessian of $f$ at $\mathbf{v}$ has the block form
$$
\mathbf{H} := \nabla^2 f(\mathbf{v})
\ =\
\begin{pmatrix}
\mathbf{B} & \mathbf{0}_{m \times q}\\
\mathbf{0}_{q \times m} & \mathbf{0}_{q \times q}
\end{pmatrix},
\qquad \mathbf{B} \in \mathbb{R}^{m \times m}.
$$

Then the characteristic polynomial factorizes as
$$
\chi_\mathbf{H}(\lambda) := \det\big(\lambda \mathbf{I}_{m+q} - \mathbf{H} \big)
 = \det\big(\lambda \mathbf{I}_m - \mathbf{B}\big) \, \lambda^{q}.
$$
Consequently,
$$
\sigma(\mathbf{H}) = \sigma(\mathbf{B}) \cup \{0\},
\quad\text{and}\quad
\mathrm{mult}_\mathbf{H}(0)\ =\ \mathrm{mult}_\mathbf{B}(0)\,+\,q,
$$
i.e., the spectrum of $H$ consists of the eigenvalues of $B$ together with $q$ additional zeros,
and the algebraic multiplicity of the eigenvalue $0$ for $H$ equals that for $B$ plus $q$.
\end{lemma}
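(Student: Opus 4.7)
The plan is to reduce the claim to the block-triangular determinant identity and then read off the spectrum from the characteristic polynomial. Concretely, I would first write
\[
\lambda \mathbf{I}_{m+q} - \mathbf{H}
\;=\;
\begin{pmatrix}
\lambda \mathbf{I}_m - \mathbf{B} & \mathbf{0}_{m\times q}\\[2pt]
\mathbf{0}_{q\times m} & \lambda \mathbf{I}_q
\end{pmatrix},
\]
which is block diagonal (in particular block upper triangular) by the hypothesized block form of $\mathbf{H}$. Note that the $C^2$ hypothesis on $f$ only enters to guarantee that $\nabla^2 f(\mathbf{v})$ is a well-defined symmetric matrix; the argument below uses no further smoothness.

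Next I would invoke the standard determinant identity for block upper triangular matrices: for any square matrix of the form $\bigl(\begin{smallmatrix}\mathbf{X}&\mathbf{Z}\\ \mathbf{0}&\mathbf{Y}\end{smallmatrix}\bigr)$ with $\mathbf{X},\mathbf{Y}$ square, the determinant equals $\det(\mathbf{X})\det(\mathbf{Y})$. Applied here with $\mathbf{X}=\lambda \mathbf{I}_m - \mathbf{B}$, $\mathbf{Z}=\mathbf{0}$, $\mathbf{Y}=\lambda \mathbf{I}_q$, this yields
\[
\chi_{\mathbf{H}}(\lambda)
\;=\;
\det(\lambda \mathbf{I}_m - \mathbf{B})\,\det(\lambda \mathbf{I}_q)
\;=\;
\det(\lambda \mathbf{I}_m - \mathbf{B})\,\lambda^{q}
\;=\;
\chi_{\mathbf{B}}(\lambda)\,\lambda^{q},
\]
which is exactly the stated factorization.

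Finally, I would deduce the spectral consequences by comparing roots of the two sides with multiplicities. The roots of $\chi_{\mathbf{H}}$ are precisely the roots of $\chi_{\mathbf{B}}$ together with $\lambda=0$ (contributed by the factor $\lambda^{q}$), so $\sigma(\mathbf{H}) = \sigma(\mathbf{B}) \cup \{0\}$ as sets. For the algebraic multiplicity of $0$, unique factorization in $\mathbb{R}[\lambda]$ gives
\[
\mathrm{mult}_{\mathbf{H}}(0)
\;=\;
\mathrm{mult}_{\chi_{\mathbf{B}}\cdot \lambda^{q}}(0)
\;=\;
\mathrm{mult}_{\mathbf{B}}(0)+q,
\]
completing the proof. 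There is no genuine obstacle here: the only thing to be careful about is notational, namely that the zero off-diagonal blocks $\mathbf{0}_{m\times q}$ and $\mathbf{0}_{q\times m}$ have the correct shapes so that the block-triangular determinant formula applies verbatim; this is guaranteed by the hypothesis on the block structure of $\mathbf{H}$.
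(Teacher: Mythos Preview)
Your proposal is correct and mirrors the paper's proof essentially verbatim: both compute $\lambda\mathbf{I}_{m+q}-\mathbf{H}$ as a block-diagonal matrix, apply the block-triangular determinant identity to obtain $\chi_{\mathbf{H}}(\lambda)=\chi_{\mathbf{B}}(\lambda)\,\lambda^{q}$, and then read off the spectrum and the multiplicity of $0$ from this factorization. The only cosmetic difference is that you phrase the multiplicity step via unique factorization in $\mathbb{R}[\lambda]$, whereas the paper simply states it; both are equivalent.
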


\begin{proof}
Since $\mathbf{H}$ is block diagonal,
$$
\lambda \mathbf{I}_{m+q} - \mathbf{H}
=
\begin{pmatrix}
\lambda \mathbf{I}_m - \mathbf{B} & \mathbf{0}_{m \times q}\\
\mathbf{0}_{q \times m} & \lambda \mathbf{I}_q
\end{pmatrix}.
$$
The determinant of a block triangular (in particular block diagonal) matrix equals the product of
the determinants of its diagonal blocks (e.g. \citealt[Cor.~0.8.5]{HornJohnsonMatrixAnalysis}). Hence
$$
\chi_\mathbf{H}(\lambda)
=\det(\lambda \mathbf{I}_m - \mathbf{B}) \cdot \det(\lambda \mathbf{I}_q)
=\det(\lambda \mathbf{I}_m - \mathbf{B}) \cdot \lambda^{\,q}.
$$
The zeros of $\chi_\mathbf{H}$ are the eigenvalues of $\mathbf{H}$ counted with algebraic multiplicity, which yields
$\sigma(\mathbf{H}) = \sigma(\mathbf{B}) \cup \{ 0 \}$ and
$\mathrm{mult}_\mathbf{H}(0) = \mathrm{mult}_\mathbf{B}(0) + q$.
\end{proof}

\begin{remark}
If $0 \in \sigma(\mathbf{B})$, then $0$ appears in $\sigma(\mathbf{H})$ with multiplicity strictly larger than $q$; the
statement above accounts for this by adding $q$ to the algebraic multiplicity of $0$ carried over
from $\mathbf{B}$.
\end{remark}

\begin{lemma}[Hessian of $\mathcal{L}$ w.r.t.\ $\mathbf{U}, \boldsymbol{\beta}$ at $\boldsymbol{\theta}_\star=\mathbf{0}$ and its spectrum]\label{lem:subhessian-spectrum}
Let $n:=|\mathcal{V}|$ and $d$ be the embedding width.  
Fix $(\mathrm{s},\mathbf{p})\in\mathcal{V}^{\le K}\times\Delta^{n-1}$, and consider the Transformer Language Model of \Cref{def:tlm}. In the unembedding layer, set the LayerNorm scale to zero, $\boldsymbol{\gamma} = \mathbf{0}_d$.  
Let the parameter be ordered as
$$
\boldsymbol{\theta} = \big( \mathbf{u}, \boldsymbol{\beta}, \boldsymbol{\gamma}, \boldsymbol{\theta}' \big),
\qquad
\mathbf{u} := \mathrm{vec}_{n, d}(\mathbf{U}) \in \mathbb{R}^{nd}, \; \boldsymbol{\beta} \in \mathbb{R}^d.
$$
Restrict attention to the $(\mathbf{u},\boldsymbol\beta)$-coordinates and the base point
$$
\boldsymbol{\theta}_\star = \mathbf{0}_p \quad \text{i.e.} \quad \mathbf{U}=\mathbf{0}_{n\times d}, \, \boldsymbol{\beta} = \mathbf{0}_d, \,  \boldsymbol{\gamma} = \mathbf{0}_d, \,  \boldsymbol{\theta}' = \mathbf{0}.
$$
Write $\mathbf{b}:=\tfrac1n \mathbf{1}_n$ and $\mathbf{w} := \mathbf{b} - \mathbf{p} \in \mathbb{R}^n$.

Then the Hessian of the cross-entropy loss
$$
\mathcal{L}(\boldsymbol{\theta}) = \mathrm{CrossEntropy} \big( f(\mathrm{s} \, ; \, \boldsymbol{\theta}), \mathbf{p} \big)
$$
with respect to $(\mathbf{u}, \boldsymbol{\beta})$ at $\boldsymbol{\theta}_\star$ is the symmetric block matrix
$$
\nabla^2_{(\mathbf{u}, \boldsymbol{\beta})} \mathcal{L}(\boldsymbol{\theta}_\star)
=
\begin{pmatrix}
\mathbf{0}_{nd\times nd} & \ \ \mathbf{I}_d\otimes \mathbf{w} \\[4pt]
\mathbf{I}_d \otimes \mathbf{w}^\top & \ \ \mathbf{0}_{d\times d}
\end{pmatrix}.
$$
The spectrum of this Hessian is
$$
\mathrm{spec} \big( \nabla^2_{(\mathbf{u}, \boldsymbol{\beta})} \mathcal{L} (\boldsymbol{\theta}_\star) \big) = \{ \, \underbrace{+ \| \mathbf{w} \|_2, \ldots, + \| \mathbf{w} \|_2}_{d}, \, \underbrace{- \| \mathbf{w} \|_2, \ldots, -\| \mathbf{w} \|_2}_{d},\ 
\underbrace{0, \ldots, 0}_{d(n-1)} \, \}.
$$

\end{lemma}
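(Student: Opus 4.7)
The plan is to first use the hypothesis $\boldsymbol{\gamma} = \mathbf{0}_d$ to reduce the loss to a clean bilinear form in $(\mathbf{U},\boldsymbol{\beta})$, then compute the Hessian there via the chain rule, and finally diagonalize the resulting block anti-diagonal matrix. Under $\boldsymbol{\gamma} = \mathbf{0}_d$, LayerNorm in the unembedding head collapses to $\mathrm{LN}(\mathbf{x}) = \boldsymbol{\beta}$ independent of $\mathbf{x}$, so $f(\mathrm{s}\,;\,\boldsymbol{\theta}) = \mathrm{softmax}(\mathbf{U}\boldsymbol{\beta})$ and the hidden state $\mathbf{r}(\mathrm{s}\,;\,\boldsymbol{\theta})$ never enters; in a neighborhood of $\boldsymbol{\theta}_\star$ that holds $\boldsymbol{\gamma}$ fixed at $\mathbf{0}_d$, the $(\mathbf{u},\boldsymbol{\beta})$-block of $\nabla^2\mathcal{L}$ therefore coincides with the Hessian of the two-variable cross-entropy $(\mathbf{U},\boldsymbol{\beta})\mapsto -\mathbf{p}^\top\mathbf{z} + \log\sum_i e^{\mathbf{z}_i}$ with $\mathbf{z} := \mathbf{U}\boldsymbol{\beta}$.

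To compute this restricted Hessian I apply the chain rule in the form
\begin{align*}
\partial^2_{\theta_a\theta_b}\mathcal{L}\,\big|_{\boldsymbol{\theta}_\star} = \sum_{i,j}(\partial^2_{\mathbf{z}_i\mathbf{z}_j}\mathcal{L})(\partial_{\theta_a}\mathbf{z}_i)(\partial_{\theta_b}\mathbf{z}_j) + \sum_i(\partial_{\mathbf{z}_i}\mathcal{L})(\partial^2_{\theta_a\theta_b}\mathbf{z}_i).
\end{align*}
At $\boldsymbol{\theta}_\star = \mathbf{0}$ the first-order maps $\partial_{U_{kl}}\mathbf{z}_i = \delta_{ik}\beta_l$ and $\partial_{\beta_l}\mathbf{z}_i = U_{il}$ both vanish, which kills the first sum entirely. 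Because $\mathbf{z}$ is linear in each of $\mathbf{U}$ and $\boldsymbol{\beta}$ separately, the $\mathbf{u}\mathbf{u}$ and $\boldsymbol{\beta}\boldsymbol{\beta}$ blocks of the second sum also vanish; the only nonzero second partial of $\mathbf{z}$ is the mixed one, $\partial_{U_{kl}}\partial_{\beta_m}\mathbf{z}_i = \delta_{ik}\delta_{lm}$. Contracting with $\partial_{\mathbf{z}}\mathcal{L}|_{\mathbf{z}=\mathbf{0}} = \mathrm{softmax}(\mathbf{0}) - \mathbf{p} = \mathbf{b} - \mathbf{p} = \mathbf{w}$ yields $\partial_{U_{kl}}\partial_{\beta_m}\mathcal{L}|_{\boldsymbol{\theta}_\star} = w_k\delta_{lm}$, which under the column-stacking convention $\mathbf{u}_{(l-1)n+k} = U_{kl}$ assembles exactly into the off-diagonal block $\mathbf{I}_d\otimes\mathbf{w}$, giving the claimed block form.

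For the spectrum, I view the Hessian as block anti-diagonal with off-diagonal block $\mathbf{A} := \mathbf{I}_d\otimes\mathbf{w} \in \mathbb{R}^{nd\times d}$; writing the eigenvalue equation as $\mathbf{A}\mathbf{y} = \lambda\mathbf{x}$ and $\mathbf{A}^\top\mathbf{x} = \lambda\mathbf{y}$ reduces it to $\mathbf{A}^\top\mathbf{A}\mathbf{y} = \lambda^2\mathbf{y}$, so the nonzero eigenvalues come in symmetric pairs $\pm\sigma_i(\mathbf{A})$ while the remaining multiplicity is $\dim\ker\mathbf{A} + \dim\ker\mathbf{A}^\top$. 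The Kronecker mixed-product identity then gives $\mathbf{A}^\top\mathbf{A} = (\mathbf{I}_d\otimes\mathbf{w}^\top)(\mathbf{I}_d\otimes\mathbf{w}) = \|\mathbf{w}\|_2^2\,\mathbf{I}_d$, so all $d$ singular values of $\mathbf{A}$ equal $\|\mathbf{w}\|_2$, $\ker\mathbf{A} = \{\mathbf{0}\}$, and $\dim\ker\mathbf{A}^\top = d(n-1)$; this produces exactly $d$ copies of $+\|\mathbf{w}\|_2$, $d$ copies of $-\|\mathbf{w}\|_2$, and $d(n-1)$ zeros (the degenerate case $\mathbf{w} = \mathbf{0}$ merely collapses all eigenvalues to zero, preserving the count $d(n+1)$). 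The main hurdle is not the calculus but the bookkeeping: keeping the column-stacking convention aligned with the Kronecker-product indexing so that $w_k\delta_{lm}$ assembles into $\mathbf{I}_d\otimes\mathbf{w}$ rather than the wrong Kronecker order $\mathbf{w}\otimes\mathbf{I}_d$.
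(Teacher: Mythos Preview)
Your proof is correct and follows essentially the same strategy as the paper: reduce via $\boldsymbol{\gamma}=\mathbf{0}_d$ so that the logits become the bilinear form $\mathbf{z}=\mathbf{U}\boldsymbol{\beta}$, observe that all first differentials of $\mathbf{z}$ vanish at the origin so only the mixed second term of the chain rule survives, contract with $\nabla_{\mathbf{z}}\mathcal{L}|_{\mathbf{0}}=\mathbf{w}$, and then read off the spectrum from the anti-diagonal block structure. The paper uses Fr\'echet-derivative notation and computes $\mathbf{H}^2$ explicitly to extract eigenvalues, whereas you use coordinate partials and the singular-value characterization of $\begin{pmatrix}0&\mathbf{A}\\\mathbf{A}^\top&0\end{pmatrix}$; these are the same computation in different clothing.
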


\begin{proof} $\newline$

\vspace{-5px}
\textbf{1) Logits in vectorized form.}
With $\boldsymbol{\gamma} = \mathbf{0}_d$, the LayerNorm output at the unembedding is constant:
$\mathrm{LN}(\mathbf{h}) \equiv \boldsymbol{\beta}$ (\Cref{def:layer-norm}).  
Thus the logits before the final softmax are
$$
\mathbf{Z} = \mathbf{U} \, \boldsymbol{\beta} \in \mathbb{R}^n.
$$
Using $\mathrm{vec}(\mathbf{A} \mathbf{X} \mathbf{b}) = (\mathbf{b}^\top \otimes \mathbf{A}) \, \mathrm{vec}(\mathbf{X})$
(standard identity for vectorization, cf.\ \cite{henderson1981vec}), with $\mathbf{A} = \mathbf{I}_n$ and $\mathbf{b} = \boldsymbol{\beta}$,
$$
\mathbf{z} = \mathrm{vec}(\mathbf{Z}) = \mathrm{vec}(\mathbf{U} \boldsymbol{\beta}) = (\boldsymbol{\beta}^\top \otimes \mathbf{I}_n) \, \mathbf{u}.
$$
Therefore, near $(\mathbf{u}, \boldsymbol{\beta}) = (\mathbf{0}_{nd}, \mathbf{0}_d)$, the logits map is the bilinear function
$$
z(\mathbf{u}, \boldsymbol{\beta}) := (\boldsymbol{\beta}^\top \otimes \mathbf{I}_n) \, \mathbf{u} \in \mathbb{R}^n.
$$

\textbf{2) First and second differentials.}
Let $(\mathbf{h}, \boldsymbol{\eta})$ and $(\mathbf{k}, \boldsymbol{\xi})$ be directions in $\mathbb{R}^{nd} \times \mathbb{R}^d$.
Differentiating $z( \mathbf{u}, \boldsymbol{\beta}) = (\boldsymbol{\beta}^\top \otimes \mathbf{I}_n) \mathbf{u}$ gives
$$
Dz(\mathbf{u},\boldsymbol{\beta})[\mathbf{h}, \boldsymbol{\eta}] = (\boldsymbol{\beta}^\top \otimes \mathbf{I}_n)\mathbf{h} + (\boldsymbol{\eta}^\top \otimes \mathbf{I}_n) \mathbf{u}.
$$
At $(\mathbf{u}, \boldsymbol{\beta}) = (\mathbf{0}_{nd}, \mathbf{0}_d)$,
$$
Dz(\mathbf{0}_{nd}, \mathbf{0}_d)[\mathbf{h}, \boldsymbol{\eta}] = \mathbf{0}_{n \times (nd + d)}
$$
(since both terms are multiplied by $\mathbf{u}$ or $\boldsymbol{\beta}$).
Differentiating once more (or, equivalently, using bilinearity of $z$) yields the constant symmetric bilinear form
$$
D^2 z(\mathbf{0}_{nd}, \mathbf{0}_n) \big[ (\mathbf{h}, \boldsymbol{\eta}), (\mathbf{k}, \boldsymbol{\xi}) \big] = ( \boldsymbol{\xi}^\top \otimes \mathbf{I}_n) \, \mathbf{h} + (\boldsymbol{\eta}^\top \otimes \mathbf{I}_n) \, \mathbf{k}.
$$

\textbf{3) Gradient of the CE-in-softmax at the origin.}
Let $F(\mathbf{z}) := \mathrm{CrossEntropy}(\mathrm{softmax}(\mathbf{z}), \mathbf{p})$.
A standard computation (softmax Jacobian) gives
$$
\nabla_{\mathbf{z}} F(\mathbf{z}) = \mathrm{softmax}(\mathbf{z}) - \mathbf{p}.
$$
At $\mathbf{z} = \mathbf{0}_{n}$, $\mathrm{softmax}\left(\mathbf{0}_n\right) = \frac1n \mathbf{1}_n =: \mathbf{b}$, hence
$$
\nabla_{\mathbf{z}} F(\mathbf{0}_n) = \mathbf{b} - \mathbf{p} =: \mathbf{w}.
$$

\textbf{4) Second-order chain rule for $F\circ Z$ at $(\mathbf{0},\mathbf{0})$.}
Similarly to the proof of \Cref{lem:zero-gate-loss-r}, the second differential of a composition is
$$
D^2(F \circ z)(\mathbf{v})[\mathbf{h}, \mathbf{k}]
= D^2F( z( \mathbf{v})) \big[Dz(\mathbf{v}) \mathbf{h}, \, Dz(\mathbf{v})\mathbf{k}\big] + DF(z(\mathbf{v})) \big[D^2z(\mathbf{v})[\mathbf{h}, \mathbf{k}]\big].
$$
At $\mathbf{v} = (\mathbf{0}_{nd},\mathbf{0}_d)$, $Dz(\mathbf{v}) = \mathbf{0}_{n \times (nd + d)}$ and $DF(z(\mathbf{v})) = \nabla_{\mathbf{z}}F(\mathbf{0}_n)^\top = \mathbf{w}^\top$, so
\begin{align*}
D^2\mathcal{L}(\mathbf{v}) \big[ (\mathbf{h}, \boldsymbol{\eta}), (\mathbf{k}, \boldsymbol{\xi}) \big]
&= \mathbf{w}^\top\, D^2 z(\mathbf{v}) \big[ (\mathbf{h}, \boldsymbol{\eta}), (\mathbf{k}, \boldsymbol{\xi}) \big] \\
&= \mathbf{w}^\top \big( (\boldsymbol{\xi}^\top \otimes \mathbf{I}_n) \mathbf{h} + (\boldsymbol{\eta}^\top \otimes \mathbf{I}_n) \mathbf{k} \big) \\
&= \mathbf{h}^\top ( \mathbf{I}_d \otimes \mathbf{w}) \, \boldsymbol{\xi} \; + \; \mathbf{k}^\top ( \mathbf{I}_d \otimes \mathbf{w} ) \, \boldsymbol{\eta},
\end{align*}
where we used the mixed-product rule for Kronecker products and the identity
$$
\mathbf{w}^\top (\boldsymbol{\xi}^\top \otimes \mathbf{I}_n) = \boldsymbol{\xi}^\top \otimes \mathbf{w}^\top.
$$

\textbf{5) Identification of the Hessian blocks.}
By definition of the Hessian as a bilinear form,
$$
D^2\mathcal{L}(\mathbf{v}) \big[ (\mathbf{h}, \boldsymbol{\eta}), (\mathbf{k}, \boldsymbol{\xi}) \big]
= \begin{pmatrix} 
\mathbf{h}^\top & \boldsymbol{\eta}^\top \end{pmatrix}
\begin{pmatrix}
\mathbf{0}_{nd \times nd} & \frac{\partial^2 \mathcal{L}}{\partial \mathbf{u} \, \partial \boldsymbol{\beta}} \\[2pt]
\frac{\partial^2\mathcal{L}}{\partial \boldsymbol{\beta} \, \partial \mathbf{u}} & \mathbf{0}_{d \times d}
\end{pmatrix}
\begin{pmatrix} 
\mathbf{k} \\ \boldsymbol{\xi}
\end{pmatrix}.
$$
Comparing with the expression obtained in Step~4 for arbitrary $(\mathbf{h}, \boldsymbol{\eta})$ and $(\mathbf{k}, \boldsymbol{\xi})$ forces
$$
\frac{\partial^2 \mathcal{L}}{\partial \mathbf{u} \, \partial \boldsymbol{\beta}}(\boldsymbol{\theta}_\star) = \mathbf{I}_d \otimes \mathbf{w},
\qquad
\frac{\partial^2 \mathcal{L}}{\partial \boldsymbol{\beta} \, \partial \mathbf{u}}(\boldsymbol{\theta}_\star) = \big( \mathbf{I}_d \otimes \mathbf{w} \big)^\top = \mathbf{I}_d \otimes \mathbf{w}^\top,
$$
and, because $Dz(\mathbf{v}) = \mathbf{0}_{n \times (nd + d)}$ (so no quadratic term survives in either $\mathbf{u}$ or $\boldsymbol{\beta}$ alone),
$$
\frac{\partial^2 \mathcal{L}}{\partial \mathbf{u} \, \partial \mathbf{u}}(\boldsymbol{\theta}_\star) = \mathbf{0}_{nd \times nd}, \qquad
\frac{\partial^2 \mathcal{L}}{\partial \boldsymbol{\beta} \, \partial \boldsymbol{\beta}}(\boldsymbol{\theta}_\star) = \mathbf{0}_{d \times d}.
$$
This gives exactly the claimed block matrix.

\textbf{6) Spectrum.}
Let
$$
\mathbf{H} := \nabla_{(\mathbf{u}, \boldsymbol{\beta})}^2 \mathcal{L}(\boldsymbol{\theta}_\star) = 
\begin{pmatrix}
\mathbf{0}_{nd\times nd} & \ \ \mathbf{I}_d\otimes \mathbf{w} \\[4pt]
\mathbf{I}_d \otimes \mathbf{w}^\top & \ \ \mathbf{0}_{d\times d}
\end{pmatrix}.
$$
Then
$$
\mathbf{H}^2 = 
\begin{pmatrix}
(\mathbf{I}_d \otimes \mathbf{w})(\mathbf{I}_d \otimes \mathbf{w}^\top) & \mathbf{0}_{nd \times d}\\
\mathbf{0}_{d \times nd} & (\mathbf{I}_d\otimes \mathbf{w}^\top)(\mathbf{I}_d\otimes \mathbf{w})
\end{pmatrix}
=
\begin{pmatrix}
\mathbf{I}_d \otimes (\mathbf{w} \mathbf{w}^\top) & \mathbf{0}_{nd \times d}\\
\mathbf{0}_{d \times nd} & \mathbf{I}_d \otimes (\mathbf{w}^\top \mathbf{w})
\end{pmatrix}.
$$
The eigenvalues of $\mathbf{w} \mathbf{w}^\top$ are $\|\mathbf{w}\|_2^2$ (multiplicity $1$) and $0$ (multiplicity $n-1$); the eigenvalues of $\mathbf{w}^\top \mathbf{w}$ equal $\|\mathbf{w}\|_2^2$ (scalar). Therefore the eigenvalues of $\mathbf{H}^2$ are
$$
\underbrace{\|\mathbf{w}\|_2^2, \ldots, \|\mathbf{w}\|_2^2}_{2d\ \text{times}},\quad
\underbrace{0, \ldots, 0}_{d(n-1)\ \text{times}}.
$$
Because $\mathbf{H}$ is symmetric, its eigenvalues are the real square-roots of those of $\mathbf{H}^2$, namely
$\pm \| \mathbf{w} \|_2$ (each with multiplicity $d$) and $0$ (with multiplicity $d(n-1)$).
This is exactly the set stated in the lemma. 
\end{proof}

\begin{lemma}[Full Hessian at the witness: block form and spectrum]\label{lem:full-hessian-spectrum-compact}
Let $n := | \mathcal{V} |$ and $d$ be the embedding width.  Write the parameter as
$$
\boldsymbol{\theta}
\; = \; \big( (\mathbf{u}, \boldsymbol{\beta}), \, (\boldsymbol{\gamma}, \boldsymbol{\theta}') \big),
\qquad
\mathbf{u} = \mathrm{vec}_{n,d}(\mathbf{U}) \in \mathbb{R}^{nd}, \; \boldsymbol{\beta},\boldsymbol{\gamma} \in \mathbb{R}^{d}, \; \boldsymbol{\theta}' \in \mathbb{R}^{p'},
$$
so $p = nd + 2d + p'$.  Consider the witness point
$$
\boldsymbol{\theta}_\star = \mathbf{0}_p \quad
(\mathbf{U} = \mathbf{0}_{n \times d},\ \boldsymbol{\beta} = \mathbf{0}_{d},\ \boldsymbol{\gamma}=\mathbf{0}_{d},\ \boldsymbol{\theta}'=\mathbf{0}_{d}).
$$
Let $\mathbf{b} := \tfrac1n \mathbf{1}_n$ and $\mathbf{w} := \mathbf{b} - \mathbf{p} \in \mathbb{R}^n$.
Then the Hessian of the cross-entropy loss $\mathcal{L} (\boldsymbol{\theta})$ at $\boldsymbol{\theta}_\star$
admits the block-diagonal decomposition
$$
\nabla^2 \mathcal{L}(\boldsymbol{\theta}_\star) \; = \;
\begin{pmatrix}
\mathbf{B} & \mathbf{0} \\[2pt]
\mathbf{0} & \mathbf{0}
\end{pmatrix},
\qquad
\mathbf{B} \; = \;
\begin{pmatrix}
\mathbf{0}_{nd\times nd} & \mathbf{I}_d \otimes \mathbf{w} \\[2pt]
\mathbf{I}_d \otimes \mathbf{w}^\top & \mathbf{0}_{d\times d}
\end{pmatrix}.
$$
Consequently,
$$
\mathrm{spec}\big( \nabla^2 \mathcal{L}(\boldsymbol{\theta}_\star) \big) \; = \; \Big\{ \underbrace{+\|\mathbf{w}\|_2, \ldots, +\|\mathbf{w}\|_2}_{d}, \ \underbrace{-\|\mathbf{w}\|_2, \ldots, -\|\mathbf{w}\|_2}_{d}, \
\underbrace{0,\ldots,0}_{\,p-2d} \Big\}.
$$
\end{lemma}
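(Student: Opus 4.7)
My plan is to lift the unembedding-only Hessian computation of \autoref{lem:subhessian-spectrum} to the full parameter space by using the zero-gate argument of \autoref{lem:zero-gate-loss-r}, then read off the spectrum via \autoref{lem:spectrum-union-block}.

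First, I would partition the full parameter as $\boldsymbol{\theta} = (\boldsymbol{\xi}, \boldsymbol{\psi})$ with $\boldsymbol{\xi} := (\mathbf{u}, \boldsymbol{\beta}) \in \mathbb{R}^{nd+d}$ and $\boldsymbol{\psi} := (\boldsymbol{\gamma}, \boldsymbol{\theta}') \in \mathbb{R}^{d+p'}$. The logits of the TLM factor as
$$
\mathbf{z}(\boldsymbol{\xi}, \boldsymbol{\psi}) \;=\; \underbrace{\mathrm{mat}_{n,d}(\mathbf{u})}_{g(\boldsymbol{\xi})} \cdot \underbrace{\mathrm{LN}\bigl(\mathbf{r}(\mathrm{s}; \boldsymbol{\theta}');\, \boldsymbol{\gamma},\, \boldsymbol{\beta}\bigr)}_{h(\boldsymbol{\xi}, \boldsymbol{\psi})},
$$
where $g$ is linear in $\mathbf{u}$ (hence $C^\infty$) and $h$ is $C^2$ as a composition of real-analytic maps by \autoref{prop:modules-ra}. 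Composing with $F(\mathbf{z}) := \mathrm{CrossEntropy}(\mathrm{softmax}(\mathbf{z}), \mathbf{p})$, the loss $\mathcal{L} = F \circ \mathbf{z}$ fits the form $R = \mathcal{L} \circ f$ required by \autoref{lem:zero-gate-loss-r}. This partition is the key design choice: grouping $\boldsymbol{\beta}$ with $\mathbf{u}$ on the $\boldsymbol{\xi}$ side absorbs the cross-term $\mathbf{I}_d \otimes \mathbf{w}$ of \autoref{lem:subhessian-spectrum} into the $\boldsymbol{\xi}\boldsymbol{\xi}$ block, while placing $\boldsymbol{\gamma}$ and $\boldsymbol{\theta}'$ on the $\boldsymbol{\psi}$ side exploits that at the witness $\boldsymbol{\theta}_\star = \mathbf{0}_p$ the LayerNorm scale $\boldsymbol{\gamma} = \mathbf{0}_d$ annihilates first-order sensitivity of $h$ to $\boldsymbol{\psi}$-perturbations.

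Next, I would verify the zero-gate hypothesis: since $g(\boldsymbol{\xi}_\star) = \mathrm{mat}_{n,d}(\mathbf{0}_{nd}) = \mathbf{0}_{n \times d}$, \autoref{lem:zero-gate-loss-r} forces the $\boldsymbol{\xi}\boldsymbol{\psi}$, $\boldsymbol{\psi}\boldsymbol{\xi}$, and $\boldsymbol{\psi}\boldsymbol{\psi}$ blocks of $\nabla^2 \mathcal{L}(\boldsymbol{\theta}_\star)$ to vanish. The remaining $\boldsymbol{\xi}\boldsymbol{\xi}$ block is, by construction, the Hessian of $\mathcal{L}$ with respect to $(\mathbf{u}, \boldsymbol{\beta})$ with $(\boldsymbol{\gamma}, \boldsymbol{\theta}')$ frozen at zero, which is exactly $\mathbf{B}$ from \autoref{lem:subhessian-spectrum} (whose standing assumption $\boldsymbol{\gamma} = \mathbf{0}_d$ is automatically met here). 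This gives the claimed block-diagonal decomposition. Finally, \autoref{lem:spectrum-union-block} with a zero block of size $q = d + p'$ yields $\mathrm{spec}(\nabla^2\mathcal{L}(\boldsymbol{\theta}_\star)) = \mathrm{spec}(\mathbf{B}) \cup \{0\}$ with $q$ extra zeros; combined with the multiplicities from \autoref{lem:subhessian-spectrum}, the total zero multiplicity becomes $d(n-1) + d + p' = dn + p' = p - 2d$, matching the claim. No step is technically demanding once the partition is in place; the main subtlety is really just choosing the correct grouping of parameters so that \autoref{lem:zero-gate-loss-r} applies cleanly and \autoref{lem:subhessian-spectrum} computes precisely the surviving block.
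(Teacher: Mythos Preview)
Your proposal is correct and follows the same approach as the paper: partition into $\boldsymbol{\xi}=(\mathbf{u},\boldsymbol{\beta})$ and $\boldsymbol{\psi}=(\boldsymbol{\gamma},\boldsymbol{\theta}')$, apply \autoref{lem:zero-gate-loss-r} with $g(\boldsymbol{\xi})=\mathrm{mat}_{n,d}(\mathbf{u})$ vanishing at the witness, identify the surviving $\boldsymbol{\xi}\boldsymbol{\xi}$ block as $\mathbf{B}$ via \autoref{lem:subhessian-spectrum}, and read off the spectrum via \autoref{lem:spectrum-union-block}. Your choice $h(\boldsymbol{\xi},\boldsymbol{\psi})=\mathrm{LN}(\mathbf{r}(\mathrm{s};\boldsymbol{\theta}');\boldsymbol{\gamma},\boldsymbol{\beta})$ is in fact slightly cleaner than the paper's $h(\boldsymbol{\xi},\boldsymbol{\psi})=\boldsymbol{\beta}$, since it makes the factorization $f=gh$ equal the true logits identically rather than only on the slice $\boldsymbol{\gamma}=\mathbf{0}$.
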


\begin{proof}
Set $\boldsymbol{\gamma} = \mathbf{0}_d$.  Then the unembedding LayerNorm output is constant,
$\mathrm{LN}(\mathbf{h}) \equiv \boldsymbol{\beta}$, so the logits equal $\mathbf{z} = \mathbf{U} \, \boldsymbol{\beta}$.
Hence, in a neighborhood of $\boldsymbol{\theta}_\star$, the loss depends only on
$(\mathbf{u}, \boldsymbol{\beta})$ and is independent of $(\boldsymbol{\gamma}, \boldsymbol{\theta}')$.

We will apply \Cref{lem:zero-gate-loss-r} with the open set $\boldsymbol{\mathcal{U}} = \mathbb{R}^{nd+2d+p'}$, coordinates $\boldsymbol{\xi} = (\mathbf{u}, \boldsymbol{\beta})$ and $\boldsymbol{\psi} = (\boldsymbol{\gamma}, \boldsymbol{\theta}')$ and with $n = |\mathcal{V}|$, $r = d$. Define
$$
g(\boldsymbol{\xi}) := \mathrm{mat}_{n,d}(\mathbf{u}) \in \mathbb{R}^{n\times d}, \qquad h(\boldsymbol{\xi},\boldsymbol{\psi}) := \boldsymbol{\beta} \in \mathbb{R}^{d},
$$
so that
$$
f(\boldsymbol{\xi},\boldsymbol{\psi}) := g(\boldsymbol{\xi})\,h(\boldsymbol{\xi},\boldsymbol{\psi}) \;=\; \mathbf{U}\,\boldsymbol{\beta}\in\mathbb{R}^{n},
$$
and, with $\mathcal{L}(\mathbf{z}):=\mathrm{CrossEntropy}\big(\mathrm{softmax}(\mathbf{z}),\mathbf{p}\big)$,
$$
R(\boldsymbol{\xi}, \boldsymbol{\psi}) := \mathcal{L} \big( f(\boldsymbol{\xi}, \boldsymbol{\psi}) \big) = \mathrm{CrossEntropy} \big( \mathrm{softmax}(\mathbf{U}\boldsymbol{\beta}), \mathbf{p} \big).
$$
At the witness $\mathbf{v}_0 = (\boldsymbol{\xi}_0, \boldsymbol{\psi}_0)$ we have $g(\boldsymbol{\xi}_0)=\mathbf{0}_{n\times d}$, so by \Cref{lem:zero-gate-loss-r} all mixed and $\boldsymbol{\psi}$–only second partials of $R$ vanish at $\mathbf{v}_0$, i.e.
$$
\nabla^2 R(\mathbf{v}_0) =
\begin{pmatrix}
\nabla^2_{(\mathbf{u}, \boldsymbol{\beta})} R(\mathbf{v}_0) & \mathbf{0}\\
\mathbf{0} & \mathbf{0}
\end{pmatrix}.
$$
Identifying $R(\boldsymbol{\xi},\boldsymbol{\psi})\equiv \mathcal{L}(\boldsymbol{\theta})$ under the correspondence above yields
$$
\nabla^2 \mathcal{L}(\boldsymbol{\theta}_\star) =
\begin{pmatrix}
\nabla^2_{(\mathbf{u}, \boldsymbol{\beta})} \mathcal{L}(\boldsymbol{\theta}_\star) & \mathbf{0} \\
\mathbf{0} & \mathbf{0}
\end{pmatrix}.
$$

Combining, \Cref{lem:spectrum-union-block,lem:subhessian-spectrum}, we get that 
\begin{align*}
\mathrm{spec}\big(\nabla^2 \mathcal{L}(\boldsymbol{\theta}_\star)\big) &= \mathrm{spec}\big( \nabla^2_{(\mathbf{u}, \boldsymbol{\beta})} \mathcal{L}(\boldsymbol{\theta}_\star) \big)\ \cup\ \{ 0 \}^{\,d+p'} \\
&= \Big\{ \pm\|\mathbf{w}\|_2\ \text{(each mult.\ $d$)},\ 0\ \text{(mult.\ $d(n-1)+d+p'$)}\Big\}.
\end{align*}
Since $p = nd + 2d + p'$, the multiplicity of $0$ equals $p - 2d$, which yields the claimed spectrum.
\end{proof}

\begin{theorem}[GD Jacobian is nondegenerate a.e.]\label{thm:gd-jacobian-ae-ref}
Fix a finite vocabulary $\mathcal{V}$, a context bound $K\in\mathbb{N}$, and the Transformer language model $f$ of \Cref{def:tlm}. For any sample $(\mathrm{s},\mathbf{p})\in \mathcal{V}^{\le K}\times\Delta^{|\mathcal{V}|-1}$ and any learning rate $\eta\in(0,1)$, let $\phi : \mathbb{R}^p \to \mathbb{R}^p$ be the gradient-descent update, defined as:
$$
\phi(\boldsymbol\theta)\;=\;\boldsymbol\theta\;-\;\eta\,\nabla_{\boldsymbol\theta}\mathcal{L}_{\mathrm{s},\mathbf{p}}(\boldsymbol\theta),
$$
where $\mathcal{L}_{\mathrm{s}, \mathbf{p}} : \mathbb{R}^p \to \mathbb{R}$ is the standard 
Cross Entropy loss:
$$
\mathcal{L}_{\mathrm{s},\mathbf{p}}(\boldsymbol\theta)
=\mathrm{CrossEntropy}\big(f(\mathrm{s} \, ; \,\boldsymbol\theta),\mathbf{p}\big).
$$
Then the critical set 
$$
\mathcal{C} \; := \; \{ \boldsymbol{\theta} \in \mathbb{R}^p : \det{D\phi(\boldsymbol{\theta})} = 0 \}
$$
has Lebesgue measure zero in $\mathbb R^p$. 
\end{theorem}

\begin{proof}
By \Cref{prop:modules-ra,prop:log-real-analytic} and the closure properties of real analyticity,
$\mathcal{L}_{\mathrm{s}, \mathbf{p}}$ is real-analytic; hence so are its gradient and Hessian.
Therefore $\phi$ is real-analytic \cite[Thm.~1.1.15]{Lewis2014HolomorphicRealAnalyticCalculus} and
$$
D\phi(\boldsymbol{\theta}) = \mathbf{I}_p - \eta \, \nabla^2_{\boldsymbol{\theta}}\mathcal{L}_{\mathrm{s}, \mathbf{p}}(\boldsymbol{\theta}).
$$
Since the determinant is a polynomial in the entries, 
$\boldsymbol{\theta} \mapsto \det{D\phi(\boldsymbol{\theta})}$ is real-analytic.

It is not identically zero: at the witness $\boldsymbol{\theta}_\star = \mathbf{0}_p$,
\Cref{lem:full-hessian-spectrum-compact} gives
$$
\mathrm{spec} \big( \nabla^2 \mathcal{L}(\boldsymbol{\theta}_\star) \big) = \{\underbrace{ +\| \mathbf{w} \|_2, \ldots, +\| \mathbf{w} \|_2}_{d},
\underbrace{ -\| \mathbf{w} \|_2, \ldots, -\| \mathbf{w} \|_2}_{d},
\underbrace{0, \ldots, 0}_{p-2d}\},
\quad
\mathbf{w} := \tfrac1n \mathbf{1} - \mathbf{p} .
$$
Hence the eigenvalues of $D\phi(\boldsymbol{\theta}_\star) = \mathbf{I}_p - \eta \, \nabla^2 \mathcal{L}(\boldsymbol{\theta}_\star)$ are
$$
\underbrace{1 - \eta \| \mathbf{w} \|_2}_{d \, \text{times}}, \quad
\underbrace{1 + \eta \| \mathbf{w} \|_2}_{d \, \text{times}}, \quad
\underbrace{1}_{p - 2d \, \text{times}},
$$
so
$$
\det D\phi(\boldsymbol{\theta}_\star) = \left( 1 -\eta^2\|\mathbf{w}\|_2^2 \right)^d > 0.
$$ 
Thus $\det D\phi$ is a nontrivial real-analytic function.
By \Cref{thm:zero-measure-roots}, its zero set has Lebesgue measure $0$.
\end{proof}

\subsubsection{Gradient Descent preserves absolute continuity}\label{subsec:gd-preserves-ac}

With the witness in hand and the critical set shown to be measure-zero (\Cref{thm:gd-jacobian-ae-ref}), we now carry out Steps~3--4 of the roadmap: cover the non-critical region by countably many diffeomorphic charts (\Cref{lem:countable-chart-cover}), use the change-of-variables formula to show preimages of null sets remain null (\Cref{lem:preimage-null-null}), and conclude that one GD step preserves absolute continuity (\Cref{thm:ac-gd}). Iterating yields the finite-horizon corollary (\Cref{cor:finite-steps-ac}).

\begin{lemma}[Countable chart cover of $\mathbb{R}^p\setminus\mathcal{C}$]\label{lem:countable-chart-cover}
Consider the setup of \Cref{thm:ac-gd}. 
In particular, let $\phi:\mathbb R^p\to\mathbb R^p$ be the one-step GD map from that theorem:
\begin{equation}\label{eq:gd-map2}
    \phi(\boldsymbol\theta) = \boldsymbol{\theta} - \eta \,\nabla_{\boldsymbol{\theta}} \mathcal{L}_{\mathrm{s},\mathbf{p}}(\boldsymbol{\theta}),
\end{equation}
with stepsize $\eta \in (0,1)$, and the measure-zero critical-set (\Cref{thm:gd-jacobian-ae-ref}):
$$
\mathcal{C} \; := \; \{ \boldsymbol{\theta} \in \mathbb{R}^p : \det{D\phi(\boldsymbol{\theta})} = 0 \}.
$$
Then there exist open sets $(\boldsymbol{\mathcal{U}}_k)_{k \geq 1}$ covering $\boldsymbol{\mathcal{X}} := \mathbb{R}^p \setminus \mathcal{C}$ such that, for each $k$,
the restriction $\phi_k := \phi|_{\boldsymbol{\mathcal{U}}_k} : \boldsymbol{\mathcal{U}}_k \to \boldsymbol{\mathcal{V}}_k := \phi(\boldsymbol{\mathcal{U}}_k)$ is a $C^1$ diffeomorphism with $C^1$ inverse
$\psi_k := \phi_k^{-1}$.
\end{lemma}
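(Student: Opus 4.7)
The plan is to combine a pointwise application of the Inverse Function Theorem with a second-countability argument to extract a countable subcover. Concretely, I would first note that $\mathcal{C}$ is closed in $\mathbb{R}^p$: since $\phi$ is $C^1$ (in fact real-analytic, as used in \autoref{thm:gd-jacobian-ae-ref}), the map $\boldsymbol\theta\mapsto \det D\phi(\boldsymbol\theta)$ is continuous, so $\mathcal{C} = (\det D\phi)^{-1}(\{0\})$ is closed and $\boldsymbol{\mathcal{X}} = \mathbb{R}^p\setminus \mathcal{C}$ is open.

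Next, I would work pointwise: fix $\boldsymbol\theta_0\in \boldsymbol{\mathcal{X}}$. By definition of $\boldsymbol{\mathcal{X}}$ we have $\det D\phi(\boldsymbol\theta_0)\neq 0$, so \autoref{thm:inverse-function} supplies open sets $\boldsymbol{\mathcal{U}}_{\boldsymbol\theta_0}\subseteq \mathbb{R}^p$ with $\boldsymbol\theta_0\in \boldsymbol{\mathcal{U}}_{\boldsymbol\theta_0}$ and $\boldsymbol{\mathcal{V}}_{\boldsymbol\theta_0}\subseteq \mathbb{R}^p$ with $\phi(\boldsymbol\theta_0)\in \boldsymbol{\mathcal{V}}_{\boldsymbol\theta_0}$ such that
\[
\phi\big|_{\boldsymbol{\mathcal{U}}_{\boldsymbol\theta_0}}:\boldsymbol{\mathcal{U}}_{\boldsymbol\theta_0}\longrightarrow \boldsymbol{\mathcal{V}}_{\boldsymbol\theta_0}
\]
is a $C^1$ diffeomorphism. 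Replacing $\boldsymbol{\mathcal{U}}_{\boldsymbol\theta_0}$ by $\boldsymbol{\mathcal{U}}_{\boldsymbol\theta_0}\cap \boldsymbol{\mathcal{X}}$ (still open, still containing $\boldsymbol\theta_0$), we may assume $\boldsymbol{\mathcal{U}}_{\boldsymbol\theta_0}\subseteq \boldsymbol{\mathcal{X}}$; the restriction of a $C^1$ diffeomorphism to an open subset of its domain remains a $C^1$ diffeomorphism onto its image.

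Ranging $\boldsymbol\theta_0$ over $\boldsymbol{\mathcal{X}}$ produces an open cover $\{\boldsymbol{\mathcal{U}}_{\boldsymbol\theta_0}\}_{\boldsymbol\theta_0\in \boldsymbol{\mathcal{X}}}$ of $\boldsymbol{\mathcal{X}}$. To pass to a countable subcover, I would invoke the Lindelöf property: by \autoref{prop:std-Rp}, $\mathbb{R}^p$ is second-countable, every subspace of a second-countable space is second-countable, and every second-countable space is Lindelöf. Hence $\boldsymbol{\mathcal{X}}$, viewed with its subspace topology, is Lindelöf, and the open cover above admits a countable subcover $(\boldsymbol{\mathcal{U}}_k)_{k\geq 1}$. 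Setting $\phi_k:=\phi|_{\boldsymbol{\mathcal{U}}_k}$ and $\boldsymbol{\mathcal{V}}_k:=\phi(\boldsymbol{\mathcal{U}}_k)$, each $\phi_k$ is a $C^1$ diffeomorphism onto $\boldsymbol{\mathcal{V}}_k$ with $C^1$ inverse $\psi_k:=\phi_k^{-1}$, which establishes the claim.

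The argument has no real obstacle: the Inverse Function Theorem is purely local and is applied pointwise, and the only nontrivial move is bookkeeping--ensuring that the neighborhoods we select lie inside $\boldsymbol{\mathcal{X}}$ (easy, since $\boldsymbol{\mathcal{X}}$ is open) and that an uncountable cover can be thinned out (handled by Lindelöf). The mildest subtle point worth double-checking is that the open sets returned by the Inverse Function Theorem form an \emph{open} cover of $\boldsymbol{\mathcal{X}}$ in the subspace topology, which is immediate since each $\boldsymbol{\mathcal{U}}_{\boldsymbol\theta_0}$ is already open in $\mathbb{R}^p$ and contained in $\boldsymbol{\mathcal{X}}$.
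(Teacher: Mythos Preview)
Your proposal is correct and matches the paper's proof essentially step for step: show $\boldsymbol{\mathcal{X}}$ is open via continuity of $\det D\phi$, apply the Inverse Function Theorem pointwise to get local $C^1$ charts, then extract a countable subcover using second-countability/Lindel\"of. The only cosmetic difference is that the paper observes the IFT chart $\boldsymbol{\mathcal{U}}_{\boldsymbol\theta_0}$ already lies in $\boldsymbol{\mathcal{X}}$ (since $D\phi$ is invertible throughout it), whereas you achieve the same containment by intersecting with $\boldsymbol{\mathcal{X}}$; both work.
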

\begin{proof} $\newline$

\vspace{-10px}
\textbf{1)} $\boldsymbol{\mathcal{X}}$ \textbf{is open:}
By \Cref{prop:modules-ra,prop:log-real-analytic} and the closure rules of real-analyticity,
$\mathcal{L}_{\mathrm{s}, \mathbf{p}}$ is $C^2$, hence $\phi$ is $C^1$.
The map $\boldsymbol{\theta} \mapsto D\phi(\boldsymbol{\theta})$ is continuous, and the determinant is a continuous polynomial in the entries, so $g(\boldsymbol{\theta}) := \det D\phi(\boldsymbol{\theta})$ is continuous.
Therefore $\mathcal{C} = g^{-1}(\{ 0 \})$ is closed \cite[Thm.~4.8]{RudinPM} and $\boldsymbol{\mathcal{X}} = \mathbb{R}^p \setminus \mathcal{C}$ is open.

\smallskip
\textbf{2) Local diffeomorphisms by the Inverse Function Theorem:}
Fix $\boldsymbol{\theta} \in \boldsymbol{\mathcal{X}}$. Then $g(\boldsymbol{\theta}) \neq 0$, so by the Inverse Function Theorem
(\Cref{thm:inverse-function}) there exist open neighborhoods
$\boldsymbol{\mathcal{U}}_{\boldsymbol{\theta}} \ni \boldsymbol{\theta}$ and $\boldsymbol{\mathcal{V}}_{\boldsymbol{\theta}} \ni \phi(\boldsymbol{\theta})$ such that
$$
\phi_{\boldsymbol{\theta}} := \phi|_{\boldsymbol{\mathcal{U}}_{\boldsymbol{\theta}}} : \boldsymbol{\mathcal{U}}_{\boldsymbol{\theta}} \to \boldsymbol{\mathcal{V}}_{\boldsymbol{\theta}}
$$
is a $C^1$ diffeomorphism with $C^1$ inverse $\psi_{\boldsymbol{\theta}} := \phi_{\boldsymbol{\theta}}^{-1}$.
Moreover,
$$
D\psi_{\boldsymbol{\theta}} (\phi(\mathbf{x})) = \big( D\phi(\mathbf{x}) \big)^{-1} \qquad \forall \, \mathbf{x} \in \boldsymbol{\mathcal{U}}_{\boldsymbol{\theta}}.
$$
In particular $D\phi(\mathbf{x})$ is invertible for all $\mathbf{x} \in \boldsymbol{\mathcal{U}}_{\boldsymbol{\theta}}$, whence $\boldsymbol{\mathcal{U}}_{\boldsymbol{\theta}} \subset \boldsymbol{\mathcal{X}}$.
Thus $\{ \boldsymbol{\mathcal{U}}_{\boldsymbol{\theta}} \}_{\boldsymbol{\theta} \in \boldsymbol{\mathcal{X}}}$ is an open cover of $\boldsymbol{\mathcal{X}}$ by IFT charts.

\smallskip
\textbf{3) Select a countable subcover:}
By \Cref{prop:std-Rp}(3), $\mathbb{R}^p$ is second-countable; subspaces of second-countable spaces are second-countable, hence $\boldsymbol{\mathcal{X}}$ is second-countable. By \Cref{prop:std-Rp}(4),
every open cover of a second-countable space admits a countable subcover. Therefore there exist
points $\boldsymbol{\theta}_1, \boldsymbol{\theta}_2, \ldots \in \boldsymbol{\mathcal{X}}$ such that $\boldsymbol{\mathcal{X}} = \bigcup_{k=1}^{\infty} \boldsymbol{\mathcal{U}}_{\boldsymbol{\theta}_k}$.

Set $\boldsymbol{\mathcal{U}}_k := \boldsymbol{\mathcal{U}}_{\boldsymbol{\theta}_k}$, $\boldsymbol{\mathcal{V}}_k := \boldsymbol{\mathcal{V}}_{\boldsymbol{\theta}_k}$, and $\phi_k := \phi|_{\boldsymbol{\mathcal{U}}_k} = \phi_{\boldsymbol{\theta}_k}$, $\psi_k := \psi_{\boldsymbol{\theta}_k}$.
Each $\phi_k$ is a $C^1$ diffeomorphism with $C^1$ inverse $\psi_k$ by Step~2.
This yields the desired countable chart cover of $\boldsymbol{\mathcal{X}}$.
\end{proof}

\begin{theorem}[Change of Variables {\citealt[Thm.~2.47(b)]{Folland1999-ad}}]\label{thm:COV-cite}
Let $\boldsymbol{\mathcal{U}}, \boldsymbol{\mathcal{V}} \subseteq \mathbb{R}^p$ be open and $\psi : \boldsymbol{\mathcal{V}} \to \boldsymbol{\mathcal{U}}$ a $C^1$ diffeomorphism. If $\boldsymbol{\mathcal{E}} \subseteq \boldsymbol{\mathcal{V}}$ is Lebesgue measurable, then
$$
\mathrm{Leb}_p \big( \psi(\boldsymbol{\mathcal{E}}) \big) = \int_{\boldsymbol{\mathcal{E}}} \big| \det{D\psi(\mathbf{y})} \big| \, d\mathbf{y}.
$$
\end{theorem}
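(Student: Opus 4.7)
The plan is to prove the set-level identity by reducing it to the classical volume-scaling law for linear maps via a covering argument. First, I would reduce to the case where $\boldsymbol{\mathcal{E}}$ is a compact subset of $\boldsymbol{\mathcal{V}}$: by inner regularity of Lebesgue measure there exist compacts $\boldsymbol{\mathcal{K}}_n \nearrow \boldsymbol{\mathcal{E}}$, and since $\psi$ is continuous, $\psi(\boldsymbol{\mathcal{K}}_n) \nearrow \psi(\boldsymbol{\mathcal{E}})$; applying monotone convergence to both sides passes the identity from compacts to arbitrary measurable sets, with the continuous (hence locally integrable) density $|\det D\psi|$ handling the right-hand side. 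Measurability of $\psi(\boldsymbol{\mathcal{E}})$ itself follows from the fact that $\psi$ maps $\sigma$-compacts to $\sigma$-compacts and preserves null sets; the null-set claim is handled separately by a local Lipschitz estimate on $\psi$ over any compact neighborhood.

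Second, given a compact $\boldsymbol{\mathcal{K}} \subset \boldsymbol{\mathcal{V}}$ and $\varepsilon > 0$, I would cover $\boldsymbol{\mathcal{K}}$ by finitely many disjoint half-open cubes $Q_1,\ldots,Q_N$ of mesh so small that on each enlarged cube $Q_i^\ast$ the Jacobian $D\psi$ varies by at most $\varepsilon$ in operator norm and $|\det D\psi|$ by at most $\varepsilon$; this uses uniform continuity of $D\psi$ on a compact neighborhood of $\boldsymbol{\mathcal{K}}$ inside $\boldsymbol{\mathcal{V}}$. On each $Q_i$, fixing a base point $\mathbf{y}_i$ and Taylor-expanding
\begin{equation*}
\psi(\mathbf{y}) \;=\; \psi(\mathbf{y}_i) + D\psi(\mathbf{y}_i)(\mathbf{y}-\mathbf{y}_i) + \mathbf{r}_i(\mathbf{y}),
\end{equation*}
the mean value inequality controls $\|\mathbf{r}_i(\mathbf{y})\|_2 \leq \varepsilon \cdot \mathrm{diam}(Q_i)$, so $\psi(Q_i)$ is sandwiched between two affine images of $Q_i$ dilated by factors $1 \pm C\varepsilon$ around $\psi(\mathbf{y}_i)$.

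Third, I would invoke the classical linear case, $\mathrm{Leb}_p(\mathbf{A}\boldsymbol{\mathcal{F}}) = |\det \mathbf{A}|\,\mathrm{Leb}_p(\boldsymbol{\mathcal{F}})$ (proved by decomposing $\mathbf{A}$ into elementary factors and checking each on cubes), to conclude $\mathrm{Leb}_p(\psi(Q_i)) = |\det D\psi(\mathbf{y}_i)|\,\mathrm{Leb}_p(Q_i)\,(1 + O(\varepsilon))$. Summing over $i$, the totals $\sum_i \mathrm{Leb}_p(\psi(Q_i)) = \mathrm{Leb}_p(\psi(\boldsymbol{\mathcal{K}}))$ and $\sum_i |\det D\psi(\mathbf{y}_i)|\,\mathrm{Leb}_p(Q_i)$ coincide up to $O(\varepsilon)\,\mathrm{Leb}_p(\boldsymbol{\mathcal{K}})$; the latter sum is a Riemann approximation of $\int_{\boldsymbol{\mathcal{K}}} |\det D\psi(\mathbf{y})|\,d\mathbf{y}$ for the continuous integrand, so it converges to that integral as the mesh shrinks. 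Sending the mesh to zero and then $\varepsilon \to 0$ yields the identity on compacts, and the first-step reduction completes the proof for general measurable $\boldsymbol{\mathcal{E}}$.

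The hard part will be making the two-sided sandwich $\psi(Q_i) \supseteq$ (shrunken affine image) rigorous: while the outer inclusion follows immediately from the Taylor remainder bound, the inner inclusion requires openness of $\psi$ near $\mathbf{y}_i$ with \emph{quantitative} control on the size of the image neighborhood. This is obtained from invertibility of $D\psi(\mathbf{y}_i)$ and a Banach-fixed-point form of the inverse function theorem that exhibits a ball of radius $\asymp \|D\psi(\mathbf{y}_i)^{-1}\|^{-1}\,\mathrm{diam}(Q_i)$ inside $\psi(Q_i)$. Uniformity of this lower radius across the cover is what keeps the $O(\varepsilon)$ error constants uniform, and it is secured by continuity of $\mathbf{y} \mapsto D\psi(\mathbf{y})^{-1}$ on the compact neighborhood. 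Once this uniform bi-Lipschitz control is in hand, the remaining steps---linear scaling, Riemann-to-Lebesgue convergence for continuous integrands, and monotone convergence to general measurable sets---are entirely standard.
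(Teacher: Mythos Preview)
The paper does not prove this statement at all: it is quoted verbatim as a standard result from \citet[Thm.~2.47(b)]{Folland1999-ad} and invoked as a black box inside the proof of \autoref{lem:preimage-null-null}. There is therefore no ``paper's proof'' to compare your proposal against.

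That said, your outline is the classical route to the change-of-variables formula (compact reduction via inner regularity, fine cube cover, linear approximation through Taylor expansion, the elementary linear volume-scaling law, and a Riemann-sum limit), and it is essentially how the result is established in standard references including Folland. Your identification of the delicate step---the quantitative inner inclusion requiring a uniform inverse-function-theorem bound on $\|D\psi(\cdot)^{-1}\|$ over the compact---is accurate. One small wrinkle worth tightening: when you write $\sum_i \mathrm{Leb}_p(\psi(Q_i)) = \mathrm{Leb}_p(\psi(\boldsymbol{\mathcal{K}}))$, your disjoint cubes typically cover a neighborhood of $\boldsymbol{\mathcal{K}}$ rather than $\boldsymbol{\mathcal{K}}$ exactly, so you either need to intersect with $\boldsymbol{\mathcal{K}}$ throughout or argue separately that the excess mass vanishes as the mesh shrinks. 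This is routine but should be said. For the purposes of this paper, however, you could simply cite the theorem as the authors do.
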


\begin{lemma}[Pre-images of null sets are null]\label{lem:preimage-null-null}
Consider the setup of \Cref{thm:ac-gd}, in particular the $C^1$ gradient descent map:
$$
\phi(\boldsymbol{\theta}) = \boldsymbol{\theta} - \eta \nabla_{\boldsymbol{\theta}} \mathcal{L}_{\mathrm{s}, \mathbf{p}}(\boldsymbol{\theta}),\qquad \eta\in(0,1),
$$
and its critical set $\mathcal{C} := \{ \boldsymbol{\theta} \in \mathbb{R}^p : \det{D\phi(\boldsymbol{\theta})} = 0 \}$.
Then, for every measurable $\boldsymbol{\mathcal{A}} \subseteq \mathbb{R}^p$,
$$
\mathrm{Leb}_p(\boldsymbol{\mathcal{A}}) = 0 \implies \mathrm{Leb}_p \big( \phi^{-1}(\boldsymbol{\mathcal{A}}) \big)=0.
$$
\end{lemma}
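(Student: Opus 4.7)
The plan is to decompose $\phi^{-1}(\boldsymbol{\mathcal{A}})$ according to the critical set and handle each piece separately. First I would write
\[
\phi^{-1}(\boldsymbol{\mathcal{A}}) \;=\; \bigl(\phi^{-1}(\boldsymbol{\mathcal{A}})\cap \mathcal{C}\bigr) \;\cup\; \bigl(\phi^{-1}(\boldsymbol{\mathcal{A}})\cap \boldsymbol{\mathcal{X}}\bigr),
\]
where $\boldsymbol{\mathcal{X}} = \mathbb{R}^p\setminus \mathcal{C}$. The first piece is contained in $\mathcal{C}$, and by Theorem~\ref{thm:gd-jacobian-ae-ref} we have $\mathrm{Leb}_p(\mathcal{C})=0$, so this piece contributes nothing.

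Next I would handle $\phi^{-1}(\boldsymbol{\mathcal{A}})\cap \boldsymbol{\mathcal{X}}$ using the countable chart cover of Lemma~\ref{lem:countable-chart-cover}. Writing $\boldsymbol{\mathcal{X}} = \bigcup_{k\ge 1}\boldsymbol{\mathcal{U}}_k$ with each $\phi_k:=\phi|_{\boldsymbol{\mathcal{U}}_k}:\boldsymbol{\mathcal{U}}_k\to\boldsymbol{\mathcal{V}}_k$ a $C^1$ diffeomorphism with inverse $\psi_k$, we have
\[
\phi^{-1}(\boldsymbol{\mathcal{A}})\cap \boldsymbol{\mathcal{X}} \;\subseteq\; \bigcup_{k\ge 1} \phi_k^{-1}(\boldsymbol{\mathcal{A}}\cap \boldsymbol{\mathcal{V}}_k) \;=\; \bigcup_{k\ge 1}\psi_k(\boldsymbol{\mathcal{A}}\cap \boldsymbol{\mathcal{V}}_k),
\]
where I used that on $\boldsymbol{\mathcal{U}}_k$ the preimage under $\phi$ coincides with the image under $\psi_k$ of the relevant slice in $\boldsymbol{\mathcal{V}}_k$.

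Now I would apply the change-of-variables formula (Theorem~\ref{thm:COV-cite}) to each diffeomorphism $\psi_k:\boldsymbol{\mathcal{V}}_k\to\boldsymbol{\mathcal{U}}_k$ with $\boldsymbol{\mathcal{E}}_k := \boldsymbol{\mathcal{A}}\cap \boldsymbol{\mathcal{V}}_k$:
\[
\mathrm{Leb}_p\bigl(\psi_k(\boldsymbol{\mathcal{E}}_k)\bigr) \;=\; \int_{\boldsymbol{\mathcal{E}}_k} \bigl|\det D\psi_k(\mathbf{y})\bigr|\,d\mathbf{y} \;=\; 0,
\]
because the Lebesgue integral of any measurable function over a null set is zero (note $\boldsymbol{\mathcal{E}}_k\subseteq \boldsymbol{\mathcal{A}}$ and $\mathrm{Leb}_p(\boldsymbol{\mathcal{A}})=0$, and $|\det D\psi_k|$ is continuous on $\boldsymbol{\mathcal{V}}_k$ hence measurable). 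Countable subadditivity then gives
\[
\mathrm{Leb}_p\bigl(\phi^{-1}(\boldsymbol{\mathcal{A}})\cap \boldsymbol{\mathcal{X}}\bigr) \;\le\; \sum_{k\ge 1}\mathrm{Leb}_p\bigl(\psi_k(\boldsymbol{\mathcal{E}}_k)\bigr) \;=\; 0,
\]
and combining with the $\mathcal{C}$ part yields $\mathrm{Leb}_p(\phi^{-1}(\boldsymbol{\mathcal{A}}))=0$.

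The only mild subtlety is measurability: one must verify that $\phi^{-1}(\boldsymbol{\mathcal{A}})$ is Lebesgue measurable so the above measure statements are well posed. Since $\boldsymbol{\mathcal{A}}$ is Lebesgue measurable and $\phi$ is $C^1$ (hence Borel-measurable and sending null sets to null sets on each chart by the argument above), standard measure theory guarantees that $\phi^{-1}$ of a Borel-modulo-null set remains Lebesgue measurable; if one prefers, it suffices to first reduce to a Borel null hull $\boldsymbol{\mathcal{A}}\subseteq \boldsymbol{\mathcal{B}}$ with $\mathrm{Leb}_p(\boldsymbol{\mathcal{B}})=0$ and apply the argument to $\boldsymbol{\mathcal{B}}$. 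The genuine technical steps (IFT charts, countable cover, change of variables) are all already available from the earlier lemmas, so there is no single hard obstacle once the decomposition and the change-of-variables bookkeeping are set up carefully.
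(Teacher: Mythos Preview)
Your proposal is correct and follows essentially the same route as the paper: decompose $\phi^{-1}(\boldsymbol{\mathcal{A}})$ along $\mathcal{C}$ and its complement, kill the $\mathcal{C}$ part via Theorem~\ref{thm:gd-jacobian-ae-ref}, and on $\boldsymbol{\mathcal{X}}$ use the countable chart cover of Lemma~\ref{lem:countable-chart-cover} together with the change-of-variables formula (Theorem~\ref{thm:COV-cite}) to show each $\psi_k(\boldsymbol{\mathcal{A}}\cap\boldsymbol{\mathcal{V}}_k)$ is null. Your extra remark on Lebesgue measurability (passing to a Borel null hull) is a nice touch that the paper leaves implicit.
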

\begin{proof}
Let $\boldsymbol{\mathcal{X}} = \mathbb{R}^p \setminus \mathcal{C}$ and decompose the pre-image:
$$
\phi^{-1}(\boldsymbol{\mathcal{A}})
=
\big( \phi^{-1}(\boldsymbol{\mathcal{A}}) \cap \mathcal{C} \big)  \cup \big( \phi^{-1}(\boldsymbol{\mathcal{A}}) \cap \boldsymbol{\mathcal{X}} \big).
$$
The first set is contained in $\mathcal{C}$, a measure zero set (\Cref{thm:gd-jacobian-ae-ref}), hence has $\mathrm{Leb}_p$–measure $0$.
By \Cref{lem:countable-chart-cover}, cover $\boldsymbol{\mathcal{X}}$ by countably many charts $\{ \boldsymbol{\mathcal{U}}_k \}$ on which $\phi_k := \phi|_{\boldsymbol{\mathcal{U}}_k}$ is a $C^1$ diffeomorphism onto $\boldsymbol{\mathcal{V}}_k := \phi(\boldsymbol{\mathcal{U}}_k)$ with inverse $\psi_k \in C^1(\boldsymbol{\mathcal{V}}_k \, ; \, \boldsymbol{\mathcal{U}}_k)$.
Then, it holds that:
$$
\phi^{-1}(\boldsymbol{\mathcal{A}}) \cap \boldsymbol{\mathcal{U}}_k =
\psi_k \big(\boldsymbol{\mathcal{A}} \cap \boldsymbol{\mathcal{V}}_k\big).
$$
Since $\mathrm{Leb}_p(\boldsymbol{\mathcal{A}}) = 0$ and both $\boldsymbol{\mathcal{A}}$ and $\boldsymbol{\mathcal{V}}_k$ are measurable, $\boldsymbol{\mathcal{A}} \cap \boldsymbol{\mathcal{V}}_k$ is measurable and has measure $0$.
By \Cref{thm:COV-cite} applied to $\psi_k$ with $\boldsymbol{\mathcal{E}} = \boldsymbol{\mathcal{A}} \cap \boldsymbol{\mathcal{V}}_k$,
$$
\mathrm{Leb}_p \big(\psi_k(\boldsymbol{\mathcal{A}} \cap \boldsymbol{\mathcal{V}}_k)\big) =
\int_{\boldsymbol{\mathcal{A}} \cap \boldsymbol{\mathcal{V}}_k} \big| \det{D\psi_k(\mathbf{y})} \big| \, d\mathbf{y}
=0.
$$
Therefore, each $\phi^{-1}(\boldsymbol{\mathcal{A}}) \cap \boldsymbol{\mathcal{U}}_k$ is null and because a countable union of null sets is null, it holds that:
$$
\mathrm{Leb}_p \big(\phi^{-1}(\boldsymbol{\mathcal{A}})\big) = 0.
$$
\end{proof}

\begin{theorem}[Preservation of absolute continuity under one GD step]\label{thm:ac-gd}
Consider the setup of \Cref{thm:gd-jacobian-ae-ref}. 
In particular, let $\phi:\mathbb R^p\to\mathbb R^p$ be the one-step GD map from that theorem:
\begin{equation}\label{eq:gd-map1}
    \phi(\boldsymbol\theta) = \boldsymbol{\theta} - \eta \,\nabla_{\boldsymbol{\theta}} \mathcal{L}_{\mathrm{s},\mathbf{p}}(\boldsymbol{\theta}),
\end{equation}
with stepsize $\eta \in (0,1)$. Then, gradient-descent preserves absolute continuity: for every absolutely continuous probability law $\mu$ on $\mathbb{R}^p$, its image under $\phi$ remains absolutely continuous:
$$
\phi_{\#}\mu \;\ll\; \mathrm{Leb}_p.
$$

Therefore, the updated parameters $\boldsymbol{\theta}' :=\phi(\boldsymbol{\theta})$ are absolutely continuous.

\end{theorem}
\begin{proof}
By \Cref{prop:modules-ra} and closure properties, $\mathcal{L}_{\mathrm{s},\mathbf{p}}$ is $C^2$, hence $\phi\in C^1$ and is Borel-measurable.
From \Cref{thm:gd-jacobian-ae-ref} the critical set
\[
\mathcal{C} \;:=\; \{\boldsymbol\theta\in\mathbb{R}^p:\det D\phi(\boldsymbol\theta)=0\}
\]
has $\mathrm{Leb}_p$-measure $0$. Therefore, the hypothesis of \Cref{lem:preimage-null-null} holds, and we have the property:
\begin{equation*}
\mathrm{Leb}_p(\boldsymbol{\mathcal{A}})=0
\quad\Longrightarrow\quad
\mathrm{Leb}_p\big(\phi^{-1}(\boldsymbol{\mathcal{A}})\big)=0
\qquad\text{for every measurable } \boldsymbol{\mathcal{A}}\subseteq\mathbb{R}^p. \tag{$\dagger$}
\end{equation*}
Let $\boldsymbol{\mathcal{A}}$ be any Borel set with $\mathrm{Leb}_p(\boldsymbol{\mathcal{A}})=0$. Then
\[
\phi_\#\mu(\boldsymbol{\mathcal{A}})
\;=\;
\mu\big(\phi^{-1}(\boldsymbol{\mathcal{A}})\big)
\;=\;0,
\]
because $\mu\ll \mathrm{Leb}_p$ and $\mathrm{Leb}_p\big(\phi^{-1}(\boldsymbol{\mathcal{A}})\big)=0$ by $(\dagger)$. Since this holds for every $\mathrm{Leb}_p$-null set $\boldsymbol{\mathcal{A}}$, we conclude $\phi_\#\mu\ll \mathrm{Leb}_p$.
\end{proof}

\begin{corollary}[Preservation of absolute continuity under finitely many GD steps]\label{cor:finite-steps-ac}
Fix a finite vocabulary $\mathcal{V}$, a context bound $K\in\mathbb{N}$, and the Transformer language model $f$ of \Cref{def:tlm}.
For $t=1,\ldots,T$, let $(\mathrm{s}_t,\mathbf{p}_t)\in \mathcal{V}^{\le K}\times\Delta^{|\mathcal{V}|-1}$ and $\eta_t\in(0,1)$, and define the $t$-th GD update
\[
\phi_t(\boldsymbol\theta)\;=\;\boldsymbol\theta\;-\;\eta_t\,\nabla_{\boldsymbol\theta}\mathcal{L}_{\mathrm{s}_t,\mathbf{p}_t}(\boldsymbol\theta),
\qquad
\mathcal{L}_{\mathrm{s}_t,\mathbf{p}_t}(\boldsymbol\theta)
=\mathrm{CrossEntropy}\big(f(\mathrm{s}_t \, ; \,\boldsymbol\theta),\mathbf{p}_t\big).
\]
Let the $T$-step update map be the composition
\[
\Phi \;:=\; \phi_T\circ\cdots\circ \phi_1 \;:\; \mathbb{R}^p \to \mathbb{R}^p.
\]
Then, for every absolutely continuous probability law $\mu$ on $\mathbb{R}^p$, its image under $\Phi$ remains absolutely continuous:
\[
\Phi_\#\mu \;\ll\; \mathrm{Leb}_p.
\]
Equivalently, if $\boldsymbol\theta^{(0)}\sim\mu$ with $\mu\ll\mathrm{Leb}_p$ and
\[
\boldsymbol\theta^{(t+1)} \;=\; \phi_t\big(\boldsymbol\theta^{(t)}\big), \quad t=0,\ldots,T-1,
\]
then the $T$-step parameters $\boldsymbol\theta^{(T)}=\Phi\big(\boldsymbol\theta^{(0)}\big)$ are absolutely continuous.
\end{corollary}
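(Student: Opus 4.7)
The plan is to reduce the corollary to Theorem~\ref{thm:ac-gd} by induction on the number of steps $T$, using the standard functoriality of pushforwards under composition.

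First, I would set up the induction by letting $\Phi_t := \phi_t \circ \cdots \circ \phi_1$ for $t = 1, \ldots, T$, with the convention $\Phi_0 = \mathrm{id}_{\mathbb{R}^p}$, and defining the intermediate laws $\mu_t := (\Phi_t)_\# \mu$. The goal becomes showing $\mu_T \ll \mathrm{Leb}_p$. The base case $t=0$ is the hypothesis $\mu_0 = \mu \ll \mathrm{Leb}_p$. For the inductive step, I would invoke the elementary identity $(\phi_{t} \circ \Phi_{t-1})_\# \mu = (\phi_t)_\# \big((\Phi_{t-1})_\# \mu\big) = (\phi_t)_\# \mu_{t-1}$, which follows directly from the definition of the pushforward via $\{\phi_t \circ \Phi_{t-1}\}^{-1}(A) = \Phi_{t-1}^{-1}(\phi_t^{-1}(A))$.

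Next, assuming inductively that $\mu_{t-1} \ll \mathrm{Leb}_p$, I would apply Theorem~\ref{thm:ac-gd} with the sample $(\mathrm{s}_t, \mathbf{p}_t)$ and learning rate $\eta_t \in (0,1)$ in place of $(\mathrm{s}, \mathbf{p}, \eta)$. Theorem~\ref{thm:ac-gd} asserts that for the single-step GD map $\phi_t$, the pushforward of \emph{any} absolutely continuous law is again absolutely continuous; applying this to $\mu_{t-1}$ gives $\mu_t = (\phi_t)_\# \mu_{t-1} \ll \mathrm{Leb}_p$. Iterating $T$ times yields $\Phi_\# \mu = \mu_T \ll \mathrm{Leb}_p$, which is the claim. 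The equivalent ``random variable'' formulation follows by noting that $\boldsymbol\theta^{(t)} = \Phi_t(\boldsymbol\theta^{(0)})$ has law $\mu_t$ by the pushforward definition.

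I do not expect any serious obstacle: the nontrivial analytic work (showing that $\det D\phi_t$ is a nontrivial real-analytic function, constructing the countable chart cover, and invoking the change-of-variables formula to deduce absolute continuity from a single GD step) has already been absorbed into Theorem~\ref{thm:ac-gd}. The only point deserving a line of care is that Theorem~\ref{thm:ac-gd} is stated uniformly in the sample $(\mathrm{s}, \mathbf{p})$ and $\eta \in (0,1)$, so it applies verbatim to each $\phi_t$ regardless of how the triples $(\mathrm{s}_t, \mathbf{p}_t, \eta_t)$ are chosen, including random or adversarial selections.
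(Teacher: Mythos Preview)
Your proposal is correct and follows essentially the same approach as the paper: iterate the single-step result across the $T$ steps. The paper phrases the iteration at the level of null-set preimages (invoking Lemma~\ref{lem:preimage-null-null} for each $\phi_t$ to conclude $\mathrm{Leb}_p\big(\Phi^{-1}(\boldsymbol{\mathcal{A}})\big)=0$ for any null $\boldsymbol{\mathcal{A}}$), whereas you phrase it at the dual level of pushforward laws via Theorem~\ref{thm:ac-gd}; these are equivalent formulations of the same induction.
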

\begin{proof}
Since the result of \Cref{lem:preimage-null-null} holds for each $\phi_t$, for any null set $\boldsymbol{\mathcal{A}}$, repeated preimages remain null:
\[
\mathrm{Leb}_p\big((\phi_T\circ\cdots\circ \phi_1)^{-1}(\boldsymbol{\mathcal{A}})\big)=0.
\]
The same argument as in the proof of \Cref{thm:ac-gd} then yields the claim.
\end{proof}

\clearpage
\section{Left-Invertibility Via SipIt}\label{sec:app:left-invertibility}

\paragraph{Goal.} We study when and how the hidden states of a causal decoder-only Transformer admit a \emph{left inverse}: given the layer-$\ell$ representation at position $t$ and the true prefix $\pi=\mathrm{s}_{1:t-1}$, can we recover the next token $\mathrm{s}_t$?

\paragraph{Main idea.} Under mild randomness in the parameters and causal masking, the \emph{one-step last-token map} that sends a candidate token $v$ to the layer-$\ell$ representation at position $t$ (conditioning on $\pi$) is almost-surely injective, and in fact has a positive separation margin. This yields a simple verifier: declare $v$ correct iff the observed hidden state lies in a small ball around $F(v;\pi,t)$.

\paragraph{Algorithmic consequence.} Because causality localizes the dependence to $(\pi,\mathrm{s}_t)$, we can invert an entire sequence sequentially with a single pass over the vocabulary per position. We call this procedure \textsc{SipIt} (\underline{S}equential \underline{I}nverse \underline{P}rompt via \underline{IT}erative updates), and we show exact (and robust) recovery holds almost surely, with worst-case time $\Theta(T|\mathcal V|)$.

\paragraph{Standing conventions for this section.}
Fix a layer index $\ell\in[L]$. For any input sequence $\mathrm{s}=\langle \mathrm{s}_1,\ldots,\mathrm{s}_T\rangle$, define the layer outputs row-wise by
\[
\mathbf{H}^{(0)}(\mathrm{s}) := \mathrm{Emb}(\mathrm{s}),\qquad
\mathbf{H}^{(\ell)}(\mathrm{s}) := \mathrm{TB}^{(\ell)}\!\big(\mathbf{H}^{(\ell-1)}(\mathrm{s})\big)
\ \in \ \mathbb{R}^{T\times d},
\]
and write $\mathbf{h}_t(\mathrm{s})$ to denote the row of $\mathbf{H}^{(\ell)}(\mathrm{s})$ at position $t$. Furthermore, we use $\oplus$ for sequence concatenation: if $s=\langle \mathrm{s}_1,\ldots,\mathrm{s}_{t-1}\rangle$ and $v\in\mathcal{V}$, then $s\oplus v=\langle \mathrm{s}_1,\ldots,\mathrm{s}_{t-1}, v\rangle$.

The parameters $\boldsymbol{\theta}$ and target layer $\ell$ are considered fixed and omitted for simplicity. 

\begin{assumption}[Causal self-attention throughout]\label{ass:causal}
Every attention layer in every block is \emph{causal} in the sense of \Cref{def:self-attn-causal-masked,def:self-attn-causal-proj}. Consequently, for any $\mathrm{s}$ and any $t\in[T]$,
\begin{equation}\label{eq:causal-row-dependence}
\mathbf{h}_t(\mathrm{s}) \; \text{depends only on the prefix} \; \langle \mathrm{s}_1,\ldots,\mathrm{s}_t\rangle .
\end{equation}
\end{assumption}

\begin{assumption}[Injectivity Assumption]\label{ass:inj}
    \textsc{SipIt} is applied to models initialized with parameters drawn from an absolutely continuous distribution and trained via (mini-batch) gradient descent with step sizes in $(0,1)$, as described in Appendix~\ref{sec:app:asinj}. Under these conditions, any network considered in the sequel is almost-surely injective (\Cref{thm:main}).
\end{assumption}

\subsection{One-Step Last-Token Maps}\label{subsec:onestep}

We first isolate the positionwise map that drives inversion. Fix a position $t$ and prefix $\pi\in\mathcal V^{t-1}$. The \emph{one-step map} $F(\cdot;\pi,t)$ sends a candidate token $v$ to the layer-$\ell$ hidden state at position $t$ obtained when the prefix is $\pi$ and the token at $t$ is $v$. Causality implies that $\mathbf h_t$ depends only on $(\pi,v)$ (not on any future tokens), and we show that, for almost all parameter settings, $F$ is injective with a strictly positive pairwise margin over $\mathcal V$.

\begin{definition}[One-step map at time $t$ under prefix $\pi$]\label{def:onestep-map}
Let $\pi\in\mathcal{V}^{t-1}$ be a fixed prefix (possibly $t=1$, when $\pi$ is empty). Define
\[
F:\ \mathcal{V}\longrightarrow \mathbb{R}^d,
\qquad
F(v \, ; \, \pi, t) \ :=\ \mathbf{h}_t(\mathrm{\pi} \oplus v) .
\]
\end{definition}

\begin{remark}
    $F$ is simply a function that returns the hidden output of token $v$ at the $\ell$ transformer block given that $\pi$ is used a fixed prefix. This map allows us to have a convenient notation for introducing results about inversion. Furthermore, since $F$ is built using $\ell$ transformer blocks, it is parameterized by $\boldsymbol\theta$. Nevertheless, for the sake of simplicity, we will refer to $F_{\ell, {\boldsymbol\theta}}$ simply as $F$.
\end{remark}

Once the One-step map (\Cref{def:onestep-map}) is introduced, one can present its a.s. injectivity through an application of the previously obtained result (\Cref{thm:main}). Furthermore, one can deploy the common prefix to introduce a stronger notion of injectivity: margin separation (\Cref{lem:onestep-margin}). 

\begin{theorem}[A.s.\ one-step injectivity]\label{thm:onestep-injective}
Fix $t$ and the prefix $\pi\in\mathcal{V}^{t-1}$. 
Under Assumptions~\ref{ass:causal} and \ref{ass:inj}, it holds that:
\[
\Pr\big[ \; \exists v \neq v' \in \mathcal{V} :  F(v \, ; \, \pi, t) = F(v' \, ; \, \pi, t) \; \big]\ =\ 0.
\]
Equivalently, $F$ is injective almost-surely.
\end{theorem}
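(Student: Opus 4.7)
The plan is to lift the one-step statement to the global sequence-level injectivity already proved in Theorem~\ref{thm:main}, so that no new real-analytic witness needs to be built at layer $\ell$. Concretely, I would fix a pair $v \neq v' \in \mathcal{V}$ and show that the event $\{F(v;\pi,t) = F(v';\pi,t)\}$ is contained in the probability-zero event $\{\mathbf{r}(\pi\oplus v) = \mathbf{r}(\pi\oplus v')\}$ from Theorem~\ref{thm:main}; a union bound over the $|\mathcal{V}|(|\mathcal{V}|-1)$ ordered pairs then upgrades the pairwise statement to the existential one in the theorem.

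The reduction proceeds as follows. First, I invoke Assumption~\ref{ass:causal} to argue that for every position $i < t$, the $i$-th row of $\mathbf{H}^{(\ell)}(\pi \oplus v)$ depends only on the tokens at positions $1,\ldots,i$, all of which lie in the shared prefix $\pi$. Hence the first $t-1$ rows of $\mathbf{H}^{(\ell)}(\pi\oplus v)$ and $\mathbf{H}^{(\ell)}(\pi\oplus v')$ agree deterministically, for every value of $\boldsymbol\theta$. If in addition the $t$-th rows coincided, that is, if $F(v;\pi,t) = F(v';\pi,t)$, then the full layer-$\ell$ matrices would be identical. Applying the remaining deterministic blocks $\mathrm{TB}^{(\ell+1)}, \ldots, \mathrm{TB}^{(L)}$ would then yield identical layer-$L$ matrices and, in particular, $\mathbf{r}(\pi\oplus v) = \mathbf{r}(\pi\oplus v')$. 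Since $\pi\oplus v \neq \pi\oplus v'$ as elements of $\mathcal{V}^{\le K}$ whenever $v \neq v'$, Theorem~\ref{thm:main} (applicable under Assumption~\ref{ass:inj}) guarantees that this event has probability zero, which closes the pairwise step.

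The main obstacle is verifying the causality clause cleanly: I need to confirm that in each block, row $i$ of the output is determined only by rows $1,\ldots,i$ of the input. This is immediate for causal attention in either formulation (Definitions~\ref{def:self-attn-causal-masked}/\ref{def:self-attn-causal-proj}, with equivalence by Proposition~\ref{prop:masked-vs-projection}), and is preserved by LayerNorm and the MLP since both act row-wise, as well as by the additive residual connections. A short induction on block depth, starting from the embedding layer (whose row $i$ depends only on $\mathrm{s}_i$ and position $i$), then propagates the prefix-only dependence through all $\ell$ blocks, delivering the causality fact used above without requiring any new analytic witness construction at the intermediate layer.
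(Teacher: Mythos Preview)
Your argument is correct and takes a genuinely different route from the paper. The paper proves the theorem by \emph{truncating} the network to its first $\ell$ blocks, observing that $F(v;\pi,t)=\mathbf{h}_t(\pi\oplus v)$ is then literally the last-token representation of a shorter Transformer applied to the finite family $\mathcal{S}_{t,\pi}=\{\pi\oplus v:v\in\mathcal V\}$, and invoking Corollary~\ref{cor:global-distinct-h1} for that truncated model. You instead keep the full $L$-block network, use causality to show the first $t{-}1$ rows of $\mathbf H^{(\ell)}$ agree for $\pi\oplus v$ and $\pi\oplus v'$, and conclude that a collision in the $t$-th row forces the entire layer-$\ell$ matrices to coincide, hence (by feeding forward through $\mathrm{TB}^{(\ell+1)},\ldots,\mathrm{TB}^{(L)}$) also $\mathbf r(\pi\oplus v)=\mathbf r(\pi\oplus v')$, which is a probability-zero event by Theorem~\ref{thm:main}. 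The paper's route is shorter on the page but implicitly asks the reader to accept that the witness construction and absolute-continuity arguments behind Corollary~\ref{cor:global-distinct-h1} carry over verbatim to the $\ell$-block model (they do, but it is an extra check). Your route avoids that re-verification entirely by reducing to the already-proved full-model statement, at the cost of the small causality induction you outline; that induction is straightforward since attention is causal and LayerNorm/MLP/residuals act row-wise, so your plan goes through without obstacles.
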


\begin{proof}
Set the finite family
$\mathcal{S}_{t,\pi}:=\{\pi\oplus v:\ v\in\mathcal{V}\}\subseteq \mathcal{V}^{t}$ and view $\mathbf{h}_t(\mathrm{s})$ as the last-token representation of the \emph{truncated} Transformer consisting of the first $\ell$ blocks. All assumptions used in \Cref{cor:global-distinct-h1} remain valid for this truncated model. Applying the corollary with $\mathcal{S}=\mathcal{S}_{t,\pi}$
yields, almost-surely, $\mathbf{h}_t(\mathrm{\pi} \oplus v) \neq \mathbf{h}_t(\mathrm{\pi} \oplus v')$ whenever $v \neq v'$. This is exactly the injectivity of $F$. 
\end{proof}

\begin{lemma}[Strict separation margin a.s.]\label{lem:onestep-margin}
Under the conditions of \Cref{thm:onestep-injective}, define the (data-dependent) margin
\[
\Delta_{\pi, t} \ :=\ \min_{v \neq v'\in\mathcal{V}} \big\|F(v \, ; \, \pi, t)-F(v' \, ; \, \pi, t)\big\|_2  
\]

Then,  
\[
\Pr[\Delta_{\pi, t}>0]=1.
\]
\end{lemma}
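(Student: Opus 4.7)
The plan is to derive the positive-margin statement directly from the almost-sure injectivity in \autoref{thm:onestep-injective} by exploiting the finiteness of the vocabulary. Concretely, I would observe that the minimum defining $\Delta_{\pi,t}$ ranges over the finite set of ordered pairs
\[
\mathcal{P} \ :=\ \{(v,v')\in\mathcal{V}\times\mathcal{V}:\ v\neq v'\},
\]
which has cardinality $|\mathcal{V}|(|\mathcal{V}|-1)<\infty$. For each such pair, the quantity $\|F(v;\pi,t)-F(v';\pi,t)\|_2$ is a nonnegative real random variable that is strictly positive iff $F(v;\pi,t)\neq F(v';\pi,t)$.

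Next I would invoke \autoref{thm:onestep-injective} itself: it states precisely that the event
\[
\mathcal{E}\ :=\ \{\omega:\ \exists (v,v')\in\mathcal{P},\ F(v;\pi,t)(\omega)=F(v';\pi,t)(\omega)\}
\]
has probability zero. On the complement $\mathcal{E}^{c}$, for every $(v,v')\in\mathcal{P}$ we have $F(v;\pi,t)\neq F(v';\pi,t)$, so $\|F(v;\pi,t)-F(v';\pi,t)\|_2>0$. Since $\mathcal{P}$ is finite, the minimum of finitely many strictly positive numbers is strictly positive, yielding $\Delta_{\pi,t}(\omega)>0$ on $\mathcal{E}^c$, i.e. $\Pr[\Delta_{\pi,t}>0]\geq \Pr[\mathcal{E}^c]=1$.

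If a more self-contained argument is preferred rather than invoking \autoref{thm:onestep-injective} as a black box, I would instead apply the union bound pair-by-pair: each per-pair collision event $\{F(v;\pi,t)=F(v';\pi,t)\}$ has measure zero (this is exactly the content of \autoref{thm:a.s.-distinct-h1} applied to the truncated $\ell$-block model with the two inputs $\pi\oplus v$ and $\pi\oplus v'$), and the finite union of the $|\mathcal{V}|(|\mathcal{V}|-1)$ such null events remains null. There is no real obstacle here; the only subtlety worth flagging is that the finiteness of $\mathcal{V}$ is essential, because without it one could only conclude $\|F(v;\pi,t)-F(v';\pi,t)\|_2>0$ pairwise, not that the infimum is positive. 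With $\mathcal{V}$ finite, the step from ``pairwise nonzero'' to ``strictly positive minimum'' is automatic, completing the proof.
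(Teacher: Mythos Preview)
Your proposal is correct and matches the paper's own proof essentially line for line: invoke \autoref{thm:onestep-injective} to get almost-sure injectivity of $F(\cdot\,;\,\pi,t)$ on the finite set $\mathcal{V}$, then note that the minimum of finitely many strictly positive pairwise distances is strictly positive. Your added remark that finiteness of $\mathcal{V}$ is what upgrades ``pairwise nonzero'' to ``positive minimum'' is a nice clarification the paper leaves implicit.
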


\begin{proof}
By \Cref{thm:onestep-injective}, with probability $1$ the set  
\[
\{F(v \, ; \, \pi, t) : v \in \mathcal{V}\}
\]  
consists of $|\mathcal{V}|$ distinct points in $\mathbb{R}^d$. On this event of full probability, every pairwise distance among these finitely many points is strictly positive, so their minimum is strictly positive as well.  

Thus, the event $\{\Delta_{\pi, t} > 0\}$ coincides with the event that $F$ is injective on $\mathcal{V}$. Since injectivity holds almost-surely by assumption, we conclude that $\Pr[\Delta_{\pi, t} > 0] = 1$.
\end{proof}

\subsection{The Core Routines: Local Verifiers, Acceptance Regions, and Policies}

Given $F(\cdot \, ; \, \pi,t)$, inversion reduces to a local hypothesis test: for an observed $\widehat{\mathbf h}_t$, which token's predicted representation is closest? We formalize this with \emph{acceptance regions}--closed balls around $F(v \, ; \, \pi,t)$--and a \emph{verifier} that accepts $v$ iff $\widehat{\mathbf h}_t$ lies in its ball. Almost-sure injectivity yields uniqueness at radius $0$, and a positive margin yields uniqueness for any $\varepsilon<\Delta_{\pi,t}/2$. To explore candidates efficiently, we couple the verifier with any \emph{policy} that enumerates untried tokens (e.g., uniform without replacement or a gradient-guided ranking).

\begin{definition}[Local \emph{verifier} and acceptance tolerance]\label{def:local-verifier}
Given a tolerance $\varepsilon\ge 0$, define the acceptance region for symbol $v$ as the closed ball (\Cref{def:balls}):
\[
\mathcal{A}_{\pi,t}( v \, ; \,   \varepsilon)\ :=\ \overline{B}\big(F(v \, ; \, \pi, t), \varepsilon \big).
\]
A candidate token $v \in \mathcal{V}$ is \emph{verified} for observation $\widehat{\mathbf{h}}_t$ if and only if $\; \widehat{\mathbf{h}}_t\in\mathcal{A}_{\pi,t}( v \, ; \,   \varepsilon)$.
\end{definition}

\begin{remark}[Decoding via acceptance regions]\label{rem:verifier}
    Given a prefix $\pi\in\mathcal{V}^{t-1}$ and the observation $\widehat{\mathbf{h}}_t$ at position $t$, we identify the next token by checking in which acceptance region $\widehat{\mathbf{h}}_t$ lies: declare $v$ \emph{verified} iff $\widehat{\mathbf{h}}_t\in\mathcal{A}_{\pi,t}(v;\varepsilon)$. 
    By \Cref{lem:onestep-margin}, for any $\varepsilon<\sfrac{\Delta_{\pi, t}}{2}$ the regions $\{\mathcal{A}_{\pi,t}(v;\varepsilon)\}_{v\in\mathcal V}$ are pairwise disjoint; hence there is at most one verified token (and in the noiseless case $\varepsilon=0$, exactly one).
\end{remark}

Building on the intuition in \Cref{rem:verifier}, we introduce two radii to define acceptance regions that avoid collisions:

\begin{proposition}[Probabilistic soundness and uniqueness of the local verifier]\label{prop:verifier-sound-prob}
Fix position $t$ and prefix $\pi \in \mathcal{V}^{t-1}$. Under Assumptions~\ref{ass:causal} and \ref{ass:inj}, for all $v^\star \in \mathcal{V}$, the following hold with probability one:

\vspace{-8px}
\begin{enumerate}[itemsep=0em,leftmargin=2em]
\item \textbf{Noiseless soundness.} If $\varepsilon=0$ and $\widehat{\mathbf h}_t=F(v^\star \, ; \, \pi,t)$, then $v^\star$ is the unique verified symbol.
\item \textbf{Robust uniqueness.} If $\varepsilon<\sfrac{\Delta_{\pi, t}}{2}$ and $\widehat{\mathbf h}_t\in \mathcal{A}_{\pi, t}(v^* \, ; \, \varepsilon)$, then $v^\star$ is the unique verified symbol.
\end{enumerate}
\end{proposition}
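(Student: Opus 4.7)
The plan is to work on the almost-sure event $E$ on which the one-step map $F(\cdot\,;\,\pi,t)$ is injective and the margin $\Delta_{\pi,t}$ is strictly positive. Both conditions hold with probability one by \autoref{thm:onestep-injective} and \autoref{lem:onestep-margin}, so by a finite intersection the event $E$ itself has probability one. All subsequent reasoning is then purely deterministic on $E$.

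For claim (1), specialize to $\varepsilon=0$: by \autoref{def:local-verifier} and \autoref{def:balls}, the closed ball of radius $0$ is a singleton, so $\mathcal{A}_{\pi,t}(v;0)=\{F(v;\pi,t)\}$. Since $\widehat{\mathbf h}_t=F(v^\star;\pi,t)$, the symbol $v^\star$ is trivially verified. Suppose toward contradiction that some $v\neq v^\star$ is also verified; then $F(v;\pi,t)=\widehat{\mathbf h}_t=F(v^\star;\pi,t)$, contradicting injectivity of $F$ on the event $E$. Hence $v^\star$ is the unique verified symbol.

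For claim (2), I would first show that the acceptance regions are pairwise disjoint whenever $\varepsilon<\Delta_{\pi,t}/2$. Given $v\neq v'$, the definition of $\Delta_{\pi,t}$ yields $\|F(v;\pi,t)-F(v';\pi,t)\|_2\ge \Delta_{\pi,t}>2\varepsilon$. If some point $\mathbf{x}$ lay in both $\mathcal{A}_{\pi,t}(v;\varepsilon)$ and $\mathcal{A}_{\pi,t}(v';\varepsilon)$, the triangle inequality would give
\[
\|F(v;\pi,t)-F(v';\pi,t)\|_2 \;\le\; \|F(v;\pi,t)-\mathbf{x}\|_2 + \|\mathbf{x}-F(v';\pi,t)\|_2 \;\le\; 2\varepsilon \;<\; \Delta_{\pi,t},
\]
a contradiction. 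Given that $\widehat{\mathbf h}_t\in\mathcal{A}_{\pi,t}(v^\star;\varepsilon)$, disjointness immediately rules out membership in any other acceptance region, so $v^\star$ is the unique verified symbol.

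I do not expect any substantive obstacle here: the hard work has already been done in establishing a.s. injectivity and a strictly positive margin. The only mild subtlety is remembering to pool the two almost-sure events into a single full-measure event before making deterministic statements, so that both parts of the conclusion hold simultaneously with probability one rather than only marginally.
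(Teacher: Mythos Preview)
Your proposal is correct and follows essentially the same route as the paper: work on the almost-sure event where $F(\cdot;\pi,t)$ is injective with positive margin, reduce part (1) to the singleton acceptance region plus injectivity, and reduce part (2) to the triangle inequality contradicting $\Delta_{\pi,t}$. Your framing of (2) via pairwise disjointness of acceptance regions is a harmless repackaging of the paper's direct contradiction argument, and your explicit pooling of the two a.s.\ events into one is a nice touch the paper leaves implicit.
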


\begin{proof}
Recall that under Assumptions~\ref{ass:causal} and \ref{ass:inj}, $F$ is injective and $\Delta_{\pi, t}>0$ almost-surely.

\emph{(1) Noiseless soundness.}
For any $v \in \mathcal{V}$, $\mathcal{A}_{\pi, t}(v \, ; \, 0)=\{F(v \, ; \, \pi, t)\}$.
If $\widehat{\mathbf h}_t=F(v^\star \, ; \, \pi,t)$ and some $v\neq v^\star$ were also verified at $\varepsilon=0$, we would have
$F(v \, ; \, \pi, t)=F(v^\star \, ; \, \pi,t)$, which is a probability zero event under the assumptions made. Hence $v^\star$ is uniquely verified almost-surely.

\emph{(2) Robust uniqueness.}
Assume $\varepsilon<\sfrac{\Delta_{\pi, t}}{2}$ and $\|\widehat{\mathbf h}_t-F(v^\star \, ; \, \pi,t)\|_2<\varepsilon$.
If some $v\neq v^\star$ were also verified, then $\|\widehat{\mathbf h}_t-F(v \, ; \, \pi, t)\|_2\le\varepsilon$.
By the triangle inequality,
\[
\big\|F(v \, ; \, \pi, t)-F(v^\star \, ; \, \pi,t)\big\|_2
\ \le\
\big\|\widehat{\mathbf h}_t-F(v \, ; \, \pi, t)\big\|_2
+
\big\|\widehat{\mathbf h}_t-F(v^\star \, ; \, \pi,t)\big\|_2
\ <\ 2\varepsilon\ <\ \Delta_{\pi, t},
\]
contradicting the definition of $\Delta_{\pi, t}$ (again, valid under the assumptions made). Thus $v^\star$ is uniquely verified almost-surely.
\end{proof}

\paragraph{Candidate enumeration.} Finally, we introduce the last conceptual block required to build the inversion algorithm: a \emph{policy algorithm} that systematically enumerates candidate tokens so that the verifier is guaranteed to encounter the true one.

\begin{definition}[Policy algorithm]\label{def:policy}
Let $\mathcal{V}$ be a finite vocabulary. A \emph{policy algorithm} is a (possibly randomized) map
\[
\Pi:\ \{\,\mathcal{C}\subsetneq\mathcal{V}\,\}\ \longrightarrow\ \mathcal{V}
\qquad\text{such that}\qquad
\Pi(\mathcal{C})\in \mathcal{V}\setminus \mathcal{C}\ \ \text{for all }\mathcal{C}\subsetneq\mathcal{V}.
\]
(When $\mathcal{C}=\mathcal{V}$ the map is undefined.) 
\end{definition}

\begin{remark}[Enumeration property]
Intuitively, a policy chooses any token not tried yet. Starting from $\mathcal{C}_0=\varnothing$ and iterating
\[
v_i:=\Pi(\mathcal{C}_{i-1}),\qquad \mathcal{C}_i:=\mathcal{C}_{i-1}\cup\{v_i\}\quad (i=1,\dots,|\mathcal{V}|),
\]
produces a sequence $(v_1,\dots,v_{|\mathcal{V}|})$ that is a (possibly random) permutation of $\mathcal{V}$.
Thus, in exactly $|\mathcal{V}|$ steps, every token is output once with no repetitions.
\end{remark}

\paragraph{Two examples of policy algorithms.}
We give (i) a \emph{uniform-random without replacement} policy and (ii) a \emph{gradient-guided} policy.

\begin{algorithm}[H]
\caption{\textbf{Policy (Random)}}
\label{alg:policy-random}
\begin{algorithmic}[1]
\Require Vocabulary $\mathcal{V}$; \; visited set $\mathcal{C}$; \; embedding matrix $\mathbf{E}\in\mathbb{R}^{|\mathcal{V}| \times d}$
\Ensure Next token ID and embedding
\State Sample a permutation $L = (v_1, \ldots, v_{|\mathcal{V}|})$ uniformly from $\mathcal{V}$
\State Define $\rho(v \, ; \, \pi)$ as the rank of $v$ in $L$
\State $v^\star = \arg\min_{v \in \mathcal{V} \setminus C}\ \rho(v \, ; \, \pi)$
\State \Return $v^\star$, $\mathbf{E}_{v^\star}$
\end{algorithmic}
\end{algorithm}

\begin{algorithm}[H]
\caption{\textbf{Policy (Gradient-based)}}
\label{alg:policy-gradient}
\begin{algorithmic}[1]
\Require Vocabulary $\mathcal{V}$; visited set $\mathcal{C}$; embedding matrix $\mathbf{E}\in\mathbb{R}^{|\mathcal{V}|\times d}$ ;\; prefix $\pi \in \mathcal{V}^{t-1}$; \; layer $\ell$; \; previous continuous embedding $\mathbf e^{(j-1)}$ ;\; step size $\gamma>0$;\; gradient-based update rule $\mathcal{G}$
\Ensure Next token ID and embedding
\State $\mathbf g \gets \nabla_{\mathbf{e}^{(j-1)}} \, \tfrac12 \left\| F\left( \mathbf{e}^{(j-1)} \, ; \, \pi, t \right)  - \widehat{\mathbf{h}}_t \right\|_2^2$
\State $\mathbf e^{(j)} \gets \mathcal{G}(\mathbf e^{(j-1)},\mathbf g,\gamma)$
\State Get $L=(v_1, \ldots, v_{|\mathcal{V}|})$ by ordering $v_i$ based on $\ell_2(\mathbf{E}_{v_i}, \mathbf{e}^{(j)})$

\State Define $\rho(v \, ; \, \pi)$ as the rank of $v$ in $L$
\State $v^\star = \arg\min_{v \in \mathcal{V} \setminus C}\ \rho(v \, ; \, \pi)$
\State \Return $v^\star$, $\mathbf{e}^{(j)}$
\end{algorithmic}
\end{algorithm}

\begin{remark}[Bypassing the embedding layer]
We slightly overload notation and write $F(\mathbf e;\pi,t)$. 
Here we bypass the token embedding lookup and inject a continuous vector at the current position: 
the first $t\!-\!1$ rows of $\mathbf H^{(0)}$ are set to $\mathrm{Emb}(\pi)$ and the $t$-th row is set to $\mathbf e$.
This extension is used only to guide the search (e.g., in \textbf{Policy-Gradient}). 
All theoretical guarantees are stated for $F(v;\pi,t)$ with $v\in\mathcal V$ and are unaffected by allowing $F$ to accept a continuous proxy during candidate scoring. 
Any extra inputs/side outputs used by a policy (such as the updated proxy) are orthogonal to the correctness statements.
\end{remark}

\begin{remark}[Practical choice of policy]
Both \Cref{alg:policy-random,alg:policy-gradient} satisfy \Cref{def:policy}.
In practice we use the \textbf{gradient-guided} policy with standard gradient descent updates, 
as it tends to find the verified token with far fewer proposals: the next token is chosen by ranking $\mathcal V$ by the distance 
$\|\mathbf E_v-\mathbf e^{(j)}\|_2$ to the updated proxy $\mathbf e^{(j)}$.
This preserves the same worst-case guarantees (single pass over $\mathcal V$) while improving empirical efficiency.
\end{remark}

\subsection{Global Inversion via \textsc{SipIt}}\label{sec:sipit-global}

We now compose the local verifier into a sequential decoder. At step $t$, causality ensures $\mathbf h_t(\mathrm{s})=F(\mathrm{s}_t;\pi,t)$ for the true prefix $\pi=\mathrm{s}_{1:t-1}$. Since the verifier uniquely accepts $\mathrm{s}_t$ (noiselessly, and robustly under perturbations below half the margin), any covering policy must encounter and accept the true token within a single pass over $\mathcal V$. Iterating from $t=1$ to $T$ yields exact recovery almost surely; we also quantify robustness and the worst-case runtime.

We are now ready to introduce our inversion algorithm: \textsc{SipIt} (\Cref{alg:sipit-main}). The algorithms applies to decoder-only transformers with \emph{causal} self-attention (\Cref{ass:causal}), and assumes injectivity, which occurs with almost-surely (\Cref{ass:inj}). We assume access to the layer-$\ell$ hidden states per position $\left\{ \widehat{\mathbf{h}}_t \right\}_{t=1}^T$ and to the parameters needed to evaluate the local verifier from \Cref{def:local-verifier} for arbitrary $(t,\pi,j)$, as well as the gradient (when needed), namely to the model up to layer $\ell$. A policy algorithm is fixed (e.g., \Cref{alg:policy-gradient}).

We begin by recording the following standard lemma and omitting the proof, as it is immediate from causal masking: under causal self-attention, the representation at position $t$ is independent of future tokens. 
\begin{lemma}[Causal factorization and prefixwise identifiability]\label{lem:causal-factorization}
Under Assumptions~\ref{ass:causal} and \ref{ass:inj}, fix position $t\in[T]$. For any $\mathrm{s} = \langle \mathrm{s}_1, \ldots, \mathrm{s}_T \rangle$ with $\pi = \langle \mathrm{s}_1, \ldots, \mathrm{s}_{t-1} \rangle$,
\[
\mathbf{h}_t(\mathrm{s}) \;=\; F(\mathrm{s}_t \, ; \, \pi, t),
\]
where $F$ is the one-step map from \Cref{def:onestep-map}.
\end{lemma}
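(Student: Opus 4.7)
The plan is to prove this by induction on the layer index, showing that for each $\ell' \in \{0, 1, \ldots, \ell\}$, the $t$-th row $[\mathbf{H}^{(\ell')}(\mathrm{s})]_t$ depends only on the prefix $\langle \mathrm{s}_1, \ldots, \mathrm{s}_t\rangle$. Once this is established at layer $\ell$, applying it to the two input sequences $\mathrm{s}$ (of length $T$) and $\pi \oplus \mathrm{s}_t$ (of length $t$)---which share identical first $t$ tokens---immediately yields $\mathbf{h}_t(\mathrm{s}) = \mathbf{h}_t(\pi \oplus \mathrm{s}_t) = F(\mathrm{s}_t \, ; \, \pi, t)$ by \autoref{def:onestep-map}.

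For the base case, by \autoref{def:emb-plus-pos}, $[\mathbf{H}^{(0)}(\mathrm{s})]_t = \mathbf{E}_{\mathrm{s}_t} + \mathbf{P}_t$ depends only on $\mathrm{s}_t$ and the fixed position $t$, so the claim is trivial. For the inductive step, I would decompose each Transformer block $\mathrm{TB}^{(\ell')}$ (\autoref{def:transformer-block}) into its sublayers and verify that each preserves the property ``row $t$ of the output depends only on rows $1, \ldots, t$ of the input.'' The two LayerNorm applications (\autoref{def:layer-norm}) and the MLP (\autoref{def:mlp}) act row-wise, so row $t$ of their output depends only on row $t$ of their input. The residual connections in \autoref{eq:trans-res-att} and \autoref{eq:trans-res-mlp} are row-wise additions. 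The only nontrivial case is Causal Self-Attention: in the projection form (\autoref{def:self-attn-causal-proj}), the mask $\mathbf{L}$ satisfies $\mathbf{L}_{tj} = \mathbb{I}_{\{j \le t\}}$, so the $t$-th row of $\mathbf{L} \odot \exp(\mathbf{Z})$ has support in columns $\{1,\ldots,t\}$; the row-normalization $\mathrm{RN}$ acts per row and its $t$-th normalizer involves only those columns; hence row $t$ of the attention output equals $\sum_{j\le t}\alpha_{tj}(\mathbf{X}\mathbf{V})_j$ with weights $\alpha_{tj}$ that are functions of rows $1,\ldots,t$ of $\mathbf{X}$ alone. Multi-head concatenation and the output projection $\mathbf{W}^O$ (\autoref{def:mhsa}) are row-wise linear operations. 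Combining these observations closes the induction.

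The main obstacle is purely notational bookkeeping rather than mathematical content: when comparing the length-$T$ computation to the length-$t$ computation, one must confirm that the extra ``future'' rows of $\mathrm{s}$ (at positions $t+1,\ldots,T$) do not influence row $t$ at any intermediate layer. But this is exactly what the inductive hypothesis guarantees: at every layer $\ell'-1$, rows $1,\ldots,t$ agree between the two computations (they depend only on $\langle \mathrm{s}_1,\ldots,\mathrm{s}_{\min(t,T)}\rangle$ which coincide), and the induction step shows that row $t$ at layer $\ell'$ is determined by exactly those rows. An even slicker alternative is to observe that \autoref{prop:masked-vs-projection} rewrites causal attention in a form that manifestly zeroes out the contribution of positions $j>t$ to row $t$, making the factorization immediate. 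Either route yields the identity $\mathbf{h}_t(\mathrm{s}) = F(\mathrm{s}_t \, ; \, \pi, t)$, which is the causal decoupling needed to make \textsc{Sip-It} a sequential, prefix-by-prefix inverter.
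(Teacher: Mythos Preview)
Your proof is correct and follows the same approach as the paper's, just at a finer level of detail. The paper treats this lemma as ``immediate from causal masking'' and offers only a two-sentence sketch; your layer-by-layer induction and sublayer decomposition is precisely the content that sketch is implicitly invoking, so there is no meaningful divergence in strategy.
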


\begin{proof}
With causal masking, position $t$ attends only to positions $\le t$.
Evaluating the network up to layer $\ell$ therefore yields a representation at $t$ that is a function of the prefix $\pi$ and the current token $\mathrm{s}_t$ only, i.e. $F(\mathrm{s}_t \, ; \, \pi,t)$, as claimed.
\end{proof}

\begin{proposition}[The verifier is the right primitive]\label{prop:verifier-right-primitive}
Fix $t$ and a true prefix $\pi=\langle \mathrm{s}_1, \ldots, \mathrm{s}_{t-1} \rangle$. Under \Cref{ass:causal}, the observed hidden state at step $t$ satisfies
$\mathbf{h}_t(\mathrm{s}) = F(\mathrm{s}_t \, ; \, \pi,t)$ (\Cref{lem:causal-factorization}).
In addition, under \Cref{ass:inj}, $F$ is injective and has positive margin $\Delta_{\pi, t}>0$ almost-surely (\Cref{thm:onestep-injective,lem:onestep-margin}).
Consequently, for the local verifier of \Cref{def:local-verifier}, the following hold with probability one:

\vspace{-8px}
\begin{enumerate}[leftmargin=2em,itemsep=0em]
\item (\emph{Noiseless}) With $\varepsilon=0$ and observation $\widehat{\mathbf h}_t=\mathbf{h}_t(\mathrm{s})$, the unique verified token is $\mathrm{s}_t$.
\item (\emph{Robust}) If $\widehat{\mathbf h}_t=\mathbf{h}_t(\mathrm{s})+\mathbf e_t$ with
$\|\mathbf e_t\|_2<\varepsilon<\sfrac{\Delta_{\pi, t}}{2}$, then $\mathrm{s}_t$ is the unique verified token.
\end{enumerate}
\end{proposition}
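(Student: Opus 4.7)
The plan is to observe that this proposition is essentially a corollary obtained by plugging the causal factorization into the local-verifier uniqueness result already established in \autoref{prop:verifier-sound-prob}. All the heavy lifting (injectivity, positive margin, uniqueness of accepted balls) has been done; what remains is to identify the ``correct'' candidate $v^\star$ in the verifier statement with the true token $\mathrm{s}_t$ in the full-sequence statement, and to transfer the probability-one events across both claims.

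First, I would invoke \autoref{lem:causal-factorization} under \autoref{ass:causal} to rewrite $\mathbf{h}_t(\mathrm{s}) = F(\mathrm{s}_t \,;\, \pi, t)$, where $\pi = \langle \mathrm{s}_1,\ldots,\mathrm{s}_{t-1}\rangle$ is the true prefix. Under \autoref{ass:inj}, \autoref{thm:onestep-injective} gives almost-sure injectivity of $F(\cdot \,;\, \pi, t)$ and \autoref{lem:onestep-margin} gives $\Delta_{\pi,t} > 0$ almost surely; let $\Omega_{\pi,t}$ denote the intersection of these two full-probability events. All subsequent statements are made on $\Omega_{\pi,t}$.

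For claim (1), the noiseless case, set $v^\star := \mathrm{s}_t$ in \autoref{prop:verifier-sound-prob}. By the factorization, $\widehat{\mathbf{h}}_t = \mathbf{h}_t(\mathrm{s}) = F(v^\star \,;\, \pi, t)$, which is precisely the hypothesis of part~(1) of that proposition. The conclusion--$v^\star = \mathrm{s}_t$ is the unique verified symbol at $\varepsilon=0$--follows immediately. For claim (2), the robust case, note that the assumption $\widehat{\mathbf{h}}_t = \mathbf{h}_t(\mathrm{s}) + \mathbf{e}_t$ with $\|\mathbf{e}_t\|_2 < \varepsilon$ is exactly $\|\widehat{\mathbf{h}}_t - F(\mathrm{s}_t \,;\, \pi, t)\|_2 < \varepsilon$, i.e.\ $\widehat{\mathbf{h}}_t \in \mathcal{A}_{\pi,t}(\mathrm{s}_t \,;\, \varepsilon)$. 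Together with $\varepsilon < \Delta_{\pi,t}/2$, this matches the hypothesis of part~(2) of \autoref{prop:verifier-sound-prob}, whose conclusion yields uniqueness of $\mathrm{s}_t$ as the verified token.

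There is no genuine obstacle here; the only subtlety is bookkeeping for the probability-one qualifier. Since both claims are proved on the same full-probability event $\Omega_{\pi,t}$ (the intersection of the injectivity and positive-margin events for the fixed pair $(\pi, t)$), the conclusions hold almost surely as stated. If one wanted a statement uniform over all $(\pi, t)$ with $t \in [T]$, $\pi \in \mathcal{V}^{t-1}$, a finite union bound over the finitely many such pairs would preserve the probability-one conclusion; but since the proposition fixes $t$ and $\pi$, no such union is required here.
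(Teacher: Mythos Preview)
Your proposal is correct and follows exactly the paper's approach: invoke the causal factorization $\mathbf{h}_t(\mathrm{s})=F(\mathrm{s}_t;\pi,t)$ from \autoref{lem:causal-factorization}, then apply \autoref{prop:verifier-sound-prob} with $v^\star=\mathrm{s}_t$ on the almost-sure event furnished by \autoref{thm:onestep-injective} and \autoref{lem:onestep-margin}. The paper's own proof is a one-line version of what you wrote; your added bookkeeping about the event $\Omega_{\pi,t}$ and the (unneeded) union-bound remark are harmless elaborations.
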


\begin{proof}
Immediate from \Cref{lem:causal-factorization} and \Cref{prop:verifier-sound-prob} applied with $v^\star=\mathrm{s}_t$, which holds almost-surely by \Cref{thm:onestep-injective} and \Cref{lem:onestep-margin}.
\end{proof}

\begin{proposition}[Eventual acceptance under increasing enumeration]\label{prop:sipit-single-pass}
Fix a position $t$ and the true prefix $\pi=\langle \mathrm{s}_1,\ldots,\mathrm{s}_{t-1}\rangle$. Under \Cref{ass:causal} and \Cref{ass:inj}, 
let $\varepsilon\ge 0$ and work on the probability-one event where the local verifier uniquely accepts the true token $\mathrm{s}_t$
(e.g., $\varepsilon=0$ or $\varepsilon<\Delta_{\pi,t}/2$; see \Cref{prop:verifier-right-primitive}). 

Let $\Pi$ be any policy algorithm (\Cref{def:policy}). Define the increasing visited sets by
$\mathcal C_0=\varnothing$, $v_i:=\Pi(\mathcal C_{i-1})$, and $\mathcal C_i:=\mathcal C_{i-1}\cup\{v_i\}$ for $i\ge1$, 
and stop at the first index
\[
\tau:=\min\big\{\,i\ge1:\ \widehat{\mathbf h}_t\in \mathcal A_{\pi,t}(v_i \, ; \, \varepsilon)\,\big\}.
\]
Then $(v_i)_{i\ge1}$ enumerates $\mathcal V$ without replacement and $\tau\le|\mathcal V|$ almost surely. 
In particular, for the fixed prefix $\pi$, the policy's increasingly expanding search over $\mathcal V$ eventually proposes the unique verified token $\mathrm{s}_t$ and accepts it with probability~$1$.
\end{proposition}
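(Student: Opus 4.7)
The plan is essentially to combine two facts that are already in place: (i) every policy algorithm enumerates $\mathcal V$ without repetition by its very definition, and (ii) the local verifier, on a probability-one event, accepts the true token $\mathrm{s}_t$ and no other. Once these two are lined up, the bound $\tau\le|\mathcal V|$ is just finiteness.

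First I would unpack the enumeration claim from \autoref{def:policy}. Since $v_i:=\Pi(\mathcal C_{i-1})\in\mathcal V\setminus \mathcal C_{i-1}$ for every $i\ge 1$ such that $\mathcal C_{i-1}\subsetneq\mathcal V$, one has $|\mathcal C_i|=|\mathcal C_{i-1}|+1$ as long as the process is defined. A straightforward induction shows the entries $v_1,\dots,v_{|\mathcal V|}$ are pairwise distinct and $\mathcal C_{|\mathcal V|}=\mathcal V$, so in particular every element of $\mathcal V$ appears exactly once in the first $|\mathcal V|$ proposals. This is the enumeration-without-replacement statement.

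Next I would condition on the probability-one event provided by \autoref{prop:verifier-right-primitive} (noiseless case when $\varepsilon=0$, robust case when $\varepsilon<\Delta_{\pi,t}/2$), on which the true token $\mathrm{s}_t$ is the unique symbol with $\widehat{\mathbf h}_t\in \mathcal A_{\pi,t}(\mathrm{s}_t\,;\,\varepsilon)$. By the enumeration step above, there is a (random) index $i^\star\in\{1,\dots,|\mathcal V|\}$ with $v_{i^\star}=\mathrm{s}_t$. At that index the acceptance condition triggers, so by minimality $\tau\le i^\star\le |\mathcal V|$. Intersecting with the probability-one event completes the almost-sure bound.

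There is essentially no real obstacle: the heavy lifting (a.s.\ injectivity, positive margin, verifier uniqueness, causal factorization) has already been absorbed into \autoref{prop:verifier-right-primitive}. The only things to be careful about are (a) not to assume anything about the order in which the policy enumerates $\mathcal V$ (the argument must go through for any, possibly randomized, policy, which is exactly what the enumeration property guarantees), and (b) to state clearly that $\tau$ is a well-defined minimum over a nonempty set, which is ensured because $i^\star$ itself belongs to the set $\{i:\widehat{\mathbf h}_t\in\mathcal A_{\pi,t}(v_i\,;\,\varepsilon)\}$.
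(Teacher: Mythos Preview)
Your proposal is correct and follows essentially the same approach as the paper: establish enumeration without replacement directly from the policy definition, then on the probability-one event where the verifier uniquely accepts $\mathrm{s}_t$, locate the index $i^\star$ with $v_{i^\star}=\mathrm{s}_t$ and conclude $\tau\le i^\star\le|\mathcal V|$. The paper additionally notes that uniqueness forces $\tau=i^\star$ (so the accepted token is indeed $\mathrm{s}_t$), which is implicit in your use of unique acceptance but worth stating explicitly for the ``in particular'' clause.
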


\begin{proof}
Work on the probability-one event of \Cref{prop:verifier-right-primitive} (under \Cref{ass:causal,ass:inj} with the stated $\varepsilon$), on which the local verifier at step $t$ uniquely accepts the true token $\mathrm{s}_t$. Equivalently,
\begin{equation}\label{eq:uniq-accept}
\widehat{\mathbf h}_t \in \mathcal A_{\pi,t}(v \, ; \,\varepsilon) \ \Longleftrightarrow\ v= \mathrm{s}_t .
\end{equation}

\paragraph{Enumeration without replacement.}
By the definition of a policy algorithm (\Cref{def:policy}), $v_i=\Pi(\mathcal C_{i-1}) \in \mathcal V\setminus \mathcal C_{i-1}$ and $\mathcal C_i=\mathcal C_{i-1}\cup\{v_i\}$. Hence $v_i\notin \mathcal C_{i-1}$ and $|\mathcal C_i|=|\mathcal C_{i-1}|+1$. Inducting on $i$ yields that $(v_i)_{i\ge1}$ has no repetitions and $\mathcal C_i$ contains exactly $i$ distinct tokens. Since $\mathcal V$ is finite, after $|\mathcal V|$ steps we have $\mathcal C_{|\mathcal V|}=\mathcal V$, i.e., $(v_i)_{i=1}^{|\mathcal V|}$ is a permutation of $\mathcal V$ (this holds pathwise, for any realization of the policy's internal randomness).

\paragraph{Eventual acceptance.}
Because $(v_i)$ is a permutation of $\mathcal V$, there exists a unique index $j\in\{1,\dots,|\mathcal V|\}$ with $v_j = \mathrm{s}_t$. By \eqref{eq:uniq-accept},
\[
\tau=\min\{\,i\ge1:\ \widehat{\mathbf h}_t\in \mathcal A_{\pi,t}(v_i \, ; \, \varepsilon)\,\}
=\min\{\,i\ge1:\ v_i=\mathrm{s}_t\,\}=j,
\]
so $\tau\le |\mathcal V|$ and the process accepts $\mathrm{s}_t$.

Since the event on which \eqref{eq:uniq-accept} holds has probability $1$, the conclusion (eventual acceptance at finite $\tau$) holds almost surely.
\end{proof}

\begin{theorem}[Correctness of \textsc{SipIt} (noiseless \& robust)]\label{thm:sipit-unified}
For each $t\in\{1,\ldots,T\}$ let $\pi_t=\langle \mathrm{s}_1,\ldots,\mathrm{s}_{t-1}\rangle$ and let $\Delta_{\pi_t,t}>0$ be the margin of the one-step map $F(\cdot;\pi_t,t)$ from \Cref{lem:onestep-margin}. 
Under \Cref{ass:causal,ass:inj}, run \textsc{SipIt} (\Cref{alg:sipit-main}) with a tolerance $\varepsilon\ge 0$ and observations
\[
\widehat{\mathbf h}_t=\mathbf h_t(\mathrm{s})+\mathbf e_t\qquad (t=1,\ldots,T),
\]
where the perturbations satisfy $\|\mathbf e_t\|_2\le \varepsilon$ for all $t$ and
\[
\varepsilon \;<\; \tfrac12\,\Delta_{\pi_t,t}\qquad\text{for all }t.
\]
Then, with probability $1$ over the model parameters:
(i) for every $t$, the \emph{inner for-loop over $j$} (the loop over vocabulary candidates) terminates within $|\mathcal V|$ iterations by accepting the true token $\mathrm{s}_t$; and
(ii) after the \emph{outer for-loop over $t$} (the loop over positions) finishes, the algorithm outputs the exact sequence $\widehat{\mathrm{s}}=\mathrm{s}$.

In particular, this covers the noiseless case by taking $\varepsilon=0$ and $\widehat{\mathbf h}_t=\mathbf h_t(\mathrm{s})$, and the robust case with any uniform $\varepsilon$ such that $\max_t\|\mathbf e_t\|_2\le \varepsilon<\tfrac12\min_t \Delta_{\pi_t,t}$.
\end{theorem}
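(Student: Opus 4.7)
The plan is to run a straightforward induction on the position $t\in\{1,\dots,T\}$, plugging in the three prior building blocks in turn. First, since every statement I need holds almost surely and the sequence length $T$ and vocabulary $\mathcal{V}$ are finite, I would intersect the probability-one events from \autoref{lem:causal-factorization}, \autoref{prop:verifier-right-primitive}, and \autoref{prop:sipit-single-pass} over all $t\in[T]$ (and all prefixes $\pi_t$ that could be encountered along the run, of which there is a single deterministic one per $t$). A finite intersection of probability-one events has probability one, so I fix once and for all a sample point on which: (a) $\mathbf h_t(\mathrm{s})=F(\mathrm{s}_t\,;\,\pi_t,t)$ for all $t$, (b) $F(\cdot\,;\,\pi_t,t)$ is injective with $\Delta_{\pi_t,t}>0$, and (c) any enumerating policy accepts the unique verified token within $|\mathcal V|$ proposals.

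The base case $t=1$ is essentially immediate. The algorithm starts with $\widehat{\mathrm{s}}=\langle\rangle$, so $\pi=\pi_1$ is the empty prefix, and by (a) the observation satisfies $\|\widehat{\mathbf h}_1-F(\mathrm{s}_1\,;\,\pi_1,1)\|_2=\|\mathbf e_1\|_2\le\varepsilon<\tfrac12\Delta_{\pi_1,1}$. By \autoref{prop:verifier-right-primitive} (robust case), $\mathrm{s}_1$ is the unique token for which $\widehat{\mathbf h}_1\in\mathcal A_{\pi_1,1}(v\,;\,\varepsilon)$. By \autoref{prop:sipit-single-pass}, the inner \textbf{for} loop over $j$ proposes tokens without replacement via the policy and accepts $\mathrm{s}_1$ within at most $|\mathcal V|$ iterations, after which the algorithm appends $\widehat{\mathrm{s}}_1=\mathrm{s}_1$.

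For the inductive step, assume that after completing outer iterations $1,\dots,t-1$ the reconstructed prefix satisfies $\widehat{\mathrm{s}}_{1:t-1}=\mathrm{s}_{1:t-1}=\pi_t$. At outer iteration $t$ the verifier is therefore evaluated with the true prefix $\pi_t$, and by (a) we have $\|\widehat{\mathbf h}_t-F(\mathrm{s}_t\,;\,\pi_t,t)\|_2=\|\mathbf e_t\|_2\le\varepsilon<\tfrac12\Delta_{\pi_t,t}$. Applying \autoref{prop:verifier-right-primitive} again yields unique verifiability of $\mathrm{s}_t$, and \autoref{prop:sipit-single-pass} guarantees acceptance within $|\mathcal V|$ proposals, so $\widehat{\mathrm{s}}_t=\mathrm{s}_t$ and the invariant is preserved. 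Iterating to $t=T$ gives $\widehat{\mathrm{s}}=\mathrm{s}$ with total work bounded by $T|\mathcal V|$ verifier calls, proving both (i) and (ii). The noiseless statement follows by specializing $\varepsilon=0$ and $\mathbf e_t=\mathbf 0$, where \autoref{prop:verifier-right-primitive}(1) replaces the robust clause.

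The only genuinely subtle point, and thus the main obstacle, is the bookkeeping that ties the inductive hypothesis to the verifier's input: the algorithm uses the \emph{reconstructed} prefix when forming $\mathcal A_{\pi,t}(\cdot\,;\,\varepsilon)$, while all margin and injectivity statements are phrased with respect to the \emph{true} prefix $\pi_t$. Keeping these aligned requires showing prefix correctness strictly before the verifier is invoked at step $t$, which is exactly what the induction hypothesis provides; once this is made explicit, the measure-zero/union-bound argument and the single-pass policy guarantee compose cleanly and no further work is needed.
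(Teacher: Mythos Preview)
Your proposal is correct and follows essentially the same approach as the paper: both proofs fix a probability-one event on which injectivity and positive margins hold for all $t$, then run an induction on $t$ using causal factorization, verifier uniqueness, and the single-pass enumeration guarantee to propagate prefix correctness. Your explicit flagging of the reconstructed-vs-true-prefix alignment is exactly the point the paper handles via the inductive hypothesis ``the prefix entering step $t$ is $\widehat{\mathrm{s}}=\pi_t$,'' and your citation of \autoref{prop:verifier-right-primitive} in place of the paper's direct appeal to \autoref{prop:verifier-sound-prob} plus \autoref{lem:causal-factorization} is a cosmetic difference only.
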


\begin{proof}
By \Cref{ass:inj,thm:onestep-injective}, and \Cref{lem:onestep-margin}, there is a probability-one event on which, for all $t$, $F(\cdot;\pi_t,t)$ is injective with strictly positive margin $\Delta_{\pi_t,t}$. Intersecting across finitely many $t$ preserves probability~1. Work on this event.

By \Cref{ass:causal,lem:causal-factorization},
$\mathbf h_t(\mathrm{s})=F(\mathrm{s}_t;\pi_t,t)$. Since $\|\mathbf e_t\|_2\le\varepsilon$,
\[
\widehat{\mathbf h}_t
=F(\mathrm{s}_t;\pi_t,t)+\mathbf e_t
\in \overline B\!\big(F(\mathrm{s}_t;\pi_t,t),\varepsilon\big)
=\mathcal A_{\pi_t,t}(\mathrm{s}_t;\varepsilon),
\]
so the local verifier \emph{accepts} $\mathrm{s}_t$. Moreover, because $\varepsilon<\tfrac12\Delta_{\pi_t,t}$, \Cref{prop:verifier-sound-prob}(2) implies \emph{robust uniqueness}:
\begin{equation}\label{eq:unique-accept-unified}
\widehat{\mathbf h}_t\in\mathcal A_{\pi_t,t}(v;\varepsilon)\quad\Longleftrightarrow\quad v=\mathrm{s}_t .
\end{equation}
When $\varepsilon=0$, \eqref{eq:unique-accept-unified} also holds by \Cref{prop:verifier-sound-prob}(1). We now analyze \textsc{SipIt} and proceed by induction on $t$.

\emph{Base case ($t=1$).} The \emph{outer for-loop over $t$} begins with $\widehat{\mathrm{s}}=\langle\,\rangle=\pi_1$. 
Inside the \emph{inner for-loop over $j$} (the loop over vocabulary candidates), the policy (\Cref{def:policy}) enumerates $\mathcal V$ without replacement. By \Cref{prop:sipit-single-pass}, there exists $j^\star\le |\mathcal V|$ such that $v_{j^\star}=\mathrm{s}_1$, which is accepted and triggers the \textbf{break}; the algorithm appends $\mathrm{s}_1$.

\emph{Inductive step.} Suppose after completing the inner loop at step $t-1$ the algorithm has appended $\mathrm{s}_{t-1}$, so the prefix entering step $t$ is $\widehat{\mathrm{s}}=\pi_t$. By \eqref{eq:unique-accept-unified}, within the inner loop the verifier accepts exactly when $v_j=\mathrm{s}_t$. Because the policy enumerates $\mathcal V$ without replacement, some $j\le|\mathcal V|$ satisfies $v_j=\mathrm{s}_t$, which is accepted, appended, and the inner loop \textbf{breaks}.

Thus for every $t$, the inner loop terminates by accepting $\mathrm{s}_t$ within $|\mathcal V|$ iterations, and after the outer loop finishes we have appended $(\mathrm{s}_1,\ldots,\mathrm{s}_T)$, i.e., $\widehat{\mathrm{s}}=\mathrm{s}$.
Since the reasoning holds on a probability-one event (independent of the policy's internal randomness), the conclusion is almost sure.
\end{proof}

\paragraph{Termination and complexity.} Having established correctness, we record the worst-case iteration count and discuss its relation to wall-clock time.

\begin{proposition}[Termination and linear step bound]\label{prop:sipit-termination}
Run \textsc{SipIt} (\Cref{alg:sipit-main}) on a length-$T$ sequence with any policy that enumerates $\mathcal V$ without replacement.
Then the algorithm halts after a finite number of iterations. Moreover, in the worst case the
\emph{inner for-loop over $j$} executes at most $|\mathcal V|$ iterations at each position $t$, so the total number of verifier tests across the entire run is at most $T\,|\mathcal V|$. In particular, the number of loop iterations grows linearly with $T\cdot|\mathcal V|$.
\end{proposition}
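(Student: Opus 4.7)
The proof is essentially a bookkeeping exercise built directly on the algorithm's control flow (Alg.~\ref{alg:sipit-main}) and the definition of a policy algorithm (\autoref{def:policy}), so the plan is largely to make the counting argument explicit rather than to invoke any deep machinery. The plan is to bound the two nested loops separately and then multiply.

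First, I would observe that the \emph{outer for-loop over $t$} is written with a fixed range $t=1,\ldots,T$ in the pseudocode, so it performs exactly $T$ iterations regardless of what happens inside; this is structural and requires no argument beyond reading the algorithm. Second, I would turn to the \emph{inner for-loop over $j$} at a fixed position $t$. Two independent reasons cap it at $|\mathcal V|$ iterations: (i) the loop counter $j$ is explicitly bounded by $|\mathcal V|$ in line~4 of Alg.~\ref{alg:sipit-main}; and (ii) by \autoref{def:policy}, each invocation \textsc{Policy}$(\mathcal V,\mathcal C,\widehat{\mathrm{s}},\ell)$ returns a token in $\mathcal V\setminus\mathcal C$, and after each test the candidate is added to $\mathcal C$, so $|\mathcal C|$ strictly increases by one per iteration. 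Hence the inner loop can execute at most $|\mathcal V|$ steps before either the verifier accepts (triggering \textbf{break}) or the candidate set is exhausted; in both cases control returns to the outer loop.

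Combining the two bounds gives a total of at most $T\cdot|\mathcal V|$ verifier tests across the entire run, which is finite since $T$ and $|\mathcal V|$ are both finite. This proves both halting and the claimed linear step bound. I would close by remarking that the bound is attained only in the (worst-case) scenario where, at every position, the policy proposes the true token last; under the injectivity/margin hypothesis of \autoref{thm:sipit-unified} the correct token is guaranteed to be among the at most $|\mathcal V|$ enumerated candidates, so the \textbf{break} is reached before the loop counter is exhausted.

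The main (minor) subtlety is simply being careful that the policy's ``no repetition'' guarantee really does hold across the entire inner loop at position $t$: it does, because $\mathcal C$ is reset to $\emptyset$ at the beginning of each outer iteration and then monotonically grows. There is no genuine obstacle in the argument; the statement is essentially immediate once one records that \autoref{def:policy} forbids revisits and that the inner loop index is explicitly capped.
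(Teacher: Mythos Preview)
Your proposal is correct and follows essentially the same approach as the paper: bound the outer loop by $T$ positions, bound the inner loop by $|\mathcal V|$ via the policy's no-replacement property, and multiply to get $T|\mathcal V|$. Your write-up is actually more detailed (noting the explicit loop counter bound, the reset of $\mathcal C$, and the worst-case scenario), but the core counting argument is identical.
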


\begin{proof}
Fix a position $t$. The \emph{inner for-loop over $j$} proposes unvisited tokens and stops when a candidate verifies, or after exhausting $\mathcal V$. Because the policy enumerates without replacement, the loop can execute at most $|\mathcal V|$ iterations at step $t$. The \emph{outer for-loop over $t$} runs for exactly $T$ positions, hence the total number of inner-loop iterations (i.e., verifier tests) is at most $\sum_{t=1}^T |\mathcal V| = T|\mathcal V|<\infty$. Therefore the algorithm halts and the total number of tests is linear in $T\cdot|\mathcal V|$.
\end{proof}

\begin{remark}[Iterations vs.\ wall–clock time]
\Cref{prop:sipit-termination} bounds the \emph{number of iterations/tests}: the inner loop performs at most $|\mathcal V|$ verifier tests per position, so the total is $\Theta(T|\mathcal V|)$. This is an \emph{iteration complexity} statement that holds for any policy satisfying the “enumerate $\mathcal V$ without replacement” property. Actual \emph{wall–clock time} also depends on the per–test cost (one call to $F(v;\pi,t)$ plus a distance) and on any policy overhead (e.g., forward/backward proxy updates, scoring, sorting). A generic decomposition is
\[
\text{time} \;=\; \Theta\!\big(T|\mathcal V|\cdot C_{\text{test}}\big)\;+\;\sum_{t=1}^{T} C_{\text{policy}}(t),
\]
where $C_{\text{test}}$ is the cost of one membership test and $C_{\text{policy}}(t)$ captures policy-specific work at step $t$. Thus, if $|\mathcal V|$ is treated as fixed and $C_{\text{test}},\,C_{\text{policy}}(t)$ are bounded (e.g., a constant number of proxy updates and at most one ranking per update), wall–clock time is $O(T)$. If $|\mathcal V|$ grows or the policy sorts per update, additional factors like $|\mathcal V|$ or $\log|\mathcal V|$ may appear in the time, but the termination and the $\Theta(T|\mathcal V|)$ \emph{iteration} bound remain unchanged.
\end{remark}

\begin{remark}[Choosing the tolerance $\varepsilon$]
Theory guarantees uniqueness whenever $\varepsilon<\tfrac12\Delta_{\pi,t}$ (\Cref{prop:verifier-sound-prob}). 
Since $\Delta_{\pi,t}$ is unknown, two practical choices work well: 
(i) \emph{backoff}: start with a small $\varepsilon$ and increase only if no token verifies; 
(ii) \emph{calibration}: set $\varepsilon$ from held-out hidden states at layer $\ell$.
In all cases the decision rule remains a simple yes/no membership test.
\end{remark}

\begin{remark}[Why \textsc{SipIt} is sequential]
The algorithm never solves a global assignment.
At position $t$ it conditions on the current prefix $\pi$ and queries the local verifier for a single token.
Causality (\Cref{ass:causal}) ensures $\mathbf h_t$ depends only on $(\pi, \mathrm{s}_t)$, so these local, prefixwise decisions compose to recover the full sequence.
\end{remark}

\section{Implementation Details and Additional Experiments}\label{sec:app:exp}

\begin{figure}[!bp]
    \centering
    \includegraphics[width=\linewidth]{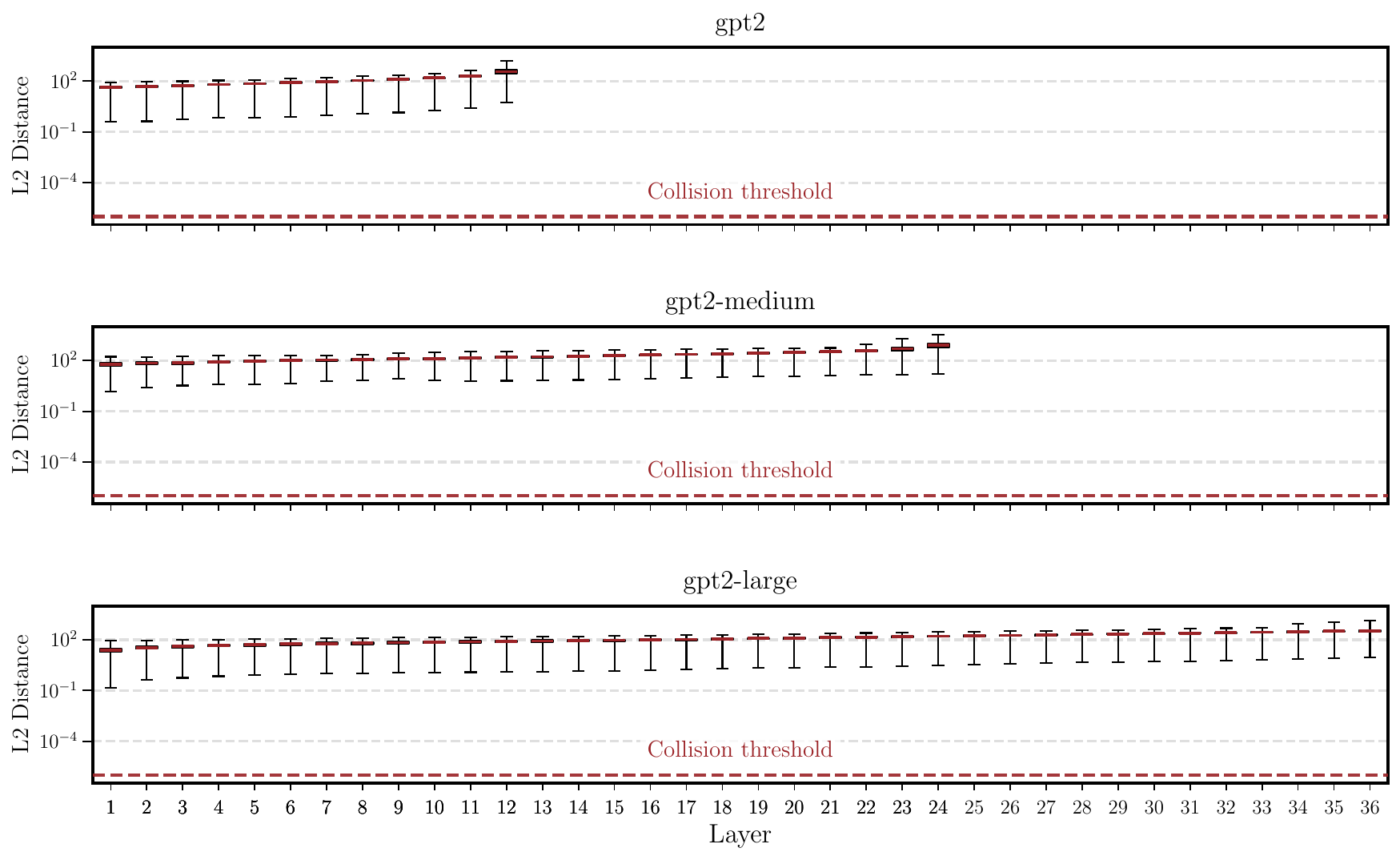}
    \caption{Seeking collisions in a large-scale prompt set (\S\ref{sec:inj_res}). For each layer, boxplots show the distribution (log scale) of the \emph{minimum pairwise} $\ell_2$ distances between last-token states across prompts for the \texttt{GPT-2} model family (\texttt{Small}, \texttt{Medium}, and \texttt{Large}); red bars mark medians and the dashed line indicates the collision threshold $10^{-6}$.}
    \label{fig:gpt-full-layer}
\end{figure}

This section collects implementation details (\S\ref{sec:app:impl-details}), additional ablations on collision experiments and \textsc{SipIt} (\S\ref{sec:app:ablations}), controlled experiments with identical next-token predictions (\S\ref{sec:app:same-next}), qualitative inspection of the closest hidden-state pairs (\S\ref{app:similar-hidden-states}), and connections to anisotropy and intrinsic dimension (\S\ref{sec:app:anisotropy}).

\subsection{Implementation Details}\label{sec:app:impl-details}

\paragraph{What is a collision in practice.}
In the theoretical parts of the paper we use ``collision" in the usual functional sense: two distinct prompts
$\mathrm{s} \neq \mathrm{s}'$ such that their last-token representations coincide exactly,
$$
\mathbf{r}(\mathrm{s} \, ; \, \boldsymbol{\theta}_T) = \mathbf{r}(\mathrm{s}' \, ; \, \boldsymbol{\theta}_T).
$$
This is the event whose probability is controlled in \cref{th:main:asinj,thm:gdinjec} and in \Cref{sec:app:asinj}, and all proofs are
carried out at the level of exact equality (no numerical threshold is required).

In the experiments, however, representations are stored in floating-point format, so exact equality of
$\mathbf{r}(\mathrm{s} \, ; \, \boldsymbol{\theta}_T)$ and $\mathbf{r}(\mathrm{s}' \, ; \, \boldsymbol{\theta}_T)$ may not be a meaningful or robust criterion. We therefore adopt a numerical proxy: given two prompts $\mathrm{s}, \mathrm{s}'$ and their embeddings
$\mathbf{r}(\mathrm{s} \, ; \, \boldsymbol{\theta}_T), \mathbf{r}(\mathrm{s}' \, ; \, \boldsymbol{\theta}_T) \in \mathbb{R}^d$, we declare a \emph{practical collision} only if
$$
\texttt{torch.allclose}\bigl(\mathbf{r}(\mathrm{s} \, ; \,\boldsymbol{\theta}_T),
                             \mathbf{r}(\mathrm{s}' \, ; \, \boldsymbol{\theta}_T)\bigr) = \texttt{True},
$$
i.e., every coordinate falls within PyTorch's prescribed tolerances, namely $10^{-5}$ and $10^{-8}$ for relevant and absolute tolerance respectively. Across all of the billions to trillions of empirical checks, every pair of distinct prompts $\mathrm{s} \neq \mathrm{s}'$ failed this
criterion: \texttt{torch.allclose} always returned \texttt{False}, and the observed $\ell_2$ distances were
consistently bounded away from zero. Thus, at the resolution of our numerical precision, we did not observe any
collisions in practice.

\paragraph{\textsc{SipIt} implementation.}
We implement \textsc{SipIt} exactly as in Alg.~\ref{alg:sipit-main} with the gradient-guided policy. To stabilize the continuous proxy used for ranking, we apply gradient clipping (capping the gradient norm at 1) and we periodically project it back to the nearest token embedding every $K\!=\!50$ candidate proposals:
\[
\mathbf{e}^{(j)} \;\leftarrow\; \mathbf{E}_{v^\dagger},
\qquad
v^\dagger \;=\; \arg\min_{v\in\mathcal{V} \setminus \mathcal{C}}\big\|\mathbf{E}_v-\mathbf{e}^{(j)}\big\|_2,
\]
without taking gradients through this projection. These heuristics affect efficiency only; the verifier and all correctness guarantees remain unchanged.

\paragraph{\textsc{HardPrompts} implementation.}
The original \textsc{HardPrompts} method~\cite{HardPrompt} targets multimodal vision-language models and optimizes prompts via a CLIP-based similarity objective. In our text-only setting we lack the vision branch and CLIP loss, so we adapt Algorithm~1 of~\cite{HardPrompt} to language models by replacing the objective with the same $\ell_2$ loss used in \textsc{SipIt}'s gradient calculation, and setting the optimization steps $T = \tfrac14 \text{\# tokens} \cdot |\mathcal{V}|$. All other details (step sizes, stopping rules) mirror our \textsc{SipIt} setup to ensure a fair comparison.

\begin{figure}[!bp]
    \centering
    \includegraphics[width=\linewidth]{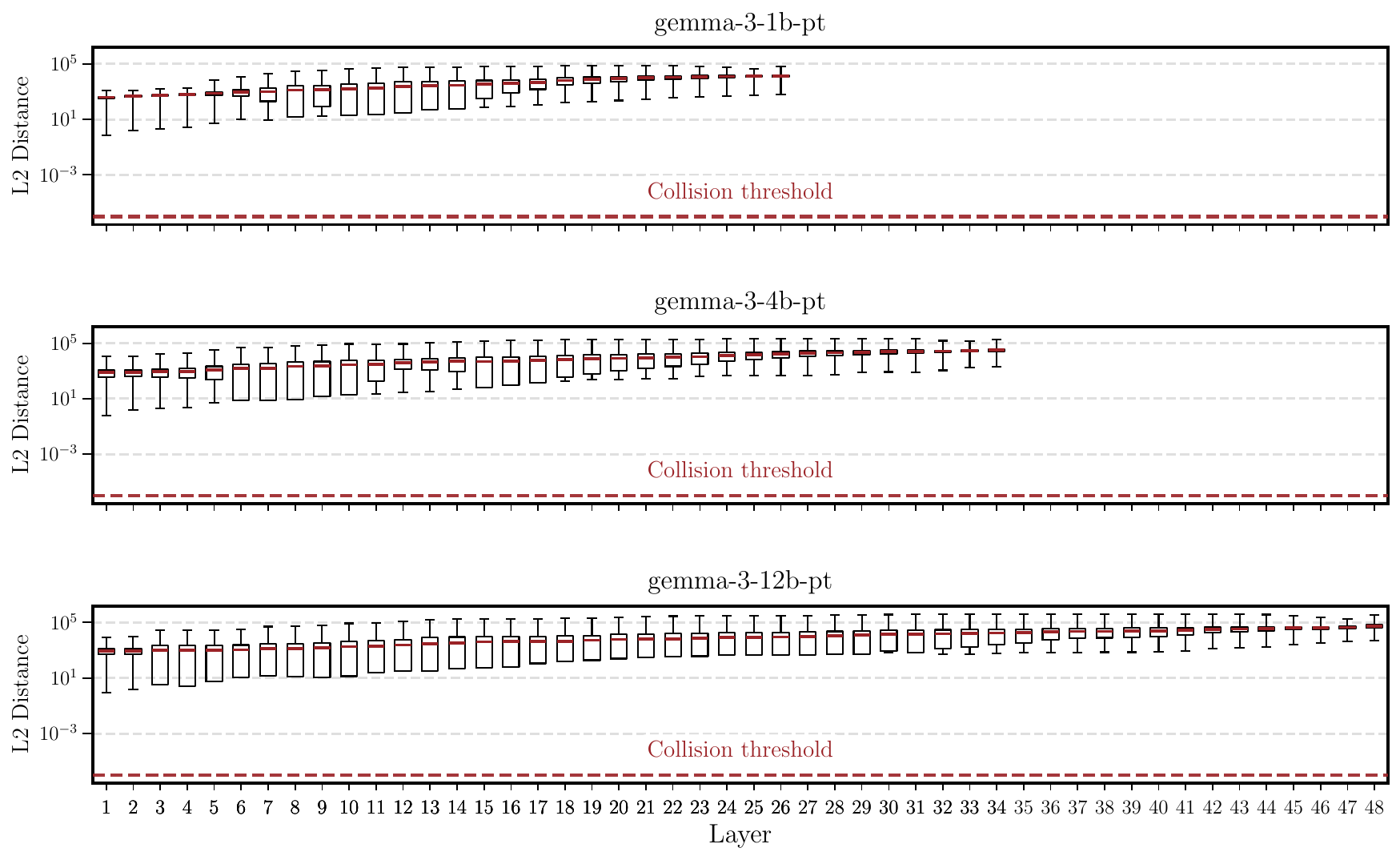}
    \caption{Seeking collisions in a large-scale prompt set (\S\ref{sec:inj_res}). For each layer, boxplots (log scale) show the distribution of \emph{minimum pairwise} $\ell_2$ distances between last-token states across prompts for the \texttt{Gemma-3} model family (1B, 4B, 12B); red bars denote medians and the dashed line marks the collision threshold $10^{-6}$.}
    \label{fig:gemma-full-layer}
\end{figure}

\subsection{Additional Ablations}\label{sec:app:ablations}

\subsubsection{Collision Experiments}

We report three complementary ablations that probe how separation behaves across depth, length, and model family.

\textbf{GPT-2 family across depth.}
For \texttt{GPT-2 Small}, \texttt{GPT-2 Medium}, and \texttt{GPT-2 Large}, the per-layer boxplots (log scale) of the \emph{minimum pairwise} $\ell_2$ distances between last-token states in \Cref{fig:gpt-full-layer} show that all minima sit orders of magnitude above the collision threshold $10^{-6}$ at every depth, and the typical separation \emph{increases with depth} (median red bars drift upward). This rules out collisions in practice and indicates that deeper blocks monotonically sharpen last-token distinctions in these models.

\begin{wrapfigure}[14]{l}{0.5\textwidth}
    \centering
    \vspace{-0.5cm}
    \includegraphics[width=\linewidth]{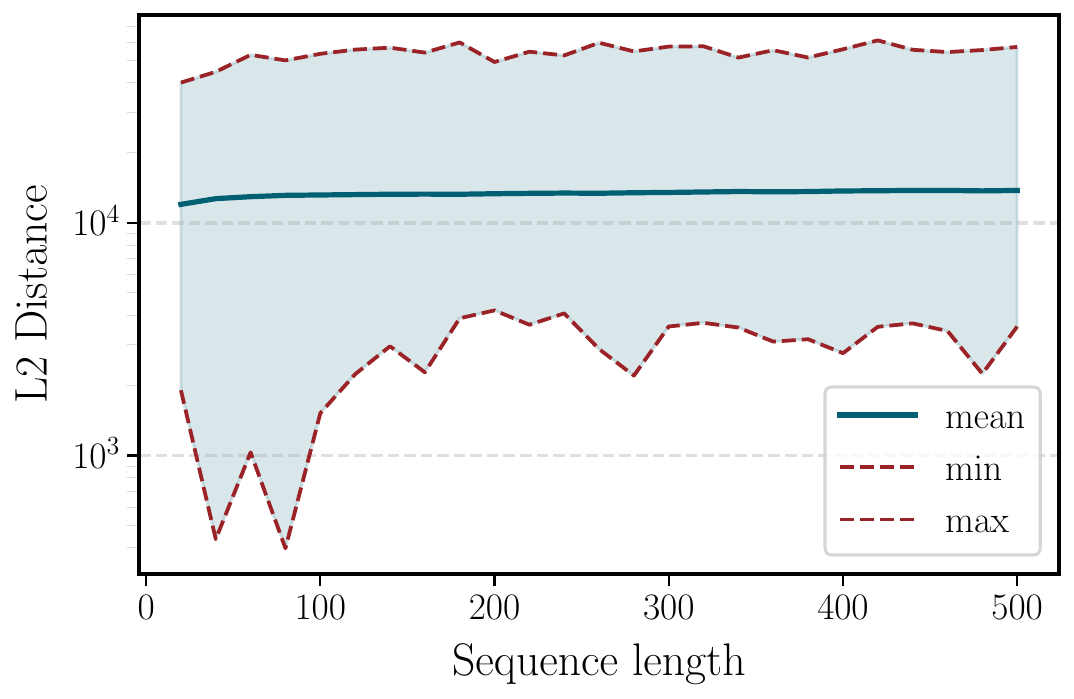}
    \vspace{-0.5cm}
    \caption{Sequence length versus distance over all pairs of distinct prompts for \model{Gemma-1B}.}
    \label{fig:gemma-length-vs-distance}
\end{wrapfigure}

\textbf{Gemma-3 family across depth and scale.}
Across \texttt{Gemma3-1B}, \texttt{Gemma3-4B}, and \texttt{Gemma3-12B}, the layerwise boxplots (log scale) in \Cref{fig:gemma-full-layer} again show minima far above $10^{-6}$ at all depths. Both depth \emph{and} model size trend positively with separation: medians and lower whiskers move upward in deeper layers and larger models, indicating progressively stronger margins and no observed collisions.

\textbf{Effect of sequence length (Gemma-1B).}
Varying the prompt length reveals that min/mean/max pairwise distances rise quickly for short sequences and then plateau, with the minimum never approaching zero (see \Cref{fig:gemma-length-vs-distance}). This suggests that beyond a modest context size, additional tokens do not erode separability; margins stabilize rather than collapse, making collisions unlikely for any prompt length explored.

Overall, these ablations corroborate the main text: last-token states remain well-separated across architectures and depths, separation typically grows with depth (and scale for Gemma), and margins stabilize with sequence length, aligning with our almost-sure injectivity guarantees and with \textsc{SipIt}'s exact recovery behavior.

\subsubsection{\textsc{SipIt}}

\begin{table}[h!]
\centering
\begin{tabular}{c cccc}
\toprule
\multirow{2}{*}{\textbf{Model}} &
\multirow{2}{*}{\textbf{Vocab size}} &
\multicolumn{3}{c}{\textbf{Inversion performance}} \\
\cmidrule(lr){3-5}
 & & \textbf{Accuracy} & \textbf{Time (s)} & \textbf{Vocab explored (\%)} \\
\midrule
\model{Mistral-7B-v0.1}   & 32000  & 100\% &  72.99 $\pm$ 37.57  & 0.21 $\pm$ 0.11 \% \\
\model{Llama-3.1-8B}      & 128255 & 100\% & 345.35 $\pm$ 181.30 & 0.22 $\pm$ 0.12 \% \\
\bottomrule
\end{tabular}
\caption{Performance of \textsc{SipIt} on different vocabulary sizes}
\label{tab:sipit-vocab-ablation}
\end{table}

\paragraph{Vocabulary Size.} To further validate our findings (as presented in \Cref{sec:exps}) regarding the scaling of \textsc{SipIt} with vocabulary size, we conducted additional experiments on the two models with substantially different vocabulary sizes, \model{Mistral-7B-v0.1} ($\approx 32K$ vocabulary) and \model{Llama-3.1-8B} ($\approx 128K$). For a fair comparison, we construct sentences that tokenize to exactly the same sequence of tokens across both models. The results are reported in the \Cref{tab:sipit-vocab-ablation}. We observe that, in practice, the inversion time grows linearly with vocabulary size, as expected, reflected by the nearly constant percentage of tokens explored between the small-vocabulary model (\model{Mistral}) and the larger-vocabulary model (\model{Llama}). Importantly, for both models, the fraction of tokens explored remains below $0.25\%$, indicating that the gradient-based heuristic is both robust and highly efficient.

\begin{table}[h!]
\centering
\begin{tabular}{ccc}
\toprule
\textbf{Dataset} &
\textbf{Inversion Time (s)} &
\textbf{Accuracy} \\
\midrule
Train Data & 146.48 $\pm$ 91.52 & 100\% \\
Test Data  & 128.62 $\pm$ 83.40 & 100\% \\
OOD        & 106.87 $\pm$ 39.10 & 100\% \\
\bottomrule
\end{tabular}
\caption{Performance of \textsc{SipIt} on in-distribution vs. out-of-distribution data}
\label{tab:sipit-ood}
\end{table}

\paragraph{Robustness of \textsc{SipIt} on unseen and random sequences.} Based on \model{GPT-2}, we constructed three datasets, which we refer to as \textbf{Train}, \textbf{Test}, and \textbf{OOD} (Out-of-Distribution). The \textbf{Train} set is formed by sampling sentences from WebText (the dataset used to train \model{GPT-2} \cite{Radford2019LanguageMA}); the \textbf{Test} set contains sentences sampled from Wikipedia (not in the training set); and the \textbf{OOD} set consists of random token sequences. Each dataset contains 50 prompts of length 100 tokens. We report the findings in \Cref{tab:sipit-ood}. Interestingly, the \textbf{OOD} samples are significantly faster to invert than the \textbf{Train} and \textbf{Test} samples. We hypothesize that this difference stems from the geometry of the hidden representations: natural language sentences (\textbf{Train} and \textbf{Test}) tend to lie on a structured, clustered manifold, which can make the inversion landscape locally flatter and less well-conditioned. In contrast, random token sequences produce more dispersed and isolated hidden states, yielding clearer descent directions and effectively stronger gradient signals, which accelerates convergence. Across all three datasets, we obtain exact recovery for every sequence, further supporting the theoretical guarantees of \textsc{SipIt}.

\subsection{Identical Next-Token}\label{sec:app:same-next}

The collision and ablation experiments above use generic prompt sets. A natural stress test is to ask: what happens when we \emph{deliberately} construct prompt pairs that produce the same next-token prediction? To answer this question we designed a set of new experiments where two different prompt are specifically constructed to yield the exact same target answer. First, we focused on word-to-word machine translation (\texttt{google/smol}) and math tasks (\texttt{ProCreations/SimpleMath}) on \model{Llama-3.1-8B}, \model{Mistral-7B}, and \model{Phi-4-mini-instruct}. From these datasets, we built few shot prompts that differed only in their delimiters (e.g. \texttt{->} vs \texttt{:}) while preserving identical translations or arithmetic solutions. Some qualitative examples are shown below:

\begin{center}
\begin{minipage}{0.47\linewidth}
\begin{tcolorbox}[title={Delimiter: \texttt{->}}]
\begin{verbatim}
Translate into French.

Hello -> Bonjour
Goodbye -> Au revoir
House ->
\end{verbatim}
\end{tcolorbox}
\end{minipage}\hfill
\begin{minipage}{0.47\linewidth}
\begin{tcolorbox}[title={Delimiter: \texttt{:}}]
\begin{verbatim}
Translate into French.

Hello : Bonjour
Goodbye : Au revoir
House :
\end{verbatim}
\end{tcolorbox}
\end{minipage}
\end{center}

\begin{center}
\begin{minipage}{0.47\linewidth}
\begin{tcolorbox}[title={Delimiter: \texttt{->}}]
\begin{verbatim}
Do the additions.

2790 + 6698 -> 9488
8262 + 3848 -> 12110
1628 + 132 ->
\end{verbatim}
\end{tcolorbox}
\end{minipage}\hfill
\begin{minipage}{0.47\linewidth}
\begin{tcolorbox}[title={Delimiter: \texttt{=}}]
\begin{verbatim}
Do the additions.

2790 + 6698 = 9488
8262 + 3848 = 12110
1628 + 132 =
\end{verbatim}
\end{tcolorbox}
\end{minipage}
\end{center}

We then assessed collisions involving four different separator token embeddings across all dataset pairs, specifically \texttt{->}, \texttt{:}, \texttt{=}, and \texttt{-}. Despite producing the exact same answer the corresponding embeddings remain clearly distinct (no ``collision") since the minimum $\ell_2$ distance is well above the collision threshold over the $\approx 140K$ possible pairs, as seen in \cref{tab:translation-l2,tab:math-l2}.

\begin{table}[t]
\centering
\begin{minipage}[t]{0.47\textwidth}
\centering
\vspace{0pt}
\resizebox{\linewidth}{!}{%
\begin{tabular}{cccc}
\toprule
\multirow{2}{*}{\textbf{Model}} &
\multicolumn{3}{c}{\textbf{$\boldsymbol{\ell}_\mathbf{2}$ Distance (min)}} \\
\cmidrule(lr){2-4}
 & \textbf{layer 1} & \textbf{layer $L/2$} & \textbf{layer $L$} \\
\midrule
\model{Llama-3.1-8B}        & 0.694 & 1.632 & 4.202 \\
\model{Mistral-7B-v0.1}     & 0.207 & 1.056 & 2.348 \\
\model{Phi-4-mini-instruct} & 4.375 & 6.974 & 17.328 \\
\bottomrule
\end{tabular}
}
\caption{Distances for Translation (En--Fr) separator-token embeddings across layers.}
\label{tab:translation-l2}
\end{minipage}
\hfill
\begin{minipage}[t]{0.47\textwidth}
\centering
\vspace{0pt}
\resizebox{\linewidth}{!}{%
\begin{tabular}{cccc}
\toprule
\multirow{2}{*}{\textbf{Model}} &
\multicolumn{3}{c}{\textbf{$\boldsymbol{\ell}_\mathbf{2}$ Distance (min)}} \\
\cmidrule(lr){2-4}
 & \textbf{layer 1} & \textbf{layer $L/2$} & \textbf{layer $L$} \\
\midrule
\model{Llama-3.1-8B}        & 0.789 & 2.126 & 8.245 \\
\model{Mistral-7B-v0.1}     & 0.222 & 1.664 & 4.362 \\
\model{Phi-4-mini-instruct} & 4.447 & 8.497 & 37.262 \\
\bottomrule
\end{tabular}
}
\caption{Distances for Math separator-token embeddings across layers.}
\label{tab:math-l2}
\end{minipage}
\end{table}

Furthermore, we constructed a dataset of random prefixes sampled from internet text, each followed by the fixed suffix ``\texttt{Complete this: The quick brown fox jumps over the lazy}". To build the dataset, we sampled 10K prefix sequences of length 50 tokens from Wikipedia and appended the tokenized suffix to each. The minimum $\ell_2$ distances obtained are reported in \Cref{tab:randomprefix-l2}. Even in this setting, although the next token prediction is exactly ``dog", all last-token embeddings remain far above the tolerance threshold.

\begin{table}[h!]
\centering
\begin{tabular}{cccc}
\toprule
\multirow{2}{*}{\textbf{Model}} &
\multicolumn{3}{c}{\textbf{$\boldsymbol{\ell}_\mathbf{2}$ Distance (min)}} \\
\cmidrule(lr){2-4}
 & \textbf{layer 1} & \textbf{layer $L/2$} & \textbf{layer $L$} \\
\midrule
\model{Mistral-7B-v0.1}      & 0.012 & 0.265 & 3.494 \\
\model{Llama-3.1-8B}         & 0.046 & 0.733 & 6.227 \\
\model{Phi-4-mini-instruct}  & 0.087 & 2.302 & 18.913 \\
\bottomrule
\end{tabular}
\caption{Distances for random-prefix dataset with fixed ``quick brown fox" suffix.}
\label{tab:randomprefix-l2}
\end{table}

\subsection{Prompts with Similar Representations}
\label{app:similar-hidden-states}

To complement the quantitative injectivity results in the main text, we inspected qualitative examples of sequences whose last-token hidden states are among the closest we observed. For a given model, we computed the Euclidean distance between last-layer representations $h_L(s)$ and $h_L(t)$ of the final token in two sequences $s$ and $t$, and manually examined pairs with the smallest $\ell_2$ distances.

For both \model{Llama-3.1-8B} and \model{Mistral-7B-v0.1}, the closest pairs correspond to Python code snippets that are almost identical, typically differing only by a small shift such as one or more trailing newline tokens. In most of the close pairs we examined, the two sequences satisfy
\[
  \mathrm{s} \;=\; \mathrm{t} \circ \langle \text{new line token} \rangle^k
\]
for some small $k \ge 1$. Even in these extremal cases, however, the last-token representations remain clearly separated in $\ell_2$ distance.

\paragraph{Llama-3.1-8B.}
One of the closest pairs we found for \model{Llama-3.1-8B} is shown below. The only difference between the two sequences is the presence of several trailing newline characters at the end of the second snippet.
The last-token $\ell_2$ distance at the final layer for this pair is $1.274$, which is small relative to typical distances but still far from zero, and thus consistent with the absence of collisions observed in our exhaustive tests.

\begin{center}
\begin{minipage}{0.9\linewidth}
\begin{tcolorbox}[title={\model{Llama-3.1-8B}: Sequence $\mathrm{s}$}]
\begin{verbatim}
...
# Theme options are theme-specific and customize the ...
#html_theme_options = {}

# Add any paths that contain custom themes here ...
#html
\end{verbatim}
\end{tcolorbox}
\end{minipage}
\end{center}

\begin{center}
\begin{minipage}{0.9\linewidth}
\begin{tcolorbox}[title={\model{Llama-3.1-8B}: Sequence $\mathrm{t}$}]
\begin{verbatim}
...
# Theme options are theme-specific and customize the ...
#html_theme_options = {}

# Add any paths that contain custom themes here ...
#html
\n
\n
\n
\end{verbatim}
\end{tcolorbox}
\end{minipage}
\end{center}

\paragraph{Mistral-7B-v0.1.}
A similar phenomenon occurs for \model{Mistral-7B-v0.1}. Again, one of the closest pairs consists of two almost identical code snippets, where the second sequence appends a single newline token:
For this pair, the last-token $\ell_2$ distance at the last layer is $1.146$. As in the Llama example, the nearest neighbors arise from almost identical contexts differing only in trailing whitespace tokens, and even these extremal cases exhibit a non-negligible separation in representation space.

\begin{center}
\begin{minipage}{0.9\linewidth}
\begin{tcolorbox}[title={\model{Mistral-7B-v0.1}: Sequence $\mathrm{s}$}]
\begin{verbatim}
...
# The reST default role to use for all documents.
#default_role = None

# If true, '()' will be appended to :func: ...
#add_function_parentheses = True

# If true, the current module ...
\end{verbatim}
\end{tcolorbox}
\end{minipage}
\end{center}

\begin{center}
\begin{minipage}{0.9\linewidth}
\begin{tcolorbox}[title={\model{Mistral-7B-v0.1}: Sequence $\mathrm{t}$}]
\begin{verbatim}
# The reST default role to use for all documents.
#default_role = None

# If true, '()' will be appended to :func: ...
#add_function_parentheses = True

# If true, the current module ...
\n
\end{verbatim}
\end{tcolorbox}
\end{minipage}
\end{center}

\paragraph{Summary.}
Across all models and pairs we inspected, we did not observe qualitatively different prompts whose last-layer, last-token embeddings were comparably close. Instead, the nearest neighbors consistently involved near-duplicate snippets (often code or documentation) differing only by whitespace or other minor formatting tokens. These qualitative observations align with the injectivity margins reported in the main text and support the view that small perturbations in formatting do not lead to collisions in the representations used by \textsc{Sipit}.

\subsection{Relation with Anisotropy and Intrinsic Dimension}\label{sec:app:anisotropy}

\begin{table}[b!]
\centering
\begin{tabular}{cccc}
\toprule
\textbf{Layer} &
\textbf{Anisotropy (mean)} &
\textbf{ID (mean)} &
\textbf{Margin (min)} \\
\midrule
1  & 0.089579 & 20.754620 & 1.850306 \\
2  & 0.076049 & 17.565538 & 1.956753 \\
3  & 0.071429 & 16.765265 & 2.064488 \\
4  & 0.075067 & 16.679382 & 2.241199 \\
5  & 0.083282 & 17.183246 & 2.382355 \\
6  & 0.089542 & 17.697870 & 2.499817 \\
7  & 0.088463 & 17.018419 & 2.704958 \\
8  & 0.083261 & 16.296431 & 2.886434 \\
9  & 0.081803 & 16.040713 & 3.025268 \\
10 & 0.083083 & 15.730601 & 3.330774 \\
11 & 0.090206 & 15.635035 & 3.918343 \\
12 & 0.288352 & 16.434897 & 4.640457 \\
\bottomrule
\end{tabular}
\caption{Layer-wise anisotropy, intrinsic dimension, and injectivity margin.}
\label{tab:anisotropy-id-margin}
\end{table}

As part of our broader investigation, we also examined connections to the analyses presented in the works of \cite{llmmicroscope} (LLM-Microscope) and \cite{shape-of-learning}, and ran a targeted experiment in this spirit.

\paragraph{Experimental setup.}
We performed a proof-of-concept study using \model{GPT-2 Small}. We sampled 100 natural-language prompts of fixed length $K$ and, for each prompt, generated 1000 single-token continuations by appending each token from a fixed vocabulary subset of size 1000. For every layer $\ell$, we extracted the hidden representation of the last token for all 1000 continuations, producing a $1000 \times d$ matrix for each (layer, prompt) pair. On each matrix we computed (i) anisotropy and intrinsic dimension as in LLM-Microscope, and (ii) simple ``injectivity margin" statistics: the minimum pairwise Euclidean distance between continuation embeddings, averaged over prompts. Aggregating over the 100 prompts yields, for each layer, a triple consisting of anisotropy, intrinsic dimension, and injectivity margin.

\paragraph{Experiment 1: anisotropy vs.\ injectivity margin.}
Across layers, we correlated mean anisotropy with the mean injectivity margin. The resulting Pearson correlation is \textbf{0.72}, and the Spearman correlation is \textbf{0.45}. In this setting, layers with higher anisotropy tend to exhibit larger injectivity margins: continuation clouds become both more structured (anisotropic) and farther from collisions. This suggests that anisotropy is compatible with, and may even reinforce, numerically robust injectivity.

\paragraph{Experiment 2: intrinsic dimension vs.\ injectivity margin.}
Repeating the analysis with intrinsic dimension, we observe a Pearson correlation of \textbf{-0.60} and a Spearman correlation of \textbf{-0.79} between intrinsic dimension and injectivity margin. Thus, layers with lower intrinsic dimensionality tend to have larger margins: compressed-looking manifolds are, if anything, more separated. This aligns with our theorem that injectivity rules out information-destroying collapses.

\paragraph{Discussion.}
This line of analysis is highly complementary to our injectivity framework. Whereas our results establish that internal representations are almost surely lossless, LLM-Microscope offers fine-grained geometric diagnostics of how these representations evolve across depth and training. Particularly notable is the observation that anisotropy and intrinsic dimension follow a reverse-U profile: representations become more anisotropic and lower-dimensional in intermediate layers, then partially re-expand near the output, offering a concrete geometric picture of how structure is carved into aligned directions and low-dimensional manifolds.

This is especially relevant given that our paper challenges classic accounts of learning via bottleneck compression (e.g. \cite{openingblackbox}). If information is preserved along the residual stream, learning cannot proceed layer by layer purely through compression. Our preliminary experiments suggest a different picture: as depth increases, margins grow, intrinsic dimension decreases, and anisotropy follows a concave trajectory with a late spike. Early layers expand and reorganize, intermediate layers carve information into low-dimensional directional manifolds, and upper layers sharpen this structure. Overall, this is consistent with a network that preserves injectivity while funneling information into increasingly structured, well-separated representations.

\section{Real-Analytic Activation Functions in Modern LLMs}
\label{app:analytic-activations}

\begin{table}[!bp]
\centering
\begin{tabular}{lcc}
\toprule
\textbf{Model (HF example)} & \textbf{Activation in FFN} & \textbf{Real-analytic?} \\
\midrule
\texttt{Llama-2}            & SwiGLU & Yes \\
\texttt{Llama-3}            & SwiGLU & Yes \\
\texttt{Mistral-7B-v0.1}    & SiLU   & Yes \\
\texttt{Mixtral-8x7B-v0.1}  & SiLU   & Yes \\
\texttt{Gemma}              & GeGLU  & Yes \\
\texttt{Gemma-2}            & GELU   & Yes \\
\texttt{Qwen2MoE}           & SwiGLU & Yes \\
\texttt{Qwen-2}             & SiLU   & Yes \\
\texttt{Qwen3MoE}           & SiLU   & Yes \\
\texttt{Qwen-3}             & SiLU   & Yes \\
\texttt{Phi}                & GELU   & Yes \\
\texttt{Phi-3}              & SiLU   & Yes \\
\texttt{GPT-2}              & GELU   & Yes \\
\texttt{GPT-J}              & GELU   & Yes \\
\texttt{GptOss}             & SiLU   & Yes \\
\texttt{Grok-1}             & GELU   & Yes \\
\texttt{DeepSeek-V2}        & SiLU   & Yes \\
\texttt{DeepSeek-V3}        & SiLU   & Yes \\
\bottomrule
\end{tabular}
\caption{Activation functions used in the feed-forward networks of representative modern LLMs.}
\label{tab:analytic-activations}
\end{table}

A natural question raised by our analysis is to what extent modern large language models actually use real-analytic activation functions in their feed-forward networks. Since our results apply most directly when the non-linearities are real-analytic, it is important to check whether this assumption holds in practice.

To get a concrete picture, we surveyed a set of widely used open-source and proprietary-style architectures and recorded the activation function used in their feed-forward blocks. The models and their reported activations are summarized in \Cref{tab:analytic-activations}. For each model, we also indicate whether the activation is real-analytic. Activations such as SiLU/Swish, SwiGLU, GeGLU, and GELU are all real-analytic, being compositions and products of elementary analytic functions (e.g., linear maps, exponentials, and the error function).

Across this representative sample, we find that \emph{all} models (18 out of 18) use real-analytic activations in their feed-forward blocks. In other words, the analyticity assumption is not merely a technical convenience but accurately reflects common design practice. This supports the relevance of our theoretical results for real-world large language models: the vast majority of modern transformers already operate in a regime where the non-linearities are real-analytic, and hence fall directly within the scope of our analysis. We now formally prove that SiLU and GELU are real-analytic scalar functions, and
that the corresponding gated constructions SwiGLU and GeGLU define real-analytic vector-valued
maps. The proofs build from elementary ingredients upward: sigmoid (\Cref{prop:sigmoid-ra}) $\to$ SiLU (\Cref{prop:silu-ra}); error function (\Cref{prop:erf-ra}) $\to$ GELU (\Cref{prop:gelu-ra}); then coordinatewise lifting (\Cref{prop:vector-silu-gelu}) and GLU gating (\Cref{prop:glu-ra}).

\begin{proposition}[Logistic sigmoid is real-analytic]\label{prop:sigmoid-ra}
The logistic sigmoid
\[
\sigma(x) \;:=\; \frac{1}{1 + e^{-x}}, \qquad x \in \mathbb{R},
\]
is real-analytic on $\mathbb{R}$.
\end{proposition}

\begin{proof}
By \Cref{prop:exp-ra}, the map $x \mapsto e^{-x}$ is real-analytic on $\mathbb{R}$.  
By \Cref{prop:closure-real-analytic}, the sum $x \mapsto 1 + e^{-x}$ is real-analytic; moreover
$1 + e^{-x} > 0$ for all $x \in \mathbb{R}$, so it never vanishes.
By the quotient rule in \Cref{prop:closure-real-analytic}, the reciprocal
\[
x \mapsto \frac{1}{1 + e^{-x}}
\]
is therefore real-analytic on $\mathbb{R}$.
\end{proof}

\begin{proposition}[SiLU / Swish is real-analytic]\label{prop:silu-ra}
The SiLU (or Swish) activation
\[
\mathrm{SiLU}(x) \;:=\; x\,\sigma(x) \;=\; \frac{x}{1 + e^{-x}}, \qquad x \in \mathbb{R},
\]
is real-analytic on $\mathbb{R}$.
\end{proposition}

\begin{proof}
The identity map $x \mapsto x$ is a polynomial, hence real-analytic by \Cref{prop:poly-ra}.  
By \Cref{prop:sigmoid-ra}, $\sigma$ is real-analytic.  
The product of two real-analytic functions is real-analytic by \Cref{prop:closure-real-analytic}, so
$x \mapsto x\,\sigma(x)$ is real-analytic on $\mathbb{R}$.
\end{proof}

\begin{proposition}[Error function is real-analytic]\label{prop:erf-ra}
The error function
\[
\operatorname{erf}(x) \;:=\; \frac{2}{\sqrt{\pi}} \int_{0}^{x} e^{-t^2}\,dt, \qquad x \in \mathbb{R},
\]
is real-analytic on $\mathbb{R}$.
\end{proposition}

\begin{proof}
By \Cref{prop:exp-ra}, $\exp$ is real-analytic on $\mathbb{R}$ with power series
$e^{z} = \sum_{k=0}^{\infty} \frac{z^k}{k!}$ and infinite radius of convergence.
Substituting $z = -t^2$ yields
\[
e^{-t^2} = \sum_{k=0}^{\infty} \frac{(-1)^k}{k!} t^{2k}, \qquad t \in \mathbb{R}.
\]
This series has infinite radius of convergence, so it converges uniformly on every bounded interval.
By standard results on termwise integration of power series (e.g.\ \citealt{RudinPM}), we may integrate termwise:
\[
\int_0^x e^{-t^2}\,dt
=\sum_{k=0}^{\infty}\frac{(-1)^k}{k!}\int_0^x t^{2k}\,dt
=\sum_{k=0}^{\infty}\frac{(-1)^k}{k!(2k+1)}\,x^{2k+1}.
\]
Multiplying by $2/\sqrt{\pi}$ we obtain
\[
\operatorname{erf}(x) = \frac{2}{\sqrt{\pi}}\sum_{k=0}^{\infty}\frac{(-1)^k}{k!(2k+1)}\,x^{2k+1},
\]
a power series with infinite radius of convergence. Hence $\operatorname{erf}$ is real-analytic on $\mathbb{R}$ by \Cref{def:real-analytic}.
\end{proof}

\begin{proposition}[GELU is real-analytic]\label{prop:gelu-ra}
Let
\[
\Phi(x) \;:=\; \frac{1}{2}\Big(1 + \operatorname{erf}\!\big(\tfrac{x}{\sqrt{2}}\big)\Big)
\]
be the CDF of a standard normal random variable.
The (exact) GELU activation
\[
\mathrm{GELU}(x) \;:=\; x\,\Phi(x)
\]
is real-analytic on $\mathbb{R}$.
\end{proposition}

\begin{proof}
By \Cref{prop:erf-ra}, $\operatorname{erf}$ is real-analytic.  
The map $x \mapsto \tfrac{x}{\sqrt{2}}$ is linear, hence real-analytic; by \Cref{prop:comp-real-analytic}, the composition
$x \mapsto \operatorname{erf}\big(\tfrac{x}{\sqrt{2}}\big)$ is real-analytic.  
Adding the constant $1$ and scaling by $\tfrac{1}{2}$ preserves real-analyticity by \Cref{prop:closure-real-analytic}, so $\Phi$ is real-analytic.  
The identity map $x\mapsto x$ is a polynomial (\Cref{prop:poly-ra}), hence real-analytic; their product
$x \mapsto x\,\Phi(x)$ is therefore real-analytic by \Cref{prop:closure-real-analytic}.
\end{proof}

Having established that SiLU and GELU are real-analytic as scalar functions, we now lift them to the vector-valued setting and show that the GLU-style gating used in modern architectures preserves real-analyticity.

\begin{proposition}[Vector-valued SiLU and GELU are real-analytic]\label{prop:vector-silu-gelu}
Let $m\in\mathbb{N}$. Define the coordinatewise maps
\[
\mathrm{SiLU}_m(\mathbf{x}) := \big(\mathrm{SiLU}(\mathbf{x}_1),\ldots,\mathrm{SiLU}(\mathbf{x}_m)\big)^\top,\quad
\mathrm{GELU}_m(\mathbf{x}) := \big(\mathrm{GELU}(\mathbf{x}_1),\ldots,\mathrm{GELU}(\mathbf{x}_m)\big)^\top,
\]
for $\mathbf{x}\in\mathbb{R}^m$, where SiLU and GELU are as in \Cref{prop:silu-ra} and \Cref{prop:gelu-ra}.  
Then both $\mathrm{SiLU}_m$ and $\mathrm{GELU}_m$ are real-analytic maps $\mathbb{R}^m \to \mathbb{R}^m$.
\end{proposition}

\begin{proof}
Each scalar component $\mathbf{x}\mapsto \mathrm{SiLU}(\mathbf{x}_i)$ (resp.\ $\mathrm{GELU}(\mathbf{x}_i)$) is the composition of the projection onto coordinate $i$ (a linear map) with the real-analytic scalar function SiLU (resp.\ GELU). By \Cref{prop:comp-real-analytic}, each component is real-analytic.  
Therefore, by \Cref{def:real-analytic}, the vector-valued maps $\mathrm{SiLU}_m$ and $\mathrm{GELU}_m$ are real-analytic.
\end{proof}

\begin{proposition}[GLU-style blocks are real-analytic]\label{prop:glu-ra}
Let $d_{\mathrm{in}},d_{\mathrm{hid}}\in\mathbb{N}$ and consider affine maps
\[
A_1(\mathbf{x}) = \mathbf{W}_1\mathbf{x} + \mathbf{b}_1,\qquad
A_2(\mathbf{x}) = \mathbf{W}_2\mathbf{x} + \mathbf{b}_2,
\]
with $\mathbf{W}_1,\mathbf{W}_2\in\mathbb{R}^{d_{\mathrm{hid}}\times d_{\mathrm{in}}}$ and
$\mathbf{b}_1,\mathbf{b}_2\in\mathbb{R}^{d_{\mathrm{hid}}}$.
Let $\phi:\mathbb{R}^{d_{\mathrm{hid}}}\to\mathbb{R}^{d_{\mathrm{hid}}}$ be either $\mathrm{SiLU}_{d_{\mathrm{hid}}}$ or $\mathrm{GELU}_{d_{\mathrm{hid}}}$ from \Cref{prop:vector-silu-gelu}.
Define the GLU-style block
\[
\mathrm{GLU}_\phi(\mathbf{x}) \;:=\; A_1(\mathbf{x}) \odot \phi\big(A_2(\mathbf{x})\big),
\qquad \mathbf{x}\in\mathbb{R}^{d_{\mathrm{in}}},
\]
where $\odot$ denotes the Hadamard product.

Then $\mathrm{GLU}_\phi:\mathbb{R}^{d_{\mathrm{in}}}\to\mathbb{R}^{d_{\mathrm{hid}}}$ is real-analytic.  
In particular:
\begin{itemize}[leftmargin=*,itemsep=0em]
    \item Taking $\phi = \mathrm{SiLU}_{d_{\mathrm{hid}}}$ recovers SwiGLU, which is real-analytic.
    \item Taking $\phi = \mathrm{GELU}_{d_{\mathrm{hid}}}$ recovers GeGLU, which is real-analytic.
\end{itemize}
\end{proposition}

\begin{proof}
Each affine map $A_j$ is real-analytic as a matrix product plus addition (\Cref{prop:matmul}, \Cref{prop:closure-real-analytic}).  
By \Cref{prop:vector-silu-gelu}, $\phi$ is real-analytic, so $\mathbf{x}\mapsto \phi(A_2(\mathbf{x}))$ is a composition of real-analytic maps (\Cref{prop:comp-real-analytic}), hence real-analytic.  
The map $\mathbf{x}\mapsto A_1(\mathbf{x})\odot \phi(A_2(\mathbf{x}))$ is a Hadamard product of two real-analytic vector-valued functions; componentwise this is just the product of real-analytic scalars, so it is real-analytic by \Cref{prop:closure-real-analytic} (equivalently, by \Cref{prop:hadamard}).  
Thus $\mathrm{GLU}_\phi$ is real-analytic. The SwiGLU and GeGLU cases follow by choosing $\phi$ accordingly.
\end{proof}

\paragraph{Relation to universal-approximation and expressivity results.} \label{par:uni}
The material above concerns only the analyticity of the non-linearities used in our analysis. For completeness, we also record here how our injectivity theorem fits alongside existing expressivity results for Transformers; this discussion is logically independent of the real-analyticity assumptions.

Classical expressivity results for Transformers are primarily \emph{existential}.  
Universal-approximation theorems (e.g.\ \citet{Yun2020Are,Sun2020AnEA}) show that for any continuous sequence-to-sequence function $f$ on a compact domain and any $\varepsilon>0$, there exists a Transformer with suitable depth and width whose outputs are within $\varepsilon$ of those of $f$.  
Turing-completeness results for encoder–decoder Transformers \citep[e.g.][]{perez2018on} similarly establish the existence of parameter settings that simulate any Turing machine.  
Taken together, these works characterise what the architecture can represent \emph{in principle}: they do not model random initialization or gradient-based training, and they are not formulated in our discrete setting with finite context length, fixed decoder-only architecture, and real-analytic activations.

Our results are complementary and instead concern what happens \emph{typically} under standard training.  
We fix a concrete decoder-only architecture and a finite prompt set, and study the map from prompts to last-token representations.  
In this setting we prove that (i) for any fixed architecture, the set of parameters for which this map is non-injective has Lebesgue measure zero, and (ii) gradient-based training from standard random initializations preserves absolute continuity of the parameter distribution and therefore almost surely avoids this ``collision set''.  
Non-injective Transformers certainly exist (we explicitly construct such failure cases in \Cref{sec:inj}), but our results show that they form a thin subset that typical optimization trajectories do not reach.

Our contribution is thus orthogonal to prior expressivity theory.  
We do \emph{not} claim that Transformers can only represent injective functions.  
Rather, within the specific regime we study (decoder-only, real-analytic activations, cross-entropy loss, GD-type training from standard initialization), we show that the resulting last-token map is injective with probability~one over initialization and training.  
In short, classical expressivity results describe what is mathematically \emph{possible} for the Transformer function class, while our analysis characterizes what is \emph{almost surely implemented} when that class is explored via standard training procedures.

\end{document}